\title{
Comparative Learning: A Sample Complexity Theory for Two Hypothesis Classes
}
\author{Lunjia Hu\thanks{Stanford University. Supported by Moses Charikar’s and Omer Reingold's Simons Investigators awards and Omer Reingold’s NSF Award IIS-1908774. Email: \texttt{lunjia@stanford.edu}} \and Charlotte Peale\thanks{Stanford University. Supported by the Simons Foundation Collaboration on the Theory of Algorithmic Fairness. Email: \texttt{cpeale@stanford.edu}}}
\date{}
\newcommand{\vc}{{\mathsf{VC}}}
\newcommand{\error}{{\mathsf{error}}}
\newcommand{\bE}{{\mathbb E}}
\newcommand{\bZ}{{\mathbb Z}}
\newcommand{\bR}{{\mathbb R}}
\newcommand{\mae}{{\textnormal{\sffamily MA-error}}}
\newcommand{\mce}{{\textnormal{\sffamily MC-error}}}
\newcommand{\sps}[1]{^{({#1})}}
\newcommand{\sign}{{\mathsf{sign}}}
\newcommand{\ldim}{{\mathsf{Ldim}}}
\newcommand{\lone}{{\ell_1\textnormal{\sffamily -error}}}
\newcommand{\one}{{\mathbf 1}}
\newcommand{\fat}{{\mathsf{fat}}}
\newcommand{\rea}{{\mathsf{ReaL}}}
\newcommand{\srea}{{\#\mathsf{ReaL}}}
\newcommand{\agn}{{\mathsf{AgnL}}}
\newcommand{\sagn}{{\#\mathsf{AgnL}}}
\newcommand{\comp}{{\mathsf{CompL}}}
\newcommand{\scomp}{\#{\mathsf{CompL}}}
\newcommand{\cm}{{\mathsf{CorM}}}
\newcommand{\scm}{{\#\mathsf{CorM}}}
\newcommand{\dcm}{{\mathsf{DCorM}}}
\newcommand{\sdcm}{{\#\mathsf{DCorM}}}
\newcommand{\wcm}{{\textnormal{\sffamily W-CorM}}}
\newcommand{\wdcm}{{\textnormal{\sffamily W-DCorM}}}
\newcommand{\wcomp}{{\textnormal{\sffamily W-CompL}}}
\newcommand{\swcm}{{\#\textnormal{\sffamily W-CorM}}}
\newcommand{\swdcm}{{\#\textnormal{\sffamily W-DCorM}}}
\newcommand{\compr}{{\mathsf{CompR}}}
\newcommand{\scompr}{{\#\mathsf{CompR}}}
\newcommand{\learn}{{\mathsf{Learn}}}
\newcommand{\slearn}{\#{\mathsf{Learn}}}
\newcommand{\ptimes}{{\,\diamondsuit\,}}
\newcommand{\ba}{{\mathbf a}}
\newcommand{\bA}{{\mathbf A}}
\newcommand{\by}{{\mathbf y}}
\newcommand{\agree}{{\mathsf{AgreeL}}}
\newcommand{\sagree}{{\#\mathsf{AgreeL}}}
\newcommand{\ma}{{\mathsf{MA}}}
\newcommand{\sma}{\#{\mathsf{MA}}}
\newcommand{\mc}{{\mathsf{MC}}}
\newcommand{\smc}{\#{\mathsf{MC}}}
\newcommand{\proj}{{\mathsf{proj}}}
\newcommand{\diff}{{\mathrm d}}
\newcommand{\for}{{$\mathbf{for}$\ }}
\newcommand{\bif}{{$\mathbf{if}$\ }}
\newcommand{\mistake}{{\mathsf{mistake}}}
\newcommand{\ber}{{\mathsf{Ber}}}
\newcommand{\reao}{{\mathsf{ReaOL}}}
\newcommand{\agno}{{\mathsf{AgnOL}}}
\newcommand{\compo}{{\mathsf{CompOL}}}
\newcommand{\blambda}{{\mathbf{\Lambda}}}
\newcommand{\bsigma}{{\boldsymbol\sigma}}
\newcommand{\p}{{\pi}}
\newcommand{\while}{{$\mathbf{while}$\ }}
\newcommand{\ce}{{\textnormal{\sffamily C-error}}}
\newcommand{\dstr}{{P}}
\newcommand{\sce}{{\textnormal{\sffamily sign-C-error}}}
\newcommand{\unif}{{\mathsf{unif}}}
\newtheorem{theorem}{Theorem}[section]
\newtheorem{claim}[theorem]{Claim}
\newtheorem{lemma}[theorem]{Lemma}
\newtheorem{definition}{Definition}[section]
\newtheorem{remark}{Remark}[section]
\newtheorem{question}{Open Question}[section]
\begin{document}
\maketitle
\thispagestyle{empty}
\begin{abstract}
In many learning theory problems, a central role is played by a hypothesis class: we might assume that the data is labeled according to a hypothesis in the class (usually referred to as the \emph{realizable setting}), or we might evaluate the learned model by comparing it with the best hypothesis in the class (the \emph{agnostic setting}).
Taking a step beyond these classic setups that involve only a single hypothesis class, we study a variety of problems that involve two hypothesis classes simultaneously.

We introduce \emph{comparative learning} as a combination of the realizable and agnostic settings in PAC learning: given \emph{two} binary hypothesis classes $S$ and $B$, we assume that the data is labeled according to a hypothesis in the \emph{source class} $S$ and require the learned model to achieve an accuracy comparable to the best hypothesis in the \emph{benchmark class} $B$. 
Even when both $S$ and $B$ have infinite VC dimensions, comparative learning can still have a small sample complexity.
We show that the sample complexity of comparative learning is characterized by the \emph{mutual VC dimension} $\vc(S,B)$ which we define to be the maximum size of a subset shattered by both $S$ and $B$. 
We also show a similar result in the online setting, where we give a regret characterization in terms of the analogous \emph{mutual Littlestone dimension} $\ldim(S,B)$. These results also hold for partial hypotheses.

We additionally show that the insights necessary to characterize the sample complexity of comparative learning can be applied to other tasks involving two hypothesis classes.
In particular, we characterize the sample complexity of realizable \emph{multiaccuracy} and \emph{multicalibration} using the \emph{mutual fat-shattering dimension}, an analogue of the mutual VC dimension for real-valued hypotheses. This not only solves an open problem proposed by Hu, Peale, Reingold (2022), but also leads to independently interesting results extending classic ones about regression, boosting, and covering number to our two-hypothesis-class setting. 
\end{abstract}
\newpage
\tableofcontents
\newpage
\section{Introduction}

The seminal theoretical framework of \emph{PAC learning} \citep{valiant1984theory} provides a formalization of machine learning that allows for rigorous theoretical analysis. In PAC learning, a learning algorithm (learner) receives individual/label pairs $(x, y) \in X \times \{-1, 1\}$ as input data, drawn i.i.d.\ from an unknown distribution $\mu$. The learner's goal is to output a model $f: X \rightarrow \{-1, 1\}$ that assigns each individual in $X$ a binary label. The performance of the model $f$ is measured by its classification error, 
$$\error(f) := {\Pr}_{(x,y)\sim\mu}[f(x)\ne y].$$
Because the classification error is evaluated over the entire distribution $\mu$, a good learner must go beyond simply memorizing the individuals and labels seen in the input data and be able to correctly predict the labels of unseen individuals as well. This can be a difficult task, and to make it possible to achieve a meaningfully small error given a limited amount of input data, additional assumptions or relaxations are needed. This leads to two
standard settings of PAC learning: \emph{realizable} and \emph{agnostic} learning. In realizable learning, we assume that all data points are labeled according to an unknown hypothesis $h:X\rightarrow\{-1,1\}$, i.e., $y = h(x)$ for every data point $(x,y)$ drawn from $\mu$, and we assume that $h$ belongs to a hypothesis class $H$ known to the learner. Under this assumption, realizable learning requires the output model $f$ to achieve a low classification error ($\error(f)\le\varepsilon$) with large probability. In agnostic learning there is also a hypothesis class $H$ known to the learner, but it does not impose any assumption on the data.
Instead,
we aim for a relaxed goal specified by $H$: achieving $\error(f) \le \inf_{h\in H}\error(h) + \varepsilon$ with large probability.

At a high level, both realizable and agnostic learning involve the introduction of a hypothesis class $H$, but $H$ plays a very different role in each setting. In realizable learning, $H$ constrains the 
potential \emph{source hypotheses} that might determine the ground-truth labeling of the data. In contrast, agnostic learning places no assumptions on the ground-truth labeling, but instead uses $H$ as a \emph{benchmark class} and only requires the learner to perform well compared to the best benchmark hypothesis in $H$. Thus, realizable and agnostic learning highlight two natural ways to simplify a learning task: 
constrain the potential hypotheses that the ground-truth labeling is generated from, or constrain the set of hypotheses that the output model is compared against.

Our work originates from the observation that these two ways of simplifying a learning task need not be mutually exclusive. Instead, they can be treated as two ``knobs'' that can be simultaneously adjusted to create new hybrid learning tasks. For any \emph{two} hypothesis classes $S$ and $B$, we can define a learning task by letting them play the two roles of $H$ in the realizable and agnostic settings, respectively. That is, we assume that there exists a \emph{source hypothesis} $s\in S$ such that $y = s(x)$ for every data point $(x,y)$ drawn from $\mu$, and we aim for achieving, with large probability, an error comparable to the best \emph{benchmark hypothesis} $b\in B$: $\error(f) \le \min_{b\in B}\error(b) + \varepsilon$. We term this hybrid notion \emph{comparative learning}.

Our research reveals that the notion of comparative learning is far more insightful than just a thought experiment: it serves as an unexplored playground for the study of \emph{sample complexity}, and the new connections we establish to characterize the sample complexity of comparative learning can be fruitfully applied to open questions about existing learning tasks.
Here, ``sample complexity'' refers to one of the key characteristics of every learning task: the minimum number of data points needed by a learner to solve the task.
VC theory provides a thorough understanding of the sample complexity of classic PAC learning in both the realizable and agnostic settings:
in both cases it is characterized by the \emph{VC dimension} of the hypothesis class $H$, defined as the maximum size of a subset of $X$ on which all possible labelings of the individuals can be realized by some hypothesis in $H$ (we say a set is \emph{shattered} by $H$ when this condition holds; see \Cref{sec:preli} for the exact definition)
\citep{vapnik1971vcdim,MR1072253,MR1088804}.
Since then, understanding the sample complexity of a wide variety of new and existing learning tasks has remained an exciting area of research. 
These tasks include 
online learning \citep{littlestone1988learning,ben2009agnostic,MR4398855,filmus2022optimal}, 
reliable and useful learning \citep{MR1109724,MR1076241,MR1303564,kivinen1990reliable},
statistical query learning \citep{kearns1993efficient,blum1994weakly},
learning real-valued hypotheses \citep{MR1279411,MR1328428,MR1408000}, 
multiclass learning \citep{MR1322634,brukhim2022characterization}, 
learning partial hypotheses \citep{MR2042042,MR4399723},
active learning \citep{MR2472318,MR3108162,MR3734243,hopkins2020noise,MR4232108,hopkins2020power}, property testing \citep{MR1450632,MR1800309,MR4398860}, differentially private learning \citep{MR4003389,MR4232052,MR4398834,sivakumar2021multiclass,jung2020equivalence,golowich2021differentially}, 
bounded-memory learning \citep{gonen2020towards},
and online learning in the smoothed analysis model \citep{haghtalab2020smoothed,MR4399748}.
A commonality of these learning tasks is that each of them only explicitly involves a single hypothesis class, and thus the sample complexity 
is studied in terms of complexity measures of single hypothesis classes,
such as the VC dimension, the Littlestone dimension, the statistical query dimension, the fat-shattering dimension, and the DS dimension. To tightly characterize the sample complexity of comparative learning where a \emph{pair} of hypothesis classes $S$ and $B$ are involved, it is not sufficient to apply existing complexity measures to $S$ and $B$ separately (see \Cref{sec:intro-duality} for a more detailed discussion). Instead, we must create new notions that measure the complexity of the \emph{interaction} between the two classes. 
We show that the correct way to measure the complexity of this interaction in comparative learning is to look at the subsets  of $X$ that $S$ and $B$ \emph{both} shatter, and we define the \emph{mutual VC dimension}, $\vc(S , B)$, to be the maximum size of such subsets.
We show that the mutual VC dimension gives both upper and lower bounds on the sample complexity of comparative learning.
Similarly, in an online analogue of comparative learning, we define the \emph{mutual Littlestone dimension} and prove upper and lower \emph{regret} bounds.

Our sample complexity characterization for comparative learning turns out to be a powerful tool for studying the sample complexity of other tasks involving two hypothesis classes. In fact, 
our interest in comparative learning is derived in part from open questions related to the sample complexity of realizable \emph{multiaccuracy} (MA) and \emph{multicalibration} (MC)
\citep{hebert2018multicalibration,kim2019multiaccuracy,hu2022metric}. 
In these tasks,
the hypothesis class $H$ plays the same role as in realizable learning, while the classification error $\error(f)$ is replaced with an alternative error measure $\mae_D(f)$ or $\mce_D(f)$ specified by an additional hypothesis class $D$ that is sometimes called the \emph{distinguisher class}.\footnote{
The name ``distinguisher class'' comes from the observation that the \emph{no-access outcome indistinguishability} task studied in \citep{hu2022metric} can be equivalently framed as multiaccuracy \cite[see][Section 2.1.2]{hu2022metric}.
In addition to the difference in the error from realizable learning, realizable multiaccuracy and multicalibration also allow the hypothesis class $H$ and the model $f$ to be real-valued (see \Cref{sec:MA-MC} and \Cref{sec:ma-mc}).
It is also possible to replace the error in \emph{agnostic} learning with $\mae$ and $\mce$ to get agnostic multiaccuracy and agnostic multicalibration, but \citet{hu2022metric} show that the sample complexity of agnostic multiaccuracy exhibits a non-monotone dependence on the complexity of the distinguisher class $D$. We focus on defining multiaccuracy and multicalibration in the realizable setting throughout the paper. 
}
For example, the multiaccuracy error $\mae_D(f)$ is defined as follows:
\[
\mae_D(f) := {\sup}_{d\in D}|\bE_{(x,y)\sim\mu}[(f(x) - y)d(x)]|,
\]
where the supremum is over all the \emph{distingushers} $d:X\rightarrow [-1,1]$ in the distinguisher class $D$.
As demonstrated by \citet*{hu2022metric}, the freedom in choosing the class $D$ allows the error to adapt to different goals that may arise in practice. 

The introduction of the distinguisher class $D$ makes sample complexity characterization challenging because
the characterization needs to depend on both the class $H$ in realizable learning and the additional distinguisher class $D$. \citet*{hu2022metric} give a sample complexity characterization for realizable multiaccuracy using a particular \emph{metric entropy} defined for every pair $(H,D)$ (see \Cref{sec:MA-MC} for more details), but their characterization is in the \emph{distribution-specific} setting where the marginal distribution $\mu|_X$ of $x$ in a pair $(x,y)$ generated from the data distribution $\mu$ is fixed and known to the learner. In contrast, the VC dimension characterization for PAC learning is in the \emph{distribution-free} setting where the learner has no explicit knowledge about $\mu|_X$ and must perform well for every $\mu|_X$. 
The sample complexity characterization for realizable multiaccuracy in the distribution-free setting is left as an open question by \citet*{hu2022metric}.

In this work, we answer this open question by characterizing the sample complexity of realizable multiaccuracy and multicalibration in the distribution-free setting using the \emph{mutual fat-shattering dimension}, which we define similarly to the mutual VC dimension but for two \emph{real-valued} hypothesis classes.
Our results on comparative learning turn out to be especially useful for obtaining this characterization
because there is an intimate relationship between achieving the comparative learning goal $\error(f) \le \min_{b\in B}\error(b) + \varepsilon$ and achieving a low multiaccuracy (or multicalibration) error: $\mae_B(f) \le \varepsilon$. Here, the benchmark class $B$ in comparative learning plays the role of the distinguisher class $D$ in multiaccuracy and multicalibration. 
This relationship has been observed by \citet*{hebert2018multicalibration} and \citet*{DBLP:conf/innovations/GopalanKRSW22} in a single-hypothesis-class setting, i.e., without the assumption that the labels from the data distribution are generated according to a hypothesis in a pre-specified source class.
We generalize this relationship to our two-hypothesis-class setting by showing a reduction from realizable multiaccuracy and multicalibration to comparative learning while preserving the interaction between the source and distinguisher/benchmark classes.
This reduction leads to a number of new learning tasks that also involve a pair of hypothesis classes.
Specifically, the reduction is accomplished via an intermediate task which we call \emph{correlation maximization}, and we show that with some adaptation the reduction also allows us to efficiently \emph{boost} a \emph{weak} comparative learner to a strong one. Once we achieve multiaccuracy and multicalibration, we apply the \emph{omnipredictor} result by \citet*{DBLP:conf/innovations/GopalanKRSW22} to solve \emph{comparative regression}, an analogue of comparative learning but with real-valued hypotheses and general convex and Lipschitz loss functions. We believe that there is a rich collection of learning tasks where two or more hypothesis classes may interact in  interesting ways, and our work is just a small step towards a better understanding of a tiny fraction of these tasks.

\subsection{Sample Complexity of Comparative Learning}
\label{sec:intro-duality}
As mentioned earlier, VC theory has provided a thorough understanding of the sample complexities of both realizable and agnostic learning. %

For any binary hypothesis class $H$, VC theory characterizes the sample complexity of realizable and agnostic learning using the VC dimension $\vc(H)$ of the hypothesis class $H$, a combinatorial quantity with a simple definition: the maximum size of a subset of $X$ \emph{shattered} by $H$ (see \Cref{sec:preli} for exact definition) \citep{vapnik1971vcdim,MR1072253,MR1088804}. Moreover, the optimal sample complexity in both the realizable and agnostic settings can be achieved by a simple algorithm: the empirical risk minimization algorithm (ERM), which outputs the hypothesis in $H$ with the minimum empirical error on the input data points.

Because our notion of comparative learning combines these two settings, it would seem natural to use techniques from VC theory to understand its sample complexity as well. Compared to realizable learning for $S$, comparative learning for $(S,B)$ has a relaxed goal (specified by the benchmark class $B$), and thus any learner solving realizable learning for $S$ also solves comparative learning for $(S,B)$. This gives us a sample complexity upper bound in terms of $\vc(S)$ for comparative learning. Similarly, any learner solving agnostic learning for $B$ also solves comparative learning for $(S,B)$ because comparative learning only makes additional assumptions on data (specified by the source class $S$), so we get another sample complexity upper bound in terms of $\vc(B)$.

However, perhaps surprisingly, these sample complexity upper bounds provided by the classic VC theory are not optimal.
Even when $\vc(S)$ and $\vc(B)$ are both infinite, comparative learning may still have a finite sample complexity. %
Imagine that the domain $X$ of individuals is partitioned into two large subsets $X_1$ and $X_2$. Suppose the source class $S$ consists of all binary hypotheses $s:X\rightarrow\{-1,1\}$ satisfying $s(x) = 1$ for every $x\in X_1$, and the benchmark class $B$ consists of all binary hypotheses $b:X\rightarrow\{-1,1\}$ satisfying $b(x) = 1$ for every $x\in X_2$. Both $\vc(S)$ and $\vc(B)$ can be large and even infinite, but comparative learning in this case requires no data points: the learner can simply output the model $f$ that maps every $x\in X$ to $1$ because no benchmark hypothesis in $B$ can achieve a smaller error than $f$ when the data points $(x,y)\sim \mu$ satisfy $y = s(x)$ for a source hypothesis $s\in S$ (see Figure~\ref{fig:disjoint_cxty}).
Beyond demonstrating that comparative learning may require far fewer samples than what our initial na\"ive upper bound might suggest, this example also shows that the standard empirical risk minimization (ERM) algorithm used for PAC learning does not give us the optimal sample complexity for comparative learning.
Assume that the source hypothesis $s\in S$ maps every $x\in X$ to $1$ and $\mu$ is the uniform distribution over $X\times \{1\}$. In this case $\min_{b\in B}\error(b) = 0$ and thus comparative learning requires a low classification error $\error(f) \le \varepsilon$ with large probability. We have shown that this requirement can be achieved without any input data points, but the ERM algorithm cannot achieve this requirement in general unless there are many input data points:
there can be many hypotheses in $S$ and $B$ that achieve zero empirical error on the input data points, but when the data points are few, most of such hypotheses do not achieve low classification error over the entire distribution $\mu$ (see Figure~\ref{fig:failed_erm}).

\begin{figure}
    \centering
    \includegraphics[width=\textwidth]{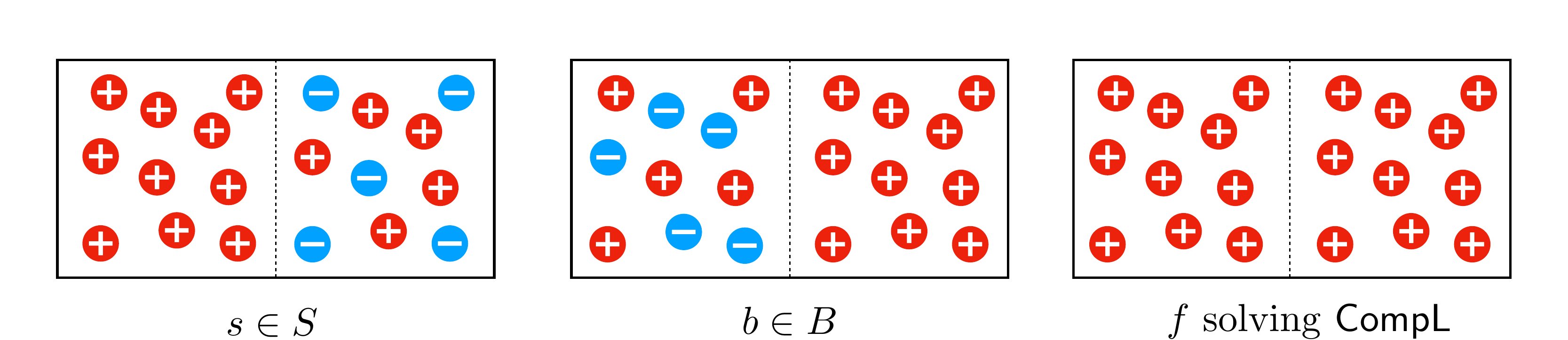}
    \caption{An example where comparative learning requires no data points when $\vc(S)$ and $\vc(B)$ are both infinite. The left two images show examples of hypotheses in $S$ and $B$, both of which are very complex, but on disjoint portions of the domain. In this case, 
a learner that always outputs the model $f$ in the rightmost image solves comparative learning because $f$ always achieves smaller or equal error compared to any benchmark hypothesis $b\in B$ when the ground-truth labelling is generated by a source hypothesis $s\in S$. See in-text description for more details.}
    \label{fig:disjoint_cxty}
\end{figure}

\begin{figure}
    \centering
    \includegraphics[width=\textwidth]{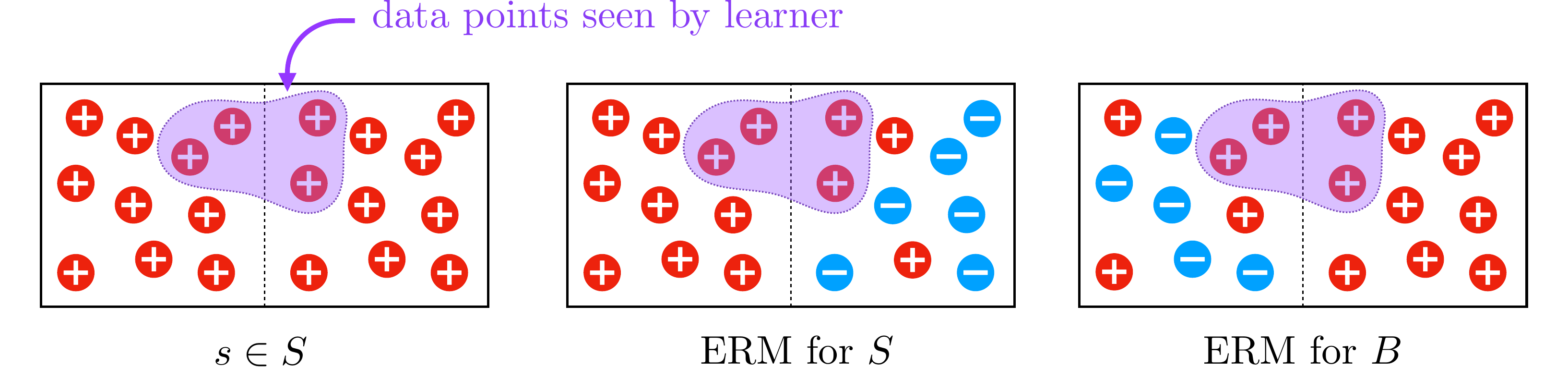}
    \caption{Empirical risk minimization (ERM) may fail to give us optimal sample complexity in the same setting as Figure~\ref{fig:disjoint_cxty}, where $S$ and $B$ are both very complex, but on disjoint domains. When the source hypothesis $s\in S$ is the constant function shown in the left image, the right two images show examples of output models of ERM when run on $S$ and $B$. Neither model is guaranteed to achieve the low error required by comparative learning. See in-text description for more details.
    }
    \label{fig:failed_erm}
\end{figure}

The example above shows that the VC dimensions $\vc(S)$ and $\vc(B)$ alone are not informative enough to characterize the sample complexity of comparative learning. These VC dimensions only tell us the complexity of $S$ and $B$ \emph{separately}, but we also need to know the complexity of their \emph{interaction}. 
We measure the complexity of this interaction by defining the \emph{mutual VC dimension} $\vc(S , B)$ to be the maximum size of a subset of $X$ shattered by \emph{both} $S$ and $B$, and we give a tight characterization for the sample complexity of comparative learning in terms of $\vc(S,B)$.

As discussed earlier, new ideas are needed to prove this sample complexity characterization. In particular, we need to design a learner that is different from the ERM algorithm. Our technique is based on an interesting connection to \emph{learning partial binary hypotheses}, a learning task considered first by \citet{bartlett1995more,MR2042042} and studied more systematically in a recent work by \citet*{MR4399723}. A partial binary hypothesis is a function $h:X\rightarrow\{-1,1,*\}$ that may assign some individuals $x\in X$ the \emph{undefined label} $h(x) = *$.
The notion of partial hypotheses is motivated in previous work either as an intermediate step towards understanding \emph{real-valued} hypotheses or as a way to describe data-dependent assumptions that could not be captured by the standard PAC learning model.
In this work, we show that partial hypotheses have yet another application and can be used to express the interaction between a source hypothesis $s\in S$ and a benchmark hypothesis $b\in B$ in comparative learning:
we construct an \emph{agreement hypothesis} $\ba_{s,b}$ which is a partial hypothesis assigning the undefined label $*$ to an individual $x$ whenever $s(x)$ is different from $b(x)$, and giving the same label to $x$ as $s$ and $b$ if $s(x)$ equals $b(x)$. We show that comparative learning for $(S,B)$ can be reduced to \emph{agnostically} learning the class $\bA_{S,B}$ which consists of all the partial hypotheses $\ba_{s,b}$ for $s\in S$ and $b\in B$, and conversely, we show that \emph{realizable} learning for $\bA_{S,B}$ reduces to comparative learning for $(S,B)$. Our sample complexity characterization for comparative learning then follows immediately from the results by \citet{MR4399723} for learning partial hypotheses. 
Moreover, our characterization holds even when the source hypotheses and benchmark hypotheses themselves are partial. We also show that this connection between comparative learning and learning the agreement hypotheses $\ba_{s,b}$ extends to the online setting, allowing us to show a \emph{regret} characterization for \emph{comparative online learning}.

Our definition of the mutual VC dimension is clearly symmetric: $\vc(S,B) = \vc(B,S)$, and thus our sample complexity characterization for comparative learning reveals an intriguing phenomenon which we call \emph{sample complexity duality}: comparative learning for $(S,B)$ and comparative learning for $(B,S)$ always have similar sample complexities. In other words, swapping the roles of the source class and the benchmark class does not change the sample complexity by much.
Previously, \citet*{hu2022metric} show that this phenomenon holds for realizable multiaccuracy in 
the distribution-specific setting, drawing an insightful connection to a long-standing open question in convex geometry: the \emph{metric entropy duality conjecture} \citep{MR0361822,MR1008716,MR2113023,MR2105957,MR2296760}.
Our sample complexity characterizations imply that 
sample complexity duality also holds in the distribution-\emph{free} setting for realizable multiaccuracy as well as multicalibration.
We also show that sample complexity duality does \emph{not} hold for many learning tasks that we consider, including distribution-specific comparative learning, distribution-specific realizable multicalibration, correlation maximization, and comparative regression. In \Cref{table:duality} we list whether sample complexity duality holds in general for every two-class learning task we consider in this paper in both the distribution-specific and distribution-free settings. %
\begin{table}
\centering
\begin{tabular}{lll}
\toprule
& Distribution-specific%
& Distribution-free  \\ \hline
Comparative learning & no & yes\\
Correlation maximization & no & no\\
Realizable multiaccuracy & yes* \citep{hu2022metric} & yes\\
Realizable multicalibration & no & yes \\
Comparative regression & no & no\\
\bottomrule
\end{tabular}
\caption{Duality (yes) VS non-duality (no). *The sample complexity duality result in \citep{hu2022metric} for distribution-specific multiaccuracy assumes that all hypotheses are total.}
\label{table:duality}
\end{table}

\subsection{Multiaccuracy and Multicalibration}
\label{sec:MA-MC}
A direct motivation of our work is a recent paper by \citet*{hu2022metric} that studies a learning task called \emph{multiaccuracy}, which was introduced by \citet*{hebert2018multicalibration} and \citet*{kim2019multiaccuracy} originally as a notion of multi-group fairness. In multiaccuracy, the learned model (presumably making predictions about people) is required to be accurate in expectation when conditioned on each sub-community in a rich class (possibly defined based on demographic groups and their intersections). This ensures that the predictions made by the model are not systematically biased in any of the sub-communities.

Taking a broad perspective beyond fairness, \citet*{hu2022metric} view multiaccuracy as providing a general, meaningful, and flexible performance measure for prediction models, and study PAC learning with the usual classification error replaced by this new performance measure from multiaccuracy. To be specific, let us consider a \emph{real-valued} source hypothesis class $S$ consisting of source hypotheses $s:X\rightarrow[-1,1]$. We use $S$ to replace the binary hypothesis class $H$ in realizable learning and
assume that every input data point $(x,y)\in X\times[-1,1]$ is generated i.i.d.\ from a distribution $\mu$ satisfying $\bE_{(x,y)\sim\mu}[y|x] = s(x)$ for an unknown $s\in S$. Suppose a learner which tries to learn $s$ given the input data points produces an output model $f:X\rightarrow[-1,1]$. This is a more general setting than binary classification because we allow $f(x)$ and $s(x)$ to take any value in the interval $[-1,1]$, and accordingly, let us use the $\ell_1$ error $\lone(f):=\bE_{x\sim\mu|_X}[|f(x) - s(x)|]$ as a generalization of the classification error (as in, e.g., \citep{MR1408000}).
The \emph{multiaccuracy error} of $f$ is defined to be
\begin{equation}
\label{eq:mae-total}
\mae_{\mu,B}(f) := {\sup}_{b\in B}|\bE_{(x,y)\sim\mu}[(f(x) - y)b(x)]|,
\end{equation}
where $B$ is a \emph{distinguisher class} consisting of \emph{distinguishers} $b:X\rightarrow[-1,1]$. 
Here we use $B$ (rather than $D$) to denote the distinguisher class because a key idea we use in our work is to relate the distinguisher class to the benchmark class in comparative learning.
Due to our assumption $\bE_{(x,y)\sim\mu}[y|x] = s(x)$, the multiaccuracy error can be written equivalently as
\[
\mae_{\mu,B}(f) = {\sup}_{b\in B}|\bE_{x\sim\mu|_X}[(f(x) - s(x))b(x)]|.
\]

The multiaccuracy error is a generalization and relaxation of the $\ell_1$ error in that if we choose the distinguisher class $B$ to contain all distinguishers $b:X\rightarrow [-1,1]$, then the two errors are equal: $\mae_B(f) = \lone(f)$. 
The multiaccuracy error can become a more suitable performance measure than the $\ell_1$ error if we customize $B$ to reflect the goal we want to achieve: we can choose $B$ to consist of indicator functions of demographic groups to achieve a fairness goal, and we can also choose $B$ to specifically catch serious errors that we want to avoid (see \citet*{hu2022metric} for more discussions).

The sample complexity of achieving a small $\lone(f)$ has been studied by \citet{MR1279411}, \citet{MR1328428} and \citet{MR1408000}, who give a characterization in the distribution-free setting using the \emph{fat-shattering dimension} of the source class $S$, defined as the maximum size of a subset of $X$ \emph{fat-shattered} by $S$ (see \Cref{sec:vc} for a precise definition).
Their results are further improved by \citet{bartlett1995more,MR1629694} and \citet{MR1754872}.
For a general distinguisher class $B$, the sample complexity of achieving a small $\mae_B(f)$ depends on both classes $S$ and $B$, and thus it becomes more challenging to characterize.
In the distribution-specific setting where $\mu|_X$ is fixed and known to the learner,
\citet*{hu2022metric} characterize the sample complexity of achieving $\mae_B(f)\le \varepsilon$ using $\log N_{\mu|_X,B}(S,\Theta(\varepsilon))$: the metric entropy of $S$ w.r.t.\ the \emph{dual Minkowski norm} defined based on $B$ and $\mu|_X$. They also give an equivalent characterization using $\log N_{\mu|_X,S}(B,\Theta(\varepsilon))$ with the roles of $S$ and $B$ swapped.
In the distribution-free setting where $\mu|_X$ is not known to the learner, they only give a sample complexity characterization when $S$ contains all functions $s:X\rightarrow [-1,1]$ using the fat-shattering dimension of $B$, and they leave the case of a general source class $S$ as an open question. In this work, we answer this open question by giving a sample complexity characterization for arbitrary $S$ and $B$ in the distribution-free setting using the \emph{mutual fat-shattering dimension} of $(S,B)$, which we define to be the largest size of a subset of $X$ fat-shattered by both $S$ and $B$.

To prove this sample complexity characterization for distribution-free realizable multiaccuracy, we need a lower and an upper bound on the sample complexity. While we prove the lower bound using relatively standard techniques, the upper bound is much more challenging to prove. We prove the upper bound by reducing multiaccuracy for $(S,B)$ to comparative learning for multiple pairs of \emph{binary} hypothesis classes $(S',B')$ with $\vc(S',B')$ bounded in terms of the mutual fat-shattering dimension of $(S,B)$.
We implement this reduction via an intermediate task which we call \emph{correlation maximization}, and
the main challenge here is that our learner $L$ solving comparative learning for $(S',B')$ is limited by the source class $S'$ and can only handle data points realizable by a binary hypothesis in $S'$. Therefore, we must carefully transform the data points from multiaccuracy to ones acceptable by the comparative learner $L$. We implement this transformation by combining a rejection sampling technique with a \emph{non-uniform covering} type of technique used in a recent work by \citet{hopkins2022realizable}. The difference between the real-valued class $B$ and the binary class $B'$ also poses a challenge, which we solve by taking multiple choices of $B'$ and show that, roughly speaking, the convex hull of the chosen $B'$ approximately includes $B$.

Our characterization using the mutual fat-shattering dimension holds not only for multiaccuracy, but also for a related task called \emph{multicalibration} \citep*{hebert2018multicalibration}. Here, we replace $\mae$ by the multicalibration error:
\begin{align}
\mce_{\mu,B}(f) :={}& \sup_{b\in B}\sum_{v\in V}|\bE_{(x,y)\sim\mu}[(f(x) - y)b(x)\one (f(x) = v)]| \notag \\
= {} & \sup_{b\in B}\sum_{v\in V}|\bE_{x\sim\mu|_X}[(f(x) - s(x))b(x)\one (f(x) = v)]|,\label{eq:mce-total}
\end{align}
where $V$ is the range of $f$ which we require to be countable.
Multicalibration provides a strong guarantee: \citet*{DBLP:conf/innovations/GopalanKRSW22} show that it implies a notion called \emph{omnipredictors}, allowing us to use our multicalibration results to show a sample complexity upper bound for \emph{comparative regression}.

Our results imply that multiaccuracy and multicalibration share the same sample complexity characterization in the distribution-free realizable setting. In comparison, we show that this is \emph{not} the case in the distribution-specific setting where there is a strong sample complexity separation between them (\Cref{remark:mc-non-duality}). This strong separation only appears in our two-hypothesis-class setting: if the source class $S$ contains all hypotheses $s:X\rightarrow [-1,1]$, then realizable multiaccuracy and multicalibration share the same sample complexity characterization (the metric entropy of $B$ in the distribution-specific setting, and the fat-shattering dimension of $B$ in the distribution-free setting).

\subsection{Our Contributions}

Below we summarize the main contributions of our paper. 

\paragraph{Comparative Learning.} We introduce the task of comparative learning (\Cref{def:comp}) by combining realizable learning and agnostic learning. Specifically, we define comparative learning for any pair of hypothesis classes $S$ and $B$ each consisting of partial binary hypotheses $h:X\rightarrow\{-1,1,*\}$ (denoted by $S,B\subseteq\{-1,1,*\}^X$). As in realizable learning, we assume the learner receives data points $(x,y)\in X\times\{-1,1\}$ generated i.i.d.\ from a distribution $\mu$ satisfying $\Pr_{(x,y)\sim\mu}[s(x) = y] = 1$ for a source hypothesis $s\in S$ (in particular, $\Pr_{(x,y)\sim\mu}[s(x) = *] = 0$). As in agnostic learning, we require the learner to output a model $f:X\rightarrow\{-1,1\}$ satisfying 
\begin{equation}
\label{eq:comp-goal-intro}
{\Pr}_{(x,y)\sim\mu}[f(x)\ne y] \le {\inf}_{b\in B}{\Pr}_{(x,y)\sim\mu}[b(x)\ne y] + \varepsilon
\end{equation}
with probability at least $1-\delta$. 

We characterize the sample complexity of comparative learning, denoted by $\scomp(S,B,\varepsilon,\delta)$, using the mutual VC dimension $\vc(S,B)$ which we define 
as the largest size of a subset $X'\subseteq X$ shattered by both $S$ and $B$ (see \Cref{sec:vc} for the formal definition of shattering). In \Cref{thm:comp}, assuming $\varepsilon,\delta\in (0,1/4)$ and $\vc(S,B)\ge 2$, we show a sample complexity upper bound of
\begin{equation}
\label{eq:comp-upper-intro}
\scomp(S,B,\varepsilon,\delta) \le O\left(\frac{\vc(S,B)}{\varepsilon^2}\log^2\left(\frac{\vc(S,B)}{\varepsilon}\right) + \frac 1{\varepsilon^2}\log\left(\frac 1\delta\right)\right),
\end{equation}
and a lower bound of
\begin{equation}
\label{eq:comp-lower-intro}
\scomp(S,B,\varepsilon,\delta) \ge \Omega\left(\frac{\vc(S,B)}{\varepsilon} + \frac 1\varepsilon \log\left(\frac 1\delta\right)\right).
\end{equation}
These bounds imply that the sample complexity of comparative learning is finite if and only if the mutual VC dimension $\vc(S,B)$ is finite. We show a similar sample complexity characterization for a learning task involving an arbitrary number of hypothesis classes in \Cref{sec:agree}.

\paragraph{Correlation Maximization.} 
As an intermediate step towards characterizing the sample complexity of realizable multiaccuracy and multicalbration,
we extend comparative learning to real-valued hypothesis classes by introducing correlation maximization (\Cref{def:cm}). Here, the hypothesis classes $S$ and $B$ can contain any partial real-valued hypotheses $h:X\rightarrow[-1,1]\cup\{*\}$ (denoted by $S,B\subseteq([-1,1]\cup\{*\})^X$), and every data point $(x,y)\in X\times[-1,1]$ can have a label $y$ taking any value in $[-1,1]$. We assume that the data points are drawn i.i.d.\ from a distribution $\mu$ over $X\times[-1,1]$ satisfying $\bE_{(x,y)\sim\mu}[y|x] = s(x)$ for a source hypothesis $s\in S$, and we require the output model $f:X\to \{-1,1\}$ to satisfy 
\[
\bE_{(x,y)\sim\mu}[yf(x)] \ge {\sup}_{b\in B}\bE_{(x,y)\sim\mu}[y\ptimes b(x)] - \varepsilon
\]
with probability at least $1-\delta$. 
Here, we define the \emph{generalized product} $u_1\ptimes u_2$ for $u_1\in\bR$ and $u_2\in [-1,1]\cup\{*\}$ such that $u_1\ptimes u_2 = u_1u_2$ if $u_2\in [-1,1]$, and $u_1\ptimes u_2 = -|u_1|$ if $u_2 = *$. 
The requirement that the output model $f:X\to \{-1,1\}$ produces binary values $f(x)\in \{-1,1\}$ rather than real values $f(x)\in [-1,1]$ is naturally satisfied by our learners for correlation maximization, but it is not essential to any of our results related to correlation maximization. 
In the special case where $S$ and $B$ are both binary, correlation maximization and comparative learning become equivalent 
for values of $\varepsilon$ differing by exactly a factor of $2$, i.e., the goal \eqref{eq:comp-goal-intro} of comparative learning can be equivalently written as
\[
\bE_{(x,y)\sim\mu}[yf(x)] \ge {\sup}_{b\in B}{\Pr}_{(x,y)\sim\mu}[y\ptimes b(x)] - 2\varepsilon.
\]

We give an upper bound on the sample complexity of correlation maximization using the mutual fat-shattering dimension $\fat_\eta(S,B)$ which we define as the largest size of a subset $X'\subseteq X$ that is $\eta$-fat shattered by both $S$ and $B$ (see \Cref{sec:vc} for the definition of fat shattering). In \Cref{thm:cm}, assuming $\varepsilon,\delta\in (0,1/2)$, we show that the sample complexity of correlation maximization is upper bounded by
\[
O\left(\frac{\fat_{\varepsilon/5}(S,B)}{\varepsilon^4}\log^2\left(\frac{\fat_{\varepsilon/5}(S,B)}{\varepsilon}\right)\log\left(\frac 1\varepsilon\right) + \frac 1{\varepsilon^4}\log\left(\frac 1{\varepsilon}\right)\log\left(\frac 1{\delta}\right)\right).
\]
We also consider a deterministic-label setting, which is a special case of correlation maximization where the data distribution $\mu$ satisfies $\Pr_{(x,y)\sim\mu}[y = s(x)] = 1$ for a source class $s\in S$. In this case, we prove the following improved sample complexity upper bound (\Cref{thm:dcm-real}):
\[
O\left(\frac{\fat_{\varepsilon/5}(S,B)}{\varepsilon^2}\log^2\left(\frac{\fat_{\varepsilon/5}(S,B)}{\varepsilon}\right) + \frac 1{\varepsilon^2}\log\left(\frac 1{\varepsilon\delta}\right)\right).
\]
We also show that the mutual fat-shattering dimension does not in general give a lower bound for the sample complexity of correlation maximization. This is because sample complexity duality does not hold for correlation maximization (see \Cref{sec:non-duality-cm}). In \Cref{thm:cm} we state a refined sample complexity upper bound for correlation maximization and we leave it as an open question to determine whether there is a matching lower bound (see \Cref{q:cm}).

\paragraph{Realizable Multiaccuracy and Multicalibration.} We study multiaccuracy and multicalibration in the same setting as in \citep{hu2022metric} with a focus on the distribution-free realizable setting (\Cref{def:ma,def:mc}). As in correlation maximization, the classes $S,B\subseteq([-1,1]\cup\{*\})^X$ can contain any partial real-valued hypotheses $h:X\to [-1,1]\cup\{*\}$, and we assume that the data distribution $\mu$ satisfies $\bE_{(x,y)\sim\mu}[y|x] = s(x)$ for a source hypothesis $s\in S$. The goal is to output a model $f:X\to[-1,1]$ such that $\mae_{\mu,B}(f) \le \varepsilon$ (in multiaccuracy) or $\mce_{\mu,B}(f) \le \varepsilon$ (in multicalibration) with probability at least $1-\delta$, where we generalize the definitions of $\mae_{\mu,B}$ and $\mce_{\mu, B}$ to partial hypothesis classes $B$. In \Cref{thm:ma-mc,thm:ma-mc-lower}, assuming $\varepsilon,\delta\in (0,1/2)$, we show the following lower and upper bounds on the sample complexity of realizable multiaccuracy and multicalibration (denoted by $\sma(S,B,\varepsilon,\delta)$ and $\smc(S,B,\varepsilon,\delta)$, respectively):
\begin{align}
&\Omega\left( \fat_{\sqrt{3\varepsilon}}(S,B) \right) - 1\notag\\
\le {} & \sma(S,B,\varepsilon,\delta)\notag\\
\le {} & \smc(S,B,\varepsilon,\delta)\notag\\
\le {} & O \left( 
	\frac{\fat_{\varepsilon/7}(S,B)}{\varepsilon^6}\log^2\left(
		\frac{\fat_{\varepsilon/7}(S,B)}{\varepsilon}
	\right)
	\log\left(\frac 1\varepsilon\right) + \frac{1}{\varepsilon^6}\log\left(\frac 1\varepsilon\right)\log\left(\frac 1{\varepsilon\delta}\right) \label{eq:ma-mc}
\right).
\end{align}

This implies that the sample complexity of realizable multiaccuracy and multicalibration is finite for every $\varepsilon > 0$ if and only if $\fat_\eta(S,B)$ is finite for every $\eta > 0$. Also, the sample complexity is polynomial in $1/\varepsilon$ if and only if $\fat_\eta(S,B)$ is polynomial in $1/\eta$. This answers an open question in \citep{hu2022metric}. We also show an improved sample complexity upper bound in \Cref{thm:ma-mc-binary} for the special case where $S$ is binary.

Our sample complexity upper and lower bounds stated in \Cref{thm:ma-mc,thm:ma-mc-lower} are actually stronger, and they use a finer definition of the mutual fat-shattering dimension. Specifically, if we define $\fat_{\eta_1,\eta_2}(S,B)$ to be the largest size of a subset $X'\subseteq X$ that is $\eta_1$-fat shattered by $S$ and $\eta_2$-fat shattered by $B$, then $\sma(S,B,\varepsilon,\delta)$ and $\smc(S,B,\varepsilon,\delta)$ are both finite if $\fat_{\eta_1,\eta_2}(S,B)$ is finite for some $\eta_1,\eta_2$ satisfying $2\eta_1 + 4\eta_2 < \varepsilon$, and $\sma(S,B,\varepsilon,\delta)$ and $\smc(S,B,\varepsilon,\delta)$ are both infinite if $\fat_{\eta_1,\eta_2}(S,B)$ is infinite for some $\eta_1,\eta_2$ satisfying $\eta_1\eta_2 > 2\varepsilon$. An open question is whether this gap can be closed to provide an exact characterization of the finiteness of $\sma(S,B,\varepsilon,\delta)$ and $\smc(S,B,\varepsilon,\delta)$ for every choice of $(S,B,\varepsilon,\delta)$.

\paragraph{Covering Number Bound.}
The sample complexity characterization for distribution-specific realizable multiaccuracy in \citep{hu2022metric} is in terms of a covering number defined for every pair of total hypothesis classes $(S,B)$. 
A consequence of our sample complexity characterization for distribution-free realizable multiaccuracy and multicalibration is an upper bound on this covering number in terms of the mutual fat-shattering dimension of $(S,B)$. This can be viewed as a generalization of a classic upper bound on the covering number of a binary hypothesis class $H$ in terms of its VC dimension. Interestingly, our covering number upper bounds in the two-hypothesis-class setting hold despite the fact that a corresponding \emph{uniform convergence bound} does not hold. See \Cref{remark:covering} for more details.

\paragraph{Boosting.} Analogous to the weak agnostic learning task considered by \citet{MR2582918} and \citet{DBLP:conf/innovations/Feldman10}, we introduce \emph{weak} comparative learning (\Cref{def:wcomp}), where the goal \eqref{eq:comp-goal-intro} of comparative learning is relaxed to 
\[
{\Pr}_{(x,y)\sim\mu}[f(x)\ne y] \le 1/2 - \gamma,
\]
under the additional assumption that
\[
{\inf}_{b\in B}{\Pr}_{(x,y)\sim\mu}[b(x)\ne y] \le 1/2 - \alpha.
\]
Here, $\alpha,\gamma\in (0,1/2)$ are parameters of the weak comparative learning task.
Extending results in \citep{DBLP:conf/innovations/Feldman10}, 
we show an efficient boosting algorithm that solves (strong) comparative learning
given oracle access to a learner solving weak comparative learning (\Cref{thm:boosting}). 
This result also applies to correlation maximization for real-valued $S$ and $B$ in the deterministic-label setting.

\paragraph{Comparative Regression.} 
We define \emph{comparative regression} by allowing the classes $S$ and $B$ in comparative learning to be real-valued and replacing the classification error $\Pr_{(x,y)\sim\mu}[f(x)\ne y]$ with the expected loss $\bE_{(x,y)\sim\mu}[\ell(y,f(x))]$ for a general loss function $\ell$. 
Specifically,
we take a partial hypothesis class $S\subseteq([-1,1]\cup\{*\})^X$ as the source class, and for simplicity, we take a \emph{total} hypothesis class $B\subseteq [-1,1]^X$ as the benchmark class.
Given a loss function $\ell:[-1,1]\times[-1,1]\to \bR$, we define the comparative regression task as follows. We assume that the data distribution $\mu$ over $X\times[-1,1]$ satisfies $\bE_{(x,y)\sim\mu}[y|x] = s(x)$ for a source hypothesis $s\in S$, and the goal is to output a model $f:X\to [-1,1]$ such that the following holds with probability at least $1-\delta$:
\[
\bE_{(x,y)\sim\mu}[\ell(y,f(x))] \le {\inf}_{b\in B}\bE_{(x,y)\sim\mu}[\ell(y,b(x))] + \varepsilon.
\]

As an application of our sample complexity characterization for realizable multicalibration and the omnipredictors result by \citet{DBLP:conf/innovations/GopalanKRSW22},
in \Cref{thm:compr}, we give a sample complexity upper bound in terms of $\fat_\eta(S,B)$ for a special case of comparative regression (\Cref{def:compr}) where we assume that the label $y$ in each data point is binary and the loss function $\ell$ is convex and Lipschitz. We leave the study of other interesting settings of comparative regression to future work.

\paragraph{Comparative Online Learning.} We extend our notion of comparative learning to the online setting, where we assume that the data points $(x,y)$ are given sequentially, and the learner is required to predict the label of the individual $x$ in each data point before its true label $y$ is shown. For binary hypothesis classes $S,B\subseteq\{-1,1,*\}^X$, we introduce \emph{comparative online learning} (\Cref{def:compo}) where we assume that every data point $(x,y)$ satisfies $y = s(x)$ for some source hypothesis $s\in S$ and we measure the performance of the learner by its \emph{regret}, defined as the number of mistakes it makes minus the minimum number of mistakes made by a benchmark hypothesis $b\in B$.
The goal of comparative online learning is to ensure that the expected regret does not exceed $\varepsilon n$, where $n$ is the total number of data points given to the learner.
In \Cref{sec:online}, we introduce the \emph{mutual Littlestone dimension} $m:=\ldim(S,B)$ and show that it characterizes the smallest $\varepsilon$ achievable in comparative online learning, denoted by $\varepsilon^*$ (\Cref{thm:online}):
\[
\min\left\{\frac 12,\frac{m}{2n}\right\} \le \varepsilon^* \le O\left(\sqrt{\frac mn\log\frac {2m + n}m}\right).
\]
To match the form of our other sample complexity bounds, we can fix $\varepsilon\in (0,1/2)$ and bound the smallest $n$ (denoted by $n^*$) for which we can ensure that the expected regret does not exceed $\varepsilon n$:
\[
\frac{m}{2\varepsilon} \le n^* \le O\left(\frac{m}{\varepsilon^2}\log\frac 1\varepsilon\right).
\]

\paragraph{Sample Complexity Duality.}
Learning tasks involving two hypothesis classes can potentially satisfy \emph{sample complexity duality}, meaning that the sample complexity of the task changes minimally when we swap the roles of the two hypothesis classes.
\citet{hu2022metric} show that sample complexity duality holds for distribution-specific realizable multiaccuracy, assuming that the source class $S$ and the distinguisher class $B$ are both total. 
Specifically, for $S,B\subseteq[-1,1]^X,\varepsilon,\delta\in (0,1/2)$ and a distribution $\mu_X$ over $X$, defining $m:=\sma\sps{\mu_X}(S,B,\varepsilon/8,\delta)$ to be the sample complexity of realizable multiaccuracy with source class $S$ and distinguisher class $B$ in the distribution-specific setting where the data distribution $\mu$ satisfies $\mu|_X = \mu_X$, \citet{hu2022metric} show that
\[
\sma\sps{\mu_X}(B,S,\varepsilon,\delta) \le O\Big(\varepsilon^{-2}(m + \log(1/\delta))\Big).
\]
Results in our work imply that sample complexity duality also holds for comparative learning and realizable multiaccuracy/multicalibration in the distribution-free setting. Specifically, if we define $m := \scomp(S,B,\allowbreak \varepsilon,\delta) + 1$ for $\varepsilon,\delta\in (0,1/4)$ and any partial binary hypothesis classes $S,B\subseteq\{-1,1,*\}^X$, 
then the following holds by \eqref{eq:comp-upper-intro} and \eqref{eq:comp-lower-intro}:
\[
\scomp(B,S,\varepsilon,\delta) \le O\left(\frac{m\log^2 m}{\varepsilon} + \frac{1}{\varepsilon^2}\log\left(\frac 1 \delta\right)\right).
\]
Similarly, 
if we define $m:= \sma(S,B,\varepsilon^2/147,\delta) + 1$
for $\varepsilon,\delta\in (0,1/2)$ and any partial real-valued hypothesis classes $S,B\subseteq ([-1,1]\cup\{*\})^X$,
then the following holds because of \eqref{eq:ma-mc}:
\[
\sma(B,S,\varepsilon,\delta) \le O \left( 
	\frac{m}{\varepsilon^6}\log^2\left(
		\frac{m}{\varepsilon}
	\right)
	\log\left(\frac 1\varepsilon\right) + \frac{1}{\varepsilon^6}\log\left(\frac 1\varepsilon\right)\log\left(\frac 1{\delta}\right) 
\right),
\]
and the same inequality holds after replacing $\sma$ with $\smc$.
In \Cref{sec:non-duality}, we show that sample complexity duality does \emph{not} hold for other learning tasks we consider in this paper, completing \Cref{table:duality}.

\subsection{Related Work}
Motivated by multi-group/sub-group fairness, many recent papers also study learning tasks involving two (or more) hypothesis classes.
Multi-group agnostic learning, introduced by \citet{DBLP:conf/innovations/BlumL20} and \citet{rothblum2021multi}, involves a subgroup class $G$ and a benchmark class $B$, where each subgroup $g\in G$ is a subset of the individual set $X$. The goal in multi-group agnostic learning is to learn a model such that the loss experienced by each subgroup $g\in G$ is not much larger than the minimum loss for that group achievable by a benchmark $b\in B$.
\citet{tosh2022simple} show sample complexity upper bounds for multi-group agnostic learning in terms of the individual complexities of $G$ and $B$. Thus, the upper bound does not depend on the interaction of the two classes. In contrast, the individual complexities of the source and benchmark (resp.\ distinguisher) classes are not sufficient for our sample complexity characterizations for comparative learning (resp.\ realizable multiaccuracy and multicalibration). 
\citet*{10.1145/3531146.3533172} propose algorithms that can improve the loss on subgroups in the spirit of multi-group agnostic learning based on suggestions from auditors. 
A constrained loss minimization task introduced by
\citet*{kearns2018preventing} also involves a subgroup class and a benchmark class, but the subgroup class is used to impose (fairness) constraints on the learned model and the loss/error is evaluated over the entire population (not on each subgroup). The results in \citet{kearns2018preventing}
assume that the complexities of both classes are bounded, whereas our sample complexity upper bounds (for different tasks) in this paper can be finite even when the complexities of both classes are infinite.
Motivated by the goal of learning \emph{proxies} for sensitive features that can be used to achieve fairness in downstream learning tasks,
\citet*{diana2022proxies} consider a learning task involving \emph{three} hypothesis classes: a source class, a \emph{proxy class}, and a \emph{downstream class}. Again, the sample complexity upper bounds in \citep{diana2022proxies} are in terms of the individual complexities of these classes.
\citet*{shabat2020sample} and \citet*{rosenberg2022exploration} show uniform convergence bounds for multicalibration in a two-hypothesis-class setting, but their bounds are yet again in terms of the individual complexities of the two classes and are finite only when the complexities of both classes are finite.

The notions of multiaccuracy and multicalibration can be viewed in the framework of outcome indistinguishability \citep*{MR4398905,dwork2022beyond}. 
Multicalibrated predictors have been applied to solve loss minimization for rich families of loss functions and/or under a variety of constraints, leading to the notion of omnipredictors \citep*{DBLP:conf/innovations/GopalanKRSW22,hu2022omnipredictors,globus2022multicalibrated}.
Recently, \citet*{gopalan2022loss} show that certain omnipredictors can be obtained from the weaker condition of \emph{calibrated multiaccuracy}.
Multicalibrated predictors can also be used for statistical inference on rich families of target distributions \citep*{kim2022universal}.
The notion of multicalibration has been extended to various settings in \citep*{jung2021moment,zhao2021calibrating,gopalan2022multicalibrated,gopalan2022low}.

Many of our results in this paper are based on sample complexity characterizations of learning partial hypotheses by \citet*{MR4399723}. Some of the key techniques used in \citep{MR4399723} include the $1$-inclusion graph algorithm \citep*{MR1109741}, sample compression schemes \citep*{littlestone1986relating}, sample compression generalization bounds \citep*{graepel2005pac}, and a reduction from agnostic learning to realizable learning \citep*{david2016supervised}.

\subsection{Paper Organization}
The remainder of the paper is organized as follows. In \Cref{sec:preli}, we introduce basic notation and definitions that will be used throughout. In \Cref{sec:comp}, we characterize the sample complexity of comparative learning, and in \Cref{sec:cm} we extend comparative learning to real-valued hypothesis classes and show a sample complexity upper bound for correlation maximization. \Cref{sec:ma-mc} employs the results of \Cref{sec:cm} to derive upper and lower bounds for the sample complexity of realizable multiaccuracy and multicalibration. \Cref{sec:boosting,,sec:compr,,sec:online} study boosting, regression, and online learning, respectively, in the comparative learning setting. Additional discussions of sample complexity duality and extensions to the comparative learning model can be found in the appendix.

\section{Preliminaries}
\label{sec:preli}
Throughout the paper, we use $X$ to denote a non-empty set and we refer to the elements in $X$ as \emph{individuals}. We use the term \emph{hypothesis} to refer to an arbitrary function $h:X\rightarrow \bR\cup \{*\}$ assigning a label $h(x)$ to each individual $x\in X$. 
The label $h(x)$ can be a real number or the \emph{undefined label}~$*$.
We say a hypothesis $h$ is \emph{total} if $h(x)\ne *$ for every $x\in X$. 
When we do not require a hypothesis $h$ to be total, we often say $h$ is \emph{partial} to emphasize that $h$ may or may not be total.
We say a hypothesis $h$ is \emph{binary} if $h(x)\in \{-1,1,*\}$ for every $x\in X$, and we say $h$ is \emph{real-valued} if $h$ may or may not be binary.

A \emph{hypothesis class} $H$ is a set consisting of hypotheses $h:X\rightarrow \bR\cup\{*\}$, i.e., $H\subseteq (\bR\cup\{*\})^X$ where we use $B^A$ to denote the set of all functions $f:A\rightarrow B$ for any two sets $A$ and $B$. 
A \emph{total} hypothesis class is a set $H\subseteq \bR^X$, and
a \emph{binary} hypothesis class is a set $H\subseteq \{-1,1,*\}^X$. 
We say a hypothesis class $H$ is \emph{partial} if it may or may not be total, and we say $H$ is \emph{real-valued} if it may or may not be binary.

To avoid measurability issues, all probability distributions in this paper are assumed to be discrete, i.e., to have a countable support. For any distribution $\mu$ over $X\times \bR$, we use $\mu|_X$ to denote the marginal distribution of $x$ with $(x,y)$ drawn from $\mu$.

\subsection{VC and Fat-shattering Dimensions for Partial Hypothesis Classes}
\label{sec:vc}
The \emph{VC dimension} was introduced by \citet{vapnik1971vcdim} for any total binary hypothesis class. As in \citep{bartlett1995more} and \citep{MR4399723}, we consider a natural generalization of the VC dimension to all partial binary hypothesis classes $H\subseteq\{-1,1,*\}^X$ as follows.
We say a subset $X'\subseteq X$ is \emph{shattered} by $H$ if for every total binary function $\xi:X'\rightarrow\{-1,1\}$ there exists $h\in H$ such that $h(x) = \xi(x)$ for every $x\in X'$. The VC dimension of $H$ is defined to be 
\[
\vc(H) := \sup\{|X'|:X'\subseteq X,\textnormal{$X'$ is shattered by $H$}\}.
\]

An analogous notion of the VC dimension for real-valued hypothesis classes is the \emph{fat-shattering dimension}
introduced by \citet{MR1279411}.
The fat-shattering dimension was originally defined for total hypothesis classes, but it is natural to generalize it to all partial hypothesis classes in a similar fashion to the generalization of the VC dimension to partial binary classes:
given a hypothesis class $H\subseteq (\bR\cup\{*\})^X$ and a margin $\eta \ge 0$, we say a subset $X'\subseteq X$ is $\eta$-fat shattered by $H$ w.r.t.\ a reference function $r:X'\rightarrow \bR$ if for every total binary function $\xi:X'\rightarrow\{-1,1\}$, there exists $h\in H$ such that for every $x\in X'$,
\[
h(x)\ne * \textnormal{ and } \xi(x)(h(x) - r(x)) > \eta.
\]
We sometimes omit the mention of $r$ and say $X'$ is $\eta$-fat shattered by $H$ if such a function $r$ exists.
The $\eta$-fat-shattering dimension of $H$ is defined to be
\[
\fat_\eta (H) := \sup\{|X'|: X'\subseteq X, \textnormal{$X'$ is $\eta$-fat shattered by $H$}\}.
\]

\subsection{An Abstract Learning Task}
\label{sec:abstract}
We study a variety of learning tasks throughout the paper, and to help define each task concisely, we first define an abstract learning task $\learn$, of which each specific task we consider is a special case. 

Let $Z$ and $F$ be two non-empty sets.
In the abstract learning task $\learn$, an algorithm (\emph{learner}) takes data points in $Z$ as input and it outputs a model in $F$. We choose a \emph{distribution class} $\dstr$ consisting of distributions $\mu$ over $Z$, and for each distribution $\mu\in \dstr$, we choose a subset $F_\mu\subseteq F$ to be the \emph{admissible set}.
When the input data points are drawn i.i.d.\ from a distribution $\mu\in \dstr$, we require the learner to output a model $f$ in the admissible set $F_\mu$ with large probability.
Formally, for $n\in \bZ_{\ge 0}$ and $\delta\in \bR_{\ge 0}$, we say a (possibly inefficient and randomized) learner $L$ solves the learning task $\learn_n(Z,F,\dstr,(F_\mu)_{\mu\in \dstr},\delta)$ if
\begin{enumerate}
\item $L$ takes $n$ data points $z_1,\ldots,z_n\in Z$ as input;
\item $L$ outputs a model $f\in F$;
\item For any distribution $\mu\in \dstr$, if the data points $z_1,\ldots,z_n$ are drawn i.i.d.\ from $\mu$, then with probability at least $1-\delta$, the output model $f$ belongs to $F_\mu$. The probability is over the randomness in the data points $z_1,\ldots,z_n$ and the internal randomness in learner $L$.
\end{enumerate}
By a slight abuse of notation, we also use $\learn_n(Z,F,\dstr,(F_\mu)_{\mu\in \dstr},\delta)$ to denote the set of all learners $L$ that solve the learning task.
Clearly, the learner set $\learn_n(Z,F,\dstr,(F_\mu)_{\mu\in \dstr},\delta)$ is monotone w.r.t.\ $n$: for any nonnegative integers $n$ and $n'$ satisfying  $n\le n'$, we have
\[
\learn_n(Z,F,\dstr,(F_\mu)_{\mu\in \dstr},\delta) \subseteq \learn_{n'}(Z,F,\dstr,(F_\mu)_{\mu\in \dstr},\delta)
\]
because when given $n'$ data points, a learner can choose to ignore $n' - n$ data points and only use the remaining $n$ data points.
We define the \emph{sample complexity} $\slearn(Z,F,\dstr,(F_\mu)_{\mu\in \dstr},\delta)$ to be the smallest $n$ for which there exists a learner in $\learn_n(Z,F,\dstr,(F_\mu)_{\mu\in \dstr},\delta)$:
\begin{equation}
\label{eq:samp-learn}
\slearn(Z,F,\dstr,(F_\mu)_{\mu\in \dstr},\delta) := \inf\{n\in \bZ_{\ge 0}:\learn_n(Z,F,\dstr,(F_\mu)_{\mu\in \dstr},\allowbreak \delta)\ne \emptyset\}.
\end{equation}
\subsection{Learning Partial Binary Hypotheses}
We define realizable learning and agnostic learning for any partial binary hypothesis class $H$ as special cases of the abstract learning task $\learn$ in \Cref{sec:abstract}. These learning tasks have been studied by \citet{bartlett1995more,MR2042042,MR4399723}, and the results in these previous works are important for many of our results throughout the paper.

\begin{definition}[Realizable learning ($\rea$)]
\label{def:rea}
Given a partial binary hypothesis class $H\subseteq\{-1,1,*\}^X$, an error bound $\varepsilon \ge 0$, a failure probability bound $\delta \ge 0$, and a nonnegative integer $n$, we define $\rea_n(H,\varepsilon,\delta)$ to be $\learn_n(Z,F,\dstr,(F_\mu)_{\mu\in \dstr},\delta)$ where $Z = X \times\{-1,1\}$, $F = \{-1,1\}^X$, $\dstr$ consists of all distributions $\mu$ over $X\times\{-1,1\}$ satisfying $\Pr_{(x,y)\sim\mu}[h(x) = y] = 1$ for some $h\in H$, and $F_\mu$ consists of all models $f:X\rightarrow\{-1,1\}$ satisfying
\[
{\Pr}_{(x,y)\sim\mu}[f(x)\ne y]\le \varepsilon.
\]
\end{definition}
A key assumption in realizable learning is that any data distribution $\mu\in \dstr$ is consistent with some hypothesis $h\in H$, i.e., $\Pr_{(x,y)\sim\mu}[h(x) = y] = 1$. In particular, this implies that $\Pr_{(x,y)\sim\mu}[h(x) = *] = 0$ because $y\in \{-1,1\}$ cannot be the undefined label $*$. In agnostic learning, we remove such assumptions on the data distribution:

\begin{definition}[Agnostic learning ($\agn$)]
\label{def:agn}
Given a partial binary hypothesis class $H\subseteq\{-1,1,*\}^X$, an error bound $\varepsilon \ge 0$, a failure probability bound $\delta \ge 0$, and a nonnegative integer $n$, we define $\agn_n(H,\varepsilon,\delta)$ to be $\learn_n(Z,F,\dstr,(F_\mu)_{\mu\in \dstr},\delta)$ where $Z = X \times\{-1,1\}$, $F = \{-1,1\}^X$, $\dstr$ consists of all distributions $\mu$ over $X\times\{-1,1\}$, and $F_\mu$ consists of all models $f:X\rightarrow\{-1,1\}$ satisfying
\begin{equation}
\label{eq:agn-goal}
{\Pr}_{(x,y)\sim\mu}[f(x)\ne y]\le {\inf}_{h\in H}\Pr[h(x) \ne y] + \varepsilon.
\end{equation}
\end{definition}
There is no assumption on the data distributions $\mu\in \dstr$ in agnostic learning: $\mu$ can be any distribution over $X\times\{-1,1\}$. The hypothesis class $H$ is used to relax the objective in agnostic learning: instead of requiring the error $\Pr_{(x,y)\sim\mu}[f(x) \ne y]$ of the model $f$ to be at most $\varepsilon$, we compare the error of $f$ with the smallest error of a hypothesis $h\in H$ as in \eqref{eq:agn-goal}. Note that for $(x,y)\in X\times\{-1,1\}$ and $h:X\to\{-1,1,*\}$, we have $h(x)\ne y$ whenever $h(x) = *$.

For every learning task we define throughout the paper, we also implicitly define the corresponding sample complexity as in \eqref{eq:samp-learn}. For example, the sample complexity of realizable learning is
\[
\srea(H,\varepsilon,\delta) := \inf\{n\in \bZ_{\ge 0}:\rea_n(H,\varepsilon,\delta)\ne \emptyset\}.
\]
We omit the sample complexity definitions for all other learning tasks.

\subsection{Other Notation}
For a statement $P$, we define its indicator $\one(P)$ such that $\one(P) = 1$ if $P$ is true, and $\one(P) = 0$ if $P$ is false. We define $\sign:\bR\rightarrow\{-1,1\}$ such that for every $u\in \bR$, $\sign(u) = 1$ if $u \ge 0$, and $\sign(u) = -1$ if $u < 0$. 
For functions $f_1:U_1\to U_2$ and $f_2:U_2 \to U_3$, we use $f_2\circ f_1:U_1\to U_3$ to denote their composition, i.e., $(f_2\circ f_1)(u) = f_2(f_1(u))$ for every $u\in U_1$.
We use $\log(\cdot)$ to denote the base-$2$ logarithm. For $u\in\bR$, we define $\log_+(u) := \log(\max\{2,u\})$.
For $u\in [-1,1]$, we use $\ber^*(u)$ to denote the distribution over $\{-1,1\}$ with mean $u$ (by analogy with the Bernoulli distribution over $\{0,1\}$).

\section{Sample Complexity of Comparative Learning}
\label{sec:comp}
Given a source class $S\subseteq\{-1,1,*\}^X$ and a benchmark class $B\subseteq\{-1,1,*\}^X$, we formally define the task of comparative learning below by combining the distribution assumption in realizable learning and the relaxed objective in agnostic learning:
\begin{definition}[Comparative learning ($\comp$)]
\label{def:comp}
Given two binary hypothesis classes $S,B\subseteq\{-1,1,*\}^X$, an error bound $\varepsilon \ge 0$, a failure probability bound $\delta \ge 0$, and a nonnegative integer $n$, we define $\comp_n(S,B,\varepsilon,\delta)$ to be $\learn_n(Z,F,\dstr,(F_\mu)_{\mu\in \dstr},\delta)$ where $Z = X\times\{-1,1\}$, $F = \{-1,1\}^X$, $\dstr$ consists of all distributions $\mu$ over $X\times \{-1,1\}$ such that $\Pr_{(x,y)\sim\mu}[s(x) = y] = 1$ for some $s\in S$, and $F_\mu$ consists of all models $f:X\rightarrow\{-1,1\}$ such that 
\begin{equation}
\label{eq:comp-goal}
{\Pr}_{(x,y)\sim\mu}[f(x)\ne y] \le {\inf}_{b\in B}{\Pr}_{(x,y)\sim\mu}[b(x)\ne y] + \varepsilon.
\end{equation}
\end{definition}
The data distribution $\mu\in \dstr$ in comparative learning is constrained to be consistent with a \emph{source hypothesis} $s\in S$, i.e., $\Pr_{(x,y)\sim\mu}[s(x) = y] = 1$, and the error of the output model $f$ is compared with the smallest error of a \emph{benchmark hypothesis} $b\in B$ as in \eqref{eq:comp-goal}.

In this section, we characterize the sample complexity of comparative learning for every source class $S\subseteq\{-1,1,*\}^X$ and every benchmark class $B\subseteq \{-1,1,*\}^X$ by proving \Cref{thm:comp} below. Our characterization is based on the \emph{mutual VC dimension} $\vc(S,B)$, which we define as follows:
\begin{equation}
\label{eq:vc}
\vc(S,B) := \{|X'|:X'\subseteq X, \textnormal{$X'$ is shattered by both $S$ and $B$}\}.
\end{equation}
\begin{theorem}
\label{thm:comp}
Let $S,B\subseteq \{-1,1,*\}^X$ be binary hypothesis classes. For any $\varepsilon,\delta \in (0,1/4)$,
the sample complexity of comparative learning satisfies the following upper bound:
\begin{align}
\scomp(S,B,\varepsilon,\delta) & = O\left(\frac{\vc(S,B)}{\varepsilon^2}\log_{+}^2\left(\frac{\vc(S,B)}{\varepsilon}\right) + \frac 1{\varepsilon^2}\log\left(\frac 1\delta\right)\right).\label{eq:comp-upper}
\end{align}
When $\vc(S,B)\ge 2$, we have the following lower bound:%
\begin{align}
\scomp(S,B,\varepsilon,\delta) & = \Omega\left(\frac{\vc(S,B)}{\varepsilon} + \frac 1\varepsilon \log\left(\frac 1\delta\right)\right).\label{eq:comp-lower}
\end{align}
\end{theorem}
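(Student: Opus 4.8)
The plan is to prove \Cref{thm:comp} via the reduction to learning partial binary hypotheses sketched in the introduction, invoking the sample complexity characterizations of \citet{MR4399723} as a black box. The key object is the class of \emph{agreement hypotheses}: for $s\in S$ and $b\in B$, let $\ba_{s,b}:X\to\{-1,1,*\}$ be defined by $\ba_{s,b}(x) = s(x)$ if $s(x) = b(x)$ (with both values in $\{-1,1\}$) and $\ba_{s,b}(x) = *$ otherwise (in particular whenever $s(x) = *$ or $b(x) = *$), and let $\bA_{S,B} := \{\ba_{s,b} : s\in S, b\in B\}$. The first step is the combinatorial identity $\vc(\bA_{S,B}) = \vc(S,B)$: a set $X'$ is shattered by $\bA_{S,B}$ exactly when it is shattered by both $S$ and $B$. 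The ``$\Leftarrow$'' direction is immediate (if $\xi:X'\to\{-1,1\}$ is realized by both $s\in S$ and $b\in B$ then $\ba_{s,b}$ agrees with $\xi$ on all of $X'$); the ``$\Rightarrow$'' direction follows because $\ba_{s,b}(x)\neq *$ forces $s(x) = b(x) = \ba_{s,b}(x)$, so a shattering witness for $\bA_{S,B}$ gives, for each $\xi$, a pair $(s,b)$ with $s|_{X'} = b|_{X'} = \xi$.

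Next I would set up the two-way reduction between comparative learning for $(S,B)$ and learning $\bA_{S,B}$. For the \textbf{upper bound}, I claim $\comp_n(S,B,\varepsilon,\delta)\supseteq \agn_n(\bA_{S,B},\varepsilon,\delta)$ up to a constant-factor change in $\varepsilon$ (or exactly, depending on bookkeeping): given a comparative-learning data distribution $\mu$ consistent with some source $s\in S$, the crucial observation is that for every $b\in B$ and every $(x,y)$ in the support with $y = s(x)$, we have $\one(\ba_{s,b}(x)\neq y) \le \one(b(x)\neq y)$, since $\ba_{s,b}(x)\neq y$ implies $b(x)\neq s(x) = y$ (as $\ba_{s,b}(x)=y$ whenever $b(x)=y$, using $s(x)=y$). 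Hence $\inf_{a\in \bA_{S,B}}\Pr[a(x)\neq y] \le \inf_{b\in B}\Pr[b(x)\neq y]$, while $\ba_{s,s}$ — wait, $s\in B$ need not hold; instead use that the realizability of $\mu$ by $s$ together with $\ba_{s,b}$ for the optimal $b$ gives a hypothesis in $\bA_{S,B}$ whose error is at most that of $b$. Thus any agnostic learner for $\bA_{S,B}$ outputs $f$ with $\Pr[f(x)\neq y] \le \inf_{a}\Pr[a(x)\neq y] + \varepsilon \le \inf_{b\in B}\Pr[b(x)\neq y]+\varepsilon$, solving comparative learning. Then plug in the agnostic sample complexity bound $\sagn(H,\varepsilon,\delta) = O(\varepsilon^{-2}(\vc(H)\log_+^2(\vc(H)/\varepsilon) + \log(1/\delta)))$ from \citep{MR4399723} with $H = \bA_{S,B}$ and $\vc(\bA_{S,B}) = \vc(S,B)$ to obtain \eqref{eq:comp-upper}.

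For the \textbf{lower bound}, I would show the reverse: $\srea(\bA_{S,B},\varepsilon,\delta)$ lower-bounds $\scomp(S,B,O(\varepsilon),\delta)$, by taking a realizable-learning instance for $\bA_{S,B}$ — data consistent with some $\ba_{s,b}$, so that $s$ realizes the marginal behavior on the defined part — and arguing any comparative learner must produce a model with small error there; combined with the realizable partial-hypothesis lower bound $\srea(H,\varepsilon,\delta) = \Omega(\varepsilon^{-1}(\vc(H) + \log(1/\delta)))$ from \citep{MR4399723}, and the condition $\vc(S,B)\ge 2$ to avoid degeneracies, this yields \eqref{eq:comp-lower}. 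Alternatively, and perhaps more cleanly for the lower bound, I would directly exhibit a hard family of distributions supported on a set $X'$ of size $\vc(S,B)$ shattered by both classes: pick $s\in S$ realizing a labeling, put roughly $\varepsilon$-mass near the uniform distribution on $X'$, and use that for any labeling pattern there is a benchmark $b\in B$ achieving zero error on $X'$, so the comparative goal forces $\Pr[f(x)\neq y]\le\varepsilon$ — i.e., the learner must essentially realizably learn a VC-dimension-$d$ class, giving the standard $\Omega(d/\varepsilon)$ and $\Omega(\varepsilon^{-1}\log(1/\delta))$ bounds. The main obstacle I anticipate is the careful handling of the $\varepsilon$-to-$\varepsilon/c$ conversions and the partiality of $s$ and $b$ in the reduction chain — in particular making sure that the realizability assumption $\Pr[s(x)=y]=1$ (which forbids $s(x)=*$ on the support) interacts correctly with the undefined labels of $\ba_{s,b}$, and that the agnostic-to-comparative direction does not secretly need $S\subseteq B$ or similar. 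Everything else is bookkeeping on top of the cited partial-hypothesis bounds.
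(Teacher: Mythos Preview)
Your proposal is correct and matches the paper's approach essentially line for line: the paper defines $\bA_{S,B}$, proves $\vc(\bA_{S,B})=\vc(S,B)$ via the same shattering equivalence you give, shows $\agn_n(\bA_{S,B},\varepsilon,\delta)\subseteq\comp_n(S,B,\varepsilon,\delta)$ and $\comp_n(S,B,\varepsilon,\delta)\subseteq\rea_n(\bA_{S,B},\varepsilon,\delta)$, and then plugs in the bounds from \citet{MR4399723}. Two small cleanups relative to your write-up: both reductions are \emph{exact} with no constant-factor loss in $\varepsilon$ (your hedge ``up to a constant-factor change'' is unnecessary --- in fact under $\Pr[s(x)=y]=1$ one has $\Pr[\ba_{s,b}(x)\neq y]=\Pr[b(x)\neq y]$, not just $\le$), and for the lower bound the clean reduction route is all that is needed --- realizability by $\ba_{s,b}$ forces $\Pr[s(x)=y]=\Pr[b(x)=y]=1$, so the benchmark is zero and the comparative learner must achieve error $\le\varepsilon$, exactly as in realizable learning for $\bA_{S,B}$.
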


Our proof of \Cref{thm:comp} is based on results by \citet{MR4399723} that characterize the sample complexity of realizable and agnostic learning for a partial hypothesis class $H\subseteq\{-1,1,*\}^X$:
\begin{theorem}[\citep{MR4399723}]
Let $H\subseteq \{-1,1,*\}^X$ be a binary hypothesis class. For any $\varepsilon,\delta \in (0,1/4)$,
the sample complexity of agnostic learning satisfies
\begin{align}
\sagn(H,\varepsilon,\delta) & = O\left(\frac{\vc(H)}{\varepsilon^2}\log^2_+\left(\frac{\vc(H)}{\varepsilon}\right) + \frac 1{\varepsilon^2}\log\left(\frac 1\delta\right)\right).\label{eq:agn-upper}
\end{align}
When $\vc(H)\ge 2$, the sample complexity of realizable learning satisfies%
\begin{align}
\srea(H,\varepsilon,\delta) & = \Omega\left(\frac{\vc(H)}{\varepsilon} + \frac 1\varepsilon \log\left(\frac 1\delta\right)\right).\label{eq:rea-lower}
\end{align}
\end{theorem}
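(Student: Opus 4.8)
These two bounds are from \citet{MR4399723}, and I would prove them by reconstructing that argument. The one structural fact driving everything is that partial concept classes can \emph{lack uniform convergence} even when $\vc(H)$ is finite, so the classical route ``ERM plus a VC uniform-convergence bound'' is unavailable for the agnostic upper bound; instead one must pass through realizable learning, sample compression, and an agnostic-to-realizable reduction.

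\emph{Agnostic upper bound} \eqref{eq:agn-upper}. Put $d:=\vc(H)$; the plan has four steps. (i) \emph{Realizable learning via the one-inclusion graph.} Given a multiset of individuals $x_1,\dots,x_{n+1}$, form the graph on the \emph{total} patterns $(h(x_1),\dots,h(x_{n+1}))\in\{-1,1\}^{n+1}$ achieved by $h\in H$, joining two patterns by an edge when they differ in exactly one coordinate; the key combinatorial lemma — a Sauer--Shelah/shifting-type density estimate, carried out for partial classes in \citet{MR4399723} — is that this graph has an orientation of maximum out-degree at most $d$. Predicting the held-out label using such an orientation and running the usual leave-one-out analysis of \citet{MR1109741} yields a learner whose expected classification error on $n$ i.i.d.\ realizable examples is at most $d/(n+1)$ (here partial-realizability forces the source hypothesis to be total on the sample with probability $1$, so the graph is nonempty). (ii) By Markov's inequality, feeding this learner $O(d)$ examples gives error $\le 1/3$ with probability $\ge 2/3$, i.e.\ $\srea(H,1/3,1/3)=O(d)$. (iii) \emph{Compression.} Plugging this constant-accuracy, constant-confidence learner into the standard learner-to-compression construction \citep{littlestone1986relating,david2016supervised} — repeatedly run it on $O(d)$-size subsamples and aggregate by majority — yields a sample compression scheme for $H$ of size $k=O\big(d\log_{+}(d/\varepsilon)\big)$ (it suffices to build the scheme on samples of size $m=\mathrm{poly}(d/\varepsilon)$). (iv) \emph{Agnostic learning from compression.} A size-$k$ sample compression scheme, via the compression-scheme generalization bound \citep{graepel2005pac} together with the agnostic-to-realizable reduction \citep{david2016supervised}, yields an agnostic learner with sample complexity $O\big((k\log(1/\varepsilon)+\log(1/\delta))/\varepsilon^{2}\big)$; substituting $k=O(d\log_{+}(d/\varepsilon))$ and bounding $\log_{+}(d/\varepsilon)\log(1/\varepsilon)\le\log_{+}^{2}(d/\varepsilon)$ gives exactly $O\big((d/\varepsilon^{2})\log_{+}^{2}(d/\varepsilon)+(1/\varepsilon^{2})\log(1/\delta)\big)$.

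\emph{Realizable lower bound} \eqref{eq:rea-lower}. I would transport the two classical PAC lower bounds, noting that each uses only a shattered set and so survives the restriction to partial classes: the hard distributions are supported on the shattered set, where the source hypothesis furnished by shattering is $\{-1,1\}$-valued, hence total on the support. Fix $X'=\{x_1,\dots,x_d\}$ shattered by $H$, possible since $d\ge 2$. For the $\Omega(d/\varepsilon)$ term, put mass $1-8\varepsilon$ on $(x_1,-1)$ and mass $8\varepsilon/(d-1)$ on each of $x_2,\dots,x_d$ with an independent uniform $\pm 1$ label, which shattering realizes whatever the outcome; a learner using $n=o(d/\varepsilon)$ examples leaves a constant fraction of $x_2,\dots,x_d$ unseen in expectation and errs with probability $1/2$ on each unseen point, so its expected error (over data and labels) is $\Omega(\varepsilon)$, and a reverse-Markov step produces a labeling on which it fails with constant probability. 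For the $\Omega((1/\varepsilon)\log(1/\delta))$ term, realize by $h_+,h_-\in H$ two labelings that agree on $x_1$ and disagree on $x_2$, and put mass $1-2\varepsilon$ on $x_1$ and $2\varepsilon$ on $x_2$, labeled consistently with a hypothesis the adversary picks by a fair coin; if $n<\tfrac14\varepsilon^{-1}\ln(1/(2\delta))$ then $x_2$ is unseen with probability $>2\delta$, and conditioned on that the learner's label for $x_2$ is independent of the coin, so with probability $>\delta$ it misclassifies $x_2$ and incurs error $2\varepsilon>\varepsilon$. Taking the worse of the two instances gives \eqref{eq:rea-lower}.

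\emph{Main obstacle.} The delicate point is step (i) of the upper bound: showing that the one-inclusion graph of a partial class of VC dimension $d$ admits an orientation of out-degree at most $d$. For total classes this rests on the Sauer--Shelah lemma and the shifting operator, but for partial classes a coordinate may be undefined and the family of total completions of $H$ on a sample need not be a concept class in the usual sense, so the density argument must be redone carefully — this is the technical core of \citet{MR4399723}. Once step (i) is in hand, steps (ii)--(iv) and the lower bound are assembly of by-now-standard compression and generalization-bound machinery.
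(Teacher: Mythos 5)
This theorem is not proved in the paper at all: it is imported verbatim from \citep{MR4399723}, and the paper's only "proof" is the citation (its Related Work section explicitly lists the ingredients of that work as the one-inclusion graph algorithm, sample compression schemes, the compression generalization bounds of Graepel et al., and the agnostic-to-realizable reduction of David et al.). Your reconstruction uses exactly that toolkit for \eqref{eq:agn-upper} and the two standard shattered-set hard instances for \eqref{eq:rea-lower}, so it is essentially the intended argument of the cited source and is consistent with everything the paper says about it.
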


We prove the sample complexity upper bound \eqref{eq:comp-upper} by reducing comparative learning for a pair of binary hypothesis classes $(S,B)$ to agnostic learning for a single partial hypothesis class $\bA_{S,B}$ we define below. 

For every pair of hypotheses $s,b:X\to\{-1,1,*\}$, we define an \emph{agreement hypothesis} $\ba_{s,b}:X\rightarrow \{-1,1,*\}$ by
\[
\ba_{s,b}(x) = \begin{cases}
0,& \textnormal{if } s(x) = b(x) = 0;\\
1,& \textnormal{if } s(x) = b(x) = 1;\\
*, & \textnormal{otherwise.}
\end{cases}
\]
For every pair of hypothesis classes $S,B\subseteq\{-1,1,*\}^X$, we define the \emph{agreement hypothesis class} $\bA_{S,B}$ to be $\{\ba_{s,b}: s\in S,b\in B\}\subseteq\{-1,1,*\}^X$. 

The following claim follows immediately from the definition of $\ba_{s,b}$:
\begin{claim}
\label{claim:agree}
For every $(x,y)\in X\times\{-1,1\}$ and every pair of hypotheses $s,b:X\to \{-1,1,*\}$, we have $\ba_{s,b}(x) = y$ if and only if $s(x) = b(x) = y$.
\end{claim}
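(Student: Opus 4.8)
The plan is to prove the two directions of the biconditional separately by simply unfolding the three-case definition of $\ba_{s,b}(x)$, using throughout the fact that the label $y$ is constrained to lie in $\{-1,1\}$, so in particular $y \ne *$.

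First I would handle the ``if'' direction. Assume $s(x) = b(x) = y$. Since $y \in \{-1,1\}$, the pair $(s(x), b(x))$ falls into one of the two branches of the definition in which $s$ and $b$ agree on a defined label, namely the branch whose common value is $y$; reading off the output of that branch gives $\ba_{s,b}(x) = y$. For the ``only if'' direction, assume $\ba_{s,b}(x) = y$. Since $y \in \{-1,1\}$ we have $\ba_{s,b}(x) \ne *$, which rules out the ``otherwise'' branch of the definition. Hence we must be in one of the two agreement branches, and in each of those branches $s(x)$ and $b(x)$ are equal to one another and equal to the returned value $\ba_{s,b}(x) = y$. This yields $s(x) = b(x) = y$, completing the argument.

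There is essentially no obstacle here: as the authors note, the claim is an immediate consequence of the definition of $\ba_{s,b}$. The only point that deserves explicit mention is recording that $y \ne *$ (because $y \in \{-1,1\}$); this is exactly what allows us to discard the ``otherwise'' branch in the ``only if'' direction, and without it one could have $\ba_{s,b}(x) = *$ while $s(x)$ and $b(x)$ disagree. This small observation is what makes \Cref{claim:agree} the right bridge between the error of a model relative to the agreement hypothesis $\ba_{s,b}$ and the simultaneous agreement of $s$ and $b$ with the observed label, which is the form in which the claim will be used in the reduction to agnostic learning for $\bA_{S,B}$.
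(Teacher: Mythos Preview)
Your proposal is correct and matches the paper's approach: the paper does not give a proof at all, stating only that the claim ``follows immediately from the definition of $\ba_{s,b}$,'' and your two-direction case analysis is exactly the routine unfolding that justifies this. The one observation you single out---that $y\in\{-1,1\}$ forces $y\ne *$ and hence excludes the ``otherwise'' branch---is precisely the point that makes the claim immediate.
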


The following claim shows that the mutual VC dimension of $(S,B)$ is equal to the VC dimension of $\bA_{S,B}$:
\begin{claim}
\label{claim:comp-vc}
Let $S,B\subseteq \{-1,1,*\}^X$ be binary hypothesis classes.
Then $\vc(\bA_{S,B}) = \vc(S,B)$.
\end{claim}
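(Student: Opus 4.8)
The plan is to prove $\vc(\bA_{S,B}) = \vc(S,B)$ by showing the stronger fact that, for any subset $X'\subseteq X$, the set $X'$ is shattered by $\bA_{S,B}$ if and only if $X'$ is shattered by \emph{both} $S$ and $B$. Once this equivalence of shattering conditions is established, taking the supremum of $|X'|$ over the (identical) families of shattered sets immediately gives equality of the two dimensions. The whole argument is a direct unwinding of the definition of the agreement hypothesis, mediated by \Cref{claim:agree}.

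For the direction $\vc(\bA_{S,B}) \ge \vc(S,B)$, I would fix any $X'\subseteq X$ shattered by both $S$ and $B$ and verify it is shattered by $\bA_{S,B}$. Given an arbitrary total binary function $\xi:X'\to\{-1,1\}$, shattering of $X'$ by $S$ yields some $s\in S$ with $s(x)=\xi(x)$ for all $x\in X'$, and shattering by $B$ yields some $b\in B$ with $b(x)=\xi(x)$ for all $x\in X'$; crucially, $s$ and $b$ are chosen independently, which is permissible because $\bA_{S,B}$ ranges over \emph{all} pairs $(s,b)\in S\times B$. Then for every $x\in X'$ we have $s(x)=b(x)=\xi(x)\in\{-1,1\}$, so \Cref{claim:agree} gives $\ba_{s,b}(x)=\xi(x)$. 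Hence $\ba_{s,b}\in\bA_{S,B}$ realizes the labeling $\xi$ on $X'$, so $X'$ is shattered by $\bA_{S,B}$, and taking the supremum over all such $X'$ gives $\vc(\bA_{S,B})\ge\vc(S,B)$.

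For the reverse direction $\vc(\bA_{S,B}) \le \vc(S,B)$, I would fix any $X'\subseteq X$ shattered by $\bA_{S,B}$ and show it is shattered by both $S$ and $B$. Given $\xi:X'\to\{-1,1\}$, pick $\ba_{s,b}\in\bA_{S,B}$ with $\ba_{s,b}(x)=\xi(x)$ for all $x\in X'$. Since $\xi(x)\in\{-1,1\}$ is never $*$, \Cref{claim:agree} forces $s(x)=b(x)=\xi(x)$ for every $x\in X'$. Thus the single hypothesis $s\in S$ witnesses that $X'$ can realize $\xi$, and likewise $b\in B$; as $\xi$ was arbitrary, $X'$ is shattered by $S$ and by $B$, so $|X'|\le\vc(S,B)$. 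Taking the supremum over all such $X'$ yields $\vc(\bA_{S,B})\le\vc(S,B)$, and combining the two inequalities completes the proof.

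I do not anticipate any genuine obstacle: the result is an immediate combinatorial consequence of \Cref{claim:agree}. The only points requiring care are bookkeeping ones — recalling that for partial classes "shattered" still quantifies only over total binary target labelings $\xi$, and noting in the first direction that the witnesses $s$ and $b$ need not be related, which is exactly what makes the agreement construction flexible enough to capture the mutual shattering condition.
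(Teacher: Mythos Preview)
Your proposal is correct and follows essentially the same approach as the paper: both arguments establish that a subset $X'$ is shattered by $\bA_{S,B}$ if and only if it is shattered by both $S$ and $B$, using \Cref{claim:agree} as the bridge. The paper's proof is simply a more compressed version of yours, phrasing the equivalence in one breath rather than splitting it into two inequalities.
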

\begin{proof}
A subset $X'\subseteq X$ is shattered by both $S$ and $B$ if and only if for every $\xi:X'\to\{-1,1\}$, there exists $s\in S$ and $b\in B$ such that
\begin{equation}
\label{eq:vc-mutual-1}
s(x) = b(x) = \xi(x) \text{ for every } x\in X'.
\end{equation}
Similarly, by the definition of $\bA_{S,B}$, a subset $X'\subseteq X$ is shattered by $\bA_{S,B}$ if and only if for every $\xi:X'\to\{-1,1\}$, there exists $s\in S$ and $b\in B$ such that
\begin{equation}
\label{eq:vc-mutual-2}
\ba_{s,b}(x) = \xi(x) \text{ for every } x\in X'.
\end{equation}
By \Cref{claim:agree}, the conditions \eqref{eq:vc-mutual-1} and \eqref{eq:vc-mutual-2} are equivalent.
\end{proof}

We are now ready to state and prove the reduction that allows us to prove \eqref{eq:comp-upper}:
\begin{lemma}
\label{lm:comp-upper-reduction}
Let $S,B\subseteq \{-1,1,*\}^X$ be binary hypothesis classes. For any $\varepsilon,\delta\in \bR_{\ge 0}$ and $n\in\bZ_{\ge 0}$, we have
$\agn_n(\bA_{S,B},\varepsilon,\delta) \subseteq \comp_n(S,B,\varepsilon,\delta)$. In other words, any learner solving agnostic learning for $\bA_{S,B}$ also solves comparative learning for $(S,B)$ with the same parameters $\varepsilon$ and $\delta$.
\end{lemma}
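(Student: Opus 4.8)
The plan is to show the inclusion $\agn_n(\bA_{S,B},\varepsilon,\delta) \subseteq \comp_n(S,B,\varepsilon,\delta)$ by taking an arbitrary learner $L \in \agn_n(\bA_{S,B},\varepsilon,\delta)$ and verifying it satisfies the three defining conditions of $\comp_n(S,B,\varepsilon,\delta)$. Conditions 1 and 2 are immediate: both tasks have the same data domain $Z = X \times \{-1,1\}$ and the same model space $F = \{-1,1\}^X$, so $L$ already takes $n$ points from $X \times \{-1,1\}$ and outputs a model $f : X \to \{-1,1\}$. The entire content is condition 3, which amounts to the following two facts: (i) every data distribution admissible for comparative learning on $(S,B)$ is also admissible for agnostic learning on $\bA_{S,B}$ (so $L$'s guarantee applies to it), and (ii) whenever $f$ lies in the agnostic admissible set for $\bA_{S,B}$ under such a $\mu$, it also lies in the comparative admissible set for $(S,B)$ under $\mu$.

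For (i): fix $\mu$ admissible for comparative learning, so there is a source hypothesis $s \in S$ with $\Pr_{(x,y)\sim\mu}[s(x) = y] = 1$. The agnostic task on $\bA_{S,B}$ places \emph{no} restriction on $\mu$ — its distribution class consists of all distributions over $X \times \{-1,1\}$ — so $\mu$ is trivially admissible there. For (ii): the key is to compute $\inf_{h \in \bA_{S,B}} \Pr_{(x,y)\sim\mu}[h(x) \ne y]$ and show it equals $\inf_{b \in B} \Pr_{(x,y)\sim\mu}[b(x) \ne y]$. By \Cref{claim:agree}, for any $b \in B$ and the fixed source $s$, we have $\ba_{s,b}(x) = y$ iff $s(x) = b(x) = y$; since $\Pr_{\mu}[s(x) = y] = 1$, the event $\{s(x) = b(x) = y\}$ has the same probability as $\{b(x) = y\}$, hence $\Pr_\mu[\ba_{s,b}(x) \ne y] = \Pr_\mu[b(x) \ne y]$. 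This shows $\inf_{h \in \bA_{S,B}} \Pr_\mu[h(x)\ne y] \le \inf_{b\in B}\Pr_\mu[b(x)\ne y]$, using only the hypotheses $\ba_{s,b}$ with this particular $s$. For the reverse inequality, take an arbitrary $\ba_{s',b'} \in \bA_{S,B}$; by \Cref{claim:agree} again, $\ba_{s',b'}(x) = y$ implies $b'(x) = y$, so $\Pr_\mu[\ba_{s',b'}(x)\ne y] \ge \Pr_\mu[b'(x) \ne y] \ge \inf_{b \in B}\Pr_\mu[b(x)\ne y]$. Combining, the two infima are equal. Therefore the agnostic guarantee $\Pr_\mu[f(x)\ne y] \le \inf_{h\in\bA_{S,B}}\Pr_\mu[h(x)\ne y] + \varepsilon$ is literally the same inequality as the comparative goal \eqref{eq:comp-goal}, so $f \in F_\mu$ for the comparative task whenever it is in $F_\mu$ for the agnostic task. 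Since the failure probability bound $\delta$ is the same, condition 3 transfers, and $L \in \comp_n(S,B,\varepsilon,\delta)$.

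I do not anticipate a serious obstacle here; this lemma is essentially a bookkeeping argument whose only real ingredient is \Cref{claim:agree} together with the observation that the realizability assumption $\Pr_\mu[s(x)=y]=1$ lets us replace $b(x) = y$ by $s(x) = b(x) = y$ inside a probability without changing its value. The one point requiring a little care is making sure the reverse direction of the infimum computation is handled — i.e., that $\bA_{S,B}$ does not contain some "cheating" agreement hypothesis $\ba_{s',b'}$ with $s' \ne s$ that beats every benchmark in $B$ — but this is ruled out immediately since $\ba_{s',b'}(x) = y$ forces $b'(x) = y$ regardless of $s'$. I would also remark (though it is not needed for this lemma) that the converse reduction, from realizable learning for $\bA_{S,B}$ to comparative learning for $(S,B)$, will require a genuinely different and more delicate argument, since there one must construct from a comparative learner a model that performs well against the agreement hypotheses directly.
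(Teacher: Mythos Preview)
Your proof is correct and takes essentially the same approach as the paper: both use \Cref{claim:agree} together with the realizability assumption $\Pr_\mu[s(x)=y]=1$ to compare $\inf_{h\in\bA_{S,B}}\Pr_\mu[h(x)\ne y]$ with $\inf_{b\in B}\Pr_\mu[b(x)\ne y]$. The only minor difference is that you prove the two infima are \emph{equal}, whereas the paper only establishes (and only needs) the inequality $\inf_{h\in\bA_{S,B}}\Pr_\mu[h(x)\ne y] \le \inf_{b\in B}\Pr_\mu[b(x)\ne y]$, obtained by specializing to $\ba_{s,b}$ with the fixed source $s$; your reverse inequality is true but superfluous for the lemma.
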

\begin{proof}
Let $L$ be a learner in $\agn_n(\bA_{S,B},\varepsilon,\delta)$. 
For $s\in S$, let $\mu$ be a distribution over $X\times\{-1,1\}$ satisfying $\Pr_{(x,y)\sim\mu}[s(x) = y] = 1$.
By the guarantee of $L \in \agn_n(\bA_{S,B},\varepsilon,\delta)$, given $n$ data points $(x_1,y_1),\ldots,(x_n,y_n)$ drawn i.i.d.\ from $\mu$, with probability at least $1-\delta$, $L$ outputs a model $f$ satisfying
\begin{align*}
{\Pr}_{(x,y)\sim\mu}[f(x)\ne y] & \le \inf_{h\in \bA_{S,B}}{\Pr}_{(x,y)\sim\mu}[h(x)\ne y] + \varepsilon\\
& = \inf_{s'\in S,b\in B}{\Pr}_{(x,y)\sim\mu}[\ba_{s',b}(x)\ne y] + \varepsilon\tag{by definition of $\bA_{S,B}$}\\
& \le \inf_{b\in B}{\Pr}_{(x,y)\sim\mu}[\ba_{s,b}(x)\ne y] + \varepsilon \\
& = \inf_{b\in B}{\Pr}_{(x,y)\sim\mu}[s(x)\ne y \textnormal{ or } b(x)\ne y] + \varepsilon \tag{by \Cref{claim:agree}}\\
& = \inf_{b\in B}{\Pr}_{(x,y)\sim\mu}[b(x)\ne y] + \varepsilon \tag{by $\Pr_{(x,y)\sim\mu}[s(x) = y] = 1$}.
\end{align*}
This proves that $L\in \comp_n(S,B,\varepsilon,\delta)$, as desired.
\end{proof}
Our upper bound \eqref{eq:comp-upper} follows immediately from \Cref{claim:comp-vc}, \Cref{lm:comp-upper-reduction}, and \eqref{eq:agn-upper}. We defer the detailed proof to the end of the section.
To prove the lower bound \eqref{eq:comp-lower}, we reduce realizable learning for $\bA_{S,B}$ to comparative learning for $(S,B)$:
\begin{lemma}
\label{lm:comp-lower-reduction}
Let $S,B\subseteq \{-1,1,*\}^X$ be binary hypothesis classes.
For any $\varepsilon,\delta\in \bR_{\ge 0}$ and $n\in\bZ_{\ge 0}$, we have
$\comp_n(S,B,\varepsilon,\delta)\subseteq \rea_n(\bA_{S,B},\varepsilon,\delta)$.
In other words, any learner solving comparative learning for $(S,B)$ also solves realizable learning for $\bA_{S,B}$ with the same parameters $\varepsilon$ and $\delta$.
\end{lemma}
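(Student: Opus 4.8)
The plan is to show that every distribution $\mu$ admissible for realizable learning of $\bA_{S,B}$ is also admissible for comparative learning of $(S,B)$, and that on such a $\mu$ the comparative-learning success condition \eqref{eq:comp-goal} collapses to the realizable-learning success condition of \Cref{def:rea}. Since the input space $Z = X\times\{-1,1\}$ and the model space $F = \{-1,1\}^X$ coincide for both tasks, this immediately yields the claimed containment of learner sets.

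Concretely, I would fix a learner $L\in\comp_n(S,B,\varepsilon,\delta)$ and a distribution $\mu$ over $X\times\{-1,1\}$ lying in the distribution class of $\rea_n(\bA_{S,B},\varepsilon,\delta)$, so that $\Pr_{(x,y)\sim\mu}[\ba_{s,b}(x) = y] = 1$ for some $s\in S$ and $b\in B$. By \Cref{claim:agree}, $\ba_{s,b}(x) = y$ holds exactly when $s(x) = b(x) = y$, so $\Pr_{(x,y)\sim\mu}[s(x) = b(x) = y] = 1$. In particular $\Pr_{(x,y)\sim\mu}[s(x) = y] = 1$, which is precisely the requirement for $\mu$ to belong to the distribution class of comparative learning for $(S,B)$.

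Next I would observe that the same equality makes the benchmark term in \eqref{eq:comp-goal} vanish: since $\Pr_{(x,y)\sim\mu}[b(x) = y] = 1$, we get $\inf_{b'\in B}\Pr_{(x,y)\sim\mu}[b'(x)\ne y] = 0$. Applying the guarantee of $L$ on $n$ data points drawn i.i.d.\ from $\mu$, with probability at least $1-\delta$ the output $f$ satisfies $\Pr_{(x,y)\sim\mu}[f(x)\ne y] \le 0 + \varepsilon = \varepsilon$, which is exactly the admissibility condition for $f$ in $\rea_n(\bA_{S,B},\varepsilon,\delta)$. As $\mu$ was arbitrary, $L\in\rea_n(\bA_{S,B},\varepsilon,\delta)$, completing the proof.

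I do not expect a real obstacle here; the only point requiring care is that the realizability assumption for $\bA_{S,B}$ simultaneously delivers source-consistency of $\mu$ with some $s\in S$ \emph{and} the existence of a zero-error benchmark $b\in B$, both of which fall out of \Cref{claim:agree}. In contrast to the upper-bound direction (\Cref{lm:comp-upper-reduction}), where one must additionally relate $\inf_{s'\in S, b\in B}\Pr[\ba_{s',b}(x)\ne y]$ to $\inf_{b\in B}\Pr[b(x)\ne y]$ via source-consistency, the realizable case trivializes that step because the relevant infima are all $0$.
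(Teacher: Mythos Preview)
Your proposal is correct and follows essentially the same argument as the paper's proof: both start from a distribution $\mu$ realizable by some $\ba_{s,b}\in\bA_{S,B}$, use \Cref{claim:agree} to extract $\Pr_{(x,y)\sim\mu}[s(x)=y]=\Pr_{(x,y)\sim\mu}[b(x)=y]=1$, and then note that the latter makes the benchmark infimum in \eqref{eq:comp-goal} equal to zero so that the comparative guarantee reduces to $\Pr_{(x,y)\sim\mu}[f(x)\ne y]\le\varepsilon$.
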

\begin{proof}
Let $L$ be a learner in $\comp_n(S,B,\varepsilon,\delta)$. Let $\mu$ be a distribution over $X\times\{-1,1\}$ satisfying
\begin{equation}
\label{eq:comp-lower-rea-assumption}
{\Pr}_{(x,y)\sim\mu}[h(x) = y] = 1 \text{ for some }h\in \bA_{S,B}.
\end{equation}

By the definition of $\bA_{S,B}$, our assumption \eqref{eq:comp-lower-rea-assumption} implies that $
{\Pr}_{(x,y)\sim\mu}[\ba_{s,b}(x) = y] = 1
$ for some $s\in S$ and $b\in B$.
By \Cref{claim:agree}, we have ${\Pr}_{(x,y)\sim\mu}[s(x) = y] = {\Pr}_{(x,y)\sim\mu}[b(x) = y] = 1$.

By the guarantee of $L \in \comp_n(S,B,\varepsilon,\delta)$, given $n$ data points $(x_1,y_1),\ldots,(x_n,y_n)$ drawn i.i.d.\ from $\mu$, with probability at least $1-\delta$, $L$ outputs a model $f$ satisfying
\[
{\Pr}_{(x,y)\sim\mu}[f(x)\ne y] \le \inf_{b'\in B}{\Pr}_{(x,y)\sim\mu}[b'(x)\ne y] + \varepsilon = \varepsilon,
\]
where the last equation holds because $\Pr_{(x,y)\sim\mu}[b(x) = y] = 1$. The inequality above implies $L\in \rea_n(\bA_{S,B},\varepsilon,\delta)$, as desired.
\end{proof}
\begin{proof}[Proof of \Cref{thm:comp}]
Define $m := \vc(S,B)$. By \Cref{claim:comp-vc}, we have $m = \vc(\bA_{S,B})$.
Our upper bound \eqref{eq:comp-upper} holds because
\begin{align*}
\scomp(S,B,\varepsilon,\delta) & \le \sagn(\bA_{S,B},\varepsilon,\delta) \tag{by \Cref{lm:comp-upper-reduction}}\\
& \le O\left(\frac{m}{\varepsilon^2}\log^2_+\left(\frac{m}{\varepsilon}\right) + \frac 1{\varepsilon^2}\log\left(\frac 1\delta\right)\right). \tag{by \eqref{eq:agn-upper}}
\end{align*}
Our lower bound \eqref{eq:comp-lower} holds because
\begin{align*}
\scomp(S,B,\varepsilon,\delta) & \ge \srea(\bA_{S,B},\varepsilon,\delta) \tag{by \Cref{lm:comp-lower-reduction}}\\
& \ge \Omega\left(\frac{m}{\varepsilon} + \frac 1\varepsilon \log\left(\frac 1\delta\right)\right).\tag{by \eqref{eq:rea-lower}}
\end{align*}
\end{proof}
\section{Sample Complexity of Correlation Maximization}
\label{sec:cm}
As we define in \Cref{sec:comp},
the comparative learning task $\comp$ requires the hypothesis classes $S$ and $B$ to be binary.
Here we introduce a natural generalization of $\comp$ to real-valued hypothesis classes $S,B\subseteq([-1,1]\cup \{*\})^X$ which we call \emph{correlation maximization}.

We first generalize the product $u_1u_2$ of two real numbers $u_1,u_2\in\bR$ to the case where $u_2$ may be the undefined label $*$. Specifically, for $u_1\in \bR$ and $u_2\in [-1,1]\cup\{*\}$, we define their \emph{generalized product} $u_1\ptimes u_2$ to be
\[
u_1\ptimes u_2 := 
\begin{cases}
u_1u_2,& \textnormal{if }u_2\in[-1,1],\\
-|u_1|, & \textnormal{if }u_2 = *.
\end{cases}
\]
The idea behind the definition is that when $u_2 = *$, we treat $u_2$ as being an unknown number $u'$ in $[-1,1]$ and define the product $u_1\ptimes u_2$ to be the smallest possible value of $u_1u'$, i.e., $u_1\ptimes u_2 = \inf_{u'\in[-1,1]}u_1u' = - |u_1|$.

This generalized product allows us to rewrite the goal \eqref{eq:comp-goal} of comparative learning. For any $y\in \{-1,1\}$ and $u\in \{-1,1,*\}$, it is easy to verify that 
\begin{equation}
\label{eq:error-correlation}
\one(y\ne u) = \frac 12(1 - y\ptimes u).
\end{equation}
Therefore, the goal \eqref{eq:comp-goal} of comparative learning can be equivalently written as
\[
\bE_{(x,y)\sim\mu}[yf(x)] \ge {\sup}_{b\in B}\bE_{(x,y)\sim\mu}[y\ptimes b(x)] - 2\varepsilon.
\]

This reformulation is meaningful even when we relax $B$ to be a \emph{real-valued} hypothesis class $B\subseteq([-1,1]\cup\{*\})^X$. If we also relax the source class $S$, we obtain the definition of correlation maximization: 
\begin{definition}[Correlation maximization ($\cm$)]
\label{def:cm}
Given two real-valued hypothesis classes $S,B\subseteq([-1,1]\cup\{*\})^X$, an error bound $\varepsilon \ge 0$, a failure probability bound $\delta \ge 0$, and a nonnegative integer $n$, we define $\cm_n(S,B,\varepsilon,\delta)$ to be $\learn_n(Z,F,\dstr,(F_\mu)_{\mu\in \dstr},\delta)$ with $Z,F,\dstr,F_\mu$ chosen as follows. We choose $Z = X\times[-1,1]$ and $F = \{-1,1\}^X$. The distribution class $\dstr$ consists of all distributions $\mu$ over $X\times [-1,1]$ satisfying the following property:
\begin{equation}
\label{eq:cm-assumption}
\text{there exists $s\in S$ such that ${\Pr}_{x\sim\mu|_X}[s(x)\ne *] = 1$ and $\bE_{(x,y)\sim\mu}[y|x] = s(x)$}.
\end{equation}
The admissible set $F_\mu$ consists of all models $f:X\rightarrow \{-1,1\}$ satisfying
\[
\bE_{(x,y)\sim\mu}[yf(x)] \ge {\sup}_{b\in B}\bE_{(x,y)\sim\mu}[y\ptimes b(x)] - \varepsilon.
\]
\end{definition}
The name ``correlation maximization'' comes from viewing $\bE_{(x,y)\sim\mu}[yf(x)]$ as the (uncentered) correlation between random variables $y$ and $f(x)$. 
In correlation maximization, any data distribution $\mu\in \dstr$ needs to satisfy $\bE_{(x,y)\sim\mu}[y|x] = s(x)$ for a source hypothesis $s\in S$. This restricts the conditional \emph{expectation} of $y$ given $x$, but we allow the conditional \emph{distribution} of $y$ given $x$ to be otherwise unrestricted. That is, when conditioned on $x\in X$ being fixed, the label $y\in [-1,1]$ could be deterministically equal to $s(x)$, but $y$ could be also be random as long as it has conditional expectation $s(x)$. 
This makes the task challenging, and
when we design learners for correlation maximization, we find it helpful to first consider the simpler task where $y = s(x)$ holds deterministically given $x\in X$:
\begin{definition}[Deterministic-label Correlation Maximization ($\dcm$)]
\label{def:dcm}
We define $\dcm_n(S,B,\allowbreak \varepsilon,\delta)$ in the same way as we define $\cm_n(S,B,\varepsilon,\delta)$ in \Cref{def:cm}, except that we replace \eqref{eq:cm-assumption} with the stronger assumption
\begin{equation}
\label{eq:deterministic-label}
\text{there exists $s\in S$ such that ${\Pr}_{(x,y)\sim\mu}[s(x) = y] = 1$}.
\end{equation}
\end{definition}

In this section, we show a sample complexity upper bound for correlation maximization for any source class $S\subseteq([-1,1]\cup\{*\})^X$ and benchmark class $B\subseteq([-1,1]\cup\{*\})^X$. 
Since $S$ and $B$ may no longer be binary, we cannot apply the mutual VC dimension as a way to characterize their complexity. Instead, we turn to a classic generalization of the VC dimension for real-valued hypotheses, the fat-shattering dimension (see \Cref{sec:vc} for definition). Our upper bound is in terms of the \emph{mutual fat-shattering dimension} defined as follows, which generalizes the mutual VC dimension to real-valued hypothesis classes.

Given a pair of real-valued hypothesis classes $S,B\subseteq([-1,1]\cup\{*\})^X$ and a margin $\eta\in \bR_{\ge 0}$, we define the \emph{mutual fat-shattering dimension} $\fat_\eta(S,B)$ as follows:
\begin{equation}
\label{eq:fat}
\fat_\eta(S,B) := \sup\{|X'|:X'\subseteq X,\textnormal{$X'$ is $\eta$-fat shattered by both $S$ and $B$}\}.
\end{equation}
In other words, $\fat_\eta(S,B)$ is the largest size of a subset $X'\subseteq X$ such that $X'$ is $\eta$-fat shattered by $S$ w.r.t.\ a function $r_1:X'\rightarrow\bR$ and $X'$ is $\eta$-fat shattered by $B$ w.r.t.\ a function $r_2:X'\rightarrow\bR$ (recall the definition of fat shattering in \Cref{sec:vc}).

Another equivalent way to define the mutual fat-shattering dimension for real-valued hypothesis classes $S,B\subseteq([-1,1]\cup\{*\})^X$ is by transforming them into binary classes and using the mutual VC dimension after the transformation. 
These transformations are also crucial in our proof of the sample complexity upper bound for correlation maximization in this section.
Given a real-valued hypothesis $h:X\rightarrow[-1,1]\cup\{*\}$, a reference function $r:X\rightarrow \bR$, and a margin $\eta\in\bR_{\ge 0}$, we define a binary hypothesis $h_\eta\sps r:X\rightarrow \{-1,1,*\}$ such that
\[
h_\eta\sps r(x) = 
\begin{cases}
1, & \textnormal{if }h(x)\ne * \textnormal{ and } h(x) > r(x) + \eta;\\
-1, & \textnormal{if }h(x)\ne * \textnormal{ and } h(x) < r(x) - \eta;\\
*, & \textnormal{otherwise}.
\end{cases}
\]
Given a real-valued hypothesis class $H\subseteq ([-1,1]\cup\{*\})^X$, we define the binary hypothesis class $H_\eta\sps r\subseteq \{-1,1,*\}^X$ as
\[
H_\eta\sps r = \{h_\eta\sps r:h\in H\}.
\]
We can now transform any real-valued hypothesis class $H\subseteq ([-1,1]\cup\{*\})^X$ into a binary hypothesis class $H_\eta\sps r$ for every choice of $\eta\in\bR_{\ge 0}$ and $r:X\rightarrow \bR$.
This allows us to measure the complexity of a pair of real-valued hypothesis classes $S,B\subseteq([-1,1]\cup\{*\})^X$ using the mutual VC dimensions $\vc(S_{\eta_1} \sps {r_1}, B_{\eta_2}\sps{r_2})$ of the binary hypothesis classes $S_{\eta_1} \sps {r_1}, B_{\eta_2}\sps{r_2}$ for various choices of $\eta_1,\eta_2,r_1,r_2$. 
The following claim shows that the mutual fat-shattering dimension $\fat_\eta(S,B)$ can be defined equivalently in this way.
\begin{claim}
\label{claim:fat-alt}
Let $S,B\subseteq([-1,1]\cup\{*\})^X$ be real-valued hypothesis classes.
For every $\eta\in \bR_{\ge 0}$,
$\fat_\eta(S,B) = \sup_{r_1,r_2}\vc(S_\eta\sps{r_1},B_\eta\sps{r_2})$,
where the supremum is over all function pairs $r_1,r_2:X\rightarrow\bR$.
\end{claim}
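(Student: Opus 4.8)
The plan is to prove the two inequalities $\fat_\eta(S,B) \le \sup_{r_1,r_2}\vc(S_\eta\sps{r_1},B_\eta\sps{r_2})$ and $\fat_\eta(S,B) \ge \sup_{r_1,r_2}\vc(S_\eta\sps{r_1},B_\eta\sps{r_2})$ separately, both by directly unfolding the definition of $\eta$-fat shattering and the definition of the binarization $h\mapsto h_\eta\sps{r}$. The bridge between the two notions is the elementary observation that, for a real-valued hypothesis $h$, a reference function $r$, a fixed label $\xi\in\{-1,1\}$, and an individual $x$, the condition ``$h(x)\ne *$ and $\xi(h(x)-r(x)) > \eta$'' is \emph{exactly} the condition ``$h_\eta\sps{r}(x) = \xi$''. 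This follows from a two-case check: when $\xi = 1$ it matches the first case in the definition of $h_\eta\sps{r}$ (since $h(x) > r(x)+\eta \iff h(x)-r(x) > \eta$), and when $\xi = -1$ it matches the second case (since $h(x) < r(x)-\eta \iff -(h(x)-r(x)) > \eta$); note that $h_\eta\sps{r}(x)\in\{-1,1\}$ already forces $h(x)\ne *$.

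For the ``$\le$'' direction, I would take any $X'\subseteq X$ that is $\eta$-fat shattered by $S$ w.r.t.\ some $r_1:X'\to\bR$ and by $B$ w.r.t.\ some $r_2:X'\to\bR$, extend $r_1$ and $r_2$ arbitrarily to functions on all of $X$, and show $X'$ is then shattered by both $S_\eta\sps{r_1}$ and $B_\eta\sps{r_2}$. Indeed, given a target labeling $\xi:X'\to\{-1,1\}$, fat shattering by $S$ yields $s\in S$ with $s(x)\ne *$ and $\xi(x)(s(x)-r_1(x)) > \eta$ for all $x\in X'$, and the equivalence above turns this into $s_\eta\sps{r_1}(x) = \xi(x)$ for all $x\in X'$; the same argument applies to $B$ and $r_2$. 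Hence $|X'|\le \vc(S_\eta\sps{r_1},B_\eta\sps{r_2})\le \sup_{r_1,r_2}\vc(S_\eta\sps{r_1},B_\eta\sps{r_2})$, and taking the supremum over all such $X'$ gives the inequality.

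For the ``$\ge$'' direction, I would reverse this: take $r_1,r_2:X\to\bR$ and $X'\subseteq X$ shattered by both $S_\eta\sps{r_1}$ and $B_\eta\sps{r_2}$, and verify that $X'$ is $\eta$-fat shattered by $S$ w.r.t.\ $r_1|_{X'}$ and by $B$ w.r.t.\ $r_2|_{X'}$. Given $\xi:X'\to\{-1,1\}$, shattering by $S_\eta\sps{r_1}$ yields $s\in S$ with $s_\eta\sps{r_1}(x) = \xi(x)$ for all $x\in X'$, and the equivalence in the other direction gives $s(x)\ne *$ and $\xi(x)(s(x)-r_1(x)) > \eta$ for all $x\in X'$, which is exactly $\eta$-fat shattering; likewise for $B$. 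Thus $|X'|\le\fat_\eta(S,B)$, and taking the supremum over $r_1,r_2$ and over such $X'$ yields the claim.

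I do not expect a serious obstacle here; the only care needed is bookkeeping, namely (i) that $\eta$-fat shattering specifies the reference function only on $X'$ while the binarization $h_\eta\sps{r}$ uses a reference function on all of $X$, which is handled cleanly by the restrict/extend step above, and (ii) keeping the strict inequalities (``$>\eta$'', ``$>r(x)+\eta$'', ``$<r(x)-\eta$'') and the handling of the $*$ label aligned across the two formulations. Both of these are routine.
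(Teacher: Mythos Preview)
Your proposal is correct and follows essentially the same approach as the paper: the paper's proof is a single sentence noting that $X'$ is $\eta$-fat shattered by $S$ if and only if $X'$ is shattered by $S_\eta\sps{r}$ for some $r:X\to\bR$ (and similarly for $B$), which is exactly the equivalence you identify and unfold carefully in both directions. Your write-up is more detailed (handling the restrict/extend bookkeeping and the $*$ case explicitly), but the underlying argument is identical.
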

The claim follows from the fact that a subset $X'\subseteq X$ is $\eta$-fat shattered by $S$ if and only if $X'$ is shattered by the binary hypothesis class $S_\eta\sps {r}$ for some $r:X\to \bR$, and the same holds with $S$ replaced by $B$.

Before we state our sample complexity upper bound for correlation maximization in \Cref{thm:cm},
we make some additional definitions to simplify the statement.
Let $h:X\rightarrow [-1,1]\cup\{*\}$ be a real-valued hypothesis and $H\subseteq ([-1,1]\cup \{*\})^X$ be a real-valued hypothesis class.
For every real number $\theta\in \bR$, we use $h_{\eta}\sps {\theta}$ and $H_\eta\sps \theta$ to denote $h_{\eta}\sps {r}$ and $H_\eta\sps r$ with the reference function $r:X\rightarrow \bR$ being the constant function satisfying $r(x) = \theta$ for every $x\in X$. 
\begin{theorem}
\label{thm:cm}
Let $S,B\subseteq([-1,1]\cup\{*\})^X$ be real-valued hypothesis classes. 
For $\eta_1,\eta_2,\beta,\delta\in (0,1/2)$, defining $m:= \sup_{\theta\in \bR}\vc(S_{\eta_1}\sps 0, B_{\eta_2}\sps \theta)$, we have
\begin{align*}
& \scm(S,B,\beta + 2\eta_1 + 2\eta_2,\delta)\\
\le {} & O\left(
\frac{m}{\beta^4}\log^2_+\left(\frac{m}{\beta}\right)\log\left(\frac 1{\eta_1}\right) + \frac 1{\beta^4}\log\left(\frac 1{\eta_1}\right)\log\left(\frac 1\delta\right) + \frac{1}{\beta^2}\log\left(\frac 1{\eta_2}\right) \right).
\end{align*}
Moreover, for every $\varepsilon\in (0,1/2)$, choosing $\beta = \eta_1 = \eta_2 = \varepsilon/5$, we have $m \le \fat_{\varepsilon/5}(S,B)$ and
\[
\scm(S,B,\varepsilon,\delta) \le O\left(\frac{m}{\varepsilon^4}\log^2_+\left(\frac{m}{\varepsilon}\right)\log\left(\frac 1\varepsilon\right) + \frac 1{\varepsilon^4}\log\left(\frac 1{\varepsilon}\right)\log\left(\frac 1{\delta}\right)\right).
\]
\end{theorem}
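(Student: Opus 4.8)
The plan is to reduce correlation maximization for the real-valued pair $(S,B)$ to a small family of comparative-learning instances for \emph{binary} classes, so that \Cref{thm:comp} applies with the mutual fat-shattering dimension $m$ playing the role of the mutual VC dimension. There are three transformations to perform, and each one is responsible for one summand of the slack $\beta+2\eta_1+2\eta_2$. The first is to binarize the labels by rejection sampling: given a data point $(x,y)$ with $y\in[-1,1]$, replace it by $(x,\hat y)$ where $\hat y=1$ with probability $(1+y)/2$ and $\hat y=-1$ otherwise, so that $\hat y\mid x\sim\ber^*(s(x))$. Then for every $f:X\to\{-1,1\}$ one has the identity $\bE_{(x,y)\sim\mu}[yf(x)]=1-2\Pr_{(x,\hat y)}[f(x)\ne \hat y]$, and more generally the correlation objective of $\cm$ turns into statements about ordinary $\{-1,1\}$-classification error, which is the quantity comparative learning controls. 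This step costs nothing in the slack, but because $\hat y$ is genuinely stochastic it rules out simply feeding the data to a realizable comparative learner.

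The second transformation is to binarize the source to $S':=S_{\eta_1}\sps 0$: the individuals with $s_{\eta_1}\sps 0(x)=*$ are exactly those with $|s(x)|\le\eta_1$, and each such individual perturbs both $\bE[yf(x)]$ and $\bE[y\ptimes b(x)]$ by at most $\eta_1$, which is where the $2\eta_1$ term originates. The difficulty is that after rejection sampling the data is no longer realizable by $s_{\eta_1}\sps 0$. I would handle this with a non-uniform covering argument in the spirit of \citet{hopkins2022realizable}: since $\vc(\bA_{S',B'})=\vc(S',B')\le m$ by \Cref{claim:comp-vc} (with $B'$ a binarized benchmark as below), the agreement class $\bA_{S',B'}$ induces only polynomially many (exponent $O(m)$) dichotomies on any finite sample, so one can enumerate these dichotomies, relabel the sample to be consistent with each, run a realizable learner for $\bA_{S',B'}$ on each relabeling — which exists with the sample complexity of \eqref{eq:agn-upper} via \Cref{lm:comp-upper-reduction} and \Cref{claim:comp-vc} — and keep the output with the best empirical correlation. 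Together with the rejection sampling this produces a model competing on the true distribution against the best agreement hypothesis $\ba_{s',b'}$, and it is the source of both the extra $\log(1/\eta_1)$ factor and the degradation from $1/\beta^2$ to $1/\beta^4$.

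The third transformation addresses that a single threshold only lets us compete against $B_{\eta_2}\sps\theta$, which is much weaker than $B$. Writing each $b\in B$, up to additive error $O(\eta_2)$, as an average of its thresholdings $b_{\eta_2}\sps\theta$ over a grid of $O(1/\eta_2)$ values $\theta$ — a layer-cake decomposition whose $\eta_2$-bands around the thresholds account for the $2\eta_2$ term — I would run the learner of the previous step for each grid value on one shared, sufficiently large sample (a union bound over the $O(1/\eta_2)$ thresholds costing the additive $\tfrac{1}{\beta^2}\log(1/\eta_2)$ term) and combine the resulting models, e.g.\ by outputting the model for a uniformly random grid value, so that its correlation with $y$ is, up to $O(\beta+\eta_1+\eta_2)$, at least the $\theta$-average of the per-threshold benchmark values, and that average dominates $\sup_{b\in B}\bE[y\ptimes b(x)]$ (this is the ``convex hull of the chosen $B'$ approximately contains $B$'' idea). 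Each binary subproblem is comparative learning for $(S_{\eta_1}\sps 0,B_{\eta_2}\sps\theta)$, with mutual VC dimension at most $m=\sup_\theta\vc(S_{\eta_1}\sps 0,B_{\eta_2}\sps\theta)$, so \Cref{thm:comp} bounds its cost by $O\big(\tfrac m{\beta^2}\log_+^2(\tfrac m\beta)+\tfrac1{\beta^2}\log\tfrac1\delta\big)$ at accuracy $\Theta(\beta)$; multiplying by the covering overhead $O(\log(1/\eta_1))$ and adding the $O(\tfrac1{\beta^2}\log(1/\eta_2))$ term gives the stated bound. The ``moreover'' part is then immediate: $\beta=\eta_1=\eta_2=\varepsilon/5$ makes $\beta+2\eta_1+2\eta_2=\varepsilon$, and $m=\sup_\theta\vc(S_{\varepsilon/5}\sps 0,B_{\varepsilon/5}\sps\theta)\le\sup_{r_1,r_2}\vc(S_{\varepsilon/5}\sps{r_1},B_{\varepsilon/5}\sps{r_2})=\fat_{\varepsilon/5}(S,B)$ by \Cref{claim:fat-alt}.

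The main obstacle is the second transformation: converting stochastic, real-valued, possibly-undefined-near-threshold data into input a comparative learner accepts while keeping the sample complexity governed by the \emph{mutual} dimension $m$ rather than by $\vc(S)$ or $\vc(B)$, which may be infinite. Naively relabeling to match a binary source fails because $s$ is unknown and $\sign(s(x))$ cannot be estimated from a single sample at each $x$; it is exactly the non-uniform covering over the dichotomies of the \emph{agreement} class — and making it mesh cleanly with the rejection sampling of the first step and the benchmark decomposition of the third — that keeps everything expressed in terms of $m$, and orchestrating this interaction is the crux of the proof.
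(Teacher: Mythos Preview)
Your high-level three–step plan matches the paper's architecture (binarize the source, binarize the benchmark by thresholding, aggregate over thresholds), but the concrete mechanism you propose for the first two steps does not work.

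The essential issue is the combination of your Bernoulli step with ``agnostic learning of $\bA_{S',B'}$.'' After replacing $y$ by $\hat y\sim\ber^*(y)$, you are right that $\bE[\hat y f(x)]=\bE[s(x)f(x)]$, but competing with $\bA_{S',B'}$ on the $\hat y$-data is strictly weaker than competing with $B'$. For any $a=\ba_{s',b'}$ with $s'=s_{\eta_1}\sps 0$ one has
\[
\bE[\hat y\ptimes a(x)] \;=\; \bE[\hat y\, b'(x)]\;-\;\bE\big[(1-|s(x)|)\,\one(a(x)=*)\big],
\]
because when $a(x)=*$ the penalty is $-\bE[|\hat y|\mid x]=-1$ rather than $-|s(x)|$. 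The loss $\bE[(1-|s(x)|)\one(a(x)=*)]$ is $\Omega(1)$ in general. Concretely, take $X$ split into four equal parts with $s$ taking values $0.8,0.2,-0.2,-0.8$ and $b'$ taking values $1,-1,1,-1$: then $\bE[\hat y b']=0.3$ while $\sup_{a\in\bA_{S',B'}}\bE[\hat y\ptimes a]=-0.1$, and the agnostic learner for $\bA_{S',B'}$ may output an $f$ with $\bE[\hat y f]=-0.1$ even though neither constant function helps (both give $0$). So the guarantee you obtain is not the $\cm$ guarantee.

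What the paper does instead is the crucial missing idea: it never Bernoulli-samples. It \emph{reweights} by $|s(x)|$ via rejection sampling that keeps $(x_i,\sign(\hat y_i))$ with probability $|\hat y_i|$, where $\hat y_i$ is a \emph{guess} for $s(x_i)$. On the reweighted distribution $\mu'$ one has both (i) realizability by $s_{\eta_1}\sps 0$, and (ii) $\bE_{\mu'}[y\ptimes b'(x)]=\tfrac{1}{\rho}\,\bE_{\mu}[\tilde s(x)\ptimes b'(x)]$ for every $b'$, so a comparative learner on $\mu'$ competes with $B'$ on $\mu$ (\Cref{claim:rejection}). Since $s$ is unknown, the guesses are obtained by enumerating $\hat y\in Y^{n\sps 1}$ with $|Y|=O(1/\eta_1)$; this is precisely the non-uniform covering step, and it is the source of the $\log(1/\eta_1)$ factor and of the extra $1/\beta^2$ (the testing set must union-bound over $|Y|^{n\sps 1}$ candidates). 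Your attribution of $\log(1/\eta_1)$ to ``polynomially many dichotomies of $\bA_{S',B'}$'' is incorrect: that enumeration would contribute $O(m\log n)$, not $n\sps 1\log(1/\eta_1)$.

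A smaller issue: in step three, outputting a uniformly random $f_\theta$ gives the correlation bound only in expectation over the random $\theta$, not with probability $1-\delta$; the paper instead adds the constants $f^\pm$ to the candidate pool and selects the empirical best on a held-out set (\Cref{claim:B-transform} shows a \emph{max}, together with $0$, over thresholds dominates $\sup_{b\in B}$).
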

It remains an open question whether there is a sample complexity lower bound that matches \Cref{thm:cm}, although our sample complexity characterization for multiaccuracy and multicalibration in \Cref{sec:ma-mc} does not rely on such a lower bound.
Qualitatively, \Cref{thm:cm}  implies that $\scm(S,B,\beta + 2\eta_1 + 2\eta_2,\delta)$ is finite if $\sup_{\theta\in\bR}\vc(S_{\eta_1}\sps 0,B_{\eta_2}\sps \theta)$ is finite. We thus propose the following question about a qualitative lower bound:
\begin{question}
\label{q:cm}
Let $S,B\subseteq([-1,1]\cup\{*\})^X$ be real-valued hypothesis classes. 
Suppose $\vc(S_{\eta_1}\sps 0,B_{\eta_2}\sps \theta)$ is infinite for some $\eta_1,\eta_2>0$ and $\theta\in \bR$. Does this imply that $\scm(S,B,\varepsilon,\delta)$ is infinite for some $\varepsilon,\delta > 0$?
\end{question}

We devote the remaining of the section to proving \Cref{thm:cm}. The main idea is to reduce correlation maximization for $(S,B)$ to comparative learning for $(S_{\eta_1}\sps 0, B_{\eta_2}\sps \theta)$ for suitable choices of $\theta\in \bR$. 
In order to transform the data points drawn according to a real-valued source hypothesis $s\in S$ into data points generated from a binary source hypothesis in $S_{\eta_1}\sps 0$, we start with the simpler deterministic-label setting (\Cref{def:dcm}) and apply a rejection sampling procedure. We then reduce the general setting to the deterministic-label setting using a ``non-uniform covering'' type technique inspired by \citet{hopkins2022realizable}.
Once we transform the data points, we apply our learner for comparative learning to get a model achieving a small error (or equivalently, a large correlation) compared to the binary benchmark hypotheses in $B_{\eta_2}\sps \theta$. To translate this into a comparison with the real-valued benchmark hypotheses in $B$, we show that, roughly speaking, $B$ is approximately included in the convex hull of $\bigcup_\theta B_{\eta_2}\sps \theta$ for a sufficiently rich collection of $\theta$'s. We implement these ideas in full detail in the following three subsections, starting with the simpler special case using deterministic labels and binary benchmarks and building to the general case.

\subsection{Deterministic Labels and Binary Benchmarks}
\label{sec:dcm}
We start with the special case of deterministic-label correlation maximization (\Cref{def:dcm}) with the assumption that the benchmark class $B$ is binary.

\begin{theorem}
\label{thm:dcm}
Let $S\subseteq([-1,1]\cup\{*\})^X$ be a real-valued hypothesis class and $B\subseteq\{-1,1,*\}^X$ be a binary hypothesis class. 
For $\eta \ge 0,\varepsilon,\delta\in (0,1/2)$, defining $m:= \vc(S_\eta\sps 0,B)$, we have
\[\sdcm(S,B,\varepsilon + 2\eta,\delta) \le O\left(\frac{m}{\varepsilon^2}\log^2_+\left(\frac{m}{\varepsilon}\right) + \frac 1{\varepsilon^2}\log\left(\frac 1{\delta}\right)\right).
\]
\end{theorem}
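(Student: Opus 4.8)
The plan is to reduce deterministic-label correlation maximization for the real-valued pair $(S,B)$ to comparative learning for the binary pair $(S_\eta\sps 0, B)$ and then invoke \Cref{thm:comp}. In the deterministic-label setting every data point $(x,y)$ satisfies $y = s(x)$ for the unknown $s \in S$, so the learner in fact observes the value $s(x)$ at each sampled point. The key device is a rejection-sampling procedure: upon drawing $(x,s(x))\sim\mu$, accept the point with probability $|s(x)|$ if $|s(x)| > \eta$ and with probability $0$ otherwise, and on acceptance emit the labeled point $(x,\sign(s(x)))$. It is important that the acceptance probability is \emph{proportional to} $|s(x)|$ rather than a hard threshold indicator. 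Writing $Z := \bE_{(x,y)\sim\mu}\!\left[|s(x)|\,\one(|s(x)| > \eta)\right]$ for the acceptance rate, the emitted points are, conditioned on their number, i.i.d.\ from a distribution $\mu'$ over $X\times\{-1,1\}$ whose marginal on $X$ is $\mu|_X$ reweighted by $|s(x)|\one(|s(x)|>\eta)$ and whose label is $\sign(s(x)) = s_\eta\sps 0(x)$; in particular $\mu'$ is realizable by $s_\eta\sps 0 \in S_\eta\sps 0$, so it is a legal input distribution for comparative learning of $(S_\eta\sps 0, B)$.

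Next I would transfer the comparative guarantee back to the correlation-maximization objective. Using $\one(a\ne b) = \tfrac12(1 - a\ptimes b)$ from \eqref{eq:error-correlation}, a model $g:X\to\{-1,1\}$ meeting the comparative objective for $(S_\eta\sps 0,B)$ on $\mu'$ with error $\varepsilon'$ satisfies $\bE_{\mu'}[\sign(s(x))\,g(x)] \ge \sup_{b\in B}\bE_{\mu'}[\sign(s(x))\ptimes b(x)] - 2\varepsilon'$. The elementary identity $|s(x)|\cdot(\sign(s(x))\ptimes h(x)) = s(x)\ptimes h(x)$ (verified by cases on whether $h(x) = *$), together with the reweighted form of $\mu'$, turns each $\mu'$-expectation into $\tfrac1Z\bE_{(x,y)\sim\mu}[\one(|s(x)|>\eta)\,(s(x)\ptimes h(x))]$, which differs from $\tfrac1Z\bE_{(x,y)\sim\mu}[y\ptimes h(x)]$ by at most $\eta/Z$ since $|s(x)\ptimes h(x)| \le |s(x)| \le \eta$ on the rejected region (recall $y = s(x)$). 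Multiplying through by $Z$ and absorbing the $\eta$-terms gives $\bE_{(x,y)\sim\mu}[y\,g(x)] \ge \sup_{b\in B}\bE_{(x,y)\sim\mu}[y\ptimes b(x)] - 2\eta - 2Z\varepsilon'$. Hence, provided the comparative learner is run with $\varepsilon'$ small enough that $2Z\varepsilon' \le \varepsilon$, returning $f := g$ meets the required accuracy $\varepsilon + 2\eta$.

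The remaining and most delicate issue is sample efficiency: rejection sampling costs a $1/Z$ factor, and $Z$ can be as small as $\Theta(\varepsilon)$, which naively would cost an extra $1/\varepsilon$ and give $\varepsilon^{-3}$ rather than $\varepsilon^{-2}$. The resolution is that a small $Z$ makes the accuracy constraint $2Z\varepsilon'\le\varepsilon$ permit a correspondingly large $\varepsilon'$, which shrinks $\scomp(S_\eta\sps 0, B, \varepsilon', \cdot)$ and cancels the $1/Z$ overhead, while an extremely small $Z$ (say $Z\le\varepsilon/2$) makes \emph{any} model $f$ acceptable, because then $\sup_{b\in B}\bE_\mu[y\ptimes b(x)] - \bE_\mu[y\,f(x)] \le 2\,\bE_\mu[|s(x)|] \le 2Z + 2\eta \le \varepsilon + 2\eta$. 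Concretely the learner would first spend $O(\varepsilon^{-2}\log(1/\delta))$ samples to estimate $Z$ to additive accuracy $\varepsilon/4$; if the estimate is below a $\Theta(\varepsilon)$ threshold it outputs an arbitrary model, and otherwise it sets $\varepsilon' := \min\{1/8,\ \Theta(\varepsilon/\hat Z)\}$ (the cap keeping $\varepsilon'$ below $1/4$) and $n'' := \scomp(S_\eta\sps 0, B, \varepsilon', \delta/3)$, draws a fresh batch of $O\!\left(\varepsilon^{-2}(m\log^2_+(m/\varepsilon) + \log(1/\delta))\right)$ samples, runs the rejection procedure, and if it collects at least $n''$ accepted points feeds them to the comparative learner guaranteed by \Cref{thm:comp} (outputting an arbitrary model otherwise, which a Chernoff bound shows is safe with probability $1-\delta/3$ whenever $Z > \varepsilon/2$). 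A short computation shows $n''/Z = O\!\left(\varepsilon^{-2}(m\log^2_+(m/\varepsilon) + \log(1/\delta))\right)$ for every relevant value of $Z$, so the total sample count is $O\!\left(\tfrac m{\varepsilon^2}\log^2_+(\tfrac m\varepsilon) + \tfrac1{\varepsilon^2}\log\tfrac1\delta\right)$. The main obstacle is precisely this three-way interplay — the unknown acceptance rate $Z$, the accuracy slack a small $Z$ buys, and the adaptive choice of $\varepsilon'$ — which must be balanced to keep the exponent of $1/\varepsilon$ equal to $2$.
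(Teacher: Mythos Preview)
Your approach is correct and matches the paper's: both reduce to comparative learning for $(S_\eta\sps 0, B)$ via rejection sampling with acceptance probability $|s(x)|\one(|s(x)|>\eta)$, and both exploit the key tradeoff that a small acceptance rate $Z$ permits a proportionally larger comparative-learning accuracy $\varepsilon'$, keeping the overall dependence at $\varepsilon^{-2}$. The only difference is implementation: the paper (Algorithm~\ref{alg:dcm} and Lemma~\ref{lm:dcm}) skips your separate $Z$-estimation phase by setting the comparative learner's accuracy directly to $\varepsilon n/(4n')$ from the observed acceptance count $n'$, which is slightly cleaner but functionally equivalent.
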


To prove \Cref{thm:dcm}, we design a learner (\Cref{alg:dcm}) and show that it solves $\dcm$ with the desired sample complexity in the lemma below:
\begin{lemma}
\label{lm:dcm}
Let $S\subseteq([-1,1]\cup\{*\})^X$ be a real-valued hypothesis class and $B\subseteq\{-1,1,*\}^X$ be a binary hypothesis class. 
Suppose the parameters of \Cref{alg:dcm} satisfy $\eta\ge 0,\varepsilon,\delta\in (0,1/2),n \ge \sup_{\varepsilon'\ge \varepsilon}\frac{\varepsilon'}{\varepsilon}\scomp(S_{\eta}\sps{0},B, \varepsilon'/4,\delta/2) $ and $n\ge C\varepsilon^{-1}\log(1/\delta)$ for a sufficiently large absolute constant $C > 0$.
Then,
\Cref{alg:dcm} belongs to $\dcm_n(S,B,\varepsilon + 2\eta,\delta)$. 
\end{lemma}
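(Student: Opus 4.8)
The goal is to reduce deterministic-label correlation maximization for the real-valued pair $(S,B)$ to ordinary comparative learning for the binary pair $(S_\eta\sps 0,B)$, using a rejection-sampling step to convert the real-valued, deterministically-labeled sample into a binary-labeled sample that is realizable by a source in $S_\eta\sps 0$. Concretely, I expect \Cref{alg:dcm} to work as follows. Given i.i.d.\ points $(x_i,y_i)\sim\mu$ (so $y_i=s(x_i)\in[-1,1]$ for a source $s\in S$), discard every point with $|y_i|\le\eta$ and, independently, accept each surviving point with probability $|y_i|$, relabeling it to $(x_i,\sign(y_i))$. Let $m$ be the number of accepted points. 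If $m$ is too small (as measured against a threshold of order $\varepsilon n$), output a fixed trivial model, e.g.\ $f\equiv 1$; otherwise feed the accepted relabeled points to an optimal comparative learner for $(S_\eta\sps 0,B)$ with error parameter $\alpha\approx\varepsilon n/(4m)$ and confidence $\delta/2$, and output its result.

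\textbf{The two reduction identities.} Conditioned on acceptance, each point is distributed as a fixed distribution $\tilde\mu$ over $X\times\{-1,1\}$ whose $X$-marginal is $\mu|_X$ reweighted by $|s(x)|\,\one(|s(x)|>\eta)$ and whose label is $\sign(s(x))$; on the support of $\tilde\mu$ this label equals $s_\eta\sps 0(x)$, so $\tilde\mu$ is a valid comparative-learning data distribution realizable by $s_\eta\sps 0\in S_\eta\sps 0$. Writing $\tilde p:=\bE_{(x,y)\sim\mu}[|y|\one(|y|>\eta)]$ for the overall acceptance probability, a direct computation gives, for every total $g:X\to\{-1,1\}$,
\[
\tilde p\,\bE_{(x,\tilde y)\sim\tilde\mu}[\tilde y\,g(x)]=\bE_{(x,y)\sim\mu}\big[y\,g(x)\,\one(|y|>\eta)\big],
\]
so $\tilde p\,\bE_{\tilde\mu}[\tilde y\,g(x)]$ differs from $\bE_{\mu}[y\,g(x)]$ by at most $\bE_{(x,y)\sim\mu}[|y|\one(|y|\le\eta)]\le\eta$. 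To carry this over to the generalized products in the correlation objective, pair each $b\in B$ with the total function $g_b:X\to\{-1,1\}$ given by $g_b(x)=b(x)$ when $b(x)\ne *$ and $g_b(x)=-\sign(s(x))$ when $b(x)=*$; using $u\sign(u)=|u|$ one checks $y\ptimes b(x)=y\,g_b(x)$ for $\mu$-a.e.\ $(x,y)$ and $\tilde y\ptimes b(x)=\tilde y\,g_b(x)$ for $\tilde\mu$-a.e.\ $(x,\tilde y)$. Combined with the displayed identity, this yields $\tilde p\,\bE_{\tilde\mu}[\tilde y\ptimes b(x)]\ge\bE_\mu[y\ptimes b(x)]-\eta$ for every $b\in B$, and symmetrically $\bE_\mu[y\,f(x)]\ge\tilde p\,\bE_{\tilde\mu}[\tilde y\,f(x)]-\eta$ for any model $f:X\to\{-1,1\}$.

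\textbf{Assembling the bound.} On the event that the comparative learner succeeds (probability $\ge 1-\delta/2$; here the assumed lower bound on $n$, applied with $\varepsilon'=\varepsilon n/m\ge\varepsilon$, together with monotonicity of the learner set in the sample size, guarantees that $m$ exceeds $\scomp(S_\eta\sps 0,B,\alpha,\delta/2)$ so the learner can be run), its guarantee rewritten through $\one(y'\ne u)=\tfrac12(1-y'\ptimes u)$ becomes $\bE_{\tilde\mu}[\tilde y\,f(x)]\ge\sup_{b\in B}\bE_{\tilde\mu}[\tilde y\ptimes b(x)]-2\alpha$. Multiplying by $\tilde p$ and chaining the two inequalities above,
\[
\bE_{(x,y)\sim\mu}[y\,f(x)]\ge\sup_{b\in B}\bE_{(x,y)\sim\mu}[y\ptimes b(x)]-2\eta-2\tilde p\,\alpha,
\]
so $\alpha\le\varepsilon/(2\tilde p)$ gives exactly the admissibility condition for $\dcm(S,B,\varepsilon+2\eta,\delta)$. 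The trivial-output branch is handled separately: for \emph{any} model $f$ one has $\sup_b\bE_\mu[y\ptimes b(x)]-\bE_\mu[y\,f(x)]\le 2\bE_\mu[|y|]\le 2(\tilde p+\eta)$, which is at most $\varepsilon+2\eta$ whenever $\tilde p\le\varepsilon/2$.

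\textbf{Main obstacle.} The delicate point is that the acceptance probability $\tilde p$ is unknown, yet it controls both the error parameter handed to the comparative learner — which must scale like $\varepsilon/\tilde p$, exactly explaining the $\tfrac{\varepsilon'}{\varepsilon}$ factor and the $\sup_{\varepsilon'\ge\varepsilon}$ in the hypothesis on $n$ — and how many samples survive the rejection. I would resolve this with a Chernoff bound showing $m$ concentrates around $\tilde p n$, so that the data-dependent choice $\alpha\approx\varepsilon n/(4m)$ is $\le\varepsilon/(2\tilde p)$ with high probability; the assumption $n\ge C\varepsilon^{-1}\log(1/\delta)$ supplies the failure-probability budget for this concentration and for correctly deciding, from the empirical acceptance rate, whether to fall back on the trivial model — which is safe precisely in the small-$\tilde p$ regime where too few samples survive. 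The remaining content of the proof is the careful but routine accounting of the $O(\eta)$ truncation errors so that the final guarantee lands at $\varepsilon+2\eta$ rather than some larger multiple.
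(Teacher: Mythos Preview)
Your proposal is correct and follows essentially the same approach as the paper: rejection-sample to produce data realizable by $s_\eta\sps 0$, invoke a comparative learner with the data-dependent error $\varepsilon n/(4n')$, use the identity relating correlations under $\mu$ and the reweighted $\tilde\mu$ (the paper packages this as a separate claim rather than your $g_b$-totalization, but the content is the same), and handle the small-$\tilde p$ regime by noting any model is admissible there. One small correction: the actual \Cref{alg:dcm} has no explicit trivial-output branch---it always runs the comparative learner (outputting an arbitrary $f$ only when $n'=0$)---but your own observation that any $f$ works when $\tilde p\le\varepsilon/2$ already covers this, and when $\tilde p>\varepsilon/2$ the Chernoff bound puts the event $n'<\tilde p n/2$ (hence also $n'=0$) inside the $\delta/2$ failure budget.
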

\LinesNumbered
\begin{algorithm}
\caption{Deterministic-label correlation maximization for $(S,B)$ with binary $B$.}
\label{alg:dcm}
\SetKwInOut{Parameters}{Parameters}
\SetKwInOut{Input}{Input}
\SetKwInOut{Output}{Output}
\Parameters{$S\subseteq([-1,1]\cup\{*\})^X,B\subseteq\{-1,1,*\}^X,n\in \bZ_{\ge 0},\varepsilon,\eta,\delta \in \bR_{\ge 0}$.}
\Input{data points $(x_1,y_1),\ldots,(x_n,y_n)\in X\times[-1,1]$.}
\Output{model $f:X\rightarrow\{-1,1\}$.}
Initialize $\Psi$ to be an empty dataset\label{line:dcm-init}\;
\For{$i = 1,\ldots,n$\label{line:dcm-for-1}}{
\If{$|y_i| > \eta$}
{
With probability $|y_i|$, add the data point $(x_i,\sign(y_i))$ into $\Psi$\;
}
}\label{line:dcm-for-2}
Let $n'$ be the number of data points in $\Psi$\label{line:dcm-gamma}\;
Invoke comparative learner $L\in \comp_{n'}(S_{\eta}\sps{0},B,\varepsilon n/(4n'),\delta/2)$ on $\Psi$ to obtain $f:X\rightarrow\{-1,1\}$\label{line:dcm-invoke}\tcc*{If $n' = 0$, choose an arbitrary $f:X\rightarrow\{-1,1\}$.}
\Return $f$\;
\end{algorithm}

\Cref{thm:dcm} follows from \Cref{lm:dcm} and \Cref{thm:comp}:
\begin{proof}[Proof of \Cref{thm:dcm}]
Since $\scomp(S_\eta\sps 0, B,\varepsilon'',\delta) = 0$ whenever $\varepsilon'' > 1$,
by \Cref{thm:comp}, for every $\varepsilon'\ge \varepsilon$, 
\[
\scomp(S_\eta\sps 0, B, \varepsilon'/4,\delta/2) = O\left(\frac{m}{(\varepsilon')^2}\log^2_+\left(\frac{m}{\varepsilon}\right) + \frac 1{(\varepsilon')^2}\log\left(\frac 1\delta\right)\right),
\]
which implies that 
\[
\sup_{\varepsilon'\ge \varepsilon}\frac{\varepsilon'}{\varepsilon}\scomp(S_{\eta}\sps{0},B, \varepsilon'/4,\delta/2) = O\left(\frac{m}{\varepsilon^2}\log^2_+\left(\frac{m}{\varepsilon}\right) + \frac 1{\varepsilon^2}\log\left(\frac 1\delta\right)\right).
\]
By \Cref{lm:dcm}, there exists $n = O\left(\frac{m}{\varepsilon^2}\log^2_+\left(\frac{m}{\varepsilon}\right) + \frac 1{\varepsilon^2}\log\left(\frac 1\delta\right)\right)$ such that $\dcm_n(S,B,\varepsilon + 2\eta,\delta)\ne \emptyset$, as desired.
\end{proof}

Before we prove \Cref{lm:dcm}, we first discuss the idea behind \Cref{alg:dcm}. The key idea is to reduce the correlation maximization task ($\dcm$) for $(S,B)$ to the comparative learning task ($\comp$) for $(S_{\eta}\sps 0,B)$.
In particular, we need to transform the input data points for $\dcm$ to valid input data points for $\comp$. 
Each data point $(x,y)$ in $\dcm$ is generated i.i.d.\ from a distribution $\mu$ satisfying $\Pr_{(x,y)\sim\mu}[s(x) = y] = 1$ for some real-valued hypothesis $s\in S$, and the label $y = s(x)$ may take any value in $[-1,1]$. However, the label $y$ needs to be binary in any data point $(x,y)$ for $\comp$.
Thus for every data point $(x,y)$ in $\dcm$, we want to replace it by $(x,\sign(y))$. If we directly use the data points $(x,\sign(y))$ as the input for $\comp$, we would get a model $f:X\rightarrow\{-1,1\}$ such that $\Pr_{(x,y)\sim\mu}[f(x)\ne\sign(y)]$ is small, or equivalently (by \eqref{eq:error-correlation}), 
\begin{equation}
\label{eq:dcm-idea-1}
\bE_{(x,y)\sim\mu}[\sign(y)f(x)] = \bE_{x\sim\mu|_X}[\sign(s(x))f(x)]
\end{equation}
is large (note that $\Pr_{(x,y)\sim\mu}[s(x) = y] = 1$). However, our goal in $\dcm$ is to maximize 
\begin{equation}
\label{eq:dcm-idea-2}
\bE_{(x,y)\sim\mu}[yf(x)] = \bE_{x\sim\mu|_X}[s(x)f(x)].
\end{equation}
To relate \eqref{eq:dcm-idea-1} and \eqref{eq:dcm-idea-2},
we note that $|s(x)|\sign(s(x)) = s(x)$, and thus
\[
\bE_{x\sim\mu|_X}[|s(x)|\sign(s(x))f(x)] = \bE_{x\sim\mu|_X}[s(x)f(x)].
\]
Therefore, if we construct a new distribution $\mu'$ so that $\frac{\mu'|_X(x)}{\mu|_X(x)} \propto |s(x)|$, where $\mu|_X(x),\mu'|_X(x)$ are the probability mass on $x\in X$ from $\mu|_X$ and $\mu'|_X$, respectively, then by the equation above, we have
\[
\bE_{x\sim\mu'|_X}[\sign(s(x))f(x)] = C\,\bE_{x\sim\mu|_X}[s(x)f(x)]
\]
for a constant $C$ independent of $f$. This means that if we replace the distribution $\mu|_X$ in \eqref{eq:dcm-idea-1} with the new distribution $\mu'|_X$, we get the desired \eqref{eq:dcm-idea-2} up to scaling. %

\Cref{alg:dcm} uses a natural rejection sampling procedure to generate the new data points $(x,\sign(s(x)))$ with $x$ drawn from $\mu'|_X$: for every data point $(x,y)$ drawn from $\mu$, we remove the data point with probability $1 - |y| = 1 - |s(x)|$, and replace the data point by $(x,\sign(y))$ with the remaining probability $|y| = |s(x)|$.
More precisely, \Cref{alg:dcm} makes a slight adjustment: when $|y|\le \eta$, we remove the example with probability $1$ instead of $1 - |y|$. 
We show that the change to $\mu'|_X$ caused by this adjustment is insignificant for our purposes, and the adjustment ensures that each remaining data point $(x,\sign(y))$ satisfies $\sign(y) = s_{\eta}\sps {0}(x)$, i.e., $(x,\sign(y))$ is a valid data point in the comparative learning task for source hypothesis $s_{\eta}\sps {0}\in S_{\eta}\sps {0}$.

\LinesNumbered

We are now ready to prove \Cref{lm:dcm}. We first analyze the distribution of the new data points in $\Psi$ generated from the rejection sampling procedure at Lines~\ref{line:dcm-init}-\ref{line:dcm-for-2} in \Cref{alg:dcm}.
Let $\mu$ be a distribution over $X\times[-1,1]$ such that $\Pr_{(x,y)\sim\mu}[s(x) = y] = 1$ for some $s\in S$. 
For a chosen parameter $\eta\in\bR_{\ge 0}$ in \Cref{alg:dcm},
let $\nu$ be the joint distribution of $(x,y,u)\in X\times[-1,1]\times\{0,1\}$ where $(x,y)\sim\mu$ and
\begin{equation}
\label{eq:dcm-cond-u}
\Pr[u = 1|x,y] = \begin{cases}
0, & \textnormal{if }|y| \le \eta;\\ 
|y|, & \textnormal{otherwise.}
\end{cases}
\end{equation}
Let $\mu'$ denote the conditional distribution of $(x,\sign(y))\in X\times\{-1,1\}$ given $u = 1$ for $(x,y,u)\sim\nu$. The following claim follows directly from the description of \Cref{alg:dcm}:
\begin{claim}
\label{claim:dcm-new-points}
Let the distributions $\mu,\nu,\mu'$ be defined as above.
Assume that the input data points $(x_i,y_i)_{i=1}^n$ to \Cref{alg:dcm} are generated i.i.d.\ from $\mu$.
For every $i = 1,\ldots, n$, let $u_i$ denote the indicator for the event that a new data point is added to $\Psi$ in the $i$-th iteration of the \for loop at Lines~\ref{line:dcm-for-1}-\ref{line:dcm-for-2}. 
Then $(x_i, y_i, u_i)_{i=1}^n$ are distributed independently according to $\nu$.
Also, when conditioned on $n'$, the data points in $\Psi$ at \Cref{line:dcm-gamma} are distributed independently according to $\mu'$.
\end{claim}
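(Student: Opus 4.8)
The plan is to read off both assertions directly from the mechanics of the rejection-sampling loop at Lines~\ref{line:dcm-init}--\ref{line:dcm-for-2}. First I would check that the acceptance indicator $u_i$ has exactly the conditional law prescribed in \eqref{eq:dcm-cond-u}: in iteration $i$ of the \for loop, \Cref{alg:dcm} adds a point to $\Psi$ only when $|y_i| > \eta$, and in that case it does so with probability $|y_i|$ using a fresh internal coin flip; hence $\Pr[u_i = 1 \mid x_i, y_i] = 0$ if $|y_i| \le \eta$ and $= |y_i|$ otherwise, which matches the definition of $\nu$. Since the pairs $(x_i, y_i)$ are i.i.d.\ from $\mu$, and the coin flip used in iteration $i$ is independent of the data and of the other coin flips, each triple $(x_i, y_i, u_i)$ is a function of $(x_i,y_i)$ and the $i$-th coin only, so the triples $(x_i, y_i, u_i)$, $i = 1,\dots,n$, are mutually independent with each distributed according to $\nu$; this is the first assertion.

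For the second assertion, observe that the contents of $\Psi$ at \Cref{line:dcm-gamma} are precisely the points $(x_i, \sign(y_i))$ over the indices $i$ with $u_i = 1$, listed in increasing order of $i$, and that $n'$ counts these indices. Rather than conditioning on the value of $n'$ directly, I would condition on the full acceptance pattern $\{u_i = b_i \text{ for all } i\}$ for an arbitrary fixed $b \in \{0,1\}^n$; by the independence established in the first part, the pairs $(x_i, y_i)$ with $b_i = 1$ are then i.i.d.\ from the conditional law of $(x,y)$ given $u = 1$ under $\nu$ (and the pairs with $b_i = 0$ play no role). Pushing this forward through the map $(x,y) \mapsto (x, \sign(y))$ yields that the accepted points, in order, form an i.i.d.\ sample from $\mu'$. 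Since this conditional distribution is the same for every pattern $b$ with a given number of ones, averaging over all patterns $b$ with $\sum_i b_i = n'$ shows that, conditioned on $n'$, the contents of $\Psi$ are an i.i.d.\ sample of size $n'$ from $\mu'$.

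There is essentially no obstacle here: the proof is routine bookkeeping. The only point requiring a little care is the handling of the conditioning in the second assertion — one should not condition on the value of $n'$ alone and then invoke coordinatewise independence, but rather condition on the complete acceptance pattern and sum — yet the exchangeability built into the i.i.d.\ structure makes this step immediate. (If $\Pr_{(x,y,u)\sim\nu}[u = 1] = 0$ then $\mu'$ is undefined, but in that case $n' = 0$ almost surely and the statement is vacuous.)
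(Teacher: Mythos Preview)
Your proposal is correct and matches the paper's treatment: the paper does not give a proof but simply notes that the claim ``follows directly from the description of \Cref{alg:dcm},'' and your argument is precisely the routine verification of that assertion. The extra care you take in the second part---conditioning on the full acceptance pattern and then averaging over patterns with the same $n'$---is a standard way to make the ``i.i.d.\ conditioned on $n'$'' statement rigorous, and is entirely in the spirit of the paper's one-line justification.
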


Define $\tilde s:X\rightarrow [-1,1]\cup\{*\}$ such that
\begin{equation}
\label{eq:tilde-s}
\tilde s(x) = \begin{cases}
0, & \textnormal{if } s(x) \ne * \textnormal{ and } |s(x)| \le \eta;\\
s(x), & \textnormal{otherwise}.
\end{cases}
\end{equation}
Since $\Pr_{(x,y)\sim\mu}[s(x) = y] = 1$, equations \eqref{eq:dcm-cond-u} and \eqref{eq:tilde-s} imply 
that 
\begin{align}
& \Pr_{(x,y,u)\sim\nu}[u = 1|x,y] = |\tilde s(x)|,\label{eq:dist-u-cond}\\
\textnormal{and thus } & \Pr_{(x,y,u)\sim\nu}[u = 1] = \bE_{x\sim\mu|_X}|\tilde s(x)|.\label{eq:dist-u}
\end{align}
 (Note that $(x,y,u)$ generated from $\nu$ satisfies $s(x)\ne *$, or equivalently, $\tilde s(x)\ne *$ with probability $1$.)
The following claim allows us to evaluate expectations over the distribution $\mu'$:
\begin{claim}
\label{claim:dcm-rejection-0}
Let $\mu,\mu',s,\tilde s$ be defined as above.
Assuming $\bE_{x\sim\mu|_X}|\tilde s(x)| > 0$,
for every bounded function $g:X\times \{-1,1\}\rightarrow \bR$, we have
\begin{equation}
\label{eq:dist-change}
\bE_{(x,y)\sim\mu'}[g(x,y)] =\frac{1}{\bE_{x\sim\mu|_X}|\tilde s(x)|}\bE_{x\sim\mu|_X}\Big[g\Big(x,\sign(s(x))\Big)|\tilde s(x)|\Big].
\end{equation}
\end{claim}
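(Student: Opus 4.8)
The plan is to unfold the definition of the conditional distribution $\mu'$ and reduce everything to an expectation over $\mu$ (equivalently over $\mu|_X$) via the tower property. By definition $\mu'$ is the conditional law of $(x,\sign(y))$ given $u=1$ when $(x,y,u)\sim\nu$, and \eqref{eq:dist-u} gives $\Pr_{(x,y,u)\sim\nu}[u=1] = \bE_{x\sim\mu|_X}|\tilde s(x)|$, which is strictly positive under our hypothesis. Hence the conditioning event has positive probability, $\mu'$ is well-defined, and for every bounded $g:X\times\{-1,1\}\to\bR$ we may write
\[
\bE_{(x,y)\sim\mu'}[g(x,y)] = \frac{\bE_{(x,y,u)\sim\nu}[g(x,\sign(y))\one(u=1)]}{\bE_{x\sim\mu|_X}|\tilde s(x)|}.
\]

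The key step is to evaluate the numerator. Since the marginal of $(x,y)$ under $\nu$ is $\mu$, conditioning on $(x,y)$ and applying \eqref{eq:dist-u-cond} (which states $\Pr_{(x,y,u)\sim\nu}[u=1\mid x,y] = |\tilde s(x)|$) yields
\[
\bE_{(x,y,u)\sim\nu}[g(x,\sign(y))\one(u=1)] = \bE_{(x,y)\sim\mu}\big[g(x,\sign(y))\,|\tilde s(x)|\big].
\]
Finally, since $\Pr_{(x,y)\sim\mu}[s(x)=y]=1$, the quantities $\sign(y)$ and $\sign(s(x))$ agree $\mu$-almost surely, so the right-hand side equals $\bE_{x\sim\mu|_X}\big[g(x,\sign(s(x)))\,|\tilde s(x)|\big]$. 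Dividing by $\bE_{x\sim\mu|_X}|\tilde s(x)|$ gives exactly \eqref{eq:dist-change}.

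I expect no serious obstacle here: every step is a routine manipulation of discrete distributions, and the countable-support convention of \Cref{sec:preli} removes measurability concerns. The only points that warrant a little care are (i) checking that the event $\{u=1\}$ has positive probability so that both $\mu'$ and the quotient above are legitimate — which is precisely what the hypothesis $\bE_{x\sim\mu|_X}|\tilde s(x)|>0$ provides — and (ii) confirming that individuals with $s(x)\ne*$ and $|s(x)|\le\eta$ contribute nothing on either side: on the left because $u=0$ deterministically for them by \eqref{eq:dist-u-cond}, and on the right because $|\tilde s(x)|=0$ for such $x$ by \eqref{eq:tilde-s}.
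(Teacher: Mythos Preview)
Your proposal is correct and follows essentially the same argument as the paper: write the conditional expectation as a ratio via \eqref{eq:dist-u}, evaluate the numerator by conditioning on $(x,y)$ and invoking \eqref{eq:dist-u-cond}, and then replace $\sign(y)$ by $\sign(s(x))$ using $\Pr_{(x,y)\sim\mu}[s(x)=y]=1$. The extra remarks you include about well-definedness and the role of points with $|s(x)|\le\eta$ are sound but not needed beyond what the paper already records.
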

\begin{proof}
By the definition of $\mu'$, 
\begin{equation}
\label{eq:dist-change-0}
\bE_{(x,y)\sim\mu'}[g(x,y)] = \frac{\bE_{(x,y,u)\sim\nu}[g(x,\sign(y))\one(u = 1)]}{\Pr_{(x,y,u)\sim\nu}[u = 1]}.
\end{equation}
Moreover, 
\begin{align}
\bE_{(x,y,u)\sim\nu}[g(x,\sign(y))\one(u = 1)] & = \bE_{(x,y)\sim\mu}[g(x,\sign(y)){\Pr}_{(x,y,u)\sim\nu}[u=1|x,y]]\notag\\
& = \bE_{(x,y)\sim\mu}[g(x,\sign (y))|\tilde s(x)|]\tag{by \eqref{eq:dist-u-cond}}\\
& = \bE_{x\sim\mu|_X}[g(x,\sign (s(x)))|\tilde s(x)|]. \tag{by $\Pr_{(x,y)\sim\mu}[s(x) = y] = 1$}
\end{align}
Plugging this equation and \eqref{eq:dist-u} into \eqref{eq:dist-change-0}, we get \eqref{eq:dist-change}.
\end{proof}
Using \Cref{claim:dcm-rejection-0}, we prove two helper claims \Cref{claim:dcm-rejection-feasible} and \Cref{claim:rejection}. \Cref{claim:dcm-rejection-feasible} shows that the distribution $\mu'$ satisfies the realizability assumption for the comparative learning task for $(S_{\eta}\sps 0,B)$. \Cref{claim:rejection} allows us to relate the guarantees of correlation maximization on $\mu$ and $\mu'$.
\begin{claim}
\label{claim:dcm-rejection-feasible}
Let $\eta,\mu,\mu',s,\tilde s$ be defined as above. Assume $\bE_{x\sim\mu|_X}|\tilde s(x)| > 0$. Then, $\Pr_{(x,y)\sim\mu'}[s_{\eta}\sps {0} (x) = y] = 1.$
\end{claim}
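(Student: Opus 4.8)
The plan is to invoke \Cref{claim:dcm-rejection-0} with the bounded test function $g(x,y) := \one(s_\eta\sps 0(x)\ne y)$. Since the label in $\mu'$ ranges over $\{-1,1\}$, the desired conclusion $\Pr_{(x,y)\sim\mu'}[s_\eta\sps 0(x) = y] = 1$ is equivalent to $\bE_{(x,y)\sim\mu'}[g(x,y)] = 0$. Applying \eqref{eq:dist-change}, this expectation equals
\[
\frac{1}{\bE_{x\sim\mu|_X}|\tilde s(x)|}\,\bE_{x\sim\mu|_X}\Big[\one\big(s_\eta\sps 0(x)\ne \sign(s(x))\big)\,|\tilde s(x)|\Big],
\]
so it suffices to show that the integrand on the right vanishes pointwise (for $\mu|_X$-almost every $x$).

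The core of the argument is then a short case analysis on the margin $\eta$. First note that the realizability assumption $\Pr_{(x,y)\sim\mu}[s(x)=y]=1$ with $y\in[-1,1]$ forces $s(x)\ne *$ with probability $1$ under $\mu|_X$, so we may assume $s(x)\ne *$. If $|\tilde s(x)| = 0$, the product is trivially zero. If $|\tilde s(x)| > 0$, then by the definition \eqref{eq:tilde-s} of $\tilde s$ we must be in the second case of \eqref{eq:tilde-s}, i.e., $\tilde s(x) = s(x)$ and moreover $|s(x)| > \eta$ (otherwise the first case would apply and give $\tilde s(x)=0$). Consequently either $s(x) > \eta$, in which case the definition of $s_\eta\sps 0$ gives $s_\eta\sps 0(x) = 1 = \sign(s(x))$, or $s(x) < -\eta$, in which case $s_\eta\sps 0(x) = -1 = \sign(s(x))$; in both subcases the indicator $\one(s_\eta\sps 0(x)\ne \sign(s(x)))$ is zero. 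Hence the integrand is identically zero, the expectation is zero, and the claim follows.

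I do not expect a real obstacle here: the only thing to be careful about is matching up the threshold in the definition of $\tilde s$ (clipping when $|s(x)|\le\eta$) with the threshold in the definition of $s_\eta\sps 0$ (reference function $\theta = 0$, margin $\eta$), so that exactly on the set $\{|\tilde s(x)| > 0\}$ the binarized hypothesis $s_\eta\sps 0$ is defined (not $*$) and agrees with $\sign(s(x))$. Everything else is a direct substitution into \Cref{claim:dcm-rejection-0}.
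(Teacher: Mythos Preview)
Your proposal is correct and follows essentially the same approach as the paper: both apply \Cref{claim:dcm-rejection-0} with an indicator test function and then observe that $s_\eta\sps 0(x) = \sign(s(x))$ whenever $\tilde s(x)\notin\{0,*\}$. The only cosmetic difference is that the paper plugs in $g(x,y)=\one(s_\eta\sps 0(x)=y)$ and shows the resulting expectation equals $\bE_{x\sim\mu|_X}|\tilde s(x)|$, whereas you plug in the complementary indicator and show the expectation is zero.
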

\begin{proof}
Plugging $g(x,y) = \one(s_{\eta}\sps 0(x) = y)$ into \eqref{eq:dist-change}, it suffices to prove that
\[
\bE_{x\sim\mu|_X}\left[\one\left(s_\eta\sps 0 (x) = \sign(s(x))\right)|\tilde s(x)|\right] = \bE_{x\sim\mu|_X}|\tilde s(x)|.
\]
This holds because by the definition of $s_\eta\sps 0$ and $\tilde s$, we have $s_\eta\sps 0(x) = \sign (s(x))$ whenever $\tilde s(x) \notin \{0,*\}$.
\end{proof}

Recall from the beginning of the section that for $u_1\in \bR$ and $u_2\in [-1,1]\cup\{*\}$, we define their \emph{generalized product} $u_1\ptimes u_2$ to be
\[
u_1\ptimes u_2 := 
\begin{cases}
u_1u_2,& \textnormal{if }u_2\in[-1,1],\\
-|u_1|, & \textnormal{if }u_2 = *.
\end{cases}
\]

\begin{claim}
\label{claim:rejection}
Let $\mu,\mu',\tilde s$ be defined as above.
Assume $\bE_{x\sim\mu|_X}|\tilde s(x)| > 0$.
Then for every $b:X\rightarrow [-1,1]\cup \{*\}$,
\[
\bE_{(x,y)\sim\mu'}[y\ptimes b(x)] = \frac 1{\bE_{x\sim\mu|_X}|\tilde s(x)|}\bE_{x\sim\mu|_X}[\tilde s(x)\ptimes b(x)].
\]
\end{claim}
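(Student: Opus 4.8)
The plan is to apply \Cref{claim:dcm-rejection-0} with the bounded function $g(x,y) := y\ptimes b(x)$, which is well-defined on $X\times\{-1,1\}$ and satisfies $|g(x,y)|\le 1$ since $y\in\{-1,1\}$ and $b(x)\in[-1,1]\cup\{*\}$. Plugging this choice into \eqref{eq:dist-change} immediately rewrites the left-hand side $\bE_{(x,y)\sim\mu'}[y\ptimes b(x)]$ as
\[
\frac{1}{\bE_{x\sim\mu|_X}|\tilde s(x)|}\,\bE_{x\sim\mu|_X}\Big[\big(\sign(s(x))\ptimes b(x)\big)\,|\tilde s(x)|\Big],
\]
so the entire claim reduces to verifying the pointwise identity $\big(\sign(s(x))\ptimes b(x)\big)\,|\tilde s(x)| = \tilde s(x)\ptimes b(x)$ for $\mu|_X$-almost every $x$.

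To establish this identity I would first note that, since $\Pr_{(x,y)\sim\mu}[s(x)=y]=1$ and $y\in[-1,1]$, we have $s(x)\ne *$ (equivalently $\tilde s(x)\ne *$) with probability $1$ under $\mu|_X$; fix any such $x$. Then I would split into two cases depending on whether $b(x)$ is a real number or the undefined label. If $b(x)\in[-1,1]$, both generalized products are ordinary products, so the identity becomes $\sign(s(x))\,|\tilde s(x)| = \tilde s(x)$, which follows from the definition \eqref{eq:tilde-s} of $\tilde s$ by considering the two sub-cases $|s(x)|\le\eta$ (then $\tilde s(x)=0$ and both sides vanish) and $|s(x)|>\eta$ (then $\tilde s(x)=s(x)$ and $\sign(s(x))|s(x)|=s(x)$). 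If $b(x)=*$, then $\sign(s(x))\ptimes b(x) = -|\sign(s(x))| = -1$ because $\sign(s(x))\in\{-1,1\}$, so the left-hand side equals $-|\tilde s(x)|$, which is exactly $\tilde s(x)\ptimes b(x)$ by definition of the generalized product. Combining the two cases proves the identity, and hence the claim.

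The main obstacle here is purely bookkeeping: one has to keep careful track of the thresholding built into the definition of $\tilde s$ and of the special treatment of the undefined label $*$ in the generalized product, and check that the pointwise identity holds in every combination of these cases. There is no genuine difficulty beyond this case analysis, since \Cref{claim:dcm-rejection-0} already does the substantive work of converting expectations under the rejection-sampled distribution $\mu'$ into expectations under $\mu|_X$.
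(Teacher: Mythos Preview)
Your proposal is correct and follows essentially the same route as the paper: apply \Cref{claim:dcm-rejection-0} with $g(x,y)=y\ptimes b(x)$ and then verify the pointwise identity $(\sign(s(x))\ptimes b(x))\,|\tilde s(x)|=\tilde s(x)\ptimes b(x)$ by a case split on whether $b(x)=*$. Your write-up is in fact more explicit than the paper's, which simply asserts that the identity is ``easy to verify'' after noting that $\sign(s(x))=\sign(\tilde s(x))$ whenever $\tilde s(x)\notin\{0,*\}$.
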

\begin{proof}
Plugging $g(x,y) = y\ptimes b(x)$ into \eqref{eq:dist-change}, we get
\[
\bE_{(x,y)\sim\mu'}[y\ptimes b(x)] = \frac 1{\bE_{x\sim\mu|_X}|\tilde s(x)|}\bE_{x\sim\mu|_X}[(\sign (s(x))\ptimes b(x))|\tilde s(x)|].
\]
It is clear that $\sign(s(x)) = \sign(\tilde s(x))$ whenever $\tilde s(x)\notin \{0,*\}$. It is then easy to verify that $(\sign (s(x))\ptimes \allowbreak b(x))|\tilde s(x)| = \tilde s(x)\ptimes b(x)$ holds regardless of whether $b(x) = *$, assuming $\tilde s(x)\ne *$. Plugging this into the equation above completes the proof.
\end{proof}

We are now ready to prove \Cref{lm:dcm}.
\begin{proof}[Proof of \Cref{lm:dcm}]
We first show that $\comp_{n'}(S_{\eta}\sps{0},B,\varepsilon n/(4n'),\delta/2)\ne \emptyset$ whenever \Cref{line:dcm-invoke} is executed (assuming $n' > 0$). Define $\varepsilon' = \varepsilon n/n'$. It is clear that $n'\le n$, so $\varepsilon' \ge \varepsilon$. By our assumption,
\[
n \ge \frac{\varepsilon'}{\varepsilon}\scomp(S_{\eta}\sps{0},B, \varepsilon'/4,\delta/2).
\]
Plugging $\varepsilon' = \varepsilon n/n'$ into the inequality above, we get $n' \ge \scomp(S_{\eta}\sps{0},B,\varepsilon n/(4n'),\delta/2)$. This implies that $\comp_{n'}(S_{\eta}\sps{0},B,\varepsilon n/(4n'),\delta/2)\ne \emptyset$ as desired.

It remains to show that when the input data points of \Cref{alg:dcm} are generated i.i.d.\ from a distribution $\mu$ satisfying $\Pr_{(x,y)\sim\mu}[s(x) = y] = 1$ for some $s\in S$, with probability at least $1-\delta$, the output model $f$ satisfies
\begin{equation}
\label{eq:dcm-analysis-goal}
\bE_{(x,y)\sim\mu}[yf(x)] \ge {\sup}_{b\in B}\bE_{(x,y)\sim\mu}[y\ptimes b(x)] - \varepsilon - 2\eta.
\end{equation}
Since $\Pr_{(x,y)\sim\mu}[s(x) = y] = 1$, \eqref{eq:dcm-analysis-goal} is equivalent to 
\[
\bE_{x\sim\mu|_X}[s(x)f(x)] \ge {\sup}_{b\in B}\bE_{x\sim\mu|_X}[s(x)\ptimes b(x)] - \varepsilon - 2\eta.
\]
Define $\tilde s:X\to[-1,1]\cup\{*\}$ as in \eqref{eq:tilde-s}. Since $|s(x) - \tilde s(x)|\le \eta$ whenever $s(x)\ne *$, a sufficient condition for the inequality above is
\begin{equation}
\label{eq:dcm-analysis-goal-1}
\bE_{x\sim\mu|_X}[\tilde s(x)f(x)] \ge {\sup}_{b\in B}\bE_{x\sim\mu|_X}[\tilde s(x)\ptimes b(x)] - \varepsilon.
\end{equation}
Define $\rho:= \bE_{x\sim\mu|_X}|\tilde s(x)|$. It is clear that $\bE_{x\sim\mu|_X}[\tilde s(x)f(x)]$ and $\bE_{x\sim\mu|_X}[\tilde s(x)\ptimes b(x)]$ both lie in the interval $[-\rho,\rho]$ for every $b\in B$, so \eqref{eq:dcm-analysis-goal-1} holds trivially if $\rho \le \varepsilon/2$. We thus assume $\rho\ge \varepsilon/2$ without loss of generality. 

By \Cref{claim:rejection}, \eqref{eq:dcm-analysis-goal-1} is equivalent to
\begin{equation}
\label{eq:dcm-analysis-goal-3}
\bE_{(x,y)\sim\mu'}[yf(x)] \ge {\sup}_{b\in B}\bE_{(x,y)\sim\mu'}[y\ptimes b(x)] - \varepsilon/\rho.
\end{equation}
It thus suffices to show that \eqref{eq:dcm-analysis-goal-3} holds with probability at least $1-\delta$.

By the multiplicative Chernoff bound and our assumptions that $\rho \ge \varepsilon/2$ and $n\ge \frac{C}{\varepsilon}\log(1/\delta)$ for a sufficiently large absolute constant $C > 0$, with probability at least $1-\delta/2$, we have 
\begin{equation}
\label{eq:dcm-analysis-1}
n' = \sum_{i=1}^{n}u_i \ge \rho n/2,
\end{equation}
where $u_i\in \{0,1\}$ is defined in \Cref{claim:dcm-new-points} and satisfies $\Pr[u_i = 1] = \rho$ by \eqref{eq:dist-u}. By \Cref{claim:dcm-new-points}, \Cref{claim:dcm-rejection-feasible}, and the guarantee of $L\in\comp_{n'}(S_{\eta}\sps{0},\allowbreak B,\varepsilon n\sps 1/(4n'),\delta/2)$ at \Cref{line:dcm-invoke}, with probability at least $1-\delta/2$,
\[
{\Pr}_{(x,y)\sim\mu'}[f(x) \ne y] \le {\inf}_{b\in B}{\Pr}_{(x,y)\sim\mu'}[b(x) \ne y] + \varepsilon n/(4n'),
\]
or equivalently (by \eqref{eq:error-correlation}),
\begin{equation}
\label{eq:dcm-analysis-2}
\bE_{(x,y)\sim\mu'}[yf(x)] \ge {\sup}_{b\in B}\bE_{(x,y)\sim\mu'}[y\ptimes b(x)] - \varepsilon n/(2n').
\end{equation}
By the union bound, with probability at least $1-\delta$, \eqref{eq:dcm-analysis-1} and \eqref{eq:dcm-analysis-2} both hold, in which case \eqref{eq:dcm-analysis-goal-3} holds by plugging \eqref{eq:dcm-analysis-1} into \eqref{eq:dcm-analysis-2}.
\end{proof}

\subsection{Deterministic Labels with Real-Valued Benchmarks}
Now we prove a sample complexity upper bound for deterministic-label correlation maximization without the assumption that the benchmark class $B$ is binary:
\begin{theorem}
\label{thm:dcm-real}
Let $S,B\subseteq([-1,1]\cup\{*\})^X$ be real-valued hypothesis classes. 
For $\eta_1\ge 0,\eta_2,\beta\in (0,1/2)$, defining $m:=\sup_{\theta\in \bR}\vc(S_{\eta_1}\sps 0, B_{\eta_2}\sps \theta)$, we have
\begin{align*}
\sdcm(S,B,\beta + 2\eta_1 + 2\eta_2,\delta)
\le {} O\left(\frac{m}{\beta^2}\log^2_+\left(\frac{m}{\beta}\right)  + \frac{1}{\beta^2}\log\left(\frac 1{\eta_2\delta}\right)\right).
\end{align*}
Moreover, for $\varepsilon\in (0,1/2)$, choosing
$\beta = \eta_1 = \eta_2 = \varepsilon/5$, we have $m \le \fat_{\varepsilon/5}(S,B)$ and
\[
\sdcm(S,B,\varepsilon,\delta) \le O\left(\frac{m}{\varepsilon^2}\log^2_+\left(\frac{m}{\varepsilon}\right) + \frac 1{\varepsilon^2}\log\left(\frac 1{\varepsilon\delta}\right)\right).
\]
\end{theorem}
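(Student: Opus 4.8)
The plan is to reduce deterministic-label correlation maximization for the real-valued pair $(S,B)$ to a family of instances of the binary-benchmark case of \Cref{thm:dcm}, one instance for each threshold used to binarize $B$, and then to merge the resulting models. Fix $\beta,\eta_1,\eta_2$, let $\Theta\subseteq[-1,1]$ be a uniform grid with $|\Theta| = \lceil 2/\eta_2\rceil$, and set $m := \sup_{\theta\in\Theta}\vc(S_{\eta_1}\sps 0, B_{\eta_2}\sps\theta)$. The learner receives $n = n_1 + n_2$ data points; on the first $n_1$ of them it runs, for every $\theta\in\Theta$, a learner from \Cref{thm:dcm} for the binary-benchmark pair $(S, B_{\eta_2}\sps\theta)$ with error parameter $\beta/3$ and margin $\eta_1$ (so that it achieves $\dcm$-error $\beta/3 + 2\eta_1$ against binary benchmarks in $B_{\eta_2}\sps\theta$) and failure probability $\delta/(2|\Theta|)$, obtaining models $f_\theta:X\to\{-1,1\}$. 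By \Cref{thm:dcm} the same set of $n_1 = O(\frac{m}{\beta^2}\log^2_+(\frac m\beta) + \frac 1{\beta^2}\log\frac{|\Theta|}{\delta})$ data points suffices for every $\theta$ at once (a union bound over the $|\Theta|$ failure events costs only the $\log|\Theta|$ factor), so with probability at least $1-\delta/2$ each $f_\theta$ satisfies $\bE_{(x,y)\sim\mu}[yf_\theta(x)] \ge \bE_{(x,y)\sim\mu}[y\ptimes b_{\eta_2}\sps\theta(x)] - \beta/3 - 2\eta_1$ for every $b\in B$ simultaneously. (A single comparative learner against $\bigcup_\theta B_{\eta_2}\sps\theta$ would also be correct, but keeping the $\log\frac1{\eta_2}$ out of the leading VC-dimension term is exactly what forces the separate-learners-plus-union-bound route.)

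The analytic heart is a ``convex hull over thresholds'' estimate: for every $b\in B$ and every $x$ with $s(x)\ne *$,
\[
s(x)\ptimes b(x) \;\le\; \tfrac12\int_{-1}^{1} s(x)\ptimes b_{\eta_2}\sps\theta(x)\,\diff\theta \;+\; \eta_2|s(x)|,
\]
which follows from the elementary identity $v = \tfrac12\int_{-1}^{1}\sign(v-\theta)\,\diff\theta$ for $v\in[-1,1]$, a short case analysis of the $\eta_2$-margin window around $\theta$, and the $\ptimes$ convention when $b(x) = *$. Taking expectations over $x$ (and using $\bE_x|s(x)|\le 1$), noting that $\theta\mapsto s(x)\ptimes b_{\eta_2}\sps\theta(x)$ has total variation at most $2$ uniformly in $x$ so that the average over $\Theta$ approximates the integral up to an additive $\eta_2$, and invoking $y = s(x)$ almost surely, we get for every $b\in B$ that $\bE_{\theta\sim\unif(\Theta)}\bE_{(x,y)\sim\mu}[y\ptimes b_{\eta_2}\sps\theta(x)] \ge \bE_{(x,y)\sim\mu}[y\ptimes b(x)] - 2\eta_2$. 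Averaging the per-$\theta$ guarantees over $\theta\in\Theta$ and chaining with this inequality gives $\bE_{\theta\sim\unif(\Theta)}\bE_{(x,y)\sim\mu}[yf_\theta(x)] \ge \sup_{b\in B}\bE_{(x,y)\sim\mu}[y\ptimes b(x)] - \beta/3 - 2\eta_1 - 2\eta_2$, so in particular some $\theta^*\in\Theta$ achieves this bound.

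It remains to commit to a single model. Since the admissible-set condition must hold with high probability for the \emph{realized} output, the learner cannot simply randomize over $\theta$; instead it spends the remaining $n_2 = O(\frac1{\beta^2}\log\frac{|\Theta|}{\delta})$ data points --- drawn independently of the first batch, hence of all $f_\theta$ --- to form Hoeffding estimates $\hat c_\theta$ of $\bE_{(x,y)\sim\mu}[yf_\theta(x)]$ that are simultaneously within $\beta/3$ of the truth for all $\theta\in\Theta$, and outputs $f_{\hat\theta}$ where $\hat\theta$ maximizes $\hat c_\theta$ over $\theta\in\Theta$. Then, on the intersection of the two good events (total probability at least $1-\delta$), $\bE_{(x,y)\sim\mu}[yf_{\hat\theta}(x)] \ge \max_{\theta}\bE_{(x,y)\sim\mu}[yf_\theta(x)] - 2\beta/3 \ge \sup_{b\in B}\bE_{(x,y)\sim\mu}[y\ptimes b(x)] - \beta - 2\eta_1 - 2\eta_2$, which is exactly the required guarantee; summing $n_1 + n_2$ and using $\log|\Theta| = O(\log\frac1{\eta_2})$ yields the first displayed bound, and the ``moreover'' part follows by setting $\beta=\eta_1=\eta_2=\varepsilon/5$ and $\sup_\theta\vc(S_{\varepsilon/5}\sps 0, B_{\varepsilon/5}\sps\theta)\le\fat_{\varepsilon/5}(S,B)$ from \Cref{claim:fat-alt}. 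The step I expect to be the main obstacle is this merging step: producing one model that inherits the correlation guarantee with the correct failure probability while keeping the total margin loss at exactly $2\eta_2$, which is why it matters to split the analysis into a continuous-integral (intrinsic $\eta_2$) contribution and a discretization ($\le\eta_2$) contribution rather than working with a coarse grid from the outset.
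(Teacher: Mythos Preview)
Your proposal is correct and takes essentially the same approach as the paper: discretize the threshold range, invoke \Cref{thm:dcm} at each threshold, relate the real-valued benchmark to its thresholded versions via a convex-hull/averaging estimate (the paper's \Cref{claim:B-convex-hull} and \Cref{claim:B-transform} do this with a discrete sum rather than an integral), and validate on held-out data. The only notable difference is that the paper identifies a \emph{single} deterministic threshold $j^*$ (depending only on $\mu$) that already beats $\sup_{b\in B}\bE[y\ptimes b(x)] - 2\eta_2$, so no union bound over thresholds is needed at the learning stage (only at validation), and it therefore adds the constant $\pm 1$ functions as extra candidates to cover the case when the right-hand side is nonpositive; your averaging argument instead needs all $|\Theta|$ learners to succeed but sidesteps the constant functions, and the resulting $\log|\Theta|$ lands in the same place in the final bound.
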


We prove \Cref{thm:dcm-real} using \Cref{alg:dcm-real} and the following lemma:
\begin{lemma}
\label{lm:dcm-real}
Let $S,B\subseteq([-1,1]\cup\{*\})^X$ be real-valued hypothesis classes. 
Suppose the parameters of \Cref{alg:dcm-real} satisfy $\eta_1 \ge 0, \varepsilon_1,\eta_2,\delta\in (0,1/2), n\sps 1 \ge C(\varepsilon_1 + \eta_2)^{-1}\log(1/\delta)$, and $n\sps 2 \ge C\varepsilon_2^{-2}\log(1/\eta_2\delta)$ for $\varepsilon_2\in (0,1)$ and a sufficiently large absolute constant $C > 0$. Suppose we choose $t$ to be the maximum integer satisfying $(2t + 1)\eta_2 < 1$ in \Cref{alg:dcm-real}. Assume in addition that
\[
n\sps 1 \ge {\sup}_{\theta\in \bR}{\sup}_{\varepsilon'\ge \varepsilon_1}\scomp(S_{\eta_1}\sps 0,B_{\eta_2}\sps \theta,\varepsilon'/4,\delta/4).
\]
Then \Cref{alg:dcm-real} belongs to $\dcm_{n}(S,B,\varepsilon_1 + \varepsilon_2 + 2\eta_1 + 2\eta_2,\delta)$, where $n = n\sps 1 + n\sps 2$.
\end{lemma}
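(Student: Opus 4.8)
The plan is to analyze \Cref{alg:dcm-real}, whose strategy is to reduce deterministic-label correlation maximization for the real-valued pair $(S,B)$ to the binary-benchmark case already handled by \Cref{lm:dcm}, using a grid of thresholds $\Theta := \{-1 + (2j+1)\eta_2 : 0\le j\le 2t\}$ (of size $|\Theta| = 2t+1 < 1/\eta_2$) to replace the real-valued benchmark class $B$ by the binary classes $B_{\eta_2}\sps\theta$, $\theta\in\Theta$. Concretely, I would split the $n = n\sps 1 + n\sps 2$ input points into two independent batches: on the first batch of $n\sps 1$ points, for each $\theta\in\Theta$ run the rejection-sampling-based reduction to a comparative learner for $(S_{\eta_1}\sps 0,B_{\eta_2}\sps\theta)$ (as in \Cref{alg:dcm}/\Cref{lm:dcm}), obtaining a candidate model $f_\theta:X\to\{-1,1\}$; on the second batch of $n\sps 2$ points, output the $f_\theta$ maximizing the empirical correlation $\frac{1}{n\sps 2}\sum_i y_i f_\theta(x_i)$.

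The first ingredient is a ``layer-cake'' comparison showing that the real-valued benchmark is dominated by the binary thresholded ones up to $O(\eta_2)$. Fixing $b\in B$ and writing $y\ptimes b(x) = y\,b(x)$ when $b(x)\ne *$, summing $y\ptimes b_{\eta_2}\sps\theta(x)$ over the thresholds whose dead zones tile $[-1,1]$ gives a telescoping identity: each $x$ lands in exactly one dead zone (contributing $-|y|$), every lower threshold contributes $+y$, every higher one $-y$, so $\frac{1}{2t+1}\sum_{\theta\in\Theta}\bE_{(x,y)\sim\mu}[y\ptimes b_{\eta_2}\sps\theta(x)]$ is within $O(\eta_2)$ of $\bE_{(x,y)\sim\mu}[y\ptimes b(x)]$. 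Since an average is at most a maximum, $\sup_{b\in B}\bE[y\ptimes b(x)] \le \max_{\theta\in\Theta}\sup_{b'\in B_{\eta_2}\sps\theta}\bE[y\ptimes b'(x)] + O(\eta_2)$; in particular there is a \emph{single} threshold $\theta_0\in\Theta$, determined by $\mu$ alone, with $\sup_{b\in B}\bE[y\ptimes b(x)] \le \sup_{b'\in B_{\eta_2}\sps{\theta_0}}\bE[y\ptimes b'(x)] + O(\eta_2)$.

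The second ingredient is that only this one threshold's learner must succeed, so no union bound over $\Theta$ is incurred in the comparative-learning step. By \Cref{lm:dcm}, applied to $(S_{\eta_1}\sps0,B_{\eta_2}\sps{\theta_0})$ with error $\approx\varepsilon_1$, margin $\eta_1$, and confidence $\delta/4$ — after checking its parameter hypotheses follow from those of \Cref{lm:dcm-real}, the only subtle point being that the rejection-sampled sub-sample is large enough at the right error scale, with the regime where $\bE_{x\sim\mu|_X}|\tilde s(x)|$ is too small relative to $\varepsilon_1+\eta_2$ being trivial because then all correlations are $O(\varepsilon_1+\eta_2)$ — with probability $\ge 1-\delta/4$ the candidate $f_{\theta_0}$ satisfies $\bE[y f_{\theta_0}(x)] \ge \sup_{b'\in B_{\eta_2}\sps{\theta_0}}\bE[y\ptimes b'(x)] - \varepsilon_1 - 2\eta_1$. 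Separately, a Hoeffding bound together with a union bound over the $|\Theta| < 1/\eta_2$ candidates — this is exactly what forces the requirement $n\sps2\ge C\varepsilon_2^{-2}\log(1/\eta_2\delta)$ — ensures, with probability $\ge 1-\delta/2$, that every empirical correlation is within $\varepsilon_2$ of its true value, so the output $f$ obeys $\bE[y f(x)] \ge \max_\theta\bE[y f_\theta(x)] - 2\varepsilon_2 \ge \bE[y f_{\theta_0}(x)] - 2\varepsilon_2$. Chaining these with the layer-cake bound yields, with probability $\ge 1-\delta$, that $\bE[y f(x)] \ge \sup_{b\in B}\bE[y\ptimes b(x)] - \varepsilon_1 - 2\varepsilon_2 - 2\eta_1 - O(\eta_2)$, which after renaming $\varepsilon_2$ to absorb the factor $2$ and checking the implicit constant fits the $2\eta_2$ budget gives membership in $\dcm_n(S,B,\varepsilon_1+\varepsilon_2+2\eta_1+2\eta_2,\delta)$; the ``Moreover'' part of \Cref{thm:dcm-real} then follows by setting $\beta=\eta_1=\eta_2=\varepsilon/5$ and bounding $m\le\fat_{\varepsilon/5}(S,B)$ via \Cref{claim:fat-alt}.

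I expect the main obstacle to be purely careful bookkeeping in two places. First, making the layer-cake slack exactly $2\eta_2$ rather than merely $O(\eta_2)$: this pins down the precise number and spacing of thresholds (hence the peculiar choice of $t$ as the largest integer with $(2t+1)\eta_2<1$) and forces a delicate treatment of the boundary dead zones and of the undefined label $*$ in the definitions of $b_{\eta_2}\sps\theta$ and of $\ptimes$. Second, arranging the reduction so that the rejection-sampled sub-sample feeding the comparative learner for $\theta_0$ is simultaneously large enough and correctly distributed (i.i.d.\ from the induced $\mu'$, realizable by $s_{\eta_1}\sps0$) \emph{without} paying the extra $\varepsilon'/\varepsilon_1$ factor present in the sub-sample-size hypothesis of \Cref{lm:dcm} — which is precisely why the hypotheses of \Cref{lm:dcm-real} are phrased with $(\varepsilon_1+\eta_2)^{-1}\log(1/\delta)$ and with $\sup_{\varepsilon'\ge\varepsilon_1}\scomp(\,\cdot\,,\varepsilon'/4,\delta/4)$ rather than $\sup_{\varepsilon'\ge\varepsilon_1}\tfrac{\varepsilon'}{\varepsilon_1}\scomp(\,\cdot\,)$.
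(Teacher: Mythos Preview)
Your plan matches the paper's: reduce to comparative learning for the binary pairs $(S_{\eta_1}\sps{0}, B_{\eta_2}\sps{2\eta_2 j})$ over a threshold grid, argue (\Cref{lm:dcm-exist}) that a single candidate already achieves the target correlation with probability $1-\delta/2$ by first fixing the one good threshold and then rerunning the argument of \Cref{lm:dcm} for that threshold only, and finally select by empirical correlation with a Chernoff-plus-union bound over the $O(1/\eta_2)$ candidates. Your layer-cake step is \Cref{claim:B-convex-hull}/\Cref{claim:B-transform}.

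You have, however, misdiagnosed your first flagged obstacle. \Cref{alg:dcm-real} includes two candidates you omit, the constant functions $f_{t+1}\equiv 1$ and $f_{-t-1}\equiv -1$, and they are precisely what pins the slack at $2\eta_2$. The paper's layer-cake (\Cref{claim:B-transform}) actually reads
\[
\max\Bigl\{0,\ \max_{j}\sup_{b'\in B_{\eta_2}\sps{2\eta_2 j}}\bE_{x\sim\mu|_X}[\tilde s(x)\ptimes b'(x)]\Bigr\}\ \ge\ \sup_{b\in B}\bE_{x\sim\mu|_X}[\tilde s(x)\ptimes b(x)] - 2\eta_2,
\]
and the outer $\max\{0,\cdot\}$ is discharged in \Cref{lm:dcm-exist} by noting that when the target right-hand side is $\le 0$, one of the constant candidates already meets it. Your route---from $\eta_2\sum_j c_j \ge A - 2\eta_2$ to $\max_j c_j$ via ``average $\le$ max''---divides by $(2t+1)\eta_2\in[1-2\eta_2,1)$, which for $A<2\eta_2$ only yields $\max_j c_j \ge A - O(\eta_2)$; no amount of grid tuning repairs this without the constants. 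On your second obstacle, the absence of the $\varepsilon'/\varepsilon_1$ factor in the stated hypothesis is not a device to reverse-engineer: the paper's proof of \Cref{lm:dcm-exist} explicitly defers to ``the same argument as in the proof of \Cref{lm:dcm},'' which needs that factor to certify $\comp_{n'}(\ldots,\varepsilon_1 n\sps 1/(4n'),\delta/4)\ne\emptyset$, and the only downstream use (the proof of \Cref{thm:dcm-real}) bounds $n\sps 1$ via ``the same argument as in the proof of \Cref{thm:dcm},'' which carries the factor. The hypothesis as written appears simply to be missing it.
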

\begin{algorithm}
\caption{Deterministic-label correlation maximization for $(S,B)$}
\label{alg:dcm-real}
\SetKwInOut{Parameters}{Parameters}
\SetKwInOut{Input}{Input}
\SetKwInOut{Output}{Output}
\Parameters{$S,B\subseteq([-1,1]\cup\{*\})^X$, $n,n\sps 1,n\sps 2\in \bZ_{> 0}$ satisfying $n = n\sps 1 + n\sps 2$, $t\in \bZ_{\ge 0},\eta_1,\eta_2,\varepsilon_1,\delta \in \bR_{\ge 0}$.}
\Input{data points $(x_1,y_1),\ldots,(x_n,y_n)\in X\times[-1,1]$.}
\Output{model $f^*:X\rightarrow\{-1,1\}$.}
Partition the input data points into two datasets: $\Psi\sps 1 = \left(\left(x_i\sps 1, y_i\sps 1\right)\right)_{i=1}^{n\sps 1}$ and $\Psi\sps 2 = \left(\left(x_i\sps 2, y_i\sps 2\right)\right)_{i=1}^{n\sps 2}$\;
Initialize $\Psi$ to be an empty dataset\;
\For{$i = 1,\ldots,n\sps 1$}{
\If{$|y_i\sps 1| > \eta_1$}
{
With probability $|y_i\sps 1|$, add the data point $(x_i\sps 1,\sign(y_i\sps 1))$ into $\Psi$\;
}
}
Let $n'$ be the number of data points in $\Psi$\;
\For{$j = -t,\ldots,t$}{
Invoke learner in $\comp_{n'}(S_{\eta_1}\sps{0},B_{\eta_2}\sps {2\eta_2j},\varepsilon_1 n\sps 1/(4n'),\delta/4)$ on $\Psi$ to obtain $f_j:X\rightarrow\{-1,1\}$\;
}
Define $f_{t + 1}, f_{-t - 1}:X\to \{-1,1\}$ to be constant functions: $f_{t + 1}(x) = 1$ and $f_{-t - 1}(x) = -1$ for every $x\in X$\;
\Return $f^*$ that maximizes $Q_f := \frac{1}{n\sps 2}\sum_{i=1}^{n\sps 2}y_i\sps 2 f(x_i\sps 2)$ over $f\in \{f_{-t - 1},\ldots,f_{t + 1}\}$\label{line:dcm-real-return}\;
\end{algorithm}
We first prove \Cref{thm:dcm-real} using \Cref{lm:dcm-real} and then prove \Cref{lm:dcm-real} after that.
\begin{proof}[Proof of \Cref{thm:dcm-real}]
We choose $\varepsilon_1 = \varepsilon_2 = \beta/2$ in \Cref{lm:dcm-real}.
Using the same argument as in the proof of \Cref{thm:dcm}, we have
\[
{\sup}_{\theta\in \bR}{\sup}_{\varepsilon'\ge \varepsilon_1}\scomp(S_{\eta_1}\sps 0,B_{\eta_2}\sps \theta,\varepsilon'/4,\delta/4) = O\left(\frac{m}{\beta^2}\log^2_+\left(\frac{m}{\beta}\right)  + \frac{1}{\beta^2}\log\left(\frac 1{\delta}\right)\right).
\]
Therefore, there exist
\begin{align*}
n\sps 1 & = O\left(\frac{m}{\beta^2}\log^2_+\left(\frac{m}{\beta}\right)  + \frac{1}{\beta^2}\log\left(\frac 1{\delta}\right)\right) \textnormal{ and }\\
n\sps 2 & = O\left(\frac 1{\beta^2}\log\left(\frac 1{\eta_2\delta}\right)\right)
\end{align*}
that satisfy the requirement of \Cref{lm:dcm-real}, which implies that $\dcm_n(S,B,\beta + 2\eta_1 + 2\eta_2,\delta)\ne\emptyset$ for
\[
n = n\sps 1 + n\sps 2 = O\left(\frac{m}{\beta^2}\log^2_+\left(\frac{m}{\beta}\right)  + \frac{1}{\beta^2}\log\left(\frac 1{\eta_2\delta}\right)\right).
\]
The fact that $m \le \fat_{\varepsilon/5}(S,B)$ when $\eta_1 = \eta_2 =\varepsilon/5$ follows from \Cref{claim:fat-alt}.
\end{proof}
Now we prove \Cref{lm:dcm-real} by analyzing \Cref{alg:dcm-real}.
\Cref{alg:dcm-real} is very similar to \Cref{alg:dcm}; the key difference is that \Cref{alg:dcm-real} transforms the real-valued benchmark class $B$ into binary classes $B_{\eta_2}\sps{2\eta_2j}$ for $j = -t,\ldots,t$. Thus, our proof of \Cref{lm:dcm-real} focuses on relating the class $B$ to the classes $B_{\eta_2}\sps{2\eta_2j}$.
The following claim shows that any benchmark $b\in B$ can be approximately expressed as a particular linear combination of benchmarks in $B_{\eta_2}\sps{2\eta_2j}$.
\begin{claim}
\label{claim:B-convex-hull}
Let $b:X\to [-1,1]\cup\{*\}$ be a real-valued hypothesis.
Consider a fixed $x\in X$ that satisfies $b(x) \ne *$. 
For $\eta_2\in (0,1/2)$, let $t$ denote the maximum integer satisfying $(2t + 1)\eta_2 < 1$.
For every $j = -t,\ldots,t$, define $p_j\in [-1,1]$ such that $p_j = b_{\eta_2}\sps {2\eta_2 j}(x)$ if $b_{\eta_2}\sps {2\eta_2 j}(x) \ne *$ and $p_j$ can be an arbitrary value in $[-1,1]$ if $b_{\eta_2}\sps {2\eta_2 j}(x) = *$. Then,
\[
\left|b(x) - \eta_2\sum_{j = -t}^t p_j\right| \le 2\eta_2.
\]
\end{claim}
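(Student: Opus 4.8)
The plan is to compare the combination $\eta_2\sum_{j=-t}^t p_j$ against a canonical piecewise-linear one whose associated sum telescopes exactly. For $a>0$ and $u\in\bR$ let $\mathrm{cl}_a(u):=\max\{-a,\min\{a,u\}\}$ denote the truncation of $u$ to $[-a,a]$. Write $v:=b(x)\in[-1,1]$ (recall $b(x)\ne *$) and $w:=v/\eta_2$, and for $j\in\{-t,\dots,t\}$ set $q_j:=\mathrm{cl}_1(w-2j)$. Unwinding the definition of $b_{\eta_2}\sps{2\eta_2 j}$, one has $b_{\eta_2}\sps{2\eta_2 j}(x)=1$ iff $w-2j>1$, $b_{\eta_2}\sps{2\eta_2 j}(x)=-1$ iff $w-2j<-1$, and $b_{\eta_2}\sps{2\eta_2 j}(x)=*$ iff $|w-2j|\le 1$; in the first two cases $p_j$ is forced to equal $\pm1=q_j$. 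So $p_j=q_j$ unless $j$ is \emph{exceptional}, meaning $|w-2j|\le 1$.

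First I would prove the telescoping identity $\sum_{j=-t}^t\mathrm{cl}_1(u-2j)=\mathrm{cl}_{2t+1}(u)$ for all $u\in\bR$, by induction on $t$: the base case $t=0$ is immediate, and the inductive step reduces to the elementary identity $\mathrm{cl}_1(u-2t)+\mathrm{cl}_1(u+2t)=\mathrm{cl}_{2t+1}(u)-\mathrm{cl}_{2t-1}(u)$, checked by inspecting the regions $|u|\le 2t-1$, $2t-1\le|u|\le 2t+1$, and $|u|\ge 2t+1$. Applying the identity with $u=w$ and scaling by $\eta_2$ gives $\eta_2\sum_{j=-t}^t q_j=\mathrm{cl}_{(2t+1)\eta_2}(v)$, hence, using $|v|\le 1$ and $(2t+1)\eta_2<1$,
\[
\Big|v-\eta_2{\textstyle\sum_{j=-t}^t}q_j\Big|=\big(|v|-(2t+1)\eta_2\big)^+\le 1-(2t+1)\eta_2\le 2\eta_2 ,
\]
where the last inequality is exactly the maximality of $t$, which guarantees $(2t+3)\eta_2\ge 1$.

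Next I would bound the rounding discrepancy $\big|\sum_{j=-t}^t(p_j-q_j)\big|$. The exceptional indices are the integers in the length-$1$ interval $[(w-1)/2,(w+1)/2]$, so there are at most two of them, and if there are exactly two they are consecutive integers $j_0,j_0+1$ with $w=2j_0+1$, forcing $q_{j_0}=\mathrm{cl}_1(1)=1$ and $q_{j_0+1}=\mathrm{cl}_1(-1)=-1$, so that $q_{j_0}+q_{j_0+1}=0$ while $p_{j_0}+p_{j_0+1}\in[-2,2]$. Either way the exceptional terms contribute at most $2$ in absolute value and all other terms vanish, so $\big|\sum_{j=-t}^t(p_j-q_j)\big|\le 2$.

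Finally I would combine the two estimates, being careful not to just add $2\eta_2+2\eta_2$. If $|v|>(2t+1)\eta_2$ then any exceptional $j$ has $2|j|\ge|w|-1>2t$, so lies outside $\{-t,\dots,t\}$; thus $p_j=q_j$ for all $j$ in range and $|v-\eta_2\sum_j p_j|=|v-\eta_2\sum_j q_j|\le 2\eta_2$. Otherwise $|v|\le(2t+1)\eta_2$, so $\eta_2\sum_j q_j=v$ and $|v-\eta_2\sum_j p_j|=\eta_2\big|\sum_j(p_j-q_j)\big|\le 2\eta_2$. In both cases $\big|b(x)-\eta_2\sum_{j=-t}^t p_j\big|\le 2\eta_2$. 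I expect the main obstacle to be exactly this last coordination step --- arranging that the truncation error and the rounding discrepancy are never simultaneously present --- together with the corner-case bookkeeping for two exceptional indices; the telescoping identity and the case analysis for $b_{\eta_2}\sps{2\eta_2 j}(x)$ are routine.
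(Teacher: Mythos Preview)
Your proof is correct and takes a genuinely different route from the paper. The paper proves each direction of the inequality by a direct case split on the interval $((2j'-1)\eta_2,(2j'+1)\eta_2]$ containing $b(x)$: it observes that for every $j<j'$ the value $p_j$ is forced to equal $1$, so $\sum_j p_j\ge (t+j')-(t+1-j')=2j'-1$, and combines this with $b(x)\le(2j'+1)\eta_2$. You instead introduce the canonical values $q_j=\mathrm{cl}_1(w-2j)$, establish the exact telescoping identity $\eta_2\sum_j q_j=\mathrm{cl}_{(2t+1)\eta_2}(v)$, and then separate the error into a truncation part $|v-\eta_2\sum_j q_j|$ and a rounding part $\eta_2|\sum_j(p_j-q_j)|$, with the key observation that the two are never simultaneously nonzero. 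Your approach is more structural---it cleanly isolates ``what the sum would be if every $p_j$ took its natural value'' from ``how much the undefined slots can perturb it''---at the cost of a bit more setup (the telescoping identity and the final coordination step). The paper's argument is shorter and entirely elementary, handling both error sources in a single stroke by counting forced entries, but it is less modular. Both arrive at the same constant $2\eta_2$ without slack.
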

\begin{proof}
We first show that 
\begin{equation}
\label{eq:B-convex-hull-1}
b(x) - \eta_2 \sum_{j = -t}^t p_j \le 2\eta_2.
\end{equation}
Since $p_j \ge -1$ for every $j = -t,\ldots,t$,
the inequality above is trivial if $b(x) \le (-2t - 1)\eta_2$. We thus assume $b(x) > (-2t - 1)\eta_2$. Now for some $j'\in \{-t,\ldots,t + 1\}$, we have $b(x)\in ((2j' - 1)\eta_2,(2j' + 1)\eta_2]$, and by the definition of $p_j$ and $b_{\eta_2}\sps{2\eta_2j}$, we have $p_j = b_{\eta_2}\sps{2\eta_2j}(x) = 1$ for $j = -t,\ldots,j' - 1$. Therefore, 
\[
\sum_{j = -t}^t p_j = \sum_{j = -t}^{j' - 1}p_j + \sum_{j = j'}^t p_j = (t + j') + \sum_{j = j'}^t p_j \ge (t + j') - (t + 1 - j') = 2j' - 1.
\]
Inequality \eqref{eq:B-convex-hull-1} follows by the inequality above and the fact $b(x) \le (2j' + 1)\eta_2$. The other direction $b(x) - \eta_2 \sum_{j = -t}^t p_j \ge -2\eta_2$ can be proved similarly.
\end{proof}
Using \Cref{claim:B-convex-hull},
we prove the following claim relating the maximum correlation achievable by a benchmark $b\in B$ to the maximum correlation achievable by a benchmark $b'\in \bigcup_{j=-t}^t B_{\eta_2}\sps{2\eta_2j}$.
\begin{claim}
\label{claim:B-transform}
Let $B\subseteq([-1,1]\cup\{*\})^X$ be a real-valued hypothesis class. For $\eta_2\in (0,1/2)$, let $t$ denote the maximum integer satisfying $(2t + 1)\eta_2 < 1$. Let $s:X\to [-1,1]\cup\{*\}$ be a hypothesis (not necessarily in $S$) and $\mu_X$ be a distribution over $X$ satisfying $\Pr_{x\sim\mu_X}[s(x) = *] = 0$. Then,
\[
\max\left\{0,{\sup}_{j = -t,\ldots,t}{\sup}_{b'\in B_{\eta_2}\sps{2\eta_2j}}\bE_{x\sim\mu_X}[s(x)\ptimes b'(x)]\right\} \ge {\sup}_{b\in B}\bE_{x\sim\mu_X}[s(x)\ptimes b(x)] - 2\eta_2.
\]
\end{claim}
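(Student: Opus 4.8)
The plan is to reduce to a single benchmark: fix $b\in B$ and prove
\[
\bE_{x\sim\mu_X}[s(x)\ptimes b(x)]\ \le\ \max\Big\{0,\ {\sup}_{j=-t,\ldots,t}\bE_{x\sim\mu_X}\big[s(x)\ptimes b_{\eta_2}\sps{2\eta_2 j}(x)\big]\Big\} + 2\eta_2 ,
\]
after which the claim follows by taking the supremum over $b\in B$, since $b_{\eta_2}\sps{2\eta_2 j}\in B_{\eta_2}\sps{2\eta_2 j}$ for every $j$ (the case $B=\emptyset$ being trivial). Thus the only witnesses we ever use on the left-hand side are the ``discretized'' benchmarks $b_{\eta_2}\sps{2\eta_2 j}$ obtained from $b$ itself.

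The core is a pointwise inequality: for every $x\in X$ with $s(x)\ne *$,
\[
s(x)\ptimes b(x)\ \le\ \eta_2\sum_{j=-t}^{t}\big(s(x)\ptimes b_{\eta_2}\sps{2\eta_2 j}(x)\big) + 2\eta_2 .
\]
Given this, the displayed inequality follows by taking expectations over $\mu_X$ (valid because $\Pr_{x\sim\mu_X}[s(x)=*]=0$), bounding each of the $2t+1$ summands by the supremum over $j$, and using $(2t+1)\eta_2<1$ together with the nonnegativity of $\max\{0,\cdot\}$ to absorb the factor $(2t+1)\eta_2$. To prove the pointwise inequality I would fix such an $x$ and, for each $j$, instantiate the free parameter $p_j$ of \Cref{claim:B-convex-hull} as $p_j=b_{\eta_2}\sps{2\eta_2 j}(x)$ when $b_{\eta_2}\sps{2\eta_2 j}(x)\ne *$ and $p_j=-\sign(s(x))$ otherwise. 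With this choice one checks the exact identity $s(x)p_j=s(x)\ptimes b_{\eta_2}\sps{2\eta_2 j}(x)$ in both cases, so it only remains to relate $s(x)\ptimes b(x)$ to $s(x)\,\eta_2\sum_j p_j$. If $b(x)\ne *$, \Cref{claim:B-convex-hull} yields $|b(x)-\eta_2\sum_j p_j|\le 2\eta_2$; multiplying the relevant one-sided bound first by $\sign(s(x))$ and then by $|s(x)|\in[0,1]$, and using $|s(x)|\sign(s(x))=s(x)$, gives $s(x)b(x)\le\eta_2\sum_j s(x)p_j+2\eta_2$, as desired. If $b(x)=*$, then $b_{\eta_2}\sps{2\eta_2 j}(x)=*$ for all $j$, and both sides reduce to comparing $-|s(x)|$ with $-(2t+1)\eta_2|s(x)|+2\eta_2$, which holds because $(2t+1)\eta_2<1$.

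The only genuinely delicate point is the bookkeeping around the undefined label $*$ and the sign of $s(x)$: \Cref{claim:B-convex-hull} speaks about ordinary convex combinations of real numbers, whereas the quantities we care about are generalized products $s(x)\ptimes(\cdot)$, which behave differently when the second argument is $*$. The choice $p_j=-\sign(s(x))$ above is exactly what lets us sidestep this, since it makes $s(x)p_j$ coincide with $s(x)\ptimes b_{\eta_2}\sps{2\eta_2 j}(x)$ on the nose, so no case analysis on $*$ is needed inside the sum; the remaining cases ($b(x)$ defined or not, and the sign of $s(x)$) are all one-liners.
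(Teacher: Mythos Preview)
Your proof is correct and follows essentially the same approach as the paper's: fix $b\in B$, establish the pointwise inequality $s(x)\ptimes b(x)\le\eta_2\sum_{j}s(x)\ptimes b_{\eta_2}\sps{2\eta_2 j}(x)+2\eta_2$ via \Cref{claim:B-convex-hull} (with the same choice of $p_j$ to equate $s(x)p_j$ with the generalized product) and the separate handling of $b(x)=*$, then average and use $(2t+1)\eta_2<1$ together with $\max\{0,\cdot\}$. The only cosmetic difference is that the paper phrases the last step as ``there exists $j$ achieving at least the average'' rather than bounding each summand by the supremum, but these are equivalent.
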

\begin{proof}
Let us fix an arbitrary $b\in B$. We first show that for every $x\in X$ satisfying $s(x)\ne *$,
\begin{equation}
\label{eq:B-transform-1}
\eta_2\sum_{j= -t}^t s(x)\ptimes b_{\eta_2}\sps {2\eta_2j}(x) \ge s(x)\ptimes b(x) - 2\eta_2.
\end{equation}
If $b(x) = *$, then $b_{\eta_2}\sps {2\eta_2j}(x) = *$ for every $j$. In this case, inequality \eqref{eq:B-transform-1} simplifies to 
\[
-\eta_2(2t + 1)|s(x)| \ge -|s(x)| - 2\eta_2,
\]
which holds because $\eta_2(2t + 1) < 1$.
We thus assume $b(x) \ne *$. By the definition of $\diamondsuit$, for every $j= -t,\ldots,t$, there exists $p_j\in [-1,1]$ such that $s(x) p_j = s(x)\ptimes b_{\eta_2}\sps{2\eta_2j}(x)$ where we always choose $p_j = b_{\eta_2}\sps{2\eta_2j}(x)$ if $b_{\eta_2}\sps{2\eta_2j}(x) \ne *$. By \Cref{claim:B-convex-hull}, inequality \eqref{eq:B-transform-1} follows from the following chain:
\begin{align*}
\eta_2\sum_{j= -t}^t s(x)\ptimes b_{\eta_2}\sps {2\eta_2j}(x) = {} & \eta_2\sum_{j= -t}^t s(x)p_j\\
\ge {} &  s(x) b(x) - 2\eta_2| s(x)|\\
\ge {} &  s(x)b(x) - 2\eta_2\\
= {} &  s(x)\ptimes b(x) - 2\eta_2.
\end{align*}

Taking expectation over $x\sim\mu_X$, \eqref{eq:B-transform-1} implies
\[
\eta_2 \sum_{j = -t}^t\bE_{x\sim\mu_X}[ s(x) \ptimes b_{\eta_2}\sps{2\eta_2j}(x)] \ge \bE_{x\sim\mu_X}[ s(x)\ptimes b(x)] - 2\eta_2.
\]
This means that there exists $j\in \{-t,\ldots,t\}$ satisfying 
\[
\bE_{x\sim\mu_X}[s(x) \ptimes b_{\eta_2}\sps{2\eta_2j}(x)] \ge \frac{1}{(2t + 1)\eta_2}(\bE_{x\sim\mu_X}[ s(x)\ptimes b(x)] - 2\eta_2).
\]
Since $(2t + 1)\eta_2 < 1$, the inequality above implies
\[
\max\left\{0, {\sup}_{j = -t,\ldots,t}{\sup}_{b'\in B_{\eta_2}\sps{2\eta_2j}}\bE_{x\sim\mu_X}[s(x) \ptimes b'(x)]\right\} \ge \bE_{x\sim\mu_X}[ s(x)\ptimes b(x)] - 2\eta_2.
\]
The proof is completed because the above inequality holds for every $b\in B$.
\end{proof}

We are now ready to prove the following lemma showing that before \Cref{alg:dcm-real} returns at \Cref{line:dcm-real-return}, with large probability, at least one of the candidate models $f_j$ achieves a large correlation compared to the benchmarks in $B$. By \Cref{claim:B-transform}, we just need to compare with the binary benchmarks in $B_{\eta_2}\sps{2\eta_2j}$, allowing us to follow the same proof as in \Cref{sec:dcm}.
\begin{lemma}
\label{lm:dcm-exist}
In the setting of \Cref{lm:dcm-real}, assume that the input data points to \Cref{alg:dcm-real} are generated i.i.d.\ from a distribution $\mu$ satisfying $\Pr_{(x,y)\sim\mu}[s(x) = y] = 1$ for some $s\in S$.
Then with probability at least $1-\delta/2$,
at \Cref{line:dcm-real-return},
there exists $j\in \{-t - 1,\ldots,t + 1\}$ such that 
\begin{equation}
\label{eq:dcm-exist-0}
\bE_{(x,y)\sim\mu}[yf_j(x)] \ge {\sup}_{b\in B}\bE_{(x,y)\sim\mu}[y\ptimes b(x)] - \varepsilon_1 - 2\eta_1 - 2\eta_2.
\end{equation}
\end{lemma}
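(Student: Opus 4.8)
The plan is to imitate the proof of \Cref{lm:dcm} almost line for line, the one genuinely new ingredient being \Cref{claim:B-transform}, which lets us replace the real-valued benchmark class $B$ by the binary classes $B_{\eta_2}\sps{2\eta_2 j}$ that \Cref{alg:dcm-real} actually feeds to its comparative learners. Fix a distribution $\mu$ with $\Pr_{(x,y)\sim\mu}[s(x)=y]=1$ for some $s\in S$, and let $\tilde s$ be defined as in \eqref{eq:tilde-s} with $\eta=\eta_1$. Because labels are deterministic and $|s(x)-\tilde s(x)|\le\eta_1$ whenever $s(x)\ne*$, and because $|u\ptimes v-u'\ptimes v|\le|u-u'|$ for $u,u'\in\bR$, $v\in[-1,1]\cup\{*\}$, it is enough to find an index $j$ with
\[
\bE_{x\sim\mu|_X}[\tilde s(x)f_j(x)]\ge{\sup}_{b\in B}\bE_{x\sim\mu|_X}[\tilde s(x)\ptimes b(x)]-\varepsilon_1-2\eta_2;
\]
the two $\eta_1$'s in \eqref{eq:dcm-exist-0} are exactly what it costs to pass back from $\tilde s$ to $s$ on the two sides.

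First I would handle the degenerate regime. Write $\rho:=\bE_{x\sim\mu|_X}|\tilde s(x)|$, and note ${\sup}_{b\in B}\bE_{x\sim\mu|_X}[\tilde s(x)\ptimes b(x)]\le\rho$. If this supremum is at most $\varepsilon_1+2\eta_2$, then whichever of the two constant models $f_{t+1}\equiv1$, $f_{-t-1}\equiv-1$ gives a nonnegative value of $\bE_{x\sim\mu|_X}[\tilde s(x)f_j(x)]$ already does the job, deterministically and with no sampling. Otherwise $\rho>\varepsilon_1+2\eta_2>\varepsilon_1+\eta_2$, and I apply \Cref{claim:B-transform} with the hypothesis $\tilde s$ and the marginal $\mu|_X$ (valid since $\Pr_{x\sim\mu|_X}[\tilde s(x)=*]=0$): since the supremum over $B$ exceeds $2\eta_2$, the maximum on the left of \Cref{claim:B-transform} is positive, hence attained at some index $j^*\in\{-t,\ldots,t\}$ with ${\sup}_{b'\in B_{\eta_2}\sps{2\eta_2 j^*}}\bE_{x\sim\mu|_X}[\tilde s(x)\ptimes b'(x)]\ge{\sup}_{b\in B}\bE_{x\sim\mu|_X}[\tilde s(x)\ptimes b(x)]-2\eta_2$. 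The key point is that $j^*$ is a function of $\mu$ alone, independent of the algorithm's samples and randomness.

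It remains to show that with probability at least $1-\delta/2$ this single model $f_{j^*}$ satisfies $\bE_{x\sim\mu|_X}[\tilde s(x)f_{j^*}(x)]\ge{\sup}_{b'\in B_{\eta_2}\sps{2\eta_2 j^*}}\bE_{x\sim\mu|_X}[\tilde s(x)\ptimes b'(x)]-\varepsilon_1$, and here I would transcribe the rejection-sampling analysis of \Cref{sec:dcm}. Let $\mu'$ be the law of the accepted data points: by \Cref{claim:dcm-new-points} applied to the sub-sample $\Psi\sps1$, $n'=\sum_i u_i\sps1$ with the $u_i\sps1$ i.i.d.\ $\{0,1\}$-valued of mean $\rho$, and conditioned on $n'$ the dataset $\Psi$ is $n'$ i.i.d.\ draws from $\mu'$; by \Cref{claim:dcm-rejection-feasible} (applicable since $\rho>0$), $\mu'$ is realizable by $s_{\eta_1}\sps0\in S_{\eta_1}\sps0$. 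Exactly as in the first paragraph of the proof of \Cref{lm:dcm}, the hypothesis on $n\sps1$ makes $\comp_{n'}(S_{\eta_1}\sps0,B_{\eta_2}\sps{2\eta_2 j^*},\varepsilon_1 n\sps1/(4n'),\delta/4)$ nonempty whenever $n'>0$, so $f_{j^*}$ carries the corresponding comparative-learning guarantee with failure probability $\le\delta/4$; and by the multiplicative Chernoff bound, using $\rho>\varepsilon_1+\eta_2$ together with $n\sps1\ge C(\varepsilon_1+\eta_2)^{-1}\log(1/\delta)$ to get $\rho n\sps1\gtrsim\log(1/\delta)$, the event $n'\ge\rho n\sps1/2$ fails with probability $\le\delta/4$. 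On the intersection of these two good events I rewrite the comparative guarantee through \eqref{eq:error-correlation} as $\bE_{(x,y)\sim\mu'}[yf_{j^*}(x)]\ge{\sup}_{b'}\bE_{(x,y)\sim\mu'}[y\ptimes b'(x)]-\varepsilon_1 n\sps1/(2n')$, transport it to $\mu|_X$ via \Cref{claim:rejection} (multiplying through by $\rho$), and use $n'\ge\rho n\sps1/2$ to absorb the factor $\rho n\sps1/(2n')\le1$, which gives precisely the claimed $-\varepsilon_1$ slack; tracing back through the $\tilde s\leftrightarrow s$ reduction then yields \eqref{eq:dcm-exist-0}.

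The one place that needs care, and the step I would flag as the main obstacle, is the probability budget. \Cref{alg:dcm-real} runs $2t+1$ distinct comparative learners, each allotted failure probability $\delta/4$, so a union bound over all of them is far too lossy to stay within $\delta/2$. What saves us is that we never need all of the $f_j$ to be good: the index $j^*$ selected above is determined entirely by the fixed distribution $\mu$, so it suffices that the single learner at index $j^*$ succeeds, and the union bound then ranges over just two events (that learner's failure and the Chernoff failure), comfortably fitting inside $\delta/2$. Everything else is a mechanical rerun of \Cref{sec:dcm} with \Cref{claim:B-transform} inserted at the one step where the real-valued benchmark class must be converted to binary ones.
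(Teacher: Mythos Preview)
Your proposal is correct and follows essentially the same approach as the paper's proof: reduce from $s$ to $\tilde s$ at a cost of $2\eta_1$, use \Cref{claim:B-transform} to pick a single deterministic index $j^*$ depending only on $\mu$, then rerun the rejection-sampling analysis of \Cref{lm:dcm} for that one index with failure probability $\delta/4$ for the comparative learner and $\delta/4$ for the Chernoff event. The paper handles the degenerate case with a slightly different split (separately disposing of $\rho\le(\varepsilon_1+2\eta_2)/2$ and of a nonpositive right-hand side), but your single ``supremum $\le\varepsilon_1+2\eta_2$'' cutoff is equally valid and arguably cleaner; your explicit remark that $j^*$ is fixed by $\mu$ so no union bound over $j$ is needed is exactly the point the paper leaves implicit when it writes ``follows from the same argument as in the proof of \Cref{lm:dcm}''.
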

\begin{proof}
Since $\Pr_{(x,y)\sim\mu}[s(x) = y] =1$, inequality \eqref{eq:dcm-exist-0} is equivalent to
\begin{equation}
\label{eq:dcm-exist-1}
\bE_{x\sim\mu|_X}[s(x)f_j(x)] \ge {\sup}_{b\in B}\bE_{x\sim\mu|_X}[s(x)\ptimes b(x)] - \varepsilon_1 - 2\eta_1 - 2\eta_2.
\end{equation}
Define $\tilde s:X\to [-1,1]\cup\{*\}$ as in \eqref{eq:tilde-s} with $\eta$ replaced by $\eta_1$.
Since $|s(x) - \tilde s(x)| \le \eta_1$ whenever $s(x)\ne *$, it suffices to show that with probability at least $1-\delta/2$, there exists $j\in \{-t - 1,\ldots,t + 1\}$ such that 
\begin{equation}
\label{eq:dcm-exist}
\bE_{x\sim\mu|_X}[\tilde s(x)f_j(x)] \ge {\sup}_{b\in B}\bE_{x\sim\mu|_X}[\tilde s(x)\ptimes b(x)] - \varepsilon_1 - 2\eta_2.
\end{equation}
Define $\rho:= \bE_{x\sim\mu|_X}|\tilde s(x)|$. It is clear that $\bE_{x\sim\mu|_X}[\tilde s(x)f_j(x)]$ and $\bE_{x\sim\mu|_X}[\tilde s(x)\ptimes b(x)]$ lie in the interval $[-\rho,\rho]$ for every $j\in \{-t - 1,\ldots,t + 1\}$ and $b\in B$, so the inequality above holds trivially if $\rho \le (\varepsilon_1 + 2\eta_2)/2$. We thus assume $\rho > (\varepsilon_1 + 2\eta_2)/2$ without loss of generality. Also, since $f_{t+1}$ and $f_{-t - 1}$ are the constant $1$ and constant $-1$ functions, \eqref{eq:dcm-exist} holds trivially if its right-hand-side is negative or zero. We thus assume that the right-hand-side of \eqref{eq:dcm-exist} is positive.

By \Cref{claim:B-transform}, there exists $j\in \{-t,\ldots,t\}$ such that %
\begin{equation}
\label{eq:comp-main-1}
{\sup}_{b'\in B_{\eta_2}\sps{2\eta_2j}}\bE_{x\sim\mu|_X}[\tilde s(x)\ptimes b'(x)] \ge {\sup}_{b\in B}\bE_{x\sim\mu|_X}[\tilde s(x)\ptimes b(x)] - 2\eta_2.
\end{equation}
We fix the $j$ that satisfies the inequality above. It now suffices to prove that with probability at least $1-\delta/2$,
\[
\bE_{x\sim\mu|_X}[\tilde s(x)f_j(x)] \ge {\sup}_{b'\in B_{\eta_2}\sps{2\eta_2j}}\bE_{x\sim\mu|_X}[\tilde s(x)\ptimes b'(x)] - \varepsilon_1.
\]
This is the same inequality as \eqref{eq:dcm-analysis-goal-1} except that we replace $(f,B,\varepsilon,\delta)$ with $(f_j,B_{\eta_2}\sps{2\eta_2j},\varepsilon_1,\delta/2)$, and it follows from the same argument as in the proof of \Cref{lm:dcm}.
\end{proof}

\begin{proof}[Proof of \Cref{lm:dcm-real}]
Suppose the input data points to \Cref{alg:dcm-real} are drawn i.i.d.\ from a distribution $\mu$ satisfying $\Pr_{(x,y)\sim\mu}[s(x) = y] = 1$ for some $s\in S$. Let us consider the models $f_{-t - 1},\ldots,f_{t + 1}$ at \Cref{line:dcm-real-return}.
By our assumptions that $t = O(1/\eta_2)$ and $n\sps 2\ge C\varepsilon_2^{-2}\log(1/(\eta_2\delta))$ for a sufficiently large absolute constant $C>0$, for every $f\in \{f_{-t - 1},\ldots,f_{t + 1}\}$, by the Chernoff bound, with probability at least $1-\delta/(4t + 6)$, it holds that
\begin{equation}
\label{eq:dcm-1}
|\bE_{(x,y)\sim\mu}[yf(x)] - Q_f| \le \varepsilon_2/2.
\end{equation}
Combining this with \Cref{lm:dcm-exist} using the union bound, with probability at least $1-\delta$,
inequality \eqref{eq:dcm-1} holds simultaneously for all $f\in \{f_{-t - 1},\ldots,f_{t + 1}\}$, and there exists $f'\in \{f_{-t - 1},\ldots,f_{t + 1}\}$ such that 
\begin{equation}
\label{eq:dcm-2}
\bE_{(x,y)\sim\mu}[yf'(x)] \ge {\sup}_{b\in B}\bE_{(x,y)\sim\mu}[y\ptimes b(x)] - \varepsilon_1 - 2\eta_1 - 2\eta_2.
\end{equation}
Therefore, with probability at least $1-\delta$, the output model $f^*$ of \Cref{alg:dcm-real} satisfies
\begin{align*}
\bE_{(x,y)\sim\mu}[yf^*(x)] & \ge Q_{f^*} - \varepsilon_2/2 \tag{by \eqref{eq:dcm-1}}\\
& \ge Q_{f'}  - \varepsilon_2/2\tag{by definition of $f^*$}\\
& \ge \bE_{(x,y)\sim\mu}[yf'(x)] - \varepsilon_2\tag{by \eqref{eq:dcm-1}}\\
& \ge {\sup}_{b\in B}\bE_{(x,y)\sim\mu}[y\ptimes b(x)] - \varepsilon_1 - \varepsilon_2 - 2\eta_1 - 2\eta_2 \tag{by \eqref{eq:dcm-2}}.
\end{align*}
This proves that \Cref{alg:dcm-real} belongs to $\dcm_{n}(S,B,\varepsilon_1 + \varepsilon_2 + 2\eta_1 + 2\eta_2,\delta)$, as desired.
\end{proof}
\subsection{General Case of Correlation Maximization}
We are now ready to consider the general case of correlation maximization without the deterministic-labels assumption and prove \Cref{thm:cm} stated at the beginning of \Cref{sec:cm}.

In the general case of correlation maximization, an input data point $(x,y)$ may no longer satisfy $y = s(x)$ for the source hypothesis $s$, and thus the data point itself may not be informative enough for our learner to decide the rejection probability in the rejection sampling procedure we use in \Cref{alg:dcm,alg:dcm-real}. Our idea of solving this problem is to change the label $y$ in every data point and enumerate over all possibilities of the changes for all input data points $(x,y)$. We make sure that in one of the possibilities we have $y \approx s(x)$ for every data point and thus our algorithm in the deterministic-labels setting gives us a good model. We compute a model for each possibility and do a final testing step to choose an approximately best one.
This idea is inspired by the \emph{non-uniform covering} technique used by \citet{hopkins2022realizable}.

In order to allow efficient enumeration over $y\in [-1,1]$, we need a discretized version of the continuous interval $[-1,1]$. To that end, for $\eta_1\in (0,1)$,
we choose $Y\subseteq[-1,1]$ such that
\begin{enumerate}
\item $0\in Y$;
\item for every $u\in [-1,1]$, there exists $y\in Y$ such that $|u - y| \le \eta_1$;
\item $|Y| \le O(1/\eta_1)$.
\end{enumerate}
It is clear that there exists $Y\subseteq[-1,1]$ satisfying all the above properties (for example, take $Y$ to be the set of all integer multiples of $\eta_1$ in $[-1,1]$). 
We design a learner in \Cref{alg:cm} using this definition of $Y$, and we prove the following lemma showing that \Cref{alg:cm} solves correlation maximization with a desired sample complexity:
\begin{lemma}
\label{lm:cm}
Let $S,B\subseteq([-1,1]\cup\{*\})^X$ be real-valued hypothesis classes.
Suppose the parameters of \Cref{alg:cm} satisfy $\varepsilon_1,\eta_1,\eta_2, \delta\in (0,1/2)$,
\begin{align*}
n\sps 1 & \ge \sup_{\theta\in \bR}\sup_{\varepsilon'\ge \varepsilon_1}\scomp(S_{\eta_1}\sps 0,B_{\eta_2}\sps \theta,\varepsilon'/4,\delta/4), \\
n\sps 1 & \ge C(\varepsilon_1 + \eta_2)^{-1}\log(1/\delta),\\
n\sps 2 & \ge C\varepsilon_2^{-2}(n\sps 1\log(1/\eta_1) + \log (1/\eta_2\delta))
\end{align*}
for $\varepsilon_2\in (0,1)$ and a sufficiently large absolute constant $C > 0$. 
Suppose in \Cref{alg:cm} we choose $t$ to be the maximum integer satisfying $(2t + 1)\eta_2 < 1$ and choose $Y\subseteq[-1,1]$ as above.
Then \Cref{alg:cm} belongs to $\cm_n(S,B,\varepsilon_1 + \varepsilon_2 + 2\eta_1 + 2\eta_2,\delta)$ where $n = n\sps 1 + n\sps 2$.
\end{lemma}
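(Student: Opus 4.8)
The plan is to analyze \Cref{alg:cm} in two stages, exactly mirroring the proof of \Cref{lm:dcm-real}: an \emph{existence} stage showing that one of the candidate models $f_j$ the algorithm constructs from $\Psi\sps 1$ already meets the correlation guarantee against $\mu$, and a \emph{testing} stage showing that the hold-out sample $\Psi\sps 2$ lets the algorithm select an approximately best candidate. The first observation is a clean reduction: since any model $f:X\to\{-1,1\}$ depends only on $x$ and $\bE_{(x,y)\sim\mu}[y\mid x]=s(x)$, we have $\bE_{(x,y)\sim\mu}[yf(x)]=\bE_{x\sim\mu|_X}[s(x)f(x)]$, while for every $b\in B$ the generalized product satisfies $\bE_{(x,y)\sim\mu}[y\ptimes b(x)]\le\bE_{x\sim\mu|_X}[s(x)\ptimes b(x)]$ (equality when $b(x)\ne *$; Jensen's inequality $\bE[|y|\mid x]\ge|s(x)|$ in the case $b(x)=*$, recalling that $s(x)\ne *$ almost surely by \eqref{eq:cm-assumption}). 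Hence it suffices to produce $f^*$ with $\bE_{x\sim\mu|_X}[s(x)f^*(x)]\ge\sup_{b\in B}\bE_{x\sim\mu|_X}[s(x)\ptimes b(x)]-(\varepsilon_1+\varepsilon_2+2\eta_1+2\eta_2)$, a quantity depending on $\mu$ only through $\mu|_X$ and $s$.

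For the existence stage I would fix a ``cover map'' $\hat s:X\to Y$ with $|\hat s(x)-s(x)|\le\eta_1$ for all $x$ with $s(x)\ne *$, chosen using $0\in Y$ and the $\eta_1$-cover property of $Y$ so that $\hat s(x)=0$ whenever $|s(x)|\le\eta_1$ and, whenever $|\hat s(x)|>\eta_1$, we have $\sign(\hat s(x))=s_{\eta_1}\sps 0(x)\in\{-1,1\}$. Among the $|Y|^{n\sps 1}$ label tuples \Cref{alg:cm} enumerates is $(\hat s(x_1),\dots,\hat s(x_{n\sps 1}))$, and relabeling $\Psi\sps 1$ this way yields a sample i.i.d.\ from the deterministic-label distribution $\hat\mu$ with source $\hat s$. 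Although $\hat s$ need not lie in $S$, I would re-run the argument behind \Cref{lm:dcm-exist} (\Cref{claim:dcm-new-points,claim:dcm-rejection-feasible,claim:rejection,claim:B-transform}) for this $\hat\mu$: the \emph{only} place membership in $S$ is used there is to certify that the rejection-sampled points are realizable by $s_{\eta_1}\sps 0\in S_{\eta_1}\sps 0$ (\Cref{claim:dcm-rejection-feasible}), and the choice of $\hat s$ is engineered precisely to keep this true, while \Cref{claim:B-transform} is already stated for hypotheses not necessarily in $S$. Combined with the Chernoff bound $n'\ge\rho n\sps 1/2$ (from $n\sps 1\ge C(\varepsilon_1+\eta_2)^{-1}\log(1/\delta)$), the comparative learner run with failure probability $\delta/4$, and the constant functions $f_{t+1},f_{-t-1}$ covering the trivial regime, this shows that with probability at least $1-\delta/2$ some candidate $f_j$ satisfies $\bE_{x\sim\mu|_X}[\hat s(x)f_j(x)]\ge\sup_{b\in B}\bE_{x\sim\mu|_X}[\hat s(x)\ptimes b(x)]-\varepsilon_1-2\eta_2$; passing back from $\hat s$ to $s$ via $|\hat s-s|\le\eta_1$ costs a further $2\eta_1$, giving $\bE_{(x,y)\sim\mu}[yf_j(x)]\ge\sup_{b\in B}\bE_{(x,y)\sim\mu}[y\ptimes b(x)]-\varepsilon_1-2\eta_1-2\eta_2$.

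For the testing stage I would condition on $\Psi\sps 1$ and on all internal randomness of the algorithm, so the entire list of at most $|Y|^{n\sps 1}(2t+3)$ candidate models is fixed and independent of $\Psi\sps 2$. Hoeffding's inequality and a union bound over this list, together with $n\sps 2\ge C\varepsilon_2^{-2}(n\sps 1\log(1/\eta_1)+\log(1/\eta_2\delta))$ (using $|Y|=O(1/\eta_1)$ and $t=O(1/\eta_2)$), ensure that with probability at least $1-\delta/2$ we have $|Q_f-\bE_{(x,y)\sim\mu}[yf(x)]|\le\varepsilon_2/2$ simultaneously for every candidate $f$. A union bound then combines the two stages: with probability at least $1-\delta$, the good candidate $f_j$ exists and all $Q$-estimates are accurate, so the returned $f^*$ (which maximizes $Q$) satisfies $\bE_{(x,y)\sim\mu}[yf^*(x)]\ge Q_{f^*}-\varepsilon_2/2\ge Q_{f_j}-\varepsilon_2/2\ge\bE_{(x,y)\sim\mu}[yf_j(x)]-\varepsilon_2$, which chains with the existence bound to give $f^*\in F_\mu$ for the task $\cm_n(S,B,\varepsilon_1+\varepsilon_2+2\eta_1+2\eta_2,\delta)$.

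The main obstacle is the existence stage, specifically the fact that the ``ideal'' relabeling $\hat s$ is not a member of $S$, so \Cref{lm:dcm-exist} cannot be cited verbatim. Resolving this — pinpointing the single place $S$-membership enters its proof and choosing $\hat s$ so that the binarization $s_{\eta_1}\sps 0$ (which \emph{is} in $S_{\eta_1}\sps 0$) still labels every rejection-sampled point correctly — is the crux, and is why $Y$ must contain $0$ and be fine at scale $\eta_1$. A secondary point requiring care is the randomness bookkeeping: the candidate models depend on $\Psi\sps 1$ and the algorithm's coins but not on $\Psi\sps 2$, which is exactly what legitimizes the union bound in the testing stage despite the number of candidates being exponential in $n\sps 1$.
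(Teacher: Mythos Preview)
Your proposal is correct and follows essentially the same approach as the paper: the paper likewise reduces from $y$ to $s(x)$ via Jensen's inequality when $b(x)=*$, defines a discretized source $\tilde s=\tau\circ s$ (your $\hat s$) with $\tau(u)=0$ for $|u|\le\eta_1$, focuses on the single iteration of the outer loop where $\hat y_i=\tilde s(x_i\sps 1)$, and then notes that the proof of \Cref{lm:dcm-exist} carries over verbatim because the only use of $S$-membership is through $s_{\eta_1}\sps 0\in S_{\eta_1}\sps 0$; the testing stage is the same Chernoff-plus-union-bound over the $|Y|^{n\sps 1}(2t+1)+2$ candidates. One small imprecision: the property you actually need from $\hat s$ is that $\sign(\hat s(x))=s_{\eta_1}\sps 0(x)$ whenever $\hat s(x)\ne 0$ (not just when $|\hat s(x)|>\eta_1$), but this already follows from your first condition $\hat s(x)=0$ on $|s(x)|\le\eta_1$ together with $|\hat s(x)-s(x)|\le\eta_1$, so there is no gap.
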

\begin{algorithm}
\caption{Correlation maximization for $(S,B)$}
\label{alg:cm}
\SetKwInOut{Parameters}{Parameters}
\SetKwInOut{Input}{Input}
\SetKwInOut{Output}{Output}
\Parameters{$S,B\subseteq([-1,1]\cup\{*\})^X$, $Y\subseteq[-1,1]$, $n,n\sps 1,n\sps 2\in \bZ_{> 0}$ satisfying $n = n\sps 1 + n\sps 2$, $t\in \bZ_{\ge 0},\eta_1,\eta_2,\varepsilon_1,\delta \in \bR_{\ge 0}$.}
\Input{data points $(x_1,y_1),\ldots,(x_n,y_n)\in X\times[-1,1]$.}
\Output{model $f^*:X\rightarrow\{-1,1\}$.}
Partition the input data points into two datasets: $\Psi\sps 1 = \left(\left(x_i\sps 1, y_i\sps 1\right)\right)_{i=1}^{n\sps 1}$ and $\Psi\sps 2 = \left(\left(x_i\sps 2, y_i\sps 2\right)\right)_{i=1}^{n\sps 2}$\;
\For{$\by := (\hat y_1,\ldots,\hat y_{n\sps 1})\in Y^{n\sps 1}$\label{line:cm-outer-for-1}}
{
Initialize $\Psi$ to be an empty dataset\;
\For{$i = 1,\ldots,n\sps 1$\label{line:cm-for-1}}{
With probability $|\hat y_i\sps 1|$, add the data point $(x_i\sps 1,\sign(\hat y_i\sps 1))$ into $\Psi$\;
}\label{line:cm-for-2}
Let $n'$ be the number of data points in $\Psi$\label{line:cm-gamma}\;
\For{$j = -t,\ldots,t$}
{
Invoke learner in $\comp_{n'}(S_{\eta_1}\sps{0},B_{\eta_2}\sps {2\eta_2j},\varepsilon_1 n\sps 1/(4n'),\delta/4)$ on $\Psi$ to obtain $f_{\by,j}:X\rightarrow\{-1,1\}$\;
}
}\label{line:cm-outer-for-2}
Define $f^+, f^-:X\to \{-1,1\}$ to be constant functions: $f^+(x) = 1$ and $f^-(x) = -1$ for every $x\in X$\;
\Return $f^*$ that maximizes $Q_f := \frac{1}{n\sps 2}\sum_{i=1}^{n\sps 2}y_i\sps 2 f(x_i\sps 2)$ over $f\in \{f^+,f^-\}\cup \{f_{\by,j}: \by\in Y^{n\sps 1},j\in \{-t,\ldots,t\}\}$\label{line:cm-return}\;
\end{algorithm}
We first prove \Cref{thm:cm} using \Cref{lm:cm} and then prove \Cref{lm:cm} after that.
\begin{proof}[Proof of \Cref{thm:cm}]
We choose $\varepsilon_1 = \varepsilon_2 = \beta/2$ in \Cref{lm:cm}. Using the same argument as in the proof of \Cref{thm:dcm}, we have
\[
\sup_{\theta\in \bR}\sup_{\varepsilon'\ge \varepsilon_1}\scomp(S_{\eta_1}\sps 0,B_{\eta_2}\sps \theta,\varepsilon'/4,\delta/4) = O\left(\frac{m}{\beta^2}\log^2_+\left(\frac{m}{\beta}\right)  + \frac{1}{\beta^2}\log\left(\frac 1{\delta}\right)\right).
\]
Therefore, there exist
\begin{align*}
n\sps 1 & = O\left(\frac{m}{\beta^2}\log^2_+\left(\frac{m}{\beta}\right)  + \frac{1}{\beta^2}\log\left(\frac 1{\delta}\right)\right) \textnormal{ and }\\
n\sps 2 & = O\left(\frac{m}{\beta^4}\log^2_+\left(\frac{m}{\beta}\right)\log\left(\frac 1{\eta_1}\right)  + \frac{1}{\beta^4}\log\left(\frac 1{\eta_1}\right)\log\left(\frac 1{\delta}\right) + \frac 1{\beta^2}\log\left(\frac 1{\eta_2}\right)\right)
\end{align*}
that satisfy the requirement of \Cref{lm:cm}, which implies that $\cm_n(S,B,\beta + 2\eta_1 + 2\eta_2,\delta)\ne \emptyset$ for
\[
n = n\sps 1 + n\sps 2 = O\left(\frac{m}{\beta^4}\log^2_+\left(\frac{m}{\beta}\right)\log\left(\frac 1{\eta_1}\right)  + \frac{1}{\beta^4}\log\left(\frac 1{\eta_1}\right)\log\left(\frac 1{\delta}\right) + \frac 1{\beta^2}\log\left(\frac 1{\eta_2}\right)\right).
\]
The fact that $m \le \fat_{\varepsilon/5}(S,B)$ when $\eta_1 = \eta_2 =\varepsilon/5$ follows from \Cref{claim:fat-alt}.
\end{proof}
To prove \Cref{lm:cm}, we apply ideas in our previous subsections with some small changes to the definition of the hypothesis $\tilde s$ and the distributions $\nu$ and $\mu'$.
By the definition of $Y$, there exists $\tau:[-1,1]\rightarrow Y$ such that
\begin{enumerate}
\item $\tau(u) = 0$ for every $u\in [-\eta_1,\eta_1]$, and
\item $|\tau (u) - u| \le \eta_1$ for every $u\in [-1,1]$.
\end{enumerate}
We define $\tilde s:X\rightarrow[-1,1]\cup\{*\}$ such that
\[
\tilde s(x) = \begin{cases}
\tau(s(x)), & \textnormal{if }s(x)\ne *;\\
*, & \textnormal{otherwise}.
\end{cases}
\]
For a distribution $\mu$ over $X\times[-1,1]$ satisfying $\Pr_{x\sim\mu|_X}[s(x) \ne *] = 1$ and $\bE_{(x,y)\sim\mu}[y|x] = s(x)$ for some $s\in S$,
we define $\nu$ to be the joint distribution over $(x,\tilde s(x),u)\sim X\times[-1,1]\times\{-1,1\}$ where $x\sim\mu|_X$ and $\Pr[u = 1|x] = |\tilde s(x)|$, and we define $\mu'$ to be the conditional distribution of $(x,\sign(y))$ given $u = 1$ for $(x,y,u)\sim\nu$. The following analogue of \Cref{claim:dcm-new-points} follows directly from the description of \Cref{alg:cm}:
\begin{claim}
\label{claim:cm-new-points}
Let the distributions $\mu,\nu,\mu'$ be defined as above.
Assume that the input data points $(x_i,y_i)_{i=1}^n$ to \Cref{alg:cm} are generated i.i.d.\ from $\mu$.
Let us focus on the single iteration of the outer \for loop (Lines~\ref{line:cm-outer-for-1}-\ref{line:cm-outer-for-2}) where $\hat y_i\sps 1 = \tilde s(x_i\sps 1)$ for every $i = 1,\ldots,n\sps 1$.
For every $i = 1,\ldots, n\sps 1$, let $u_i$ denote the indicator for the event that a new data point is added to $\Psi$ in the $i$-th iteration of the inner \for loop at Lines~\ref{line:cm-for-1}-\ref{line:cm-for-2}. 
Then $(x_i\sps 1, \hat y_i\sps 1, u_i)_{i=1}^{n\sps 1}$ are distributed independently according to $\nu$.
Also, when conditioned on $n'$, the data points in $\Psi$ at \Cref{line:cm-gamma} are distributed independently according to $\mu'$.
\end{claim}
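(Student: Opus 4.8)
The plan is to trace through \Cref{alg:cm} restricted to the single outer-loop iteration in the statement and match it against the definitions of $\nu$ and $\mu'$; the claim is essentially a bookkeeping exercise, so no step is genuinely hard. First I would note that the partition of the input into $\Psi\sps 1$ and $\Psi\sps 2$ is a fixed, data-independent split, so since $(x_i,y_i)_{i=1}^n$ are i.i.d.\ from $\mu$, the pairs $\bigl(x_i\sps 1,y_i\sps 1\bigr)_{i=1}^{n\sps 1}$ are i.i.d.\ from $\mu$ as well, and in particular $x_1\sps 1,\ldots,x_{n\sps 1}\sps 1$ are i.i.d.\ from $\mu|_X$. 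In the designated iteration we have $\hat y_i\sps 1 = \tilde s(x_i\sps 1)$ for every $i$, so the middle coordinate is a deterministic function of $x_i\sps 1$, exactly matching the structure of $\nu$, whose second coordinate in a sample $(x,\tilde s(x),u)$ is $\tilde s(x)$; in particular the original labels $y_i\sps 1$ play no role in this iteration. Finally, by Lines~\ref{line:cm-for-1}--\ref{line:cm-for-2} the learner uses fresh internal randomness, independent across $i$ and independent of the input data, to set $u_i = 1$ with probability $|\hat y_i\sps 1| = |\tilde s(x_i\sps 1)|$ and $u_i = 0$ otherwise, so $\Pr[u_i = 1\mid x_i\sps 1] = |\tilde s(x_i\sps 1)|$. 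Combining these observations, each triple $\bigl(x_i\sps 1,\hat y_i\sps 1,u_i\bigr)$ is distributed according to $\nu$, and independence across $i$ follows from the independence of the $x_i\sps 1$'s and of the coin flips, which is the first assertion.

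For the second assertion I would observe that a data point is appended to $\Psi$ in the $i$-th inner iteration exactly when $u_i = 1$, and the appended point is $\bigl(x_i\sps 1,\sign(\hat y_i\sps 1)\bigr)$, a deterministic function of $\bigl(x_i\sps 1,\hat y_i\sps 1\bigr)$. Thus $\Psi$ is obtained from the i.i.d.\ sequence $\bigl(x_i\sps 1,\hat y_i\sps 1,u_i\bigr)_{i=1}^{n\sps 1}$ by keeping, in increasing order of $i$, the triples with $u_i = 1$ and mapping each such triple to $\bigl(x_i\sps 1,\sign(\hat y_i\sps 1)\bigr)$. I would then invoke the standard rejection-sampling fact: conditioning on the accepted index set being exactly a given $I\subseteq\{1,\ldots,n\sps 1\}$, the triples $\{(x_i\sps 1,\hat y_i\sps 1,u_i):i\in I\}$ are i.i.d.\ from $\nu$ conditioned on $u = 1$ (the conditioning constrains each triple only through its own $u_i$), hence the corresponding kept points are i.i.d.\ from the conditional law of $(x,\sign(\tilde s(x)))$ given $u = 1$, which is precisely $\mu'$. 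Since this holds for every $I$ of a fixed cardinality, conditioning merely on $n' = |\Psi|$ yields the same conclusion.

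The only place that warrants a little care --- and the ``main obstacle'' only in a nominal sense --- is the rejection-sampling step: it should be justified by conditioning on the \emph{identity} of the accepted index set and then summing over all index sets of a given cardinality, rather than by conditioning directly on $n'$. This is precisely the reasoning behind \Cref{claim:dcm-new-points} for \Cref{alg:dcm}; everything else here is immediate from the pseudocode of \Cref{alg:cm} and the definitions of $\nu$ and $\mu'$.
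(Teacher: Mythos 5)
Your proof is correct and fills in exactly the argument the paper leaves implicit (the paper states that \Cref{claim:cm-new-points}, like \Cref{claim:dcm-new-points}, ``follows directly from the description of the algorithm'' and gives no further proof). Your care in conditioning on the identity of the accepted index set before summing over sets of a fixed cardinality is the right way to make the rejection-sampling step rigorous, and the rest matches the intended reasoning.
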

\begin{lemma}
\label{lm:cm-exist}
In the setting of \Cref{lm:cm}, assume that the input data points to \Cref{alg:cm} are generated i.i.d.\ from a distribution $\mu$ over $X\times[-1,1]$ satisfying $\Pr_{x\sim\mu|_X}[s(x) \ne *] = 1$ and $\bE_{(x,y)\sim\mu}[y|x] = s(x)$ for some $s\in S$.
With probability at least $1-\delta/2$, at \Cref{line:cm-return}, there exists $f\in \{f^+,f^-\}\cup \{f_{\by,j}: \by\in Y^{n\sps 1},j\in \{-t,\ldots,t\}\}$ such that
\begin{equation}
\label{eq:cm-exist}
\bE_{(x,y)\sim\mu}[yf(x)]\ge {\sup}_{b\in B}\bE_{(x,y)\sim\mu}[y\ptimes b(x)] - \varepsilon_1 - 2\eta_1 - 2\eta_2.
\end{equation}
\end{lemma}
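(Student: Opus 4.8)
The plan is to follow the proof of \Cref{lm:dcm-exist}, the non-uniform covering supplying the extra ingredient needed to cope with non-deterministic labels: among the $|Y|^{n\sps 1}$ iterations of the outer \for loop, we single out the (sample-dependent) iteration indexed by $\by^* := (\tilde s(x_1\sps 1),\ldots,\tilde s(x_{n\sps 1}\sps 1))$, which lies in $Y^{n\sps 1}$ almost surely because $\tilde s$ takes values in $Y$ and $\Pr_{x\sim\mu|_X}[s(x) = *] = 0$. First I would reduce the target to an inequality about $\tilde s$ and the marginal $\mu|_X$. Since any model $f$ in the collection is deterministic, $\bE_{(x,y)\sim\mu}[yf(x)] = \bE_{x\sim\mu|_X}[s(x)f(x)]$; and for each $b\in B$, conditioning on $x$ and applying Jensen's inequality in the case $b(x) = *$ (where $y\ptimes b(x) = -|y|$ and $\bE[-|y|\mid x]\le -|s(x)|$) gives $\bE_{(x,y)\sim\mu}[y\ptimes b(x)] \le \bE_{x\sim\mu|_X}[s(x)\ptimes b(x)]$, so the passage to $s(x)$ is in our favor. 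Because $|\tilde s(x) - s(x)|\le\eta_1$ whenever $s(x)\ne *$ and the maps $u\mapsto u\ptimes v$ and $u\mapsto uf(x)$ are $1$-Lipschitz, it then suffices to exhibit, with probability at least $1-\delta/2$, some $f$ in the collection with $\bE_{x\sim\mu|_X}[\tilde s(x)f(x)] \ge \sup_{b\in B}\bE_{x\sim\mu|_X}[\tilde s(x)\ptimes b(x)] - \varepsilon_1 - 2\eta_2$, the remaining $2\eta_1$ absorbing the two $\eta_1$-approximations.

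Writing $\rho := \bE_{x\sim\mu|_X}|\tilde s(x)|$, both sides of the above lie in $[-\rho,\rho]$, so if $\rho \le (\varepsilon_1 + 2\eta_2)/2$ or if its right-hand side is non-positive, one of the constant models $f^+,f^-$ already works, since $\max\{\bE_{x\sim\mu|_X}[\tilde s(x)],\,-\bE_{x\sim\mu|_X}[\tilde s(x)]\} = |\bE_{x\sim\mu|_X}[\tilde s(x)]| \ge 0$; hence I may assume $\rho > (\varepsilon_1+2\eta_2)/2$ and $\sup_{b\in B}\bE_{x\sim\mu|_X}[\tilde s(x)\ptimes b(x)] > \varepsilon_1 + 2\eta_2$. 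Applying \Cref{claim:B-transform} with the hypothesis $\tilde s$ and distribution $\mu|_X$ (legitimate since $\Pr_{x\sim\mu|_X}[\tilde s(x) = *] = 0$), and noting that its right-hand side is now strictly positive, yields a fixed $j^*\in\{-t,\ldots,t\}$ with $\sup_{b'\in B_{\eta_2}\sps{2\eta_2 j^*}}\bE_{x\sim\mu|_X}[\tilde s(x)\ptimes b'(x)] \ge \sup_{b\in B}\bE_{x\sim\mu|_X}[\tilde s(x)\ptimes b(x)] - 2\eta_2$. The candidate model is then $f_{\by^*,j^*}$, which belongs to the collection, and it remains to show $\bE_{x\sim\mu|_X}[\tilde s(x)f_{\by^*,j^*}(x)] \ge \sup_{b'\in B_{\eta_2}\sps{2\eta_2 j^*}}\bE_{x\sim\mu|_X}[\tilde s(x)\ptimes b'(x)] - \varepsilon_1$ with probability at least $1-\delta/2$.

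For this last step I would record two facts about $\mu'$, the conditional law of $(x,\sign(\tilde s(x)))$ given $u=1$, in analogy with \Cref{claim:dcm-rejection-feasible} and \Cref{claim:rejection}: (i) $\Pr_{(x,y)\sim\mu'}[s_{\eta_1}\sps 0(x) = y] = 1$, because conditioning on $u=1$ forces $|\tilde s(x)|>0$, hence $|s(x)|>\eta_1$ and $s_{\eta_1}\sps 0(x) = \sign(s(x)) = \sign(\tilde s(x))$; and (ii) $\bE_{(x,y)\sim\mu'}[y\ptimes b'(x)] = \rho^{-1}\bE_{x\sim\mu|_X}[\tilde s(x)\ptimes b'(x)]$ for every $b':X\to[-1,1]\cup\{*\}$, which comes down to the case-by-case identity $(\sign(\tilde s(x))\ptimes b'(x))|\tilde s(x)| = \tilde s(x)\ptimes b'(x)$. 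By \Cref{claim:cm-new-points}, in the $\by^*$-iteration the set $\Psi$, conditioned on $n'$, consists of $n'$ i.i.d.\ draws from $\mu'$, and by (i) this is a valid comparative-learning instance for $(S_{\eta_1}\sps 0, B_{\eta_2}\sps{2\eta_2 j^*})$. A multiplicative Chernoff bound (using $n\sps 1 \ge C(\varepsilon_1+\eta_2)^{-1}\log(1/\delta)$ and $\rho > (\varepsilon_1+\eta_2)/2$) gives $n'\ge\rho n\sps 1/2$ with probability at least $1-\delta/4$; and since the learner is invoked with error parameter $\varepsilon_1 n\sps 1/(4n')$ on $n'$ points, the assumption on $n\sps 1$ ensures $\comp_{n'}(S_{\eta_1}\sps 0, B_{\eta_2}\sps{2\eta_2 j^*},\varepsilon_1 n\sps 1/(4n'),\delta/4) \ne \emptyset$ exactly as in the proof of \Cref{lm:dcm}, so with probability at least $1-\delta/4$ its output satisfies $\bE_{(x,y)\sim\mu'}[yf_{\by^*,j^*}(x)] \ge \sup_{b'}\bE_{(x,y)\sim\mu'}[y\ptimes b'(x)] - \varepsilon_1 n\sps 1/(2n')$, by \eqref{eq:error-correlation}. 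A union bound over these two events, together with $n'\ge\rho n\sps 1/2$ so that $\varepsilon_1 n\sps 1/(2n')\le\varepsilon_1/\rho$, and fact (ii), then gives the desired inequality. The main obstacle is conceptual rather than computational: recognizing that the brute-force enumeration over $Y^{n\sps 1}$ effectively converts the general instance into a deterministic-label one in the single ``lucky'' iteration $\by = \by^*$, so that the reduction developed in \Cref{sec:dcm} applies; a secondary subtlety is that $\by^*$ is itself a function of the sample, so the distributional statements of \Cref{claim:cm-new-points} must be invoked for this data-dependent iteration.
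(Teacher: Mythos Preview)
Your proposal is correct and follows essentially the same approach as the paper: reduce the target inequality to one involving $s(x)$ via the Jensen argument, pass to $\tilde s$ at the cost of $2\eta_1$, fix the data-dependent iteration $\by^* = (\tilde s(x_1\sps 1),\ldots,\tilde s(x_{n\sps 1}\sps 1))$, and then rerun the proof of \Cref{lm:dcm-exist} (which in turn invokes the rejection-sampling analysis from the proof of \Cref{lm:dcm}) using \Cref{claim:cm-new-points} in place of \Cref{claim:dcm-new-points}. Your explicit handling of the trivial cases and the $\mu'$ properties is slightly more detailed than the paper's terse ``the proof is the same,'' and the data-dependence of $\by^*$ that you flag is exactly what \Cref{claim:cm-new-points} is formulated to address.
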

\begin{proof}
We first show that $\bE_{(x,y)\sim\mu}[y\ptimes b(x)|x] \le s(x)\ptimes b(x)$. When $b(x)\ne *$, the inequality becomes an equality because $\bE_{(x,y)\sim\mu}[y|x] = s(x)$. When $b(x) = *$, the inequality is equivalent to $\bE_{(x,y)\sim\mu}\big[-|y|\big|x\big]\le -\big|\bE_{(x,y)\sim\mu}[y|x]\big|$, which holds by Jensen's inequality.

Now we know that $\bE_{(x,y)\sim\mu}[y\ptimes b(x)] \le \bE_{x\sim\mu|_X}[s(x)\ptimes b(x)]$. Therefore, a sufficient condition for \eqref{eq:cm-exist} is
\begin{equation}
\label{eq:cm-exist-1}
\bE_{x\sim\mu|_X}[s(x)f(x)] \ge {\sup}_{b\in B}\bE_{x\sim\mu|_X}[s(x)\ptimes b(x)] - \varepsilon_1 - 2\eta_1 - 2\eta_2.
\end{equation}

We fix $\by = (\hat y_1,\ldots,\hat y_{n\sps 1})\in Y^{n\sps 1}$ such that $\hat y_i = \tilde s(x_i\sps 1)$ for every $i = 1,\ldots, n\sps 1$. It suffices to show that with probability at least $1-\delta/2$, there exists $f\in \{f^+,f^-\}\cup\{f_{\by,j}:j\in \{-t,\ldots,t\}\}$ such that \eqref{eq:cm-exist-1} holds. This is very similar to inequality \eqref{eq:dcm-exist-1} in the proof of \Cref{lm:dcm-exist}, and we can essentially apply the same proof here. The only difference is that we are using slightly different definitions for $\tilde s(x),\nu,\mu'$ here, but all the properties of $\tilde s,\nu,\mu'$ needed in the proof of \Cref{lm:dcm-exist} still hold. In particular, we still have $|\tilde s(x) - s(x)| \le \eta_1$ for every $x\in X$ whenever $s(x) \ne *$, and we can use \Cref{claim:cm-new-points} in place of \Cref{claim:dcm-new-points}. It is also straightforward to show that \Cref{claim:dcm-rejection-feasible} and \Cref{claim:rejection} still hold with our new definitions of $\tilde s, \nu,\mu'$ after replacing $\eta$ with $\eta_1$. All other components in the proof of \Cref{lm:dcm-exist} can be applied here without change.
\end{proof}

\begin{proof}[Proof of \Cref{lm:cm}]
Suppose the input data points to \Cref{alg:cm} are drawn i.i.d.\ from a distribution $\mu$ satisfying $\Pr_{x\sim\mu|_X}[s(x) \ne *] = 1$ and $\bE_{(x,y)\sim\mu}[y|x] = s(x)$ for some $s\in S$. Let us consider the models in $F_{\mathsf{candidate}} := \{f^+,f^-\}\cup \{f_{\by,j}: \by\in Y^{n\sps 1},j\in \{-t,\ldots,t\}\}$ at \Cref{line:cm-return}.
By the fact that $|Y| \le O(1/\eta_1), t \le O(1/\eta_2)$ and our assumption $n\sps 2\ge C\varepsilon^{-2}(n\sps 1\log(1/\eta_1) + \log (1/\eta_2\delta))$ for a sufficiently large absolute constant $C > 0$, 
for every $f\in F_{\mathsf{candidate}}$, by the Chernoff bound, with probability at least $1 - \delta/ (2|F_{\mathsf{candidate}}|)$, it holds that
\begin{equation}
\label{eq:cm-1}
|Q_{f} - \bE_{(x,y)\sim\mu}[yf(x)]|\le \varepsilon_2/2.
\end{equation}
Combining this with \Cref{lm:cm-exist} using the union bound, with probability at least $1-\delta$, inequality \eqref{eq:cm-1} holds simultaneously for all $f\in F_{\mathsf{candidate}}$, and there exists $f'\in F_{\mathsf{candidate}}$ such that
\begin{equation}
\label{eq:cm-2}
\bE_{(x,y)\sim\mu}[yf'(x)]\ge {\sup}_{b\in B}\bE_{(x,y)\sim\mu}[y\ptimes b(x)] - \varepsilon_1 - 2\eta_1 - 2\eta_2.
\end{equation}
Note that \eqref{eq:cm-1} and \eqref{eq:cm-2} are similar to \eqref{eq:dcm-1} and \eqref{eq:dcm-2}, respectively, and the proof is completed using the same argument as in the proof of \Cref{lm:dcm-real}.
\end{proof}
\section{Sample Complexity of Realizable Multiaccuracy and Multicalibration}
\label{sec:ma-mc}
In this section, we give a sample complexity characterization for realizable multiaccuracy and multicalibration in the distribution-free setting. These tasks have been studied by \citep{hu2022metric} for total hypothesis classes. Here we generalize their definitions to partial hypothesis classes.

Given a distribution $\mu$ over $X\times[-1,1]$ and a model $f:X\to [-1,1]$, we first generalize the definition of $\mae_{\mu,B}(f)$ and $\mce_{\mu,B}(f)$ in \eqref{eq:mae-total} and \eqref{eq:mce-total} to partial hypothesis classes $B\subseteq[-1,1]\cup\{*\}$. It is not enough to directly use the generalized product $\diamondsuit$ as in \Cref{sec:cm}. For example, suppose we define $\mae_{\mu,B}(f)$ to be
\[
{\sup}_{b\in B}|\bE_{(x,y)\sim\mu}[(f(x) - y)\ptimes b(x)]|.
\]
Then $\mae_{\mu,B}(f)$ equals to the $\ell_1$ error $\bE_{(x,y)\sim\mu}[|f(x) - y|]$ even when $B$ only contains a single hypothesis $b$ which assigns every individual $x\in X$ the undefined label $b(x) = *$, making it challenging to achieve a low $\mae$ even when $B$ has fat-shattering dimension zero. To avoid this issue, we note that for any $u\in \bR$, the absolute value $|u|$ can be equivalently written as $\sup_{\sigma\in \{-1,1\}}u\sigma$, leading us to the following definitions:
\begin{align}
\mae_{\mu,B}(f) & := \sup_{b\in B}\sup_{\sigma\in \{-1,1\}}\bE\Big[\Big((f(x) - y)\sigma\Big)\ptimes b(x)\Big], \quad \text{and} \label{eq:ma-def}\\
\mce_{\mu,B}(f) & := \sup_{b\in B}\sum_{v\in V}\sup_{\sigma\in \{-1,1\}}\bE\Big[\Big((f(x) - y)\one(f(x) = v)\sigma\Big)\ptimes b(x)\Big]. \label{eq:mc-def}
\end{align}
In the definition of $\mce$, we use $V$ to denote the range of $f$ which we assume to be countable. The supremum over $\sigma\in \{-1,1\}$ is inside the sum over $v\in V$, so $\sigma$ is allowed to depend on $v$. 

We can now define realizable multiaccuracy and multicalibration for partial hypothesis classes:

\begin{definition}[Realizable Multiaccuracy ($\ma$)]
\label{def:ma}
Given two hypothesis classes $S,B\subseteq([-1,1]\cup\{*\})^X$, an error bound $\varepsilon \ge 0$, a failure probability bound $\delta \ge 0$, and a nonnegative integer $n$, we define $\ma_n(S,B,\varepsilon,\delta)$ to be $\learn_n(Z,F,\dstr,(F_\mu)_{\mu\in \dstr},\delta)$ where $Z = X\times[-1,1]$, $F = [-1,1]^X$, $\dstr$ consists of all distributions $\mu$ over $X\times [-1,1]$ satisfying \eqref{eq:cm-assumption}, and $F_\mu$ consists of all models $f:X\rightarrow[-1,1]$ such that 
\begin{equation}
\label{eq:ma-goal}
\mae_{\mu,B}(f) \le \varepsilon.
\end{equation}
\end{definition}

\begin{definition}[Realizable Multicalibration ($\mc$)]
\label{def:mc}
We define $\mc_n(S,B,\varepsilon,\delta)$ in the same way as we define $\ma_n(S,B,\varepsilon,\delta)$ in \Cref{def:ma} except that we replace \eqref{eq:ma-goal} with
\[
\mce_{\mu,B}(f) \le \varepsilon.
\]
\end{definition}

We prove the following upper bound (\Cref{thm:ma-mc,thm:ma-mc-binary}) and lower bound (\Cref{thm:ma-mc-lower}) on the sample complexity of realizable multiaccuracy and multicalibration in \Cref{sec:ma-mc-upper,sec:ma-mc-lower}, respectively.
\begin{theorem}
\label{thm:ma-mc}
Let $S,B\subseteq([-1,1]\cup\{*\})^X$ be real-valued hypothesis classes. 
For $\beta,\eta_1,\eta_2,\delta\in (0,1/2)$, defining $m:= \sup_{r:X\to \bR}\sup_{\theta\in\bR}\vc(S_{\eta_1}\sps r, B_{\eta_2}\sps \theta)$, we have
\begin{align*}
& \sma(S,B,\beta + 2\eta_1 + 4\eta_2,\delta)\\
\le {} & \smc(S,B,\beta + 2\eta_1 + 4\eta_2,\delta)\\
\le {} & O\left(
\frac{m}{\beta^6}\log^2_+\left(\frac{m}{\beta}\right)\log\left(\frac 1{\eta_1}\right) + \frac 1{\beta^6}\log\left(\frac 1{\eta_1}\right)\log\left(\frac 1{\beta\delta}\right) + \frac{1}{\beta^4}\log\left(\frac 1{\eta_2}\right)   \right).
\end{align*}
For $\varepsilon\in (0,1/2)$, 
choosing $\beta = \eta_1 = \eta_2 = \varepsilon/7$, we have $m\le \fat_{\varepsilon/7}(S,B)$ and
\[
\sma(S,B,\varepsilon,\delta) \le \smc(S,B,\varepsilon,\delta) \le O \left( 
	\frac{m}{\varepsilon^6}\log^2_+\left(
		\frac{m}{\varepsilon}
	\right)
	\log\left(\frac 1\varepsilon\right) + \frac{1}{\varepsilon^6}\log\left(\frac 1\varepsilon\right)\log\left(\frac 1{\varepsilon\delta}\right) 
\right).
\]
\end{theorem}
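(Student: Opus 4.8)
The plan is to first dispatch the easy inequality $\sma(S,B,\varepsilon,\delta)\le\smc(S,B,\varepsilon,\delta)$ and then prove the upper bound on $\smc$, which by that inequality also bounds $\sma$. For the first inequality I would verify the pointwise identity $\sum_{v\in V}\big((f(x)-y)\one(f(x)=v)\sigma\big)\ptimes b(x)=\big((f(x)-y)\sigma\big)\ptimes b(x)$ (all but one summand vanishes, and the surviving one is the right-hand side regardless of whether $b(x)=*$); summing and taking the appropriate suprema then gives $\mae_{\mu,B}(f)\le\mce_{\mu,B}(f)$ for every $\mu$ and $f$, so every model admissible for $\mc_n(S,B,\varepsilon,\delta)$ is admissible for $\ma_n(S,B,\varepsilon,\delta)$, and any learner for the former solves the latter.

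For the upper bound on $\smc$, I would build an iterative learner in the style of the classical boosting algorithm for multicalibration, using the correlation-maximization learner of \Cref{thm:cm} as the weak-learning oracle. The learner maintains a model $f_t$, starting from $f_0\equiv 0$, and tracks the squared-error potential $\Phi_t:=\bE_{x\sim\mu|_X}[(f_t(x)-s(x))^2]\in[0,1]$; note that $f_t(x)-s(x)=\bE_{(x,y)\sim\mu}[f_t(x)-y\mid x]$, so residual correlations are estimable from samples even though $s$ is unknown. In each round, as long as the current $\mce_{\mu,B}(f_t)$ exceeds the target, for each level set $\{x:f_t(x)=v\}$ I would relabel the restricted data by the rescaled residual $(f_t(x)-y)/2$ — whose conditional mean is $\big(f_t(x)-s(x)\big)/2$, i.e.\ $S$ shifted by the current model — feed this to \Cref{thm:cm} to obtain a binary direction $g_v$, and update $f_t$ on that level set by a step proportional to the empirical residual-correlation of $g_v$ (re-discretizing the model onto a bucketing of $[-1,1]$). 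Optimizing the step size per level set and applying Cauchy--Schwarz across level sets — exactly as in the standard multicalibration analysis — shows $\Phi_t$ decreases by $\Omega(\varepsilon^2)$ each round, so the process stops after $O(1/\varepsilon^2)$ rounds with $\mce_{\mu,B}(f)\le\varepsilon$. Two features of the reduction explain the parameters of the theorem: because the source of each correlation-maximization subproblem is $S$ shifted by $f_t$, after the binarization step of \Cref{thm:cm} the relevant source class is a binarization of $S$ with respect to a reference function built from $f_t$ (a general function $X\to\bR$), which is why the complexity measure $m$ ranges over all $r:X\to\bR$ on the $S$-side rather than only $r\equiv 0$ as in \Cref{thm:cm}; and the two binarization rescalings (the residual on the $S$-side, and the grid/convex-hull approximation of the real-valued $B$ on the $B$-side, the latter inherited from \Cref{thm:cm}) are what turn the target error $\beta$ into $\beta+2\eta_1+4\eta_2$.

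The sample-complexity bound then follows by composing \Cref{thm:cm}: a fresh sample batch is used in each of the $O(1/\varepsilon^2)$ rounds (fresh samples are needed since the subproblem solved in round $t+1$ depends on $f_t$, hence on the round-$t$ data), and within a round the same batch is reused across level sets, since level sets of small probability mass contribute only $O(\varepsilon)$ to $\mce_{\mu,B}(f_t)$ and can be dropped while the remaining level sets each receive enough of the batch to run \Cref{thm:cm} at error $\Theta(\varepsilon)$. Multiplying the per-call cost of \Cref{thm:cm} by the number of rounds produces the extra $1/\beta^2$ factor relative to \Cref{thm:cm}, yielding \eqref{eq:ma-mc}; the final specialization sets $\beta=\eta_1=\eta_2=\varepsilon/7$, and the bound $m\le\fat_{\varepsilon/7}(S,B)$ holds because any subset shattered by $S_{\eta_1}\sps r$ and by $B_{\eta_2}\sps\theta$ is $\eta_1$-fat shattered by $S$ and $\eta_2$-fat shattered by $B$, hence $\min(\eta_1,\eta_2)$-fat shattered by both (the mutual-fat-shattering analogue of \Cref{claim:fat-alt}).

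I expect the main obstacle to be making the multicalibration reduction rigorous: one must show that the per-level-set correlation-maximization guarantees combine — through the per-level-set step optimization and the Cauchy--Schwarz step — into a genuine decrease of the global potential, while simultaneously (i) bounding the complexity of the shifted source classes uniformly over all models $f_t$ that can arise (this is what forces $m$ to involve arbitrary reference functions $r$), (ii) passing from the real-valued benchmark class $B$ to binary benchmarks (inherited from \Cref{thm:cm} but needing to interact correctly with the level-set structure), and (iii) controlling the accumulation of estimation error over the $O(1/\varepsilon^2)$ rounds and the many level sets within each round, which is what pins down the precise error and confidence parameters passed to \Cref{thm:cm}. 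The multiaccuracy case is the simpler special case where the level-set refinement is absent and a single correlation-maximization call per round suffices.
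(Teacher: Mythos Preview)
Your overall architecture — iterating the squared-error potential downward via correlation maximization on the residual, $O(1/\beta^2)$ rounds with fresh samples each, then specializing $\beta=\eta_1=\eta_2=\varepsilon/7$ — matches the paper (\Cref{alg:ma-mc}, \Cref{lm:ma-mc}), and your $\mae\le\mce$ argument is fine. The gap is in how you handle the $k\approx 1/\beta$ level sets for multicalibration. Running \Cref{thm:cm} separately on each level set $\{f_t=v\}$, using only the samples that land there, does not give ``each receive enough of the batch to run \Cref{thm:cm} at error $\Theta(\varepsilon)$'': a level set of mass $p_v$ receives only $p_v n$ samples, and \Cref{thm:cm} at error $\Theta(\beta)$ needs $\Omega(m/\beta^4)$ of them. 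After dropping level sets of mass below $\tau$ you need $\tau\lesssim\beta/k\approx\beta^2$ to keep the dropped contribution below $\beta$, forcing $n\gtrsim m/\beta^6$ \emph{per round} and hence $m/\beta^8$ total; a sharper analysis with level-dependent errors $\mathrm{err}_v\approx(m/(p_vn))^{1/4}$ and H\"older on $\sum_v p_v^{3/4}\le k^{1/4}$ still leaves an extra $k^{1/4}$ and yields at best $m/\beta^7$. The Cauchy--Schwarz step you invoke turns $\sum_v p_v c_v$ into an $\Omega(\beta^2)$ potential drop but does nothing to reduce the requirement on $\sum_v p_v\,\mathrm{err}_v$.

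The paper sidesteps sample-splitting by folding the level-set sign pattern into the benchmark: for each $\bsigma\in\{-1,1\}^k$ it sets $B_{\bsigma,f}=\{x\mapsto\chi_{\bsigma}(f(x))\,b(x):b\in B\}$ and makes one weak-correlation-maximization call with source $(S-f)/2$ and benchmark $B_{\bsigma,f}$ on the \emph{entire} batch, then selects the best of the $2^k$ outputs on a held-out set (the $2^k$ enumeration costs time, not samples, since only the call for the witnessing $\bsigma$ must succeed). This is also the actual reason $m$ ranges over arbitrary $r:X\to\bR$: the source is $(S-f)/2$ with $f$ an unrestricted function on all of $X$, so its binarization has reference built from $f$ (\Cref{claim:ma-mc-vc-bound}). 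In your per-level-set construction the shift on each level set is the constant $v$, so your own reduction would only need constant references — your stated justification for general $r$ is mismatched to your construction.
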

\begin{theorem}
\label{thm:ma-mc-binary}
In the setting of \Cref{thm:ma-mc}, assume in addition that $S$ is binary, i.e., $S\subseteq\{-1,1,*\}^X$ and define $m:= \sup_{\theta\in \bR}\vc(S,B_{\eta_2}\sps \theta)$. Then,
\begin{align*}
\sma(S,B,\beta + 4\eta_2,\delta) & \le O\left(\frac{m}{\beta^4}\log^2_+\left(\frac{m}{\beta}\right)  + \frac{1}{\beta^4}\log\left(\frac 1{\eta_2\beta\delta}\right)\right),\\
\smc(S,B,\beta + 4\eta_2,\delta) & \le O\left(\frac{m}{\beta^4}\log^2_+\left(\frac{m}{\beta}\right)  + \frac{1}{\beta^4}\log\left(\frac 1{\eta_2\beta\delta}\right) + \frac 1{\beta^5}\right).
\end{align*}
\end{theorem}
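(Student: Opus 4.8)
The plan is to specialize the argument behind \Cref{thm:ma-mc}, exploiting two consequences of the hypothesis $S\subseteq\{-1,1,*\}^X$. First, realizability becomes deterministic: if $s\in S$ satisfies $\Pr_{x\sim\mu|_X}[s(x)\ne *]=1$ and $\bE_{(x,y)\sim\mu}[y\mid x]=s(x)\in\{-1,1\}$, then (since $y\in[-1,1]$) we have $y=s(x)$ almost surely, so $\mu$ satisfies the stronger assumption \eqref{eq:deterministic-label}; this lets us replace the general correlation-maximization subroutine (\Cref{thm:cm}) by the sharper deterministic-label one (\Cref{thm:dcm-real}), saving a factor $\log(1/\eta_1)$ and an $\varepsilon^{-2}$. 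Second, $s_{\eta_1}\sps 0=s$ for every binary $s$ and $\eta_1\in[0,1)$, hence $S_{\eta_1}\sps 0=S$; so we may take $\eta_1=0$ throughout, the $2\eta_1$ term in the error disappears, and the relevant complexity parameter collapses to $m=\sup_{\theta}\vc(S,B_{\eta_2}\sps\theta)$.

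With these substitutions I would run the same iterative reduction from realizable multiaccuracy to correlation maximization as in \Cref{thm:ma-mc}. Maintain a model $f_t:X\to[-1,1]$ with $f_0\equiv 0$; in round $t$, reformulate the search for a distinguisher $b\in B,\sigma\in\{-1,1\}$ making $\bE[((f_t(x)-y)\sigma)\ptimes b(x)]$ large as a deterministic-label correlation-maximization instance for $(S,B)$, run twice with the benchmarks negated to recover the supremum over $\sigma$; invoke the learner of \Cref{thm:dcm-real} with error $\Theta(\beta)+2\eta_2$ to get a binary $g_t$; then update $f_{t+1}=\mathrm{clip}_{[-1,1]}(f_t+\lambda g_t)$ with $\lambda=\Theta(\beta)$. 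The potential $\bE_{x\sim\mu|_X}[(f_t(x)-s(x))^2]$ drops by $\Omega(\beta^2)$ as long as $\mae_{\mu,B}(f_t)$ exceeds $\beta+4\eta_2$ (the extra $2\eta_2$ beyond \Cref{thm:dcm-real} coming, as in \Cref{claim:B-transform}, from the convex-combination/rescaling that expresses a general residual-correlation through the thresholded benchmarks $B_{\eta_2}\sps\theta$), so after $T=O(\beta^{-2})$ rounds some $f_t$ is multiaccurate and an empirical test on the learner's output correlation identifies it. Running the $T$ rounds on independent sample batches with per-round failure probability $\delta/O(T)$ gives total sample complexity $O(\beta^{-2})\cdot O(m\beta^{-2}\log_+^2(m/\beta)+\beta^{-2}\log(1/(\eta_2\beta\delta)))$, which is the stated $\sma$ bound; the $m\le\fat_{\varepsilon/7}(S,B)$ conclusion follows from \Cref{claim:fat-alt}.

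For realizable multicalibration I would constrain $f_t$ to a grid of $O(1/\beta)$ levels and, in each round, scan the $O(1/\beta)$ level sets $L_v=\{x:f_t(x)=v\}$: draw conditional samples from $\mu\mid L_v$ by rejection (discarding negligible-mass levels), run \Cref{thm:dcm-real} on each --- the conditional distribution still meets the deterministic-label realizability assumption, since conditioning on an $x$-event preserves $\bE[y\mid x]=s(x)$ --- and update $f$ on a level with a calibration violation. The potential argument again caps the number of rounds at $O(\beta^{-2})$, but the per-round work and testing now scale with the $O(1/\beta)$ levels, contributing the extra $\beta^{-1}$ factor that produces the $\beta^{-5}$ term in the $\smc$ bound.

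I expect the main obstacle to be the per-round reduction: one must ensure that the modified source class used to encode the residual $s-f_t$ (or, on a level set, the shifted target $s-v$) --- a shift of a $\{-1,1,*\}$-valued hypothesis by a function valued in $[-1,1]$, re-binarized at threshold $0$ --- has mutual VC dimension at most $m=\sup_\theta\vc(S,B_{\eta_2}\sps\theta)$ with every thresholded benchmark class, while tracking the generalized product $\ptimes$ and undefined labels through the reformulation. Binariness of $S$ is exactly what makes this work: shift-and-rebinarize never flips a defined label, it only turns some into $*$, which cannot increase the VC dimension; this is the source of the improvement over \Cref{thm:ma-mc}. For the multicalibration bound the secondary obstacle is bookkeeping --- controlling sample reuse and failure probabilities across the $O(1/\beta)$ levels and $O(\beta^{-2})$ rounds, which is what pins down the exponent $\beta^{-5}$.
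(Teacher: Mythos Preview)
Your multiaccuracy argument matches the paper's: set $\eta_1=0$, note that binary $S$ makes the labels deterministic so \Cref{lm:ma-mc} allows $\wdcm$ in place of $\wcm$, run \Cref{alg:ma-mc} with $k=1$ (your ``twice with negated benchmarks'' is exactly the enumeration over $\bsigma\in\{-1,1\}$), and plug in \Cref{thm:dcm-real} instead of \Cref{thm:cm}. Your ``shift-and-rebinarize never flips a defined label'' is precisely why the VC parameter collapses to $m=\sup_\theta\vc(S,B_{\eta_2}\sps\theta)$; in the paper this is absorbed into \Cref{claim:ma-mc-vc-bound} at $\eta_1=0$.

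For multicalibration there is a real gap in your accounting. The paper does \emph{not} condition on each of the $k=\Theta(1/\beta)$ level sets and run $\dcm$ on rejection-sampled conditional data; it keeps the same global call as in the MA case---on the full transformed sample---but swaps the benchmark class to $B_{\bsigma,f}=\{x\mapsto\chi_{\bsigma}(f(x))b(x):b\in B\}$ for each of the $2^k$ sign patterns $\bsigma$. The crucial point is that the \emph{same} $n\sps 1$ samples serve all $2^k$ calls (only the one $\bsigma$ witnessing a violation needs its call to succeed), so $n\sps 1$ is independent of $k$; only the testing step $n\sps 2=O(\gamma^{-2}(k+\log(W/\delta)))$ picks up $k$, and $W\cdot n\sps 2$ is the sole source of the $\beta^{-5}$ term while the $m$-term stays $O(m\beta^{-4}\log_+^2(m/\beta))$. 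Your per-level scheme cannot match this: partitioning $N$ samples among the levels gives level $v$ about $p_vN$ points and per-level $\dcm$ error $\Theta(\sqrt{m/(p_vN)}\log_+(m/\beta))$, and Cauchy--Schwarz on $\sum_v\sqrt{p_v}\le\sqrt{k}$ forces $N=\Omega(m\beta^{-3}\log_+^2(m/\beta))$ per round, hence $\Omega(m\beta^{-5}\log_+^2(m/\beta))$ overall. Updating one level per round, as you describe, is worse still: pigeonhole on $\sum_v p_v|\bE[(f-s)b\mid L_v]|>\alpha$ yields only $\max_v p_v|\bE[(f-s)b\mid L_v]|^2\ge\alpha^2/k=\Theta(\beta^3)$ as the guaranteed potential drop, inflating the round count to $O(\beta^{-3})$. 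To hit the stated bound you need the paper's device of folding the level-wise signs into the benchmark class so that the correlation-maximization subroutine runs once on the undivided sample.
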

\begin{theorem}
\label{thm:ma-mc-lower}
Let $S,B\subseteq([-1,1]\cup\{*\})^X$ be real-valued hypothesis classes. For $\eta_1,\eta_2\in \bR_{> 0}$ and $\delta\in (0,1)$, defining $m:=\sup_{r_1,r_2:X\to \bR}\vc(S_{\eta_1}\sps {r_1}, B_{\eta_2}\sps {r_2})$, we have
\[
\smc(S,B,\eta_1\eta_2/3,\delta)  \ge \sma(S,B,\eta_1\eta_2/3,\delta) \ge \log(1 - \delta) + \Omega(m).
\]
For any $\varepsilon\in (0,1/2)$, choosing $\eta_1 = \eta_2  = \sqrt{3\varepsilon}$, we have $m = \fat_{\sqrt{3\varepsilon}}(S,B)$ and
\[
\smc(S,B,\varepsilon,\delta) \ge \sma(S,B,\varepsilon,\delta) \ge \log(1 - \delta) + \Omega(m).
\]
Moreover, the constant $3$ in the theorem can be replaced by any absolute constant $c > 2$.
\end{theorem}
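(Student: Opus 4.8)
The plan is to prove the lower bound directly by exhibiting a hard family of distributions supported on a mutually shattered set, reducing the multiaccuracy requirement to the problem of predicting a uniformly random sign pattern, and then applying a counting argument. The inequality $\smc(S,B,\cdot,\delta)\ge\sma(S,B,\cdot,\delta)$ is the easy part: I would first check that $\mce_{\mu,B}(f)\ge\mae_{\mu,B}(f)$ for every $f$. For a fixed $b\in B$, a sum of suprema dominates the supremum of the sum, so it suffices to note that for a fixed $\sigma\in\{-1,1\}$ the quantities $\bE[((f(x)-y)\one(f(x)=v)\sigma)\ptimes b(x)]$, summed over $v$ in the countable range $V$ of $f$, telescope to $\bE[((f(x)-y)\sigma)\ptimes b(x)]$ — pointwise in $x$ this holds both when $b(x)\ne *$ (ordinary multiplication over the partition $\{f(x)=v\}$) and when $b(x)=*$ (each summand is $-|f(x)-y|\one(f(x)=v)$, summing to $-|f(x)-y|$). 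Taking $\sup_b$ gives $\mce_{\mu,B}(f)\ge\mae_{\mu,B}(f)$, so every $\mc$-learner is an $\ma$-learner and $\smc\ge\sma$.

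For the main bound, fix arbitrary $r_1,r_2:X\to\bR$ and a set $X'=\{x_1,\dots,x_d\}$ shattered by both $S_{\eta_1}\sps{r_1}$ and $B_{\eta_2}\sps{r_2}$ (take $d=\vc(S_{\eta_1}\sps{r_1},B_{\eta_2}\sps{r_2})$, or $d$ arbitrarily large when this is infinite). For each $\xi\in\{-1,1\}^{X'}$ fix a source fat-shattering witness $s_\xi\in S$ (so $s_\xi(x)\ne *$ and $\xi(x)(s_\xi(x)-r_1(x))>\eta_1$ on $X'$), and let $\mu_\xi$ be the distribution that draws $x$ uniformly from $X'$ and then draws $y\sim\ber^*(s_\xi(x))$; this satisfies the realizability assumption \eqref{eq:cm-assumption} because $\bE[y|x]=s_\xi(x)$ with $s_\xi\in S$. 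The key deterministic estimate I would prove is that for every model $f:X\to[-1,1]$,
\[
\mae_{\mu_\xi,B}(f)\ \ge\ \frac{\eta_1\eta_2}{d}\,d_H\!\big(\sign(f|_{X'}-r_1|_{X'}),\,\xi\big).
\]
Step (i): put $\zeta_i:=\sign(f(x_i)-s_\xi(x_i))$ and take benchmark fat-shattering witnesses $b_\zeta,b_{-\zeta}\in B$ for $\zeta$ and $-\zeta$; these are non-$*$ on $X'$ and satisfy $\zeta_i(b_\zeta(x_i)-r_2(x_i))>\eta_2$ and $\zeta_i(b_{-\zeta}(x_i)-r_2(x_i))<-\eta_2$, so $b_\zeta(x_i)-b_{-\zeta}(x_i)$ has sign $\zeta_i$ and magnitude $>2\eta_2$. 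Averaging the two lower bounds $\mae_{\mu_\xi,B}(f)\ge\bE_{\mu_\xi}[(f(x)-y)b_\zeta(x)]$ (take $b=b_\zeta,\sigma=1$) and $\mae_{\mu_\xi,B}(f)\ge-\bE_{\mu_\xi}[(f(x)-y)b_{-\zeta}(x)]$ (take $b=b_{-\zeta},\sigma=-1$), and using $\bE[y|x_i]=s_\xi(x_i)$, gives $\mae_{\mu_\xi,B}(f)\ge\frac{\eta_2}{d}\sum_i|f(x_i)-s_\xi(x_i)|$. Step (ii): whenever $\sign(f(x_i)-r_1(x_i))\ne\xi_i$, the source margin forces $|f(x_i)-s_\xi(x_i)|>\eta_1$; summing yields the displayed inequality. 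The role of the two distinguishers $b_{\pm\zeta}$ is precisely to cancel the reference function $r_2$, and the outer $\sign(\cdot-r_1)$ handles $r_1$.

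Consequently, any learner that achieves $\mae_{\mu_\xi,B}(f)\le\eta_1\eta_2/c$ must output $g:=\sign(f-r_1)$ with $d_H(g|_{X'},\xi)\le\lfloor d/c\rfloor$, i.e.\ $\xi$ lies in a Hamming ball of radius $\lfloor d/c\rfloor$ about $g|_{X'}$. Now the counting argument: condition on the learner's internal randomness and on the sequence of which points of $X'$ were sampled (this sequence is independent of $\xi$ since $\mu_\xi|_X$ is uniform on $X'$); the remaining $n$ labels form a string in $\{-1,1\}^n$, so there are at most $2^n$ possible outputs $g|_{X'}\in\{-1,1\}^d$, and hence at most $2^n\sum_{k\le d/c}\binom dk\le 2^{\,n+dH_b(1/c)}$ sign patterns $\xi$ can be covered. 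Since $\xi$ is uniform on $\{-1,1\}^{X'}$,
\[
\Pr_{\xi,\text{data}}\big[d_H(g|_{X'},\xi)\le\lfloor d/c\rfloor\big]\ \le\ 2^{\,n-d(1-H_b(1/c))}.
\]
If some learner using $n$ samples solved the task for all $\xi$ with failure probability at most $\delta$, then averaging over $\xi$ would force $1-\delta\le 2^{\,n-d(1-H_b(1/c))}$, i.e.\ $n\ge d(1-H_b(1/c))+\log(1-\delta)$. Taking the supremum over $r_1,r_2$ (and over $d$) gives $\sma(S,B,\eta_1\eta_2/c,\delta)\ge(1-H_b(1/c))\,m+\log(1-\delta)=\log(1-\delta)+\Omega(m)$, since $H_b(1/c)<1$ for every $c>2$. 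Setting $c=3$ proves the first display; choosing $\eta_1=\eta_2=\sqrt{3\varepsilon}$ (so that $\eta_1\eta_2/3=\varepsilon$) and invoking \Cref{claim:fat-alt} to identify $m=\fat_{\sqrt{3\varepsilon}}(S,B)$ proves the second, and replacing $3$ by any $c>2$ is identical.

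The step I expect to be most delicate is the counting bound. It is essential that the labels be drawn from the two-point distribution $\ber^*(s_\xi(x))$ rather than revealed as exact real numbers: with deterministic real-valued labels $y=s_\xi(x)$ the observed value could encode far more than one bit per sample and the ``$2^n$ possible outputs'' bound would collapse, whereas binary labels carry at most $n$ bits regardless of how the adversary chooses $s_\xi$. One must also check that this noisy labeling still meets the realizability constraint — which it does, precisely because only the conditional mean $\bE[y|x]$, not the conditional law of $y$, must agree with a hypothesis in $S$. Beyond this, the remaining care is routine: handling partial benchmark hypotheses in Step (i) (the witnesses $b_{\pm\zeta}$ are $\ne *$ on the support), fixing a convention for $\sign(0)$, and the standard estimate $\sum_{k\le\lambda d}\binom dk\le 2^{dH_b(\lambda)}$ for $\lambda\le1/2$.
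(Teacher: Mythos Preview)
Your proposal is correct. The pointwise telescoping you use for $\mce\ge\mae$ is valid (the key observation being $0\ptimes b(x)=0$ whether or not $b(x)=*$), and your two–step estimate $\mae_{\mu_\xi,B}(f)\ge(\eta_1\eta_2/d)\,d_H(\sign(f-r_1)|_{X'},\xi)$ via the distinguisher pair $b_{\pm\zeta}$ is precisely the mechanism the paper also exploits. The counting step is likewise sound: after conditioning on the learner's randomness and the sampled indices, the crude bound $\Pr[\text{labels}=\ell\mid\xi]\le 1$ summed over the $2^n$ label strings and the Hamming–ball volume estimate yields the claimed $2^{\,n-d(1-H_b(1/c))}$.

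Where you diverge from the paper is in the packaging of the last step. Rather than arguing directly over all $2^d$ sign patterns, the paper (a) first applies a Gilbert--Varshamov–type lemma (\Cref{lm:gv}) to extract a subfamily $U\subseteq\{-1,1\}^{X'}$ of size $2^{\Omega(|X'|)}$ with pairwise normalized Hamming distance at least $1/3$, (b) shows that the corresponding source witnesses $\{s_u:u\in U\}$ form a $(2\eta_1\eta_2/3)$-packing of $\tilde S$ in the $\tilde B$-induced pseudometric, and (c) invokes the distribution-specific packing lower bound $\sma^{(\mu_X)}(S,B,\varepsilon,\delta)\ge\log((1-\delta)M_{\mu_X,B}(S,2\varepsilon))$ of \citet{hu2022metric} as a black box. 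Your route is more elementary and self-contained; the paper's route is more modular and, as a by-product, yields the packing-number inequality that it later reuses (see \Cref{remark:covering}) to bound $N_{\mu_X,B}(S,\varepsilon)$ in terms of the mutual fat-shattering dimension. In both cases the restriction $c>2$ arises for the same reason (Hamming balls of radius $d/c$ must have volume $2^{o(d)}$), and both hinge on the same trick of pairing two $B$-witnesses to cancel $r_2$ and then using the source margin to handle $r_1$.
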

\subsection{Upper Bound}
\label{sec:ma-mc-upper}
To prove our sample complexity upper bound (\Cref{thm:ma-mc,thm:ma-mc-binary}) for realizable multiaccuracy and multicalibration, we use ideas from \citep{hebert2018multicalibration} where a \emph{weak agnostic learner} for a hypothesis class $B$ is used to achieve multiaccuracy and multicalibration w.r.t.\ $B$. 
In our setting with an additional source class $S$, we use learners that solve \emph{weak correlation maximization} for multiple choices of $(S',B')$ in place of the weak agnostic learner:
\begin{definition}[Weak correlation maximization ($\wcm$)]
\label{def:wcm}
Given two real-valued hypothesis classes $S,B\subseteq([-1,1]\cup\{*\})^X$, parameters $\alpha,\gamma \ge0$, a failure probability bound $\delta \ge 0$, and a nonnegative integer $n$, we define $\wcm_n(S,B,\alpha,\gamma,\delta)$ to be $\learn_n(Z,F,\dstr,(F_\mu)_{\mu\in \dstr},\delta)$ with $Z,F,\dstr,F_\mu$ chosen as follows. We choose $Z = X\times[-1,1]$ and $F = \{-1,1\}^X$. The distribution class $\dstr$ consists of all distributions $\mu$ over $X\times [-1,1]$ satisfying the following properties:
\begin{align}
&\bullet\; \text{there exists $s\in S$ such that ${\Pr}_{x\sim\mu|_X}[s(x)\ne *] = 1$ and $\bE_{(x,y)\sim\mu}[y|x] = s(x)$};\label{eq:wcm-assumption}\\
&\bullet\; {\sup}_{b\in B}\bE_{(x,y)\sim\mu}[y\ptimes b(x)] \ge \alpha.\notag
\end{align}
The admissible set $F_\mu$ consists of all models $f:X\rightarrow \{-1,1\}$ satisfying
\[
\bE_{(x,y)\sim\mu}[yf(x)] \ge \gamma.
\]
\end{definition}
\begin{definition}[Weak deterministic-label correlation maximization ($\wdcm$)]
\label{def:wdcm}
We define \\$\wdcm_n(S,B,\alpha,\gamma,\delta)$ in the same way as we define $\wcm_n(S,B,\alpha,\gamma,\delta)$ in \Cref{def:wcm} except that we replace \eqref{eq:wcm-assumption} with the stronger assumption
\[
\text{there exists $s\in S$ such that ${\Pr}_{(x,y)\sim\mu}[s(x) = y] = 1$}.
\]
\end{definition}
In comparison, we sometimes refer to the learning task we study in \Cref{sec:cm} as \emph{strong} correlation maximization. Our learners in \Cref{sec:cm} can also be used to solve weak correlation maximization:
\begin{claim}
\label{claim:wcm-cm}
Let $S,B\subseteq([-1,1]\cup\{*\})^X$ be hypothesis classes. Assume $\alpha,\delta\in\bR_{\ge 0},\gamma\in [0,\alpha]$ and $n\in \bZ_{\ge 0}$, then
$\cm_n(S,B,\alpha-\gamma,\delta) \subseteq \wcm_n(S,B,\alpha,\gamma,\delta)$. The same relationship holds when we replace $\cm$ and $\wcm$ with $\dcm$ and $\wdcm$.
\end{claim}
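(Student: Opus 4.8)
The plan is to show that any learner $L$ solving strong correlation maximization for $(S,B)$ with error parameter $\alpha - \gamma$ also solves weak correlation maximization for $(S,B)$ with parameters $\alpha,\gamma$, using exactly the same number of samples $n$ and the same failure probability $\delta$. First I would observe that the two tasks share the same data space $Z = X \times [-1,1]$ and the same model space $F = \{-1,1\}^X$, so the only things to check are (i) that the distribution class $\dstr$ for $\wcm$ is contained in the distribution class $\dstr$ for $\cm$, and (ii) that whenever $\mu$ lies in the $\wcm$ distribution class, membership of the output model $f$ in the $\cm$ admissible set $F_\mu$ implies membership in the $\wcm$ admissible set.

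For (i), the distribution class for $\wcm_n(S,B,\alpha,\gamma,\delta)$ consists of those $\mu$ satisfying \eqref{eq:wcm-assumption} \emph{and} the extra condition $\sup_{b\in B}\bE_{(x,y)\sim\mu}[y\ptimes b(x)] \ge \alpha$, whereas the distribution class for $\cm_n(S,B,\alpha-\gamma,\delta)$ consists of \emph{all} $\mu$ satisfying \eqref{eq:cm-assumption}, which is identical to \eqref{eq:wcm-assumption}. Hence the $\wcm$ distribution class is a subset of the $\cm$ distribution class, so $L$'s guarantee applies to every admissible $\wcm$ distribution. For (ii), fix such a $\mu$ and suppose $f \in F_\mu$ in the $\cm_n(S,B,\alpha-\gamma,\delta)$ sense, i.e.
\[
\bE_{(x,y)\sim\mu}[yf(x)] \ge {\sup}_{b\in B}\bE_{(x,y)\sim\mu}[y\ptimes b(x)] - (\alpha - \gamma).
\]
Combining this with the defining inequality $\sup_{b\in B}\bE_{(x,y)\sim\mu}[y\ptimes b(x)] \ge \alpha$ of the $\wcm$ distribution class gives $\bE_{(x,y)\sim\mu}[yf(x)] \ge \alpha - (\alpha - \gamma) = \gamma$, which is exactly the condition for $f$ to lie in the $\wcm$ admissible set. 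Therefore $L \in \wcm_n(S,B,\alpha,\gamma,\delta)$, proving $\cm_n(S,B,\alpha-\gamma,\delta) \subseteq \wcm_n(S,B,\alpha,\gamma,\delta)$.

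The argument for $\dcm$ versus $\wdcm$ is identical, the only change being that \eqref{eq:wcm-assumption} is replaced by the deterministic-label assumption $\Pr_{(x,y)\sim\mu}[s(x) = y] = 1$ in both the $\dcm$ and $\wdcm$ distribution classes, so the inclusion of distribution classes still holds and the admissible-set implication goes through verbatim. There is no real obstacle here; the claim is essentially a bookkeeping exercise matching the definitions, and the only mild point to be careful about is that the hypothesis $\gamma \le \alpha$ is what makes $\alpha - \gamma \ge 0$ a legitimate error parameter for $\cm$, so that the containment statement is non-vacuous.
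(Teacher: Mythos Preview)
Your proposal is correct and matches the paper's approach: the paper simply states that the claim follows directly from the definitions of $\cm$, $\dcm$, $\wcm$, and $\wdcm$, and your argument is precisely the explicit unpacking of those definitions (checking the inclusion of distribution classes and the implication between admissible sets).
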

The claim follows directly from the definitions of $\cm$ (\Cref{def:cm}), $\dcm$ (\Cref{def:dcm}), $\wcm$ (\Cref{def:wcm}) and $\wdcm$ (\Cref{def:wdcm}).

We design a learner (\Cref{alg:ma-mc}) for multiaccuracy and multicalibration that invokes learners for weak correlation maximization. Instead of minimizing $\mae$ and $\mce$ directly, \Cref{alg:ma-mc} minimizes the \emph{$\blambda$-multicalibration error} which we define as follows. 
For a positive integer $k$, let us consider a partition of $[-1,1]$ into intervals $\blambda = (\Lambda_i)_{i = 1,\ldots,k}$. 
Following the definition of $\mce$ in \eqref{eq:mc-def},
we define the \emph{$\blambda$-multicalibration error} of a model $f:X\rightarrow[-1,1]$ w.r.t.\ a distribution $\mu$ over $X\times[-1,1]$ and a hypothesis class $B\subseteq([-1,1]\cup\{*\})^X$ to be
\begin{align}
\mce\sps \blambda_{\mu,B}(f) & := \sup_{b\in B}\sum_{i=1}^k\sup_{\sigma\in \{-1,1\}}\bE_{(x,y)\sim\mu}\Big[\Big((f(x) - y)\one(f(x)\in \Lambda_i)\sigma\Big)\ptimes b(x)\Big]\label{eq:mcee-1}\\
& = \sup_{b\in B}\sup_{\bsigma\in \{-1,1\}^k}\sum_{i=1}^k\bE_{(x,y)\sim\mu}\Big[\Big((f(x) - y)\one(f(x)\in \Lambda_i)\sigma_i\Big)\ptimes b(x)\Big],\notag
\end{align}
where the inner supremum in the last line is over $\bsigma := (\sigma_1,\ldots,\sigma_k)\in \{-1,1\}^k$.
It is clear that $\mae_{\mu,B}(f) = \mce_{\mu,B}\sps \blambda(f)$ when $k = 1$ and $\Lambda_1 = [-1,1]$. The following claim allows us to also relate $\mce$ to $\mce\sps \blambda$:
\begin{claim}
\label{claim:mce-mcee}
Let $\blambda = (\Lambda_i)_{i = 1,\ldots,k}$ be a partition of the interval $[-1,1]$ where $\Lambda_1 = [-1, -1 + 2/k]$ and $\Lambda_i = (-1 + (2i - 2)/k, -1 + 2i/k]$ for $i = 2,\ldots,k$. Given a model $f:X\to[-1,1]$, we define a model $f':X\to [-1,1]$ such that $f'(x) = -1 + (2i - 1)/k$ whenever $f(x)\in \Lambda_i$. Then,
\[
\mce_{\mu,B}(f') \le \mce_{\mu,B}\sps \blambda (f) + 1/k.
\]
\end{claim}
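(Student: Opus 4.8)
The plan is to rewrite $\mce_{\mu,B}(f')$ in a form that exposes the interval structure and then compare it term by term with $\mce_{\mu,B}\sps{\blambda}(f)$. Write $v_i := -1 + (2i-1)/k$ for the midpoint of $\Lambda_i$, so that by construction $f'(x) = v_i$ exactly when $f(x)\in\Lambda_i$; hence $\one(f'(x) = v_i) = \one(f(x)\in\Lambda_i)$. The range $V$ of $f'$ is a subset of $\{v_1,\dots,v_k\}$, and for any index $i$ with $v_i\notin V$ we have $\one(f(x)\in\Lambda_i)=0$ for every $x\in X$, so the corresponding summand in the definition of $\mce$ vanishes (note $0\ptimes b(x)=0$ regardless of whether $b(x)=*$). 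Therefore, for each fixed $b\in B$,
\[
\sum_{v\in V}\sup_{\sigma\in\{-1,1\}}\bE\big[((f'(x)-y)\one(f'(x)=v)\sigma)\ptimes b(x)\big] = \sum_{i=1}^k\sup_{\sigma\in\{-1,1\}}\bE\big[((f'(x)-y)\one(f(x)\in\Lambda_i)\sigma)\ptimes b(x)\big].
\]

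The key lemma I would isolate is a one-sided Lipschitz bound for the generalized product in its first argument: for all $a,a'\in\bR$ and every $u\in[-1,1]\cup\{*\}$, $a\ptimes u \le a'\ptimes u + |a-a'|$. When $u\in[-1,1]$ this is immediate from $|u|\le 1$; when $u=*$ it reduces to $|a'|-|a|\le|a-a'|$, i.e.\ the triangle inequality. This lemma is what replaces the naive (and invalid) attempt to write $f'(x)-y = (f(x)-y)+(f'(x)-f(x))$ and use linearity: linearity fails precisely because $a\ptimes *=-|a|$ is not linear in $a$, so handling this correctly is the main subtlety of the proof; everything else is bookkeeping.

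Finally I would apply the lemma with $a = (f'(x)-y)\one(f(x)\in\Lambda_i)\sigma$ and $a' = (f(x)-y)\one(f(x)\in\Lambda_i)\sigma$, using that $|a-a'| = |f'(x)-f(x)|\,\one(f(x)\in\Lambda_i)\le \tfrac1k\one(f(x)\in\Lambda_i)$ since $f'(x)$ is the midpoint of the width-$2/k$ interval containing $f(x)$. This gives the pointwise bound $a\ptimes b(x)\le a'\ptimes b(x) + \tfrac1k\one(f(x)\in\Lambda_i)$; taking expectations and then the supremum over $\sigma$ (the term $\tfrac1k\Pr_{(x,y)\sim\mu}[f(x)\in\Lambda_i]$ does not depend on $\sigma$, so it passes outside the supremum) yields
\[
\sup_{\sigma}\bE\big[((f'(x)-y)\one(f(x)\in\Lambda_i)\sigma)\ptimes b(x)\big] \le \sup_{\sigma}\bE\big[((f(x)-y)\one(f(x)\in\Lambda_i)\sigma)\ptimes b(x)\big] + \tfrac1k\Pr_{(x,y)\sim\mu}[f(x)\in\Lambda_i].
\]
Summing over $i=1,\dots,k$ and using $\sum_i \Pr_{(x,y)\sim\mu}[f(x)\in\Lambda_i]=1$ (the $\Lambda_i$ partition $[-1,1]\ni f(x)$), then taking $\sup_{b\in B}$ (again the additive $1/k$ is uniform in $b$), the left-hand side becomes $\mce_{\mu,B}(f')$ by the displayed identity from the first paragraph and the right-hand side becomes $\mce_{\mu,B}\sps{\blambda}(f) + 1/k$, which is the claim.
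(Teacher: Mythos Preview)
Your proof is correct and follows essentially the same approach as the paper's: both hinge on the pointwise one-sided Lipschitz bound $a\ptimes u \le a'\ptimes u + |a-a'|$ (which the paper uses inline without isolating it as a lemma), applied with $a=(f'(x)-y)\one(f(x)\in\Lambda_i)\sigma$ and $a'=(f(x)-y)\one(f(x)\in\Lambda_i)\sigma$, then summed over $i$ and optimized over $\sigma$ and $b$. You are slightly more explicit than the paper in justifying that the sum over the range $V$ of $f'$ can be extended to all $i=1,\dots,k$ via $0\ptimes b(x)=0$, but otherwise the arguments coincide.
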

\begin{proof}
By the definition of $\mce$ in \eqref{eq:mc-def},
\begin{equation}
\label{eq:mce-mcee-1}
\mce_{\mu,B}(f') = \sup_{b\in B}\sum_{i=1}^k\sup_{\sigma\in \{-1,1\}}\bE_{(x,y)\sim\mu}\Big[\Big((f'(x) - y)\one(f'(x) = -1 + (2i - 1)/k)\sigma\Big)\ptimes b(x)\Big].
\end{equation}
For every $b\in B,i\in \{1,\ldots,k\},\sigma\in\{-1,1\}$ and $(x,y)\in X\times [-1,1]$,
\begin{align*}
& \Big((f'(x) - y)\one(f'(x) = -1 + (2i - 1)/k)\sigma\Big)\ptimes b(x)\\
= {} & \Big((f'(x) - y)\one(f(x)\in \Lambda_i)\sigma\Big)\ptimes b(x)\\
\le {} & \Big((f(x) - y)\one(f(x)\in \Lambda_i)\sigma\Big)\ptimes b(x) + |f(x) - f'(x)|\one(f(x)\in \Lambda_i)\\
\le {} & \Big((f(x) - y)\one(f(x)\in \Lambda_i)\sigma\Big)\ptimes b(x) + \frac 1k\one(f(x)\in\Lambda_i).
\end{align*}
Plugging this into \eqref{eq:mce-mcee-1},
\begin{align*}
\mce_{\mu,B}(f') & \le \sup_{b\in B}\sum_{i=1}^k\sup_{\sigma\in \{-1,1\}}\bE_{(x,y)\sim\mu}\Big[\Big((f(x) - y)\one(f(x)\in \Lambda_i)\sigma\Big)\ptimes b(x)\Big] + \frac 1k\\
& = \mce_{\mu,B}\sps \blambda(f) + 1/k.\qedhere
\end{align*}
\end{proof}
With $\mae$ and $\mce$ both related to $\mce\sps \blambda$, we can focus on showing that \Cref{alg:ma-mc} achieves a low $\mce\sps \blambda$. The learners in \citep{hebert2018multicalibration} also aim for a low error that is similar to $\mce\sps \blambda$, but the sum over $i = 1,\ldots,k$ in \eqref{eq:mcee-1} is replaced by a supremum and thus a factor of $k$ would be lost in \Cref{claim:mce-mcee}. When multiaccuracy is our end goal, there is no difference because we choose $k = 1$. For multicalibration, we choose to define $\mce\sps \blambda$ as in \eqref{eq:mcee-1} to tradeoff time efficiency for sample efficiency: we use \eqref{eq:mcee-1} to achieve a better sample complexity upper bound in \Cref{thm:ma-mc} (in terms of the dependency on $\beta$ and $\varepsilon$), and as a consequence our learner (\Cref{alg:ma-mc}) has running time exponential in $k$ because of an enumeration procedure over $\bsigma\in\{-1,1\}^k$ (we choose $k = \Theta(1/\beta)$ when proving \Cref{thm:ma-mc}).

To analyze \Cref{alg:ma-mc}, we first reformulate the definition of $\mce\sps \blambda$ in a more convenient way for the analysis.
We fix a partition $\blambda = (\Lambda_i)_{i=1,\ldots,k}$ of the interval $[-1,1]$.
Given $\bsigma := (\sigma_1,\ldots,\sigma_k)\in \{-1,1\}^k$ and $u\in [-1,1]$, we define $\chi_\bsigma:[-1,1]\rightarrow\{-1,1\}$ such that $\chi_\bsigma(u) = \sigma_j$ when $u\in \Lambda_j$.
This allows us to rewrite $\mce\sps \blambda$ as follows:
\begin{align*}
\mce_{\mu,B}\sps \blambda(f) & = \sup_{b\in B}\sup_{\bsigma\in \{-1,1\}^k}\bE\Big[\Big((f(x) - y)\chi_\bsigma(f(x))\Big)\ptimes b(x)\Big]\\
& = \sup_{b\in B}\sup_{\bsigma\in \{-1,1\}^k}\bE\Big[(f(x) - y)\ptimes \Big(\chi_\bsigma(f(x))b(x)\Big)\Big],
\end{align*}
where we use the convention that $\chi_\bsigma(f(x))b(x) = *$ whenever $b(x) = *$. 

Let $B_{\bsigma,f}\subseteq([-1,1]\cup\{*\})^X$ denote the class of all hypotheses $h:X\to[-1,1]\cup\{*\}$ such that there exists $b\in B$ satisfying $h(x) = \chi_{\bsigma}(f(x))b(x)$ for every $x\in X$.
We can now simplify the definition of $\mce\sps \blambda$ further:
\begin{equation}
\label{eq:mcee-simplified}
\mce_{\mu,B}\sps \blambda(f) = {\sup}_{\bsigma\in\{-1,1\}^k}{\sup}_{b\in B_{\bsigma,f}} \bE\Big[(f(x) - y)\ptimes b(x)\Big].
\end{equation}

Given a hypothesis class $S\subseteq([-1,1]\cup\{*\})^X$ and a total function $f:X\rightarrow \bR$, we define $(S-f)/2$ to be the class consisting of all hypotheses $h:X\to [-1,1]\cup\{*\}$ such that there exists $s\in S$ satisfying
\[
h(x) = \begin{cases}
*, & \text{if } s(x) = *,\\
(s(x) - f(x))/2, & \text{otherwise},
\end{cases}
\quad \text{for every } x\in X.
\]
\Cref{alg:ma-mc} invokes learners $L$ solving weak correlation maximization for $((S - f)/2, B_{\bsigma,f})$ for various $f:X\to[-1,1]$. To bound the number of data points needed by $L$, we prove the following claim controlling the mutual VC dimension of binary hypothesis classes generated from $(S - f)/2$ and $B_{\bsigma,f}$.
\begin{restatable}{claim}{claimmamcvcbound}
\label{claim:ma-mc-vc-bound}
Define $\tilde S := ((S - f)/2)$ and $\tilde B := B_{\bsigma,f}$ as above. Then for $\eta_1,\eta_2\in\bR_{\ge 0}$ and $r_1:X\to \bR$,
\[
{\sup}_{\theta\in \bR}\vc(\tilde S_{\eta_1}\sps {r_1},\tilde B_{\eta_2}\sps {\theta}) \le 2{\sup}_{\theta'\in\bR}\vc(S_{2\eta_1}\sps {2r_1 + f},B_{\eta_2}\sps {\theta'}).
\]
\end{restatable}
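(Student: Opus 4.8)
The plan is to unwind the definitions of the two binarizations $\tilde S_{\eta_1}\sps{r_1}$ and $\tilde B_{\eta_2}\sps{\theta}$ and then reduce any set shattered by both of them to two sets, each of which is shattered by $S_{2\eta_1}\sps{2r_1+f}$ together with $B_{\eta_2}\sps{\theta'}$ for a suitable $\theta'$; the factor $2$ is the cost of this split.

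\emph{Identifying the source side.} For $s\in S$ and $h := (s-f)/2\in\tilde S$, I would check directly from the definitions of $h_\eta\sps r$ and $(S-f)/2$ that $h_{\eta_1}\sps{r_1}(x)=1$ iff $s(x)\ne *$ and $(s(x)-f(x))/2 > r_1(x)+\eta_1$, i.e.\ iff $s(x) > (2r_1+f)(x)+2\eta_1$; this is exactly the condition for $s_{2\eta_1}\sps{2r_1+f}(x)=1$, and the $-1$ and $*$ cases match in the same way. Hence $\tilde S_{\eta_1}\sps{r_1} = S_{2\eta_1}\sps{2r_1+f}$ as binary hypothesis classes, so the source side incurs no loss.

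\emph{Describing the benchmark side.} Put $\sigma(x) := \chi_{\bsigma}(f(x))\in\{-1,1\}$, a fixed total function once $f$ and $\bsigma$ are fixed, and let $X_\pm := \{x : \sigma(x)=\pm 1\}$. For $b\in B$ and the corresponding $h\in\tilde B = B_{\bsigma,f}$ with $h(x)=\sigma(x)b(x)$ (and $h(x)=*$ iff $b(x)=*$), a short case analysis gives: on $X_+$ we have $h_{\eta_2}\sps{\theta} = b_{\eta_2}\sps{\theta}$, while on $X_-$ we have $h_{\eta_2}\sps{\theta}(x) = -\,b_{\eta_2}\sps{-\theta}(x)$, since negating $b$ turns the thresholds $\theta\pm\eta_2$ into $-\theta\mp\eta_2$, which is precisely the reference $-\theta$ with the label flipped.

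\emph{Splitting a shattered set.} Fix $\theta\in\bR$ and let $X'\subseteq X$ be shattered by both $\tilde S_{\eta_1}\sps{r_1} = S_{2\eta_1}\sps{2r_1+f}$ and $\tilde B_{\eta_2}\sps{\theta}$; write $X'_\pm := X'\cap X_\pm$. Each of $X'_+,X'_-$ is a subset of $X'$, hence shattered by $S_{2\eta_1}\sps{2r_1+f}$. For $X'_+$: given $\xi:X'_+\to\{-1,1\}$, extend it arbitrarily to $X'$, obtain $b\in B$ whose twisted binarization realizes the extension, and note (by the previous step) that on $X'_+$ this binarization coincides with $b_{\eta_2}\sps{\theta}$, so $b_{\eta_2}\sps{\theta}$ realizes $\xi$; thus $X'_+$ is shattered by $B_{\eta_2}\sps{\theta}$. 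Symmetrically, for $X'_-$: given $\xi:X'_-\to\{-1,1\}$, extend $-\xi$ to $X'$, obtain $b\in B$ realizing it, and note that on $X'_-$ the twisted binarization equals $-b_{\eta_2}\sps{-\theta}$, so $b_{\eta_2}\sps{-\theta}$ realizes $\xi$; thus $X'_-$ is shattered by $B_{\eta_2}\sps{-\theta}$. Therefore $|X'| = |X'_+| + |X'_-| \le \vc(S_{2\eta_1}\sps{2r_1+f}, B_{\eta_2}\sps{\theta}) + \vc(S_{2\eta_1}\sps{2r_1+f}, B_{\eta_2}\sps{-\theta}) \le 2\sup_{\theta'\in\bR}\vc(S_{2\eta_1}\sps{2r_1+f}, B_{\eta_2}\sps{\theta'})$. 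Taking the supremum over all such $X'$ and over $\theta$ yields the claim.

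\emph{Expected obstacle.} I do not expect a genuine difficulty here; the only thing requiring care is the bookkeeping of reference thresholds in the benchmark-side analysis (the sign flip on $X_-$ forces $\theta\mapsto -\theta$), and the observation that the factor $2$ is exactly the price of partitioning $X'$ according to the two values of $\chi_{\bsigma}(f(x))$, since the two pieces are controlled by $B$ with two different reference constants. Everything else is a direct unwinding of the definitions of $h_\eta\sps r$, $(S-f)/2$, and $B_{\bsigma,f}$, together with the elementary fact that a subset of a shattered set is shattered.
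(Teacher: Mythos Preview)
Your proposal is correct and follows essentially the same route as the paper: identify $\tilde S_{\eta_1}\sps{r_1}$ with $S_{2\eta_1}\sps{2r_1+f}$, split any mutually shattered set $X'$ according to the sign of $\chi_{\bsigma}(f(x))$, and show that the two pieces are shattered by $B_{\eta_2}\sps{\theta}$ and $B_{\eta_2}\sps{-\theta}$ respectively. Your explicit formula $h_{\eta_2}\sps{\theta}(x) = -\,b_{\eta_2}\sps{-\theta}(x)$ on $X_-$ is a slightly cleaner way to state what the paper derives by the same case analysis.
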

We defer the relatively straightforward proof of the claim to \Cref{sec:proof-ma-mc-vc-bound}. 

Our learner (\Cref{alg:ma-mc}) uses the following definition: for any $u\in \bR$, we define $\proj_{[-1,1]}(u)$ to be the projection of $u$ into the interval $[-1,1]$, i.e., $\proj_{[-1,1]}(u) = \max\{-1, \min\{1, u\}\}$.
The following lemma shows that \Cref{alg:ma-mc} indeed achieves a low $\mce\sps \blambda$ with large probability when taking sufficiently many input data points:
\begin{lemma}
\label{lm:ma-mc}
Let $S,B\subseteq([-1,1]\cup\{*\})^X$ be real-valued hypothesis classes. 
Suppose the parameters of \Cref{alg:ma-mc} satisfy $\alpha,\gamma\in \bR_{>0},\delta\in (0,1/2), W > 4/\gamma^2$, 
\begin{align}
 n\sps 1 & \ge \sup_{f:X\rightarrow[-1,1]}\sup_{\bsigma\in \{-1,1\}^k}\swcm((S - f)/2, B_{\bsigma,f},\alpha/2, \gamma/2, \delta/(2W)),\label{eq:ma-mc-assumption-n-1}\\
 n\sps 2 & \ge C\gamma^{-2}(k + \log(W/\delta))\notag
\end{align}
for a sufficiently absolute large absolute constant $C>0$. 
Also, suppose the input data points to \Cref{alg:ma-mc} are drawn i.i.d.\ from a distribution $\mu$ over $X\times[-1,1]$ satisfying ${\Pr}_{x\sim\mu|_X}[s(x)\ne *] = 1$ and $\bE_{(x,y)\sim\mu}[y|x] = s(x)$ for some $s\in S$.
Then with probability at least $1-\delta$, the output model $f$ of \Cref{alg:ma-mc} satisfies
\[
\mce_{\mu,B}\sps \blambda(f) \le \alpha.
\]
If we additionally assume that $S$ is binary, i.e., $S\subseteq\{-1,1,*\}^X$, then we can replace $\wcm$ with $\wdcm$ in \eqref{eq:ma-mc-assumption-n-1} and in \Cref{line:ma-mc-invoke} of \Cref{alg:ma-mc}, and the lemma still holds.
\end{lemma}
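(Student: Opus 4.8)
The plan is to analyze \Cref{alg:ma-mc} through the multicalibration ``boosting'' argument of \citet{hebert2018multicalibration}, with the weak agnostic learner replaced by the weak (deterministic‑label) correlation maximization learners that the algorithm invokes, and with the squared‑loss potential measured against the source hypothesis $s$. Throughout, I fix a distribution $\mu$ as in the hypothesis, with source $s\in S$ satisfying ${\Pr}_{x\sim\mu|_X}[s(x)\ne *]=1$ and $\bE_{(x,y)\sim\mu}[y\mid x]=s(x)$, and track the sequence of models $f_0,f_1,\dots$ that the algorithm maintains (with $f_0\equiv 0$, so the potential $\Phi_t:=\bE_{x\sim\mu|_X}[(f_t(x)-s(x))^2]$ satisfies $0\le\Phi_t\le 1$; in general $\Phi_t\le 4$).

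The first ingredient is a descent‑per‑update estimate. Whenever, at round $t$ and for some sign pattern $\bsigma$, the algorithm invokes the weak learner for $\big((S-f_t)/2,\,B_{\bsigma,f_t}\big)$ on the data relabeled by $y\mapsto (y-f_t(x))/2$ and this learner succeeds, it returns $g:X\to\{-1,1\}$ with $\bE_{(x,y)\sim\mu}[((y-f_t(x))/2)\,g(x)]\ge\gamma/2$; since $g$ is $x$‑measurable and $\bE[y\mid x]=s(x)$, this gives $\bE[g(x)(s(x)-f_t(x))]\ge\gamma$. Because $s(x)\in[-1,1]$ and $\proj_{[-1,1]}$ is non‑expansive toward $s(x)$, the update $f_{t+1}:=\proj_{[-1,1]}(f_t+\lambda g)$ with a suitable step size $\lambda=\Theta(\gamma)$ satisfies
\[
\Phi_{t+1}\le\bE\big[(f_t(x)+\lambda g(x)-s(x))^2\big]=\Phi_t-2\lambda\,\bE[g(x)(s(x)-f_t(x))]+\lambda^2\,\bE[g(x)^2]\le\Phi_t-\Omega(\gamma^2).
\]
Since $\Phi_t\ge 0$ and $\Phi_0\le 1$, the number of updates is $O(1/\gamma^2)$, which is at most $W$ by the hypothesis $W>4/\gamma^2$; in particular the algorithm halts within its round budget.

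The second ingredient is to show that at the halting round $t$ we have $\mce_{\mu,B}\sps\blambda(f_t)\le\alpha$. By \eqref{eq:mcee-simplified}, the invariance of $\mce_{\mu,B}\sps\blambda$ under $\bsigma\mapsto-\bsigma$ (which follows from $(f-y)\ptimes(-v)=(y-f)\ptimes v$ for every $v\in[-1,1]\cup\{*\}$), and the identity $((y-f)/2)\ptimes v=\tfrac12\big((y-f)\ptimes v\big)$, we get that $\mce_{\mu,B}\sps\blambda(f_t)>\alpha$ implies that for some $\bsigma$ the precondition ${\sup}_{b\in B_{\bsigma,f_t}}\bE[((y-f_t(x))/2)\ptimes b(x)]\ge\alpha/2$ of the weak learner $\wcm((S-f_t)/2,B_{\bsigma,f_t},\alpha/2,\gamma/2,\delta/(2W))$ holds, so on its success event it returns a direction of true correlation $\ge\gamma/2$ (on the relabeled data) with the source. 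Since $\mce\sps\blambda$ cannot be estimated directly (there is no uniform convergence over the possibly infinite $B$; see \Cref{remark:covering}), the algorithm instead uses the held‑out sample $\Psi\sps 2$ to estimate the correlation of each candidate direction and updates iff some estimate exceeds a threshold $\Theta(\gamma)$; a Hoeffding bound with a union bound over the $\le W$ rounds and the $\le 2^k$ sign patterns $\bsigma$ — which is exactly why $n\sps 2\ge C\gamma^{-2}(k+\log(W/\delta))$ suffices — shows that with probability $\ge 1-\delta/2$ all these estimates are within $\gamma/4$ of the truth. On this event every update is against a direction of genuine correlation $\Omega(\gamma)$ (so the first ingredient applies), and at the halting round all candidate estimates are below threshold, which forces the weak learner's precondition to fail for every $\bsigma$, i.e. $\mce_{\mu,B}\sps\blambda(f_t)\le\alpha$. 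Finally, each of the $\le W$ weak‑learner invocations fails with probability $\le\delta/(2W)$, so a union bound over these and the estimation event gives total failure probability $\le\delta$. For the binary‑$S$ variant, $S\subseteq\{-1,1,*\}^X$ together with $\bE[y\mid x]=s(x)\in\{-1,1\}$ and $y\in[-1,1]$ forces $y=s(x)$ almost surely, so the relabeled distribution is deterministic‑label with source $(s-f_t)/2\in(S-f_t)/2$ and lies in the distribution class of $\wdcm$; hence every $\wcm$ invocation may be replaced by a $\wdcm$ invocation with an identical guarantee.

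The main obstacle is this second ingredient: $\mce\sps\blambda$ is itself not estimable from samples, and the weak learner's guarantee is conditional on a precondition the algorithm cannot verify, so the argument must be arranged so that ``no empirical violation is found for any $\bsigma$'' rigorously implies ``$\mce_{\mu,B}\sps\blambda(f_t)$ is genuinely small,'' while keeping the union bound over the rounds and the $2^k$ sign patterns (hence the per‑call failure budget $\delta/(2W)$ and the sample size $n\sps 2$) under control. A further point requiring care is that $\Psi\sps 1$ is reused across all rounds, so one must check that the data fed to the weak learner at round $t$ still meets the i.i.d.\ hypothesis of \Cref{def:wcm} after conditioning on the earlier rounds; by contrast the potential computation and the choice of step size in the first ingredient are entirely routine.
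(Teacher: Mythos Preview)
Your overall plan matches the paper's proof: a squared-loss potential argument bounding the number of updates by $O(1/\gamma^2)<W$, the weak correlation-maximization guarantee producing a direction of true correlation $\ge\gamma$ whenever $\mce_{\mu,B}^{(\blambda)}(f)>\alpha$, and a Hoeffding/union bound on the validation sample (over $2^k$ candidates and $W$ rounds, hence the $n^{(2)}\ge C\gamma^{-2}(k+\log(W/\delta))$ requirement) to certify both that every accepted update is genuine and that halting implies the precondition fails for all $\bsigma$. Your use of $\bE[(f-s)^2]$ rather than the paper's $\bE[(f-y)^2]$ is immaterial since they differ by the constant $\bE[\mathrm{Var}(y\mid x)]$.

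There is, however, one misreading of \Cref{alg:ma-mc} that you flag as an unresolved obstacle. The algorithm partitions its $n=W(n^{(1)}+n^{(2)})$ input points into $2W$ \emph{fresh} batches $\Psi^{(j,1)},\Psi^{(j,2)}$, one pair per iteration $j=1,\dots,W$; there is no single $\Psi^{(1)}$ reused across rounds. Consequently the adaptivity concern you raise (``one must check that the data fed to the weak learner at round $t$ still meets the i.i.d.\ hypothesis \dots\ after conditioning on the earlier rounds'') simply does not arise: at iteration $j$, the current model $f$---and hence the particular $\bsigma$ for which the weak learner's precondition holds, if any---is a function of $\Psi^{(1,1)},\dots,\Psi^{(j-1,2)}$, all independent of the fresh $\Psi^{(j,1)}$ and $\Psi^{(j,2)}$. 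This is exactly what makes the per-round bounds $\Pr[E_j^{(1)}],\Pr[E_j^{(2)}]\le\delta/(2W)$ legitimate (only the single weak-learner call corresponding to that fixed $\bsigma$ needs to succeed, not all $2^k$ of them) and the union bound over $j$ straightforward. Once you correct this reading, the obstacle you describe vanishes and your sketch is complete and essentially identical to the paper's argument.
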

\LinesNumbered
\begin{algorithm}
\caption{Multiaccuracy/multicalibration for $(S,B)$}
\label{alg:ma-mc}
\SetKwInOut{Parameters}{Parameters}
\SetKwInOut{Input}{Input}
\SetKwInOut{Output}{Output}
\Parameters{$S,B\subseteq([-1,1]\cup\{*\})^X$, $n,n\sps 1,n\sps 2, W\in \bZ_{> 0}$ satisfying $n = W(n\sps 1 + n\sps 2)$, $\alpha,\gamma,\delta \in \bR_{\ge 0}$, a partition $\blambda = (\Lambda_1,\ldots,\Lambda_k)$ of $[-1,1]$.}
\Input{data points $(x_1,y_1),\ldots,(x_n,y_n)\in X\times[-1,1]$.}
\Output{model $f:X\rightarrow [-1,1]$.}
Partition the input data points into $2W$ datasets: $\Psi\sps {j,1} = \left(\left(x_i\sps {j,1}, y_i\sps {j,1}\right)\right)_{i=1}^{n\sps 1}$ and $\Psi \sps {j,2} = \left(\left(x_i\sps {j,2}, y_i\sps {j,2}\right)\right)_{i=1}^{n\sps 2}$ for $j = 1,\ldots,W$\;
Initialize $f:X\rightarrow[-1,1]$ to be the constant zero function: $f(x) = 0$ for every $x\in X$\;
\For{$j = 1,\ldots,W$\label{line:ma-mc-for-1}}
{
Define $\tilde y_i = (y_i\sps{j,1} - f(x_i\sps{j,1}))/2$ for $i = 1,\ldots, n\sps 1$ and define $\Psi' = ((x_i\sps{j,1},\tilde y_i))_{i=1}^{n\sps 1}$\label{line:ma-mc-1}\;
\For{$\bsigma\in \{-1,1\}^k$}
{
Invoke learner $L\in \wcm_{n\sps 1}((S - f)/2,B_{\bsigma,f},\alpha/2,\gamma/2, \delta/(2W))$ on $\Psi'$ to obtain $f_\bsigma$\label{line:ma-mc-invoke}\;
}
Choose $f'$ from $\{f_\bsigma:\bsigma\in \{-1,1\}^k\}$ that maximizes $Q_{f'}:=\frac{1}{n\sps 2}\sum_{i=1}^{n\sps 2}(y_i\sps {j,2}  - f(x_i\sps{j,2}))f'(x_i\sps {j,2})$\label{line:ma-mc-4}\;
\eIf{$Q_{f'} \ge 3\gamma/4$\label{line:ma-mc-if}}
{Update $f(x)$ to $\proj_{[-1,1]}(f(x) + \gamma f'(x)/2)$ for every $x\in X$\label{line:ma-mc-2}\;}
{\Break\label{line:ma-mc-3}\;}
}\label{line:ma-mc-for-2}
\Return $f$\label{line:ma-mc-return}\;
\end{algorithm}
We first prove \Cref{thm:ma-mc,thm:ma-mc-binary} using \Cref{lm:ma-mc} and then prove \Cref{lm:ma-mc} afterwards.
\begin{proof}[Proof of \Cref{thm:ma-mc}]
We choose $\gamma = \beta/3,k = \lceil \frac{3}{\beta}\rceil, \alpha = 2\beta/3 + 2\eta_1 + 4\eta_2$ and $W = \lfloor 4/\gamma^2\rfloor + 1 = O(1/\beta^2)$ in \Cref{lm:ma-mc}. 
We also choose $\blambda$ as in \Cref{claim:mce-mcee}.
Define $\eta_1' = \eta_1/2$. For $\bsigma\in \{-1,1\}^k$ and $f:X\to [-1,1]$, define $\tilde S = ((S - f)/2)$ and $\tilde B = B_{\bsigma,f}$. By \Cref{claim:ma-mc-vc-bound}, for every $\theta\in \bR$, $\vc(\tilde S_{\eta_1'}\sps 0, \tilde B_{\eta_2}\sps \theta)\le 2m$.
Therefore,
\begin{align}
& \swcm((S - f)/2, B_{\bsigma,f},\alpha/2, \gamma/2, \delta/(2W))\nonumber \\
\le {} & \scm((S - f)/2, B_{\bsigma,f},(\alpha - \gamma)/2, \delta/(2W)) \tag{by \Cref{claim:wcm-cm}}\\
= {} & \scm((S - f)/2, B_{\bsigma,f},\beta/6 + 2\eta_1' + 2\eta_2, \delta/(2W))\nonumber \\
\le {} & O\left(
\frac{m}{\beta^4}\log^2_+\left(\frac{m}{\beta}\right)\log\left(\frac 1{\eta_1}\right) + \frac 1{\beta^4}\log\left(\frac 1{\eta_1}\right)\log\left(\frac 1{\beta\delta}\right) + \frac{1}{\beta^2}\log\left(\frac 1{\eta_2}\right) \right), \label{eq:ma-mc-wcm-1}
\end{align}
where the last inequality is by \Cref{thm:cm}.
This means that the requirement of \Cref{lm:ma-mc} can be satisfied by
\begin{align*}
n\sps 1 & \le O\left(
\frac{m}{\beta^4}\log^2_+\left(\frac{m}{\beta}\right)\log\left(\frac 1{\eta_1}\right) + \frac 1{\beta^4}\log\left(\frac 1{\eta_1}\right)\log\left(\frac 1{\beta\delta}\right) + \frac{1}{\beta^2}\log\left(\frac 1{\eta_2}\right) \right), \text{ and}\\
n\sps 2 & \le O\left(\frac 1{\beta^3} + \frac 1{\beta^2}\log\left(\frac 1{\beta\delta}\right)\right).
\end{align*}
By \Cref{claim:mce-mcee}, the guarantee $\mce_{\mu,B}\sps \blambda(f)\le \alpha$ of \Cref{lm:ma-mc} implies that the output model $f$ of \Cref{alg:ma-mc} can be easily transformed to $f'$ satisfying $\mae_{\mu,B}(f')\le \mce_{\mu,B}(f') \le \alpha + 1/k \le \beta + 2\eta_1 + 4\eta_2$. 
Since \Cref{alg:ma-mc} takes $n = W(n\sps 1 + n\sps 2)$ data points,
we have
\begin{align*}
& \smc(S,B,\beta + 2\eta_1 + 4\eta_2,\delta)\\
\le {} & W(n\sps 1 + n\sps 2)\\
\le {} & O\left(
\frac{m}{\beta^6}\log^2_+\left(\frac{m}{\beta}\right)\log\left(\frac 1{\eta_1}\right) + \frac 1{\beta^6}\log\left(\frac 1{\eta_1}\right)\log\left(\frac 1{\beta\delta}\right) + \frac{1}{\beta^4}\log\left(\frac 1{\eta_2}\right)   \right).
\qedhere
\end{align*}
\end{proof}
\begin{proof}[Proof of \Cref{thm:ma-mc-binary}]
We first prove the upper bound on $\smc$. We set $\eta_1 = 0$ and define $\gamma,k,\alpha,W$ in the same way as in the proof of \Cref{thm:ma-mc}. We apply \Cref{thm:dcm-real} to get
\begin{align}
& \swdcm((S - f)/2, B_{\bsigma,f},\alpha/2, \gamma/2, \delta/(2W)) \nonumber \\
\le{} & O\left(\frac{m}{\beta^2}\log^2_+\left(\frac{m}{\beta}\right)  + \frac{1}{\beta^2}\log\left(\frac 1{\eta_2\beta\delta}\right)\right).\label{eq:ma-mc-binary-1}
\end{align}
By our assumption that $S$ is binary, \Cref{lm:ma-mc} allows us to replace $\wcm$ with $\wdcm$ in \eqref{eq:ma-mc-assumption-n-1}. We can thus use \eqref{eq:ma-mc-binary-1} in place of \eqref{eq:ma-mc-wcm-1} and follow the rest of the proof of \Cref{thm:ma-mc}. We omit the details. The upper bound on $\sma$ can be proved similarly, except that we set $k = 1$ instead.
\end{proof}

Before we prove \Cref{lm:ma-mc}, we define a few events based on the execution of \Cref{alg:ma-mc}. Given a positive integer $j$, we use $E_j\sps 1$ to denote the bad event in which both of the following occur:
\begin{enumerate}
\item Before \Cref{line:ma-mc-1} is executed in the $j$-th iteration of the outer \for loop (Lines~\ref{line:ma-mc-for-1}-\ref{line:ma-mc-for-2}), there exists $b\in \bigcup_{\bsigma\in\{-1,1\}^k}B_{\bsigma,f}$ such that
\begin{equation}
\label{eq:bad-1}
\bE_{(x,y)\sim\mu}[(y - f(x))\ptimes b(x)] > \alpha.
\end{equation}
\item When \Cref{line:ma-mc-4} is executed in the $j$-th iteration of the outer \for loop, for all $f'\in \{f_\bsigma:\bsigma\in \{-1,1\}^k\}$ it holds that
\begin{equation}
\label{eq:bad-2}
\bE_{(x,y)\sim\mu}[(y - f(x))f'(x)] < \gamma.
\end{equation}
\end{enumerate}
We use $E_j\sps 2$ to denote the bad event that before Line~\ref{line:ma-mc-4} is executed in the $j$-th iteration of the outer \for loop, there exists $f'\in \{f_\bsigma:\bsigma\in \{-1,1\}^k\}$ such that
\[
|Q_{f'} - \bE_{(x,y)\sim\mu}[(y - f(x))f'(x)]| > \gamma/4.
\]
We have the following claim showing that each bad event only happens with small probability:
\begin{claim}
\label{claim:ma-mc-bad}
In the setting of \Cref{lm:ma-mc},
for every positive integer $j$, $\max\{\Pr[E_j\sps 1],\Pr[E_j\sps 2]\} \le \delta/(2W)$.
\end{claim}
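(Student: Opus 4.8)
The plan is to bound the two bad events separately, in each case conditioning on the randomness of \Cref{alg:ma-mc} generated before the $j$-th iteration of the outer \for loop (Lines~\ref{line:ma-mc-for-1}--\ref{line:ma-mc-for-2}). This conditioning fixes the current model $f$ at the start of that iteration; if the iteration is never reached because of an earlier \Break, neither event can occur, so we may assume it is reached. Note that the dataset $\Psi\sps{j,1}$ used to train the candidates $f_\bsigma$ and the dataset $\Psi\sps{j,2}$ used to test them are fresh and independent of each other and of the conditioning.

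For $E_j\sps 1$: condition on $f$. If clause \eqref{eq:bad-1} fails there is nothing to prove, so assume there exist $\bsigma_0\in\{-1,1\}^k$ and $b\in B_{\bsigma_0,f}$ with $\bE_{(x,y)\sim\mu}[(y-f(x))\ptimes b(x)] > \alpha$. The key point is that when $(x,y)\sim\mu$, the transformed pair $(x,\tilde y)$ with $\tilde y := (y - f(x))/2$ (as formed on \Cref{line:ma-mc-1}) follows a distribution admissible for weak correlation maximization with source class $(S-f)/2$, benchmark class $B_{\bsigma_0,f}$, and threshold $\alpha/2$: we have $\tilde y\in[-1,1]$; $\bE[\tilde y\mid x] = (s(x) - f(x))/2$, which is exactly the value at $x$ of the hypothesis in $(S-f)/2$ induced by $s$ (and $\Pr_x[s(x)\ne *]=1$ gives that this hypothesis is defined almost surely); and since $\ptimes$ is positively homogeneous in its first argument (immediate from its definition), $\bE[\tilde y\ptimes b'(x)] = \tfrac12\bE[(y - f(x))\ptimes b'(x)]$ for every $b'$, so the witness $b$ yields $\sup_{b'\in B_{\bsigma_0,f}}\bE[\tilde y\ptimes b'(x)] > \alpha/2$. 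Because $\Psi\sps{j,1}$ is i.i.d.\ from $\mu$ and independent of $f$, $\Psi'$ is an i.i.d.\ sample from this admissible distribution, and by assumption~\eqref{eq:ma-mc-assumption-n-1} the learner $L\in\wcm_{n\sps 1}((S-f)/2, B_{\bsigma_0,f},\alpha/2,\gamma/2,\delta/(2W))$ invoked on \Cref{line:ma-mc-invoke} with $\bsigma = \bsigma_0$ exists; hence with probability at least $1-\delta/(2W)$ its output $f_{\bsigma_0}$ satisfies $\bE[\tilde y\,f_{\bsigma_0}(x)]\ge\gamma/2$, i.e.\ $\bE[(y-f(x))f_{\bsigma_0}(x)]\ge\gamma$, contradicting clause \eqref{eq:bad-2}. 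Thus $\Pr[E_j\sps 1\mid f]\le\delta/(2W)$, and averaging over $f$ gives $\Pr[E_j\sps 1]\le\delta/(2W)$. When $S$ is binary the argument is identical with $\wcm$ replaced by $\wdcm$, since $\Pr_{(x,y)\sim\mu}[s(x)=y]=1$ forces $\tilde y = (s(x)-f(x))/2$ almost surely.

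For $E_j\sps 2$: condition additionally on $\Psi\sps{j,1}$ and on the internal randomness of the learners invoked in iteration $j$, so that $f$ and all candidates $\{f_\bsigma:\bsigma\in\{-1,1\}^k\}$ are fixed; there are at most $2^k$ of them. The fresh dataset $\Psi\sps{j,2}$ is still i.i.d.\ from $\mu$, and $Q_{f'}$ is the empirical mean over $n\sps 2$ samples of the quantity $(y - f(x))f'(x)\in[-2,2]$, whose expectation is $\bE[(y-f(x))f'(x)]$. By Hoeffding's inequality, for each fixed candidate $f'$ the deviation exceeds $\gamma/4$ with probability at most $2\exp(-\Omega(n\sps 2\gamma^2))$; a union bound over the at most $2^k$ candidates gives $\Pr[E_j\sps 2\mid\text{history}]\le 2^{k+1}\exp(-\Omega(n\sps 2\gamma^2))\le\delta/(2W)$ using $n\sps 2\ge C\gamma^{-2}(k+\log(W/\delta))$ with $C$ a large enough absolute constant, and averaging over the conditioning finishes the bound. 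The Hoeffding/union-bound step is routine; the one point requiring care is the reduction used for $E_j\sps 1$, namely verifying that the rescaled-label distribution is genuinely admissible for weak correlation maximization with source class $(S-f)/2$, distinguisher class $B_{\bsigma_0,f}$, and parameters $(\alpha/2,\gamma/2,\delta/(2W))$ — which hinges precisely on $\bE[\tilde y\mid x]$ realizing a hypothesis in $(S-f)/2$ and on $\ptimes$ commuting with positive scaling of its first argument.
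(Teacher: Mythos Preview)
Your proposal is correct and follows essentially the same approach as the paper's proof: for $E_j\sps 1$ you reduce to the guarantee of the weak correlation-maximization learner by checking that the rescaled-label distribution is admissible (the paper phrases this via an auxiliary distribution $\mu'$ but the content is identical), and for $E_j\sps 2$ you use Hoeffding plus a union bound over the $2^k$ candidates. Your write-up is in fact slightly more explicit than the paper's about the conditioning and about the positive homogeneity of $\ptimes$ in its first argument, but there is no substantive difference.
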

\begin{proof}
Let us focus on the $j$-th iteration of the outer \for loop (Lines~\ref{line:ma-mc-for-1}-\ref{line:ma-mc-for-2}).
Consider the model $f$ immediately before \Cref{line:ma-mc-1} and define $\mu'$ to be the distribution of $(x,(y - f(x))/2)$ for $(x,y)\sim\mu$.
Let us condition on the event that
\eqref{eq:bad-1} holds for some $b\in B_{\bsigma,f}$ with $\bsigma\in \{-1,1\}^k$. By the definition of $\mu'$, \eqref{eq:bad-1} implies
\[
\bE_{(x,y)\sim\mu'}[y\ptimes b(x)] > \alpha/2.
\]
It is clear that after Line~\ref{line:ma-mc-1}, every data point in $\Psi'$ distributes i.i.d.\ from $\mu'$ which satisfies $\bE_{(x,y)\sim\mu'}[y|x] = (s(x) - f(x))/2$, and when $S$ is binary we additionally have $\Pr_{(x,y)\sim\mu'}[(s(x) - f(x))/2 = y] = 1$.
By our assumption \eqref{eq:ma-mc-assumption-n-1}, we have $\wcm_{n\sps 1}((S - f)/2,B_{\bsigma,f},\alpha/2,\gamma/2, \delta/(2W))\ne \emptyset$ at \Cref{line:ma-mc-invoke} (or $\wdcm_{n\sps 1}((S - f)/2,B_{\bsigma,f},\alpha/2,\gamma/2, \delta/(2W))\ne \emptyset$ if $S$  is binary).
By the guarantee of the learner $L\in \wcm_{n\sps 1}((S - f)/2,B_{\bsigma,f},\alpha/2,\gamma/2, \delta/(2W))$ (or $L\in \wdcm_{n\sps 1}((S - f)/2,B_{\bsigma,f},\alpha/2,\gamma/2,\allowbreak \delta/(2W))$ if $S$ is binary), with probability at least $1-\delta/(2W)$, the following holds before \Cref{line:ma-mc-4}:
\[
\bE_{(x,y)\sim\mu'}[yf_\bsigma(x)] \ge \gamma/2,
\]
which implies that \eqref{eq:bad-2} does not hold for $f' = f_\bsigma$. This proves $\Pr[E_j\sps 1]\le \delta/(2W)$.

The claim $\Pr[E_j\sps 2] \le \delta/(2W)$ follows from the Chernoff bound, the union bound, and our assumption that $n\sps 2\ge C\gamma^{-2}(k + \log(W/\delta))$ for a sufficiently large absolute constant $C>0$.
\end{proof}
Now we prove the following lemma showing that \Cref{alg:ma-mc} makes progress towards a good model in each iteration when the bad events do not happen.
\begin{lemma}
\label{lm:ma-mc-good}
In the setting of \Cref{lm:ma-mc},
for a positive integer $j$, assume that neither bad event $E_j\sps 1$ or $E_j\sps 2$ happens. Then at least one of the following three good events $G_j\sps 1,G_j\sps 2,G_j\sps 3$ happens:
\begin{enumerate}
\item $G_j\sps 1$: the outer \for loop (Lines~\ref{line:ma-mc-for-1}-\ref{line:ma-mc-for-2}) has fewer than $j$ iterations;
\item $G_j\sps 2$: $\bE_{(x,y)\sim\mu}[(f(x) - y)^2]$ decreases by at least $\gamma^2/4$ at Line~\ref{line:ma-mc-2} in the $j$-th iteration of the outer \for loop;
\item $G_j\sps 3$: Line~\ref{line:ma-mc-3} is executed in the $j$-th iteration (which must be the last iteration) of the outer \for loop and \Cref{alg:ma-mc} outputs a model $f$ satisfying
\begin{equation}
\label{eq:ma-mc-good}
\bE_{(x,y)\sim\mu}[(y - f(x))\ptimes b(x)] \le \alpha \text{ for every }b\in {\bigcup}_{\bsigma\in \{-1,1\}^k}B_{\bsigma,f}.
\end{equation}
\end{enumerate}
\end{lemma}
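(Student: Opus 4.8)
The plan is a deterministic, case-based argument carried out under the assumption that neither bad event $E_j\sps 1$ nor $E_j\sps 2$ occurs. If the outer \textbf{for} loop (\Cref{line:ma-mc-for-1}--\Cref{line:ma-mc-for-2}) has fewer than $j$ iterations, then $G_j\sps 1$ holds and we are done, so assume it runs for at least $j$ iterations. I would fix $f$ to be the model as it stands at the start of the $j$-th iteration; this is the same $f$ used at \Cref{line:ma-mc-1} and also the one present at \Cref{line:ma-mc-4}, since within an iteration $f$ is modified only at \Cref{line:ma-mc-2}, which comes after \Cref{line:ma-mc-4}. Let $f'$ be the function selected at \Cref{line:ma-mc-4}. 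The argument then splits on whether the test $Q_{f'}\ge 3\gamma/4$ at \Cref{line:ma-mc-if} passes.

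\textbf{Case 1 (\Cref{line:ma-mc-2} is executed).} Here I would deduce $G_j\sps 2$. Since $E_j\sps 2$ does not happen and $f'$ is one of the $f_{\bsigma}$, we get $\bE_{(x,y)\sim\mu}[(y-f(x))f'(x)] \ge Q_{f'} - \gamma/4 \ge \gamma/2$. Writing $g(x) := f(x) + \gamma f'(x)/2$ for the pre-projection update and using $f'(x)^2 = 1$ (as $f':X\to\{-1,1\}$), a one-line expansion gives $\bE_{(x,y)\sim\mu}[(g(x)-y)^2] = \bE_{(x,y)\sim\mu}[(f(x)-y)^2] - \gamma\,\bE_{(x,y)\sim\mu}[(y-f(x))f'(x)] + \gamma^2/4 \le \bE_{(x,y)\sim\mu}[(f(x)-y)^2] - \gamma^2/4$. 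Because $y\in[-1,1]$ and $\proj_{[-1,1]}$ is non-expansive toward every point of $[-1,1]$, replacing $g(x)$ by $\proj_{[-1,1]}(g(x))$ can only decrease the squared error further, so $\bE_{(x,y)\sim\mu}[(f(x)-y)^2]$ drops by at least $\gamma^2/4$ at \Cref{line:ma-mc-2}; this is exactly $G_j\sps 2$.

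\textbf{Case 2 (\Cref{line:ma-mc-3} is executed).} Then the loop breaks in iteration $j$, so iteration $j$ is the last one and, since $f$ is not updated in it, the model returned at \Cref{line:ma-mc-return} is the $f$ fixed above. I would prove $G_j\sps 3$ by contradiction: if \eqref{eq:ma-mc-good} failed, some $b\in\bigcup_{\bsigma}B_{\bsigma,f}$ would satisfy $\bE_{(x,y)\sim\mu}[(y-f(x))\ptimes b(x)] > \alpha$, which is precisely condition \eqref{eq:bad-1} in the definition of $E_j\sps 1$. Since $E_j\sps 1$ does not happen, condition \eqref{eq:bad-2} must fail for some $f_{\bsigma}$, i.e.\ $\bE_{(x,y)\sim\mu}[(y-f(x))f_{\bsigma}(x)]\ge\gamma$; then, using again that $E_j\sps 2$ does not happen and that $f'$ maximizes $Q$ over $\{f_{\bsigma}\}$, $Q_{f'}\ge Q_{f_{\bsigma}} \ge \bE_{(x,y)\sim\mu}[(y-f(x))f_{\bsigma}(x)] - \gamma/4 \ge 3\gamma/4$, contradicting the fact that \Cref{line:ma-mc-3} (rather than \Cref{line:ma-mc-2}) was executed. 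Hence \eqref{eq:ma-mc-good} holds, which is $G_j\sps 3$.

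I do not expect a genuine obstacle here: the role of the weak-correlation-maximization learner invoked at \Cref{line:ma-mc-invoke} is already encapsulated in the \emph{definition} of the bad event $E_j\sps 1$, whose probability is bounded separately in \Cref{claim:ma-mc-bad}, so this lemma reduces to bookkeeping together with the textbook ``one step of projected $L_2$ boosting'' estimate used in Case~1. The only points requiring care are (i) tracking which copy of $f$ each inequality refers to --- the pre-iteration value versus the value after the \Cref{line:ma-mc-2} update --- and (ii) observing that after a \textbf{break} the returned model equals the pre-iteration $f$, so that the guarantee \eqref{eq:ma-mc-good} is asserted with respect to the correct model and class $\bigcup_{\bsigma}B_{\bsigma,f}$.
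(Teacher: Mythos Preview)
Your proposal is correct and follows essentially the same argument as the paper: the same $L_2$-potential drop computation in Case~1 (using $\bE[(y-f(x))f'(x)]\ge Q_{f'}-\gamma/4\ge \gamma/2$ from $\neg E_j\sps 2$ and the test at \Cref{line:ma-mc-if}, then expanding the square and invoking non-expansiveness of the projection), and in Case~2 the same use of $\neg E_j\sps 1$ and $\neg E_j\sps 2$ to force $Q_{f'}\ge 3\gamma/4$, phrased by you as a contradiction and by the paper as the contrapositive. The only cosmetic difference is that you use $f'(x)^2=1$ explicitly while the paper writes $\bE_{(x,y)\sim\mu}[(f'(x))^2]$ before bounding it.
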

\begin{proof}
We first show that if the outer \for loop has at least $j$ iterations and Line~\ref{line:ma-mc-2} is executed in the $j$-th iteration, then $\bE_{(x,y)\sim\mu}[(f(x) - y)^2]$ decreases by at least $\gamma^2/4$ at Line~\ref{line:ma-mc-2}. Indeed, we have $\bE_{(x,y)\sim\mu}[(y - f(x))f'(x)] \ge Q_{f'} - \gamma/4 \ge \gamma/2$ before Line~\ref{line:ma-mc-2}, where the first inequality holds because $E_j\sps 2$ does not happen by our assumption, and the second inequality holds because Line~\ref{line:ma-mc-2} is executed only when the \bif condition at \Cref{line:ma-mc-if} is satisfied. Therefore,
\begin{align*}
& \bE_{(x,y)\sim\mu}[(f(x) - y)^2] - \bE_{(x,y)\sim\mu}\Big[\Big(\proj_{[-1,1]}(f(x) + \gamma f'(x)/2) - y\Big)^2\Big]\\
\ge {} & \bE_{(x,y)\sim\mu}[(f(x) - y)^2] - \bE_{(x,y)\sim\mu}\Big[\Big((f(x) + \gamma f'(x)/2) - y\Big)^2\Big]\\
= {} & \gamma\bE_{(x,y)\sim\mu}[(y - f(x))f'(x)]  - (\gamma/2)^2\bE_{(x,y)\sim\mu}[(f'(x))^2]\\
\ge {} & \gamma^2/2 - \gamma^2/4\\
= {} & \gamma^2/4.
\end{align*}

It remains to show that if inequality \eqref{eq:ma-mc-good} is not satisfied by the model $f$ before \Cref{line:ma-mc-1} in the $j$-th iteration, then \Cref{line:ma-mc-2} is executed. Indeed, since event $E_j\sps 1$ does not happen by our assumption, before \Cref{line:ma-mc-4}, there exists $f'\in \{f_\bsigma:\bsigma\in \{-1,1\}^k\}$ such that
\[
\bE_{(x,y)\sim\mu}[(y - f(x))f'(x)] \ge \gamma.
\]
This implies that $Q_{f'} \ge 3\gamma/4$ since $E_j\sps 2$ does not happen. Therefore, the \bif condition at \Cref{line:ma-mc-if} is satisfied and \Cref{line:ma-mc-2} is executed, as desired.
\end{proof}
\begin{proof}[Proof of \Cref{lm:ma-mc}]
For any positive integer $j$, by \Cref{claim:ma-mc-bad} and \Cref{lm:ma-mc-good}, with probability at least $1-\delta/W$, at least one of the three good events in \Cref{lm:ma-mc-good} happens.
By the union bound, with probability at least $1-\delta$, for every $j = 1,\ldots,W$, at least one of the three good events happens.
Note that the event $G_j\sps 2$ cannot happen for $W > 4/\gamma^2$ different $j$'s because $\bE_{(x,y)\sim\mu}[(f(x) - y)^2]$ is initially at most $1$ and always nonnegative. 
Also, for every $j = 1,\ldots,W$, the event $G_j\sps 1$ cannot happen unless the event $G_{j'}\sps 3$ happens for a $j'< j$.
Therefore, with probability at least $1-\delta$, $G_j\sps 3$ must happen for the last iteration $j$ of the outer \for loop, in which case the model $f$ returned at Line~\ref{line:ma-mc-return} satisfies $\mce_{\mu,B}\sps \blambda(f)\le \alpha$ by \eqref{eq:mcee-simplified}.
\end{proof}
\subsection{Lower Bound}
\label{sec:ma-mc-lower}
Now we prove our sample complexity lower bound (\Cref{thm:ma-mc-lower}) for realizable multiaccuracy and muticalibration. In \citep[Lemma 8]{hu2022metric}, the authors show a lower bound using a standard packing argument in the distribution-specific setting with total hypothesis classes. We state their result below and then transfer their result to our distribution-free setting with partial hypothesis classes.

We first define realizable multiaccuracy and multicalibration in the distribution-specific setting:
\begin{definition}[Distribution-specific realizable multiaccuracy ($\ma\sps {\mu_X}$)/multicalibration ($\mc\sps{\mu_X}$)]
\label{def:ma-mc-specific}
Given two total hypothesis classes $S,B\subseteq [-1,1]^X$, a distribution $\mu_X$ over $X$, an error bound $\varepsilon \ge 0$, a failure probability bound $\delta \ge 0$, and a nonnegative integer $n$, we define $\ma_n\sps{\mu_X}(S,B,\varepsilon,\delta)$ in the same way as we define $\ma_n(S,B,\varepsilon,\delta)$ in \Cref{def:ma} except that we additionally require distributions $\mu\in \dstr$ to satisfy $\mu|_X = \mu_X$. Similarly, we define $\mc_n\sps{\mu_X}(S,B,\varepsilon,\delta)$ by adding an additional distribution assumption to $\mc_n(S,B,\varepsilon,\delta)$ (\Cref{def:mc}).
\end{definition}

The lower bound of \citet{hu2022metric} is in terms of packing and covering numbers defined for pairs of total hypothesis classes. Let $S,B\subseteq[-1,1]^X$ be total hypothesis classes and let $\mu_X$ be a distribution over $X$.  For $\varepsilon \ge 0$, the packing number $M_{\mu_X,B}(S,\varepsilon)$ is defined to be the maximum size of a subset $S'\subseteq S$ satisfying $|\bE_{x\sim\mu_X}[(s_1(x) - s_2(x))b(x)]|> \varepsilon$ for every distinct $s_1,s_2\in S'$.
The covering number $N_{\mu_X,B}(S,\varepsilon)$ is defined to be the minimum size of a subset $S'\subseteq S$ such that for every $s\in S$ there exists $s'\in S'$ satisfying $|\bE_{x\sim\mu_X}[(s(x) - s'(x))b(x)]|\le \varepsilon$. The following claim is a standard relationship between packing and covering numbers \citep[see e.g.\ the proof of][Lemma 8]{hu2022metric}:
\begin{claim}
\label{claim:packing-covering}
Let $\mu_X$ be a distribution over $X$.
For any $\varepsilon\ge 0$ and $S,B\subseteq[-1,1]^X$,
\[
N_{\mu_X,B}(S,\varepsilon) \le M_{\mu_X,B}(S,\varepsilon).
\]
\end{claim}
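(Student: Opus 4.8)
The plan is to deduce the inequality from the classical fact that a maximum $\varepsilon$-packing is automatically an $\varepsilon$-covering. It will be cleanest to phrase both quantities through the dual Minkowski seminorm $\|h\|_{B,\mu_X} := \sup_{b\in B}|\bE_{x\sim\mu_X}[h(x)b(x)]|$ on functions $h:X\to[-1,1]$: with this notation, $M_{\mu_X,B}(S,\varepsilon)$ is the largest size of a set $S'\subseteq S$ with $\|s_1-s_2\|_{B,\mu_X}>\varepsilon$ for all distinct $s_1,s_2\in S'$, and $N_{\mu_X,B}(S,\varepsilon)$ is the smallest size of a set $S'\subseteq S$ with $\inf_{s'\in S'}\|s-s'\|_{B,\mu_X}\le\varepsilon$ for every $s\in S$. (Here I am unwinding the free variable $b$ in the stated definitions: the packing condition reads ``$\exists\,b\in B$'' and the covering condition reads ``$\forall\,b\in B$'', so the packing condition between $s$ and $s'$ is exactly the negation of $s$ being $\varepsilon$-covered by $\{s'\}$.)

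First I would dispose of the trivial case $M_{\mu_X,B}(S,\varepsilon)=\infty$, where there is nothing to prove, and otherwise fix a set $S'\subseteq S$ attaining the maximum, so $|S'|=M_{\mu_X,B}(S,\varepsilon)$ and $S'$ is a maximal packing. I then claim $S'$ is an $\varepsilon$-covering of $S$. Suppose some $s\in S$ is not covered, i.e.\ $\|s-s'\|_{B,\mu_X}>\varepsilon$ for every $s'\in S'$. Then $S'\cup\{s\}$ is still a packing: two distinct elements are either both in $S'$ (packing inherited) or are $s$ and some $s'\in S'$ (separated by more than $\varepsilon$ by assumption). This contradicts the maximality of $|S'|$. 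Hence $S'$ is an $\varepsilon$-covering, giving $N_{\mu_X,B}(S,\varepsilon)\le|S'|=M_{\mu_X,B}(S,\varepsilon)$.

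There is no real obstacle here — this is the textbook maximal-packing-is-a-covering argument, and the only point demanding a moment of attention is the bookkeeping of the quantifier over $b\in B$ hidden in the two definitions, which is exactly what makes ``$s$ uncovered'' equivalent to ``$S'\cup\{s\}$ is still a packing''.
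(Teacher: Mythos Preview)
Your argument is correct and is exactly the standard maximal-packing-is-a-covering argument that the paper alludes to (the paper does not give its own proof, merely noting the claim is a standard relationship and citing \cite{hu2022metric}). Your handling of the quantifier on $b$ is right, and the only implicit detail—that an uncovered $s$ necessarily lies outside $S'$ so $|S'\cup\{s\}|=|S'|+1$—follows since $s\in S'$ would be covered by itself.
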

\citet{hu2022metric} prove the following lower bound on the sample complexity of distribution-specific realizable multiaccuracy:
\begin{theorem}[{\citep[Lemma 8]{hu2022metric}}]
\label{thm:ma-lower-specific}
Let $S,B\subseteq[-1,1]^X$ be total hypothesis classes and let $\mu_X$ be a distribution over $X$. For $\varepsilon \in \bR_{>0}$ and $\delta\in (0,1)$,
\[
\sma\sps{\mu_X}(S,B,\varepsilon,\delta) \ge \log((1-\delta)M_{\mu_X,B}(S,2\varepsilon)).
\]
\end{theorem}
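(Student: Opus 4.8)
The plan is to prove the bound by a packing / "decoding" argument, whose one non-obvious ingredient is that the hard instances should use \emph{binary} labels rather than deterministic real-valued labels.

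First I would fix notation: for $h:X\to\bR$ write $\|h\|_B := \sup_{b\in B}|\bE_{x\sim\mu_X}[h(x)b(x)]|$, which is a seminorm on real-valued functions (a supremum of absolute values of linear functionals), and in particular obeys the triangle inequality. By the definition of the packing number, pick $S' = \{s_1,\dots,s_M\}\subseteq S$ with $M = M_{\mu_X,B}(S,2\varepsilon)$ and $\|s_i - s_j\|_B > 2\varepsilon$ for all $i\ne j$ (if $M=\infty$ the displayed bound reads $\sma\sps{\mu_X}\ge\infty$, and the argument below still yields a contradiction for every finite sample size). For each $i$ take the hard instance $\mu_i$ over $X\times\{-1,1\}\subseteq X\times[-1,1]$ defined by drawing $x\sim\mu_X$ and then $y\sim\ber^*(s_i(x))$. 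Since $\bE_{(x,y)\sim\mu_i}[y\mid x] = s_i(x)$, $s_i$ is total, and $\mu_i|_X = \mu_X$, each $\mu_i$ lies in the distribution class of the task $\ma\sps{\mu_X}(S,B,\varepsilon,\delta)$, and a one-line computation gives $\mae_{\mu_i,B}(f) = \|f - s_i\|_B$ for every $f:X\to[-1,1]$.

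Next, given any learner $L\in\ma_n\sps{\mu_X}(S,B,\varepsilon,\delta)$, I would analyze the experiment that draws $i$ uniformly from $\{1,\dots,M\}$, feeds $L$ an i.i.d.\ sample of size $n$ from $\mu_i$, and produces $f$; call the run successful if $\|f - s_i\|_B\le\varepsilon$. The lower bound on the success probability is immediate: the identity $\mae_{\mu_i,B}(f)=\|f-s_i\|_B$ together with the learner's guarantee gives $\Pr[\text{success}\mid i]\ge 1-\delta$ for each $i$, hence $\Pr[\text{success}]\ge 1-\delta$. For the matching upper bound, I would condition on the internal randomness $R$ of $L$ and on the list of individuals $x_1,\dots,x_n$ appearing in the sample — which has the \emph{same} law for every $i$ because $\mu_i|_X = \mu_X$. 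Under this conditioning $f$ becomes a deterministic function of the label vector in $\{-1,1\}^n$, so $f$ takes at most $2^n$ values; and by the triangle inequality together with $\|s_i - s_j\|_B > 2\varepsilon$, no single $f$ can satisfy $\|f - s_i\|_B\le\varepsilon$ for two distinct indices. Hence at most $2^n$ indices $i$ have $\Pr[\text{success}\mid i,R,x_{1:n}] > 0$, so $\sum_{i=1}^M\Pr[\text{success}\mid i,R,x_{1:n}]\le 2^n$. Averaging over $i$ and taking expectation over $(R,x_{1:n})$ gives $\Pr[\text{success}]\le 2^n/M$, and combining with $\Pr[\text{success}]\ge 1-\delta$ yields $2^n\ge(1-\delta)M$, i.e.\ $n\ge\log((1-\delta)M)$. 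Since $L$ and $n$ were arbitrary, this is exactly $\sma\sps{\mu_X}(S,B,\varepsilon,\delta)\ge\log((1-\delta)M_{\mu_X,B}(S,2\varepsilon))$.

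I expect the only genuinely non-routine point to be noticing that the hard instances must be randomized to binary labels: with the deterministic choice $y = s_i(x)$ the sample is no longer confined, conditioned on the individuals, to a set of size $2^n$, and the counting collapse that makes the packing number appear in the bound would break down. The remaining ingredients — checking that each $\mu_i$ is admissible for the distribution-specific task, the formula $\mae_{\mu_i,B}(f)=\|f-s_i\|_B$, and the uniqueness of the decoded index from the triangle inequality — are straightforward. (An essentially equivalent presentation routes everything through Fano's inequality applied to the channel $i\mapsto(y_1,\dots,y_n)$, but the direct counting above is cleaner in this setting and avoids mutual-information bookkeeping.)
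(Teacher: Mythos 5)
Your argument is correct: the hard instances $\mu_i$ with binary labels $y\sim\ber^*(s_i(x))$ are admissible for the distribution-specific realizable task, the identity $\mae_{\mu_i,B}(f)=\|f-s_i\|_B$ holds for total $B$, and conditioning on the learner's randomness and the (identically distributed) individuals confines the output to at most $2^n$ functions, each of which can be $\varepsilon$-close to at most one element of a $2\varepsilon$-separated packing. The paper does not prove this statement itself but imports it from \citet[Lemma 8]{hu2022metric}, and your counting/decoding argument (including the key observation that the labels must be randomized to $\{-1,1\}$ rather than set deterministically to $s_i(x)$) is essentially the proof given there.
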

The version in
\citet{hu2022metric} is in terms of the covering number $N_{\mu_X,B}$ rather than the packing number $M_{\mu_X,B}$ in \Cref{thm:ma-lower-specific}, but their proof works in both cases. Below we prove \Cref{thm:ma-mc-lower} using \Cref{thm:ma-lower-specific}.

\begin{proof}[Proof of \Cref{thm:ma-mc-lower}]
For concreteness, we prove the theorem with the absolute constant $c$ being $3$. The proof is still valid if we replace every constant $3$ in it with an arbitrary absolute constant $c > 2$.

Let $X'\subseteq X$ be a finite subset shattered by both $S_{\eta_1}\sps {r_1}$ and $B_{\eta_2}\sps {r_2}$. Since we define $m$ to be $\sup_{r_1,r_2:X\to \bR}\vc(S_{\eta_1}\sps {r_1}, B_{\eta_2}\sps {r_2})$, we can choose the size of $X'$ to be $m$ when $m$ is finite, or to be arbitrarily large when $m$ is infinite. Let $\unif_{X'}$ be the uniform distribution over $X'$. Define $\tilde S$ to be the class of all total hypotheses $\tilde s:X'\to [-1,1]$ such that there exists $s\in S$ satisfying $\tilde s(x) = s(x)$ for every $x\in X'$, and we define $\tilde B$ similarly. It is clear that
\[
\smc(S,B,\eta_1\eta_2/3,\delta)  \ge \sma(S,B,\eta_1\eta_2/3,\delta) \ge \sma\sps{\unif_{X'}}(\tilde S,\tilde B,\eta_1\eta_2/3,\delta),
\]
so it suffices to show that
\begin{equation}
\label{eq:ma-lower-goal}
\sma\sps{\unif_{X'}}(\tilde S,\tilde B,\eta_1\eta_2/3,\delta) \ge \log(1-\delta) + \Omega(|X'|).
\end{equation}

By \Cref{lm:gv}, there exists $U\subseteq\{-1,1\}^{X'}$ such that $|U|\ge 2^{\Omega(|X'|)}$ and for every distinct $u_1,u_2\in U$,
\begin{equation}
\label{eq:ma-lower-3}
{\Pr}_{x\sim\unif_{X'}}[u_1(x)\ne u_2(x)] \ge 1/3.
\end{equation}
Since $X'$ is shattered by $S_{\eta_1}\sps{r_1}$, $X'$ must also be shattered by $\tilde S_{\eta_1}\sps{r_1}$, and thus for every $u\in U$, there exists $s_u\in \tilde S$ such that
\[
u(x)(s_u(x) - r_1(x)) > \eta_1 \textnormal{ for every }x\in X'.
\]
This implies that if $u_1,u_2\in U$ satisfy $u_1(x)\ne u_2(x)$ for some $x\in X'$, then $|s_{u_1}(x) - s_{u_2}(x)| > 2\eta_1$. Combining this with \eqref{eq:ma-lower-3}, for every distinct $u_1,u_2\in U$,
\[
\bE_{x\sim\unif_{X'}}|s_{u_1}(x) - s_{u_2}(x)| > 2\eta_1/3.
\]
Defining $S' := \{s_u:u\in U\} \subseteq \tilde S$, we have $|S'| = |U| \ge 2^{\Omega(|X'|)}$, and for every distinct $s_1,s_2\in S'$, 
\begin{equation}
\label{eq:ma-lower-2}
\bE_{x\sim\unif_{X'}}|s_1(x) - s_2(x)| > 2\eta_1/3.
\end{equation}

Since $X'$ is shattered by $B_{\eta_2}\sps{r_2}$, it must also be shattered by $\tilde B_{\eta_2}\sps {r_2}$. Thus for every distinct $s_1,s_2\in S'$, there exist $b_1,b_2\in \tilde B$ such that for every $x\in X'$,
\begin{align*}
&\sign(s_{1}(x) - s_{2}(x))(b_1(x) - r_2(x))  > \eta_2,\\
-&\sign(s_{1}(x) - s_{2}(x))(b_2(x) - r_2(x))  > \eta_2.
\end{align*}
This implies that for every $x\in X'$,
\[
\sign(s_{1}(x) - s_{2}(x))(b_1(x) - b_2(x)) > 2\eta_2.
\]
Combining this with \eqref{eq:ma-lower-2}, we have
\begin{align*}
& |\bE_{x\sim\unif_{X'}}[(s_{1}(x) - s_{2}(x))b_1(x)]| + |\bE_{x\sim\unif_{X'}}[(s_{1}(x) - s_{2}(x))b_2(x)]|\\
\ge {} & \bE_{x\sim\unif_{X'}}[(s_{1}(x) - s_{2}(x))(b_1(x) - b_2(x))]\\
= {} & \bE_{x\sim\unif_{X'}}[|s_{1}(x) - s_{2}(x)|\sign(s_1(x) - s_1(x))(b_1(x) - b_2(x))] \\
> {} & 4\eta_1\eta_2/3,
\end{align*}
and thus
\[
{\sup}_{b\in \tilde B}|\bE_{x\sim\unif_{X'}}[(s_{1}(x) - s_{2}(x))b(x)]| > 2\eta_1\eta_2/3.
\]
Since the above holds for every distinct $s_1,s_2\in S'$ with $S'\subseteq \tilde S$ satisfying $|S'| \ge 2^{\Omega(|X'|)}$, we have $M_{\unif_{X'},\tilde B}(\tilde S,2\eta_1\eta_2/3) \ge 2^{\Omega(|X'|)}$. Our goal \eqref{eq:ma-lower-goal} then follows from \Cref{thm:ma-lower-specific}.
\end{proof}
\begin{remark}
\label{remark:covering}
It is a classic result that the covering number of a binary hypothesis class $S\subseteq \{-1,1\}^X$ can be upper bounded in terms of its VC dimension. Formally, fixing $B = [-1,1]^X$, for $\varepsilon\in (0,1/2)$ and any distribution $\mu_X$ over $X$, the following inequality holds (\citep[see e.g.][Theorem 8.3.18]{MR3837109}):
\[
\log N_{\mu_X,B}(S,\varepsilon) \le O(\vc(S)\log(1/\varepsilon)).
\]
Similar results have been proved for real-valued hypothesis classes $S$ as well using the fat-shattering dimension \citep{MR1328428,MR1629694,MR2042042}.
In these results, the hypothesis class $B$ is fixed to be $[-1,1]^X$, and thus the covering number is w.r.t.\ the $\ell_1$ metric:\footnote{Some previous results also consider other metrics such as the $\ell_\infty$ metric and the $\ell_2$ metric.}
\[
\sup_{b\in B}|\bE_{x\sim\mu_X}[(s_1(x) - s_2(x))b(x)]| = \bE_{x\sim\mu_X}|s_1(x) - s_2(x)|.
\]

Combining \Cref{thm:ma-lower-specific} and \Cref{thm:ma-mc}, we can generalize these existing covering number upper bounds to hold for arbitrary $B \subseteq [-1, 1]^X$, rather than just $B = [-1, 1]^X$.
We state these generalizations below as upper bounds on the packing number $M_{\mu_X,B}(S,\varepsilon)$, and these bounds also hold for the covering number $N_{\mu_X,B}(S,\varepsilon)$ by \Cref{claim:packing-covering}.

For any $\eta_1,\eta_2,\beta\in (0,1/2)$, defining 
$
m:= \sup_{r:X\to \bR}\sup_{\theta\in\bR}\vc(S_{\eta_1}\sps r, B_{\eta_2}\sps \theta),
$
we have
\begin{align*}
& \log M_{\mu_X,B}(S,2\beta + 4\eta_1 + 8\eta_2) 
\\
\le {} & O(\sma(S,B,\beta + 2\eta_1 + 4\eta_2,1/2) + 1)\tag{by \Cref{thm:ma-lower-specific}}\\
\le {} & O\left(
\frac{m}{\beta^6}\log^2_+\left(\frac{m}{\beta}\right)\log\left(\frac 1{\eta_1}\right) + \frac 1{\beta^6}\log\left(\frac 1{\eta_1}\right)\log\left(\frac 1{\beta}\right) + \frac{1}{\beta^4}\log\left(\frac 1{\eta_2}\right)   \right).\tag{by \Cref{thm:ma-mc}}
\end{align*}
In particular, for $\varepsilon\in (0,1/2)$, choosing $\beta = \eta_1 = \eta_2 = \varepsilon/14$, we have 
$m \le \fat_{\varepsilon/14}(S,B)$ and
\begin{align*}
\log M_{\mu_X,B}(S,\varepsilon) \le O \left( 
	\frac{m}{\varepsilon^6}\log^2_+\left(
		\frac{m}{\varepsilon}
	\right)
	\log\left(\frac 1\varepsilon\right) + \frac{1}{\varepsilon^6}\log^2\left(\frac 1\varepsilon\right)
\right).
\end{align*}
When $S$ is binary, we define $m:= \sup_{\theta\in \bR}\vc(S,B_{\eta_2}\sps\theta)$ and get
\begin{align*}
\log M_{\mu_X,B}(S,2\beta  + 8\eta_2) 
& \le  O(\sma(S,B,\beta + 4\eta_2,1/2) + 1)\tag{by \Cref{thm:ma-lower-specific}}\\
& \le  O\left(\frac{m}{\beta^4}\log^2_+\left(\frac{m}{\beta}\right)  + \frac{1}{\beta^4}\log\left(\frac 1{\eta_2\beta\delta}\right)\right).\tag{by \Cref{thm:ma-mc-binary}}
\end{align*}
For $\varepsilon\in (0,1/2)$, choosing $\beta = \eta_2 = \varepsilon/10$ in the inequality above, we have $m \le \fat_{\varepsilon/10}(S,B)$ and
\[
\log M_{\mu_X,B}(S,\varepsilon) 
 \le O\left(\frac{m}{\varepsilon^4}\log^2_+\left(\frac{m}{\varepsilon}\right)  + \frac{1}{\beta^4}\log\left(\frac 1{\varepsilon\delta}\right)\right).
\]
When $S$ and $B$ are both binary, the above inequality holds with $m = \vc(S,B)$. 

The above covering/packing number upper bounds hold for any pair of classes $S,B\subseteq[-1,1]^X$, but they do not imply a uniform convergence bound for the multiaccuracy error: \citet[Section 6.2]{hu2022metric} give an example showing that the sample complexity of \emph{agnostic} multiaccuracy cannot in general be upper bounded in terms of the mutual fat-shattering dimension.
\end{remark}
\section{Boosting}
\label{sec:boosting}
So far, we have ignored computational efficiency when designing learners for comparative learning tasks. In this section, we consider running time in addition to sample complexity and present an 
efficient \emph{boosting} algorithm that solves comparative learning given oracle access to a \emph{weak} comparative learner. 

Below we formally define \emph{weak comparative learning} ($\wcomp$) in a similar fashion
to the \emph{weak agnostic learning} task studied in \citep{MR2582918,DBLP:conf/innovations/Feldman10}. In comparison, we sometimes refer to the task $\comp$ in \Cref{def:comp} as \emph{strong} comparative learning.

\begin{definition}[Weak comparative learning ($\wcomp$)]
\label{def:wcomp}
Given two binary hypothesis classes $S,B\subseteq\{-1,1,*\}^X$, parameters $\alpha,\gamma \ge 0$, and a nonnegative integer $n$, we define $\wcomp_n(S,B,\alpha,\allowbreak \gamma,\delta)$ to be $\learn_n(Z,F,\dstr,(F_\mu)_{\mu\in \dstr},\delta)$ with $Z,F,\dstr,F_\mu$ chosen as follows. We choose $Z = X\times\{-1,1\}$ and $F = \{-1,1\}^X$. The distribution class $\dstr$ consists of all distributions $\mu$ over $X\times \{-1,1\}$ such that $\Pr_{(x,y)\sim\mu}[s(x) = y] = 1$ for some $s\in S$ and
\[
{\inf}_{b\in B}{\Pr}_{(x,y)\sim\mu}[b(x)\ne y] \le 1/2 - \alpha.
\]
The admissible set $F_\mu$ consists of all models $f:X\rightarrow\{-1,1\}$ such that 
\[
{\Pr}_{(x,y)\sim\mu}[f(x)\ne y] \le 1/2 - \gamma.
\]
\end{definition}

When $S$ and $B$ are both total, we can use \citep[Theorem 3.5]{DBLP:conf/innovations/Feldman10} to get an efficient boosting algorithm that solves strong comparative learning ($\comp$) using an oracle for weak comparative learning ($\wcomp$).
In this section, we generalize this result to \emph{partial} and \emph{real-valued} hypothesis classes $S$ and $B$. That is, we focus on the more general tasks: weak and strong deterministic-label correlation maximization ($\dcm$ and $\wdcm$).
By \eqref{eq:error-correlation}, $\comp$ (\Cref{def:comp}) is a special case of $\dcm$ (\Cref{def:dcm}) where $S$ and $B$ are binary, and similarly $\wcomp$ (\Cref{def:wcomp}) is a special case of $\wdcm$ (\Cref{def:wdcm}). 

Assuming oracle access to a learner solving weak deterministic-label correlation maximization ($\wdcm$) for a pair of hypothesis classes $S,B\subseteq ([-1,1]\cup\{*\})^X$, 
we show an efficient boosting algorithm that solves strong deterministic-label correlation maximization ($\dcm$) for the same classes $(S,B)$ in \Cref{thm:boosting} below.

Because we require our boosting algorithm to be efficient, we cannot expect it to output a model $f:X\to \{-1,1\}$ explicitly because the size of the domain $X$ may be large or even infinite. Instead, the algorithm outputs a succinct description of the model $f$ from which the value $f(x)$ can be computed efficiently given any $x\in X$. Formally, we define the \emph{evaluation time} of a description of a model $f$ to be the worst-case time needed to compute $f(x)$ given $x\in X$ using the model's description (for example, if $f$ is described as a circuit, then the evaluation time corresponds to the circuit size).

\begin{theorem}
\label{thm:boosting}
Let $S,B\subseteq([-1,1]\cup\{*\})^X$ be partial hypothesis classes. Suppose $\alpha,\gamma,\varepsilon,\delta_1,\delta_2,\delta_3\in (0,1/2)$ and $n_0\in \bZ_{\ge 0}$.
Suppose $L$ is a learner in $\wdcm_{n_0}(S,B,\alpha,\gamma,\delta_1)$ that always represents its output model using a description with evaluation time at most $T_{\mathsf{eval}}$. Then there exist $W,W',n\in\bZ_{\ge 0}$ and a learner $L'$ in $\dcm_n(S,B,\alpha + \varepsilon,W'(\delta_1 + \delta_2) + W\delta_3)$ that invokes $L$ at most $W'$ times and has additional running time $O(W'T_{\mathsf{eval}}n)$ where
\begin{align*}
W' & = O(\gamma^{-2}\alpha^{-1}\log(1/\alpha)),\\
W & = W' + O(\varepsilon^{-2}),\\
n & = O(W'(n_0/\alpha + \alpha^{-2}\gamma^{-2}\log(1/\delta_2)) + W\varepsilon^{-2}\log(1/\delta_3) ).
\end{align*}
Also, $L'$ always represents its output model using a description with evaluation time $O(W'T_{\mathsf{eval}})$.
\end{theorem}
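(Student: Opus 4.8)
The plan is to transplant the smooth agnostic-boosting scheme of \citet[Theorem 3.5]{DBLP:conf/innovations/Feldman10} to partial, real-valued hypothesis classes and the generalized product $\ptimes$. The learner $L'$ draws $n$ i.i.d.\ points from $\mu$ and partitions them into disjoint blocks: for each of up to $W'$ boosting rounds one block of size $O(n_0/\alpha)$ (to feed the oracle) and one of size $O(\alpha^{-2}\gamma^{-2}\log(1/\delta_2))$ (to estimate a correlation), plus $W$ blocks of size $O(\varepsilon^{-2}\log(1/\delta_3))$ used to verify combined models. In round $t$ it holds a real-valued score $g_t = \sum_{i<t}\lambda_i f_i$ formed from the weak hypotheses $f_1,\dots,f_{t-1}$ produced so far, and on the round's oracle-block it defines a reweighted distribution $\mu_t$ with $d\mu_t/d\mu \propto \phi(y\cdot g_t(x))$ for Feldman's bounded potential-derivative $\phi$; $\phi$ is scaled so that $d\mu_t/d\mu = O(1/\alpha)$, so a rejection-sampling pass turns the $O(n_0/\alpha)$ draws from $\mu$ into $n_0$ i.i.d.\ draws from $\mu_t$ with high probability. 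It calls the oracle $L\in\wdcm_{n_0}(S,B,\alpha,\gamma,\delta_1)$ on these draws to get $f_t$, estimates $\widehat c_t\approx\bE_{(x,y)\sim\mu_t}[y f_t(x)]$ by an importance-weighted average over the round's estimation-block (whose weights have range $O(1/\alpha)$, which is what forces the $\alpha^{-2}$), and either appends $f_t$ with a suitable coefficient $\lambda_t$ and continues, or, once $\widehat c_t$ drops below roughly $\gamma/2$, halts and outputs $f^\ast := \sign(g_t)$.

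\textbf{Key steps, in order.} (1) \emph{Reweighting preserves realizability.} Changing the density of $\mu$ to $\mu_t$ leaves labels untouched, so on the support of $\mu_t$ we still have $y = s(x)$ for the same $s\in S$; the weight $\phi(y\cdot g_t(x))$ is computed from the observed pair $(x,y)$, so $L'$ forms $\mu_t$ without knowing $s$, and $\mu_t$ is a valid $\wdcm$ input. This is exactly where the deterministic-label restriction is used. (2) \emph{The oracle's precondition holds until we are done.} A duality/subgradient argument — identical to the total-binary case once one notes $y\ptimes b(x)\in[-1,1]$ and that it reduces to $yb(x)$ when $b(x)\ne\ast$ and to $-|y|$ when $b(x)=\ast$, both compatible with the argument — shows that whenever $\bE_{(x,y)\sim\mu}[y\,\sign(g_t(x))] < \sup_{b\in B}\bE_{(x,y)\sim\mu}[y\ptimes b(x)] - (\alpha+\varepsilon)$, the smooth $\mu_t$ satisfies $\sup_{b\in B}\bE_{(x,y)\sim\mu_t}[y\ptimes b(x)] \ge \alpha$; hence by the guarantee of $L$, with probability $\ge 1-\delta_1$ the returned $f_t$ has $\bE_{\mu_t}[y f_t(x)]\ge\gamma$, which $\widehat c_t$ detects. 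Contrapositively, when the algorithm halts, $\sign(g_t)$ is within $\alpha+\varepsilon$ of $\sup_{b\in B}\bE_\mu[y\ptimes b(x)]$. (3) \emph{Progress and round count.} Feldman's potential argument carries over verbatim — it uses only $y f_t(x)\in[-1,1]$, the advantage bound $\gamma$, and smoothness — and bounds the number of rounds before halting by $W' = O(\gamma^{-2}\alpha^{-1}\log(1/\alpha))$. (4) \emph{Empirical-to-population bookkeeping.} A union bound over the $\le W'$ rounds handles oracle failures ($W'\delta_1$) and correlation-estimation failures ($W'\delta_2$, the stated block size sufficing by Hoeffding for an estimator of range $O(1/\alpha)$ and target accuracy $\Theta(\gamma)$); a further union bound over the $W = W' + O(\varepsilon^{-2})$ verification blocks, each of size $O(\varepsilon^{-2}\log(1/\delta_3))$, contributes $W\delta_3$ and certifies the output. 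Summing block sizes gives $n = O\big(W'(n_0/\alpha + \alpha^{-2}\gamma^{-2}\log(1/\delta_2)) + W\varepsilon^{-2}\log(1/\delta_3)\big)$ and total failure probability $W'(\delta_1+\delta_2) + W\delta_3$. (5) \emph{Efficiency.} $f^\ast = \sign\!\big(\sum_i\lambda_i f_i\big)$ is described by the $\le W'$ descriptions of the $f_i$ (each of evaluation time $\le T_{\mathsf{eval}}$) plus the coefficients, so it has evaluation time $O(W' T_{\mathsf{eval}})$, and the extra running time is dominated by evaluating the current combined model on all $n$ points across all rounds, i.e.\ $O(W' T_{\mathsf{eval}} n)$. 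Specializing to binary $S,B$ and using \eqref{eq:error-correlation} recovers the statement for $\comp$ and $\wcomp$.

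\textbf{Main obstacle.} The delicate part is step (2) together with the failure/accuracy accounting. We must pin down a \emph{smooth} reweighting with density ratio exactly $O(1/\alpha)$ (a plain AdaBoost-style update would make it exponentially large, destroying the $n_0/\alpha$ rejection-sampling cost), and we must handle the ``legitimate failure'' case — where the oracle's precondition is violated precisely because $\sign(g_t)$ is already good — so that comparing $\widehat c_t$ to the threshold $\gamma/2$ correctly terminates with a model certified (via the verification blocks) to be within $\alpha + \varepsilon$ of the best $b\in B$, rather than looping. Given the groundwork in \citet{DBLP:conf/innovations/Feldman10}, the extension to partial real-valued classes under $\ptimes$ is then essentially routine, since $y\ptimes b(x)$ is a bounded surrogate for $yb(x)$ that is only more pessimistic (so the weak-learner precondition on $\mu_t$ is, if anything, easier to maintain) and reweighting never disturbs the source-class assumption.
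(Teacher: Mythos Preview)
Your plan takes a different route from the paper: the paper does not transplant Feldman's reweighting $\phi(y\cdot g_t(x))$. Instead it runs an MA-plus-sign-calibration loop (Algorithm~4) built around a new \emph{projection} $\pi(y,u)$ that maps $u$ into the interval between $0$ and $y$. The algorithm seeks $f$ such that the projected model $f_1(x)=\pi(s(x),f(x))$ has low MA-type error and low sign-calibration error; Lemma~\ref{lm:ma-cm} then converts this into the $\dcm$ guarantee for $\sign\circ f$. The rejection-sampling weight is $|y-\pi(y,f(x))|/|y|$, chosen precisely so that $\bE_{\mu'}[y\ptimes b(x)]=\rho(f)^{-1}\,\bE_\mu[(y-\pi(y,f(x)))\ptimes b(x)]$, i.e., the oracle's precondition on $\mu'$ is \emph{equivalent} to the MA-condition failing on $\mu$. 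A custom potential $\varphi(y,u)=\int_y^u(\pi(y,t)-y)\,dt$ (not Feldman's) then yields the $O(\gamma^{-2}\alpha^{-1}\log(1/\alpha))$ round bound via a two-phase argument (multiplicative decrease while $\Phi(f)\ge\alpha$, additive decrease by $\Theta(\alpha^2\gamma^2)$ after).

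The gap in your proposal is step (2), which you flag as delicate but then dismiss as ``identical to the total-binary case.'' It is not. Feldman's margin-based potential is calibrated so that, when $y\in\{-1,1\}$, small potential controls the zero-one error of $\sign(g)$; for real $y\in[-1,1]$ there is no clean inequality tying $\psi(-yg(x))$ or $\phi(yg(x))$ to $1-y\,\sign(g(x))$, so neither the contrapositive ``if the precondition fails on $\mu_t$ then $\sign(g_t)$ is already within $\alpha+\varepsilon$ on $\mu$'' nor the potential-to-correlation link you invoke in step (3) follows. And your remark that partial $b$ under $\ptimes$ makes the precondition ``easier to maintain'' is backwards: since $y\ptimes\ast=-|y|$ is the worst possible value, reweighting can shift mass onto $\{x:b^\ast(x)=\ast\}$ and make $\sup_b\bE_{\mu_t}[y\ptimes b(x)]\ge\alpha$ \emph{harder} to satisfy, not easier. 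The paper states explicitly that ``for a general real-valued $S$, we need an additional adjustment''; the projection $\pi$ and the specific reweighting it induces are that adjustment, and with your $\phi(y\,g_t(x))$ there is no analogue of the identity above linking the precondition on $\mu_t$ to a testable condition on $\mu$.
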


We prove \Cref{thm:boosting} in the rest of the section.
The main idea comes from the observation that our learners in \Cref{sec:ma-mc-upper} for realizable multiaccuracy and multicalibration only require \emph{weak} learners for correlation maximization. 
The idea of using multiaccuracy or multicalibration for boosting dates back at least to \citet{DBLP:conf/innovations/Feldman10} and the idea has recently been further explored by \citet{DBLP:conf/innovations/GopalanKRSW22,gopalan2022loss}. Although \citet{DBLP:conf/innovations/Feldman10} lacked the terminology of ``multiaccuacy'' and ``multicalibration,'' the author showed that any model achieving a low $\mae$ w.r.t.\ a total binary hypothesis class $B$ and achieving a low \emph{sign-calibration error} must also achieve the goal of (strong) agnostic learning for $B$ when composed with the $\sign$ function. Here, for a distribution $\mu$ over $X\times \bR$ and a model $f:X\to \bR$, we define the sign-calibration error to be:
\[
\sce_\mu(f) := |\bE_{(x,y)\sim\mu}[(y - f(x))\sign(f(x))]|.
\]
The sign-calibration error is upper bounded by the \emph{overall calibration error} $\ce$ we define later in \eqref{eq:ce} in \Cref{sec:compr}, and $\ce$ is a special case of $\mce$.
Below we generalize the result by \citet{DBLP:conf/innovations/Feldman10} to partial and real-valued hypotheses:

\begin{lemma}
\label{lm:ma-cm}
Let $\mu$ be a distribution over $X\times \bR$. Let $f_1:X\rightarrow\bR$ and $b: X \rightarrow [-1,1]\cup \{*\}$ be total/partial functions. Assume
\begin{equation}
\label{eq:ma-cal-1}
\bE_{(x,y)\sim\mu}[(y - f_1(x))\ptimes b(x)] \le \alpha,
\end{equation}
and 
\begin{equation}
\label{eq:ma-cal-2}
\bE_{(x,y)\sim\mu}[(f_1(x) - y)\sign(f_1(x))] \le \varepsilon.
\end{equation}
Then,
\[
\bE_{(x,y)\sim\mu}[y\,\sign (f_1(x))] \ge \bE_{(x,y)\sim\mu}[y\ptimes b(x)] - \alpha - \varepsilon.
\]
\end{lemma}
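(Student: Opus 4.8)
The plan is to prove the inequality by a direct chain of estimates, splitting the quantity $\bE_{(x,y)\sim\mu}[y\,\sign(f_1(x))]$ into a term that looks like $\bE[(y-f_1(x))\sign(f_1(x))]$ plus a term involving $\bE[f_1(x)\sign(f_1(x))] = \bE[|f_1(x)|]$, and separately relating $\bE[(y-f_1(x))\ptimes b(x)]$ to $\bE[y\ptimes b(x)]$ via the term $\bE[f_1(x)\ptimes b(x)]$. First I would write
\[
\bE[y\,\sign(f_1(x))] = \bE[f_1(x)\sign(f_1(x))] + \bE[(y-f_1(x))\sign(f_1(x))] \ge \bE[|f_1(x)|] - \varepsilon,
\]
using \eqref{eq:ma-cal-2} and the identity $f_1(x)\sign(f_1(x)) = |f_1(x)|$. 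Then, using \eqref{eq:ma-cal-1}, I would write
\[
\bE[y\ptimes b(x)] \le \bE[f_1(x)\ptimes b(x)] + \alpha.
\]
So it suffices to show $\bE[|f_1(x)|] \ge \bE[f_1(x)\ptimes b(x)]$.

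The key step is the pointwise inequality $|f_1(x)| \ge f_1(x)\ptimes b(x)$ for every $x\in X$. This follows directly from the definition of the generalized product $\ptimes$: if $b(x)\in[-1,1]$, then $f_1(x)\ptimes b(x) = f_1(x)b(x) \le |f_1(x)||b(x)| \le |f_1(x)|$; if $b(x) = *$, then $f_1(x)\ptimes b(x) = -|f_1(x)| \le |f_1(x)|$. Taking expectations over $(x,y)\sim\mu$ (noting $|f_1(x)|$ and $f_1(x)\ptimes b(x)$ depend only on $x$) gives $\bE[|f_1(x)|] \ge \bE[f_1(x)\ptimes b(x)]$.

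Combining the three displayed facts yields
\[
\bE[y\,\sign(f_1(x))] \ge \bE[|f_1(x)|] - \varepsilon \ge \bE[f_1(x)\ptimes b(x)] - \varepsilon \ge \bE[y\ptimes b(x)] - \alpha - \varepsilon,
\]
which is exactly the claimed inequality. There is essentially no obstacle here; the only mild subtlety is handling the $b(x)=*$ case consistently, which the definition of $\ptimes$ is designed to accommodate, and making sure boundedness (so the expectations are well-defined) is not an issue — since the paper assumes discrete distributions and $b$ is bounded, the only potential concern is integrability of $f_1$, which is implicitly assumed by the hypotheses \eqref{eq:ma-cal-1} and \eqref{eq:ma-cal-2} being finite. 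I would note this briefly and otherwise present the argument as the short computation above.
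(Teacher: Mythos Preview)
Your proof has a genuine gap in the step where you derive $\bE[y\ptimes b(x)] \le \bE[f_1(x)\ptimes b(x)] + \alpha$ from \eqref{eq:ma-cal-1}. This would require the pointwise inequality $y\ptimes b(x) \le f_1(x)\ptimes b(x) + (y - f_1(x))\ptimes b(x)$, but the generalized product $\ptimes$ is \emph{superadditive}, not subadditive, in its first argument when $b(x) = *$: in that case $(u_1+u_2)\ptimes b(x) = -|u_1+u_2| \ge -|u_1|-|u_2| = u_1\ptimes b(x) + u_2\ptimes b(x)$. Concretely, take $y = 0$, $f_1(x) = 1$, $b(x) = *$: then $y\ptimes b(x) = 0$, $f_1(x)\ptimes b(x) = -1$, and $(y - f_1(x))\ptimes b(x) = -1$, so your claimed inequality reads $0 \le -2$. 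Thus your chain breaks at the last ``$\ge$'' whenever $b$ takes the value $*$ on a set of positive probability.

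The paper's proof avoids this by not passing through $f_1(x)\ptimes b(x)$ at all. Instead it proves directly the pointwise bound
\[
|f_1(x)| \ge y\ptimes b(x) - (y - f_1(x))\ptimes b(x),
\]
which \emph{does} hold in both cases: when $b(x)\ne *$ the right side equals $f_1(x)b(x)\le |f_1(x)|$, and when $b(x) = *$ the right side equals $|y-f_1(x)| - |y| \le |f_1(x)|$ by the triangle inequality. Taking expectations and applying \eqref{eq:ma-cal-1} then gives $\bE[|f_1(x)|] \ge \bE[y\ptimes b(x)] - \alpha$, which plugs into your (correct) first step to finish. Your argument is fine when $b$ is total, but the lemma is stated for partial $b$, and there your splitting fails.
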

In \Cref{lm:ma-cm}, \eqref{eq:ma-cal-2} is a weaker assumption than $f_1$ having a low $\sce$ because \eqref{eq:ma-cal-2} does not take the absolute value of its left-hand-side. 
Similarly, for a hypothesis class $B\subseteq([-1,1]\cup\{*\})^X$, the requirement that \eqref{eq:ma-cal-1} holds for every $b\in B$ is weaker than the requirement that $\mae_{\mu,B}(f)\le \alpha$.
If $f_1$ satisfies \eqref{eq:ma-cal-1} for every $b\in B$ and $f_1$ additionally satisfies \eqref{eq:ma-cal-2}, then the conclusion of \Cref{lm:ma-cm} holds for every $b\in B$, implying that the composition $\sign\circ f_1$ satisfies the goal of strong correlation maximization:
\[
\bE_{(x,y)\sim\mu}[y\,\sign (f_1(x))] \ge {\sup}_{b\in B}\bE_{(x,y)\sim\mu}[y\ptimes b(x)] - \alpha - \varepsilon.
\]
\begin{proof}[Proof of \Cref{lm:ma-cm}]
By \eqref{eq:ma-cal-2},
\begin{equation}
\label{eq:ma-cal-3}
\bE_{(x,y)\sim\mu}[y\,\sign (f_1(x))] \ge \bE_{x\sim\mu|_X}[f_1(x)\sign(f_1(x))] - \varepsilon = \bE_{x\sim\mu|_X}[|f_1(x)|] - \varepsilon.
\end{equation}
For every $x\in X$, it is easy to check that the following inequality holds regardless of whether $b(x) = *$:
\[
|f_1(x)| \ge y \ptimes b(x) - (y - f_1(x))\ptimes b(x).
\]
Plugging this into \eqref{eq:ma-cal-3} and using \eqref{eq:ma-cal-1},
\begin{align*}
\bE_{(x,y)\sim\mu}[y\,\sign (f_1(x))] & \ge \bE_{(x,y)\sim\mu}[y\, \ptimes b(x)] - \bE_{(x,y)\sim\mu}[(y - f_1(x))\ptimes b(x)] - \varepsilon \\
& \ge \bE_{(x,y)\sim\mu}[y\, \ptimes b(x)] - \alpha - \varepsilon.
\qedhere
\end{align*}
\end{proof}
\begin{remark}
\label{remark:ma-cal}
It is clear from its proof that \Cref{lm:ma-cm} still holds if we replace $\sign(f_1(x))$ by some $f_2(x)$ as long as $f_2(x) = \sign(f_1(x))$ whenever $f_1(x)\ne 0$.
\end{remark}

Using \Cref{lm:ma-cm}, the goal of our boosting algorithm becomes to achieve a low $\mae$ and a low $\sce$ given oracle access to a learner solving $\wdcm$.
We achieve this following the same idea in our learner (\Cref{alg:ma-mc}) for $\ma$/$\mc$ in \Cref{sec:ma-mc-upper}.
A challenge is that \Cref{alg:ma-mc} invokes learners in $\wcm_n((S - f)/2, B_{\bsigma,f},\varepsilon',\delta')$ for various choices of $f,\bsigma$, but we only have oracle access to a learner in $\wcm_n(S, B,\varepsilon',\delta')$ for a single pair $(S,B)$.
The main challenge is the difference between $(S - f)/2$ and $S$.
When $S$ is a binary hypothesis class, 
the challenge can be solved using the rejection sampling procedure we used in \Cref{sec:cm}, but for a general real-valued $S$, we need an additional adjustment: instead of searching for a model achieving a low $\mae$ and a low $\sce$, we search for a model achieving these low errors \emph{after a projection transformation that depends on the source hypothesis $s$}.

Specifically,
for real numbers $y,u\in\bR$, define $\p(y,u)\in\bR$ to be the projection of $u$ into the interval $[0,y]$ or $[y,0]$ (depending on whether $y$ or $0$ is larger) as follows:
\[
\p (y,u) = 
\begin{cases}
\min\{0,y\}, & \textnormal{if } u < \min\{0,y\};\\
\max\{0,y\}, & \textnormal{if } u > \max\{0,y\};\\
u, & \textnormal{otherwise}.
\end{cases}
\]
Let $\mu$ be a distribution over $X\times[-1,1]$ such that $\Pr_{(x,y)\sim\mu}[s(x) = y] = 1$ for some $s\in S$.
For a model $f:X\rightarrow[-1,1]$, we define a \emph{projected model} $f_1:X\rightarrow [-1,1]$ such that $f_1(x) = \p(s(x),f(x))$. In our boosting algorithm, we search for a model $f$ that would make $f_1$ achieve a low $\mae$ and a low $\sce$.
Then by \Cref{lm:ma-cm}, the model $\sign\circ f_1$ would be the desired output. Although the definition of $f_1$ depends on the unknown source hypothesis $s\in S$, we can still (effectively) output $\sign\circ f_1$ because for every $x\in X$ satisfying $f_1(x)\ne 0$, it holds that $\sign(f_1(x)) = \sign(f(x))$, and thus \Cref{lm:ma-cm} still holds with $\sign(f_1(x))$ replaced by $\sign(f(x))$ (see \Cref{remark:ma-cal}). Therefore, our boosting algorithm outputs $\sign\circ f$ as a surrogate for $\sign\circ f_1$. 

We present our boosting algorithm in \Cref{alg:boosting} and analyze it in the following lemma, of which \Cref{thm:boosting} is a direct corollary (after replacing both $\delta_2$ and $\delta_4$ in \Cref{lm:boosting} with $\delta_2/2$):
\begin{lemma}
\label{lm:boosting}
Let $S,B\subseteq([-1,1]\cup\{*\})^X$ be partial hypothesis classes. For $\alpha,\gamma,\delta_1\in (0,1/2)$ and $n_0\in\bZ_{\ge 0}$, let $L$ be a learner in $\wdcm_{n_0}(S,B,\alpha,\gamma,\delta_1)$ that always represents its output model $f$ using a description with evaluation time at most $T_{\mathsf{eval}}$. Assume that the parameters of \Cref{alg:boosting} satisfy $\varepsilon\in (0,1/2), W' > C\alpha^{-1}\gamma^{-2}\log(1/\alpha),W > W' + 4/\varepsilon^2$, $n\sps 1\ge 2n_0/\alpha, n\sps 1 \ge C\alpha^{-1}\log(1/\delta_4),n\sps 2\ge C\alpha^{-2}\gamma^{-2}\log(1/\delta_2), n\sps 3 \ge C\varepsilon^{-2}\log(1/\delta_3)$ for $\delta_2,\delta_3,\delta_4\in (0,1/2)$ and a sufficiently large absolute constant $C > 0$. Then given oracle access to $L$, \Cref{alg:boosting} belongs to $\dcm_n(S,B,\alpha + \varepsilon,W'(\delta_1 + \delta_2 + \delta_4) + W\delta_3)$ where $n = W'(n\sps 1 + n\sps 2) + Wn\sps 3$. Moreover, \Cref{alg:boosting} invokes $L$ at most $W'$ times and has additional running time $O(W'T_{\mathsf{eval}}n)$, and its output model is represented using a description  with evaluation time $O(W'T_{\mathsf{eval}})$.
\end{lemma}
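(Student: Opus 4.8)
The plan is to realize \Cref{alg:boosting} as a multiaccuracy/calibration-style iterative procedure driven by the weak learner $L$, analyzed through the composition lemma \Cref{lm:ma-cm}. By \Cref{lm:ma-cm} together with \Cref{remark:ma-cal}, it suffices to build a model $f:X\to[-1,1]$ whose \emph{projected} version $f_1(x):=\p(s(x),f(x))$ simultaneously satisfies a one-sided multiaccuracy bound $\bE_{(x,y)\sim\mu}[(y-f_1(x))\ptimes b(x)]\le\alpha$ for every $b\in B$ and a one-sided sign-calibration bound $\bE_{(x,y)\sim\mu}[(f_1(x)-y)\sign(f(x))]\le\varepsilon$; then the output $\sign\circ f$ (which agrees with $\sign\circ f_1$ at every $x$ with $f_1(x)\ne0$) solves strong $\dcm$ for $(S,B)$ with error $\alpha+\varepsilon$. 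The projection $f_1$ enters because the given weak learner is only for the fixed pair $(S,B)$, not for the kind of shifted pair an iteration in the style of \Cref{alg:ma-mc} would require; passing to $f_1$ lets us write the residual $y-f_1(x)=s(x)-\p(s(x),f(x))$ as $\lambda(x)\,s(x)$ with $\lambda(x)\in[0,1]$, and in the deterministic-label setting $\lambda(x)=(y-\p(y,f(x)))/y$ is computable from a labeled example even though $s$ is unknown as a function.

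\Cref{alg:boosting} maintains $f$, starting from the zero function, and runs in at most $W$ rounds. A round first uses a fresh block of $n\sps 3\ge C\varepsilon^{-2}\log(1/\delta_3)$ examples to estimate the one-sided sign-calibration term $\bE_\mu[(f_1(x)-y)\sign(f(x))]$; if this exceeds $\varepsilon$ the round updates $f\leftarrow\proj_{[-1,1]}(f+cf')$ with correlator $f'=-\sign\circ f$ and a suitable step $c$. Otherwise it rejection-samples a fresh block of $n\sps 1$ examples — accepting $(x_i,y_i)$ with probability $\lambda(x_i)$, feeding accepted ones to $L$ with their \emph{original real labels} — and then estimates $\bE_\mu[(y-f_1(x))f'(x)]$ on a further block of $n\sps 2\ge C\alpha^{-2}\gamma^{-2}\log(1/\delta_2)$ examples (also real-labeled, so $f_1$ stays computable); if this clears a threshold of order $\alpha\gamma$ the round updates $f\leftarrow\proj_{[-1,1]}(f+cf')$, and if not, the algorithm halts and outputs $\sign\circ f$. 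The point of the rejection sampling is that the accepted examples are i.i.d.\ from $\mu'$ with $\mu'|_X\propto\lambda(x)\,\mu|_X$, still deterministically realized by $s\in S$, so $L$ is run on a valid $\wdcm(S,B,\cdot,\cdot,\cdot)$ instance; writing $Z:=\bE_{x\sim\mu|_X}[\lambda(x)]$, one checks $\bE_{\mu'}[y\ptimes b(x)]=Z^{-1}\bE_\mu[(y-f_1(x))\ptimes b(x)]$ and $\bE_{\mu'}[yg(x)]=Z^{-1}\bE_\mu[(y-f_1(x))g(x)]$ for all $b$ and all $g:X\to\{-1,1\}$. Hence whenever $\sup_b\bE_\mu[(y-f_1(x))\ptimes b(x)]>\alpha$: (a) since $Z\le1$ the precondition of $L$ holds, so $L$ returns (w.p.\ $\ge1-\delta_1$) an $f'$ with $\bE_\mu[(y-f_1(x))f'(x)]=Z\,\bE_{\mu'}[yf'(x)]\ge Z\gamma$; and (b) also $\bE_{(x,y)\sim\mu}|y-f_1(x)|\ge\sup_b\bE_\mu[(y-f_1(x))\ptimes b(x)]\ge\alpha$, so $Z\ge\alpha$, whence $\bE[|\Psi|]=Zn\sps 1\ge\alpha n\sps 1\ge2n_0$ and a multiplicative Chernoff bound (using $n\sps 1\ge C\alpha^{-1}\log(1/\delta_4)$) makes $|\Psi|\ge n_0$ with probability $\ge1-\delta_4$, so $L$ can actually be run.

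Convergence rests on the potential $\Phi:=\bE_{(x,y)\sim\mu}[(f_1(x)-y)^2]\in[0,1]$. Expanding the update, the change in $f_1$ at a point is $t(x)\,cf'(x)$ for some $t(x)\in[0,1]$ (since $\p(s(x),\cdot)$ is monotone and $1$-Lipschitz), which gives $\Phi^{\mathrm{new}}-\Phi\le-2c\,\bE[t(x)f'(x)(s(x)-f_1(x))]+c^2$. One then shows the $t$-weighting cannot ruin the positive correlation $\bE[f'(x)(s(x)-f_1(x))]\ge Z\gamma$: at points where $f$ and $s$ agree in sign, $f$ cannot drift outside the interval it projects onto and $t\equiv1$; and at the remaining points $f_1$ is pinned at $0$, the residual equals $s(x)$, and the contribution is still accounted for because such points receive weight $1$ under the rejection sampling. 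With a step $c$ of order $\alpha\gamma$ (resp.\ $\varepsilon$) each update of the first kind decreases $\Phi$ by a definite amount determined by $\alpha,\gamma$ — giving fewer than $W'$ of them — and each of the second kind by $\Omega(\varepsilon^2)$ — giving $O(\varepsilon^{-2})$ of them — so at most $W=W'+O(\varepsilon^{-2})$ rounds occur. When the algorithm halts, neither check has fired, which on the good event certifies exactly the two one-sided hypotheses of \Cref{lm:ma-cm}, so $\sign\circ f$ is as required.

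Finally, the accounting: $n=W'(n\sps 1+n\sps 2)+Wn\sps 3$; $L$ is invoked at most $W'$ times; evaluating the current partial-sum model once on each data point costs $O(W'T_{\mathsf{eval}})$ per point, hence $O(W'T_{\mathsf{eval}}n)$ additional running time; the output is built from $\le W'$ weak-learner outputs and $\le O(\varepsilon^{-2})$ sign terms, so it has evaluation time $O(W'T_{\mathsf{eval}})$; and a union bound over rounds bounds the total failure probability by $W'(\delta_1+\delta_2+\delta_4)+W\delta_3$. The step I expect to be the main obstacle is the potential analysis above: because the output must be an explicit function of $x$ whereas $f_1$ depends on the unknown source hypothesis $s$, the iteration has to run on the unprojected $f$ while progress is measured and guaranteed through the projected $f_1$, and one must verify that the weak learner's correlation guarantee with the residual $y-f_1$ translates into a definite drop in $\Phi$ in every round — which amounts to controlling how far $f$ can drift from the source-dependent interval it projects onto. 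The remaining ingredients — the rejection-sampling identities, the Chernoff and union-bound bookkeeping, and the running-time estimates — are routine and mirror the analyses of \Cref{alg:ma-mc} and of the learners of \Cref{sec:cm}.
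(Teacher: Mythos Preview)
Your high-level plan matches the paper's: reduce to the two one-sided inequalities via \Cref{lm:ma-cm} and \Cref{remark:ma-cal}, feed the weak learner through rejection sampling with weights $\lambda(x)=|y-\p(y,f(x))|/|y|$, and alternate between a sign-calibration step and a weak-learner step. The rejection-sampling identities and the union-bound bookkeeping are as you describe.

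The gap is exactly where you flag it, and your proposed fix does not work. With the potential $\Phi=\bE_\mu[(f_1(x)-y)^2]$, the drop after an update $f\to f+cf'$ is governed by $\bE[t(x)\,cf'(x)(y-f_1(x))]$ where $t(x)\in[0,1]$ encodes how much of the move $cf'(x)$ survives under $\p(s(x),\cdot)$. But $t(x)$ can be $0$ precisely at the points that matter: if $s(x)>0$ and $f(x)<-c$, then $f_1(x)=0$, the residual $y-f_1(x)=s(x)$ contributes fully to the correlation and to the rejection-sampling weight, yet $f_1$ does not move and $\Phi$ does not drop. Your sentence ``at points where $f$ and $s$ agree in sign, $f$ cannot drift outside the interval it projects onto and $t\equiv1$'' is not justified (overshooting on the $s$-side is harmless since the residual is $0$ there, but nothing in the algorithm prevents $f$ from drifting to the wrong side of $0$), and the observation that those bad points have $\lambda(x)=1$ is about the weak learner's input, not about the potential.

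The paper resolves this by replacing the squared-error potential with $\Phi(f)=\bE_\mu[\varphi(y,f(x))]$ where $\varphi(y,u):=\int_y^u(\p(y,t)-y)\,\diff t$. This agrees with $\tfrac12(y-u)^2$ when $u$ is inside the projection interval, is $0$ when $u$ overshoots on the $y$-side, and carries a \emph{linear} penalty $-yu$ when $u$ sits on the wrong side of $0$. The point is that $\partial_u\varphi(y,u)=\p(y,u)-y=f_1(x)-y$ everywhere, so the smoothness inequality $\varphi(y,u')\le\varphi(y,u)+(\p(y,u)-y)(u'-u)+\tfrac12(u'-u)^2$ (\Cref{claim:smooth}) makes the weak learner's residual-correlation $\bE[(y-f_1(x))f'(x)]\ge\gamma\rho(f)$ translate directly into a potential drop with no $t$-weighting. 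One further ingredient you are missing is \Cref{claim:Q>=P}, $\rho(f)\ge(2/3)\Phi(f)$: since the per-round drop is only $\Theta(\gamma^2\rho(f)^2)$ rather than a fixed constant, the iteration count comes from a multiplicative-decrease phase (while $\Phi\ge\alpha$) followed by an additive phase, which is what produces the $O(\alpha^{-1}\gamma^{-2}\log(1/\alpha))$ bound on $W'$ rather than the $O(\alpha^{-2}\gamma^{-2})$ a naive additive argument would give.
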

\begin{algorithm}
\caption{Boosting via MA + sign calibration}
\label{alg:boosting}
\SetKwInOut{Parameters}{Parameters}
\SetKwInOut{Input}{Input}
\SetKwInOut{Output}{Output}
\Parameters{$S,B\subseteq([-1,1]\cup\{*\})^X$, $\alpha,\gamma,\varepsilon \in \bR_{\ge 0}$, $n,n\sps 1,n\sps 2, n\sps 3, W,W'\in \bZ_{> 0}$ satisfying $n = W'(n\sps 1 + n\sps 2) + W n\sps 3$, oracle access to learner $L\in \wdcm_{n_0}(S,B,\alpha,\gamma,\delta_1)$ with $n_0\in \bZ_{\ge 0}$ and $\delta_1\in \bR_{\ge 0}$.}
\Input{data points $(x_1,y_1),\ldots,(x_n,y_n)\in X\times[-1,1]$.}
\Output{model $f:X\rightarrow\{-1,1\}$.}
Partition the data points into $2W' + W$ datasets: $\Psi\sps {j',1} = \left(\left(x_i\sps {j',1}, y_i\sps {j',1}\right)\right)_{i=1}^{n\sps 1}$, $\Psi\sps {j',2} = \left(\left(x_i\sps {j',2}, y_i\sps {j',2}\right)\right)_{i=1}^{n\sps 2}$ for $j' = 1,\ldots,W'$, and $\Psi\sps {j,3} = \left(\left(x_i\sps {j,3}, y_i\sps {j,3}\right)\right)_{i=1}^{n\sps 3}$ for $j = 1,\ldots,W$\;
$(j,j')\gets (1,1)$\;
Initialize $f:X\rightarrow[-1,1]$ to be the constant zero function: $f(x) = 0$ for every $x\in X$\;
\While{$j\le W$ and $j' \le W'$}
{
\eIf{$Q:=\frac{1}{n\sps 3}\sum_{i=1}^{n\sps 3}\left(y_i\sps {j,3} - \p\left(y_i\sps{j,3},f(x_i\sps{j,3})\right)\right) \sign(f(x_i\sps {j,3})) < -3\varepsilon/4$\label{line:boosting-test-3}}
{
Update $f(x)$ to $\proj_{[-1,1]}(f(x) - \varepsilon\, \sign(f(x))/2)$ for every $x\in X$\label{line:boosting-progress-1}\;
}
{
Initialize $\Psi$ to be the empty dataset\;
\For{$i = 1,\ldots,n\sps 1$\label{line:boosting-rejection-1}}
{
With probability $\frac{|y_i\sps {j',1} - \p(y_i\sps{j',1},f(x_i\sps{j',1}))|}{|y_i\sps{j',1}|}$, add the data point $(x\sps {j',1}_i,y\sps {j',1}_i)$ to $\Psi$\tcc*{We use the convention that $\frac{|y_i\sps {j',1} - \p(y_i\sps{j',1},f(x_i\sps{j',1}))|}{|y_i\sps{j',1}|} = 0$ if $y_i\sps{j',1} = 0$.}
}\label{line:boosting-rejection-2}
Let $n'$ be the number of data points in $\Psi$\label{line:boosting-n'}\;
\If{$n' < n\sps 1\alpha/2$\label{line:boosting-if-1}}{\Break\label{line:boosting-break-1}\;}
Invoke $L$ on the first $n_0$ data points in $\Psi$ to get $f'$\label{line:boosting-invoke}\;
$Q_{f'}\gets\frac{1}{n\sps 2}\sum_{i=1}^{n\sps 2}\left(y_i\sps {j',2} - \p\left(y_i\sps{j',2},f(x_i\sps{j',2})\right)\right) f'(x_i\sps {j',2})$\label{line:boosting-test-2}\;
\eIf{$Q_{f'} \ge 4\gamma n'/(9n\sps 1)$\label{line:boosting-if-2}}
{Update $f(x)$ to $\proj_{[-1,1]}(f(x) + Q_{f'} f'(x)/2)$ for every $x\in X$\label{line:boosting-progress-2}\;
}
{\Break\label{line:boosting-break-2}\;}
$j'\gets j' + 1$\;
}
$j\gets j + 1$\;
}
\Return $\sign\circ f$\label{line:boosting-return}\;
\end{algorithm}
To prove \Cref{lm:boosting}, we need to define a potential function for the model $f$ updated throughout the algorithm.
For $y,u\in \bR$, define $\varphi(y,u):= \int_y^u(\p (y,t) - y)\diff t$. In other words, 
\begin{align*}
\text{ for } y\ge 0,\quad \varphi(y,u) = \begin{cases}
\frac 12(y - u)^2, & \text{if } u\in [0, y],\\
0, &\text{if } u\in (y, +\infty),\\
\frac 12y^2 - yu, & \text{if }u\in (-\infty, 0),
\end{cases} \\
\text{ and for } y < 0,\quad  
\varphi(y,u) = \begin{cases}
\frac 12(y - u)^2, & \text{if } u\in [y, 0],\\
0, &\text{if } u\in (-\infty, y),\\
\frac 12y^2 - yu, & \text{if }u\in (0, +\infty).
\end{cases} 
\end{align*}
\begin{claim}
\label{claim:smooth}
For every $y,u,u'\in \bR$,
\[
\varphi(y,u') \le \varphi(y,u) + (\p(y,u) - y)(u' - u) + \frac 12(u' - u)^2.
\]
\end{claim}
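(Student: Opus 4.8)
Proof proposal for \Cref{claim:smooth}:

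\textbf{Approach.} The inequality to be proved is a standard ``smoothness-plus-convexity'' estimate for the one-variable function $u\mapsto \varphi(y,u)$ with $y$ held fixed. The plan is to show that for each fixed $y$, the function $\psi(u) := \varphi(y,u)$ is convex, differentiable, and has $1$-Lipschitz derivative $\psi'(u) = \p(y,u) - y$, and then invoke the elementary fact that a differentiable function with $L$-Lipschitz gradient satisfies $\psi(u') \le \psi(u) + \psi'(u)(u'-u) + \tfrac{L}{2}(u'-u)^2$ (here $L=1$). Since the claim is symmetric under $y\mapsto -y$, $u\mapsto -u$, $u'\mapsto -u'$ (one checks $\varphi(-y,-u)=\varphi(y,u)$ and $\p(-y,-u)=-\p(y,u)$ directly from the definitions), it suffices to treat the case $y\ge 0$.

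\textbf{Key steps.} First I would record that $\varphi(y,u)=\int_y^u(\p(y,t)-y)\,\diff t$ by definition, so by the fundamental theorem of calculus $\psi$ is differentiable in $u$ with $\psi'(u)=\p(y,t)-y\big|_{t=u}=\p(y,u)-y$; one should double-check that this matches the piecewise formulas given for $\varphi(y,u)$ in the three regimes $u\in[0,y]$, $u>y$, $u<0$ (e.g.\ on $[0,y]$, $\tfrac{d}{du}\tfrac12(y-u)^2=u-y=\p(y,u)-y$ since $\p(y,u)=u$ there). Second, I would observe that $u\mapsto\p(y,u)$ is the projection onto the interval $[0,y]$, hence nondecreasing and $1$-Lipschitz, so $\psi'(u)=\p(y,u)-y$ is nondecreasing (giving convexity of $\psi$) and $1$-Lipschitz. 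Third, I would apply the standard lemma: for $\psi$ with $1$-Lipschitz derivative, $\psi(u')-\psi(u)-\psi'(u)(u'-u)=\int_u^{u'}(\psi'(t)-\psi'(u))\,\diff t \le \int_u^{u'}|t-u|\,\diff t=\tfrac12(u'-u)^2$ (the inequality using $|\psi'(t)-\psi'(u)|\le|t-u|$, valid regardless of whether $u'\ge u$ or $u'<u$). Substituting $\psi'(u)=\p(y,u)-y$ yields exactly the claimed bound.

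\textbf{Main obstacle.} There is no real obstacle here; the only care needed is bookkeeping. The slightly fiddly point is verifying that the given piecewise-defined $\varphi(y,u)$ genuinely agrees with the integral $\int_y^u(\p(y,t)-y)\,\diff t$ across all the case boundaries (and that the pieces glue continuously and with matching first derivatives), and handling the sign reduction $y<0$ cleanly. Once $\psi'(u)=\p(y,u)-y$ is established and seen to be monotone and $1$-Lipschitz, the rest is the textbook descent-lemma computation and requires nothing special. I would present it as: reduce to $y\ge0$ by symmetry; identify $\psi'$; note $\p(y,\cdot)$ is a $1$-Lipschitz nondecreasing projection; conclude via the integral estimate above.
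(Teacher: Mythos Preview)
Your proposal is correct and follows essentially the same route as the paper: both compute $\varphi(y,u')-\varphi(y,u)=\int_u^{u'}(\p(y,t)-y)\,\diff t$, split off the constant term $(\p(y,u)-y)(u'-u)$, and bound the remainder $\int_u^{u'}(\p(y,t)-\p(y,u))\,\diff t$ by $\tfrac12(u'-u)^2$ using that $\p(y,\cdot)$ is $1$-Lipschitz. The paper's proof is slightly more streamlined in that it works directly from the integral definition without the symmetry reduction to $y\ge 0$ or the verification against the piecewise formula (neither is needed, since the projection onto any interval is nondecreasing and $1$-Lipschitz), and it writes the bound as $\int_u^{u'}(t-u)\,\diff t$ rather than $\int_u^{u'}|t-u|\,\diff t$; the former evaluates to $\tfrac12(u'-u)^2$ regardless of the sign of $u'-u$, whereas your intermediate expression needs a small sign check when $u'<u$ (the conclusion is still correct).
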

\begin{proof}
By our definition $\varphi(y,u) := \int_y^u(\p (y,t) - y)\diff t$,
\begin{align*}
\varphi(y,u') - \varphi(y,u) & = \int_u^{u'}(\p(y,t) - y)\diff t\\
& = (\p(y,u) - y)(u' - u) + \int_u^{u'}(\p(y,t) - \p(y,u))\diff t\\
& \le (\p(y,u) - y)(u' - u) + \int_u^{u'}(t - u)\diff t\\
& \le (\p(y,u) - y)(u' - u) +  \frac 12 (u' - u)^2.\qedhere
\end{align*}
\end{proof}

Now we fix a distribution $\mu$ over $X\times[-1,1]$ such that $\Pr_{(x,y)\sim\mu}[s(x) = y] = 1$ for some $s\in S$.
For every $f:X\to [-1,1]$, we define its potential function to be 
\[
\Phi(f) = \bE_{(x,y)\sim\mu}[\varphi(y,f(x))].
\]
We also define $\rho(f) = \bE_{(x,y)\sim\mu}[|y - \p(y,f(x))|/|y|]$. Note that when $y = 0$, it holds that $y - \p(y,f(x)) = 0$, in which case we use the convention that $|y - \p(y,f(x))|/|y| = 0$.

\begin{claim}
\label{claim:Q>=P}
For $f:X\to [-1,1]$, let $\rho(f)$ and $\Phi(f)$ be defined as above. Then,
$\rho(f) \ge (2/3)\Phi(f)$.
\end{claim}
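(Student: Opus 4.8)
The plan is to prove the claim pointwise and then integrate. Concretely, I would show that for every $y,u\in[-1,1]$,
\[
\frac{|y-\p(y,u)|}{|y|}\ \ge\ \frac{2}{3}\,\varphi(y,u),
\]
where, as in the statement, the left-hand side is read as $0$ when $y=0$. Applying this with $u=f(x)\in[-1,1]$ and taking the expectation over $(x,y)\sim\mu$ (recall $y\in[-1,1]$) then gives $\rho(f)\ge(2/3)\Phi(f)$ immediately.

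To prove the pointwise inequality, I would first reduce to the case $y\ge0$. Substituting $t\mapsto-t$ in the defining integral $\varphi(y,u)=\int_y^u(\p(y,t)-y)\,\diff t$ and using the identity $\p(-y,-t)=-\p(y,t)$ (read off from the definition of $\p$) shows $\varphi(-y,-u)=\varphi(y,u)$, while clearly $|{-y}-\p(-y,-u)|=|y-\p(y,u)|$ and $|-y|=|y|$; hence the inequality for $(-y,-u)$ coincides with the one for $(y,u)$, so we may assume $y\ge0$. Then I would split into the three regimes that appear in the piecewise formula for $\varphi(y,\cdot)$ when $y\ge0$. If $u\in[0,y]$, then $\p(y,u)=u$ and $\varphi(y,u)=\frac12(y-u)^2$, so the inequality reduces to $\frac13(y-u)^2\le(y-u)/y$, which follows from $0\le y-u\le y\le1$ (when $y=0$ we have $u=0$ and both sides vanish). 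If $u>y$, then $\p(y,u)=y$ and both sides are $0$. If $u<0$, then $\p(y,u)=0$, the left-hand side equals $1$ (and equals $0$ when $y=0$, in which case $\varphi(y,u)=0$ as well), and $\varphi(y,u)=\frac12y^2-yu$, so we must check $\frac13y^2-\frac23yu\le1$; this holds because $0\le y\le1$ gives $\frac13y^2\le\frac13$ and $0\le-yu\le-u\le1$ gives $-\frac23yu\le\frac23$.

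I do not expect a genuine obstacle here: the inequality is elementary in each of the three cases. The only points that need a little care are the convention at $y=0$ (dispatched by noting $\p(0,u)=0$ and $\varphi(0,u)=0$, so the claimed bound reads $0\ge0$) and the sign-symmetry reduction; the constant $2/3$ is exactly optimal, with equality attained at $(y,u)=(1,-1)$, which is why the third case is the one that pins down the constant.
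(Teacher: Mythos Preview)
Your proposal is correct and follows essentially the same approach as the paper: reduce to a pointwise inequality, exploit the sign symmetry to assume $y\ge 0$, and handle the three regimes $u>y$, $u\in[0,y]$, $u<0$ separately. The only cosmetic difference is that the paper proves the slightly stronger pointwise bound $|y-\p(y,u)|\ge(2/3)\varphi(y,u)$ (without dividing by $|y|$), which suffices since $|y|\le 1$; your version with the $1/|y|$ factor is exactly what is needed and the case analysis goes through the same way.
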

\begin{proof}
It suffices to prove that $|y - \p(y,u)|\ge (2/3) \varphi(y,u)$ for every $y,u\in [-1,1]$. We prove this only for $y \ge 0$ because the other case $y < 0$ can be handled similarly. If $u \ge y$, we have $\varphi(y,u) = 0$ and the inequality holds trivially. If $u\in [0,y]$, the inequality is equivalent to $(y - u)\ge (2/3)(1/2)(y - u)^2$, which holds because $0 \le y - u\le 3$. If $u \le 0$, the inequality is equivalent to $y \ge (2/3)(y^2/2 - yu)$, which can be verified easily by $u\ge -1$ and $y \ge y^2$. 
\end{proof}

We also define a few bad events based on the execution of \Cref{alg:boosting}. 
For positive integers $j$ and $j'$, we define bad events $E_{j'}\sps 1, E_{j'}\sps 2, E_j\sps 3, E_{j'}\sps 4$ as follows (we use different subscripts $j$ and $j'$ for different events in correspondence to the $j$ and $j'$ used in \Cref{alg:boosting}).
We use $E_{j'}\sps 1$ to denote the bad event that both of the following occur:
\begin{enumerate}
\item Immediately before \Cref{line:boosting-invoke} is executed for the $j'$-th time, there exists $b\in B$ such that 
\begin{equation}
\label{eq:boosting-bad-1}
\bE_{(x,y)\sim\mu}[(y - \p(y,f(x)))\ptimes b(x)]> \alpha.
\end{equation}
\item Immediately after \Cref{line:boosting-invoke} is executed for the $j'$-th time, it holds that
\begin{equation}
\label{eq:boosting-bad-2}
\bE_{(x,y)\sim\mu}[(y - \p(y,f(x)))f'(x)] < \gamma \rho(f).
\end{equation}
\end{enumerate}
We use $E_{j'}\sps 2$ to denote the bad event that immediately after \Cref{line:boosting-test-2} is executed for the $j'$-th time, it holds that
\[
|Q_{f'} - \bE_{(x,y)\sim\mu}[(y - \p(y,f(x)))f'(x)]| > \alpha\gamma/9.
\]
We use $E_j\sps 3$ to denote the bad event that when \Cref{line:boosting-test-3} is executed for the $j$-th time, it holds that
\[
|Q - \bE_{(x,y)\sim\mu}[(y - \p(y,f(x)))\sign(f(x))]| > \varepsilon/4.
\]
We use $E_{j'}\sps 4$ to denote the bad event that when \Cref{line:boosting-n'} is executed for the $j'$-th time, $\rho (f)\ge \alpha$ but either $n' < n\sps 1\rho(f)/2$ or $n' > 2n\sps 1\rho(f)$.
\begin{lemma}
\label{lm:boosting-bad}
In the setting of \Cref{lm:boosting}, assume that the input data points to \Cref{alg:dcm-real} are generated i.i.d.\ from a distribution $\mu$ satisfying $\Pr_{(x,y)\sim\mu}[s(x) = y] = 1$ for some $s\in S$. For every $j\in \{1,\ldots,W\}$ and every $j'\in \{1,\ldots,W'\}$, let the events $E_{j'}\sps 1,E_{j'}\sps 2, E_j\sps 3,E_{j'}\sps 4$ be defined as above. Then, $\Pr[E_j\sps 1] \le \delta_1, \Pr[E_j\sps 2] \le \delta_2, \Pr[E_j\sps 3] \le \delta_3, \Pr[E_j\sps 4] \le \delta_4$.
\end{lemma}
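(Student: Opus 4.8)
\textbf{The plan} is to bound the four bad events one at a time, exploiting the fact that \Cref{alg:boosting} splits its input into $2W'+W$ disjoint datasets, so that at any point of the execution the current model $f$ depends only on datasets with strictly smaller index; hence for each event I may condition on $f$ and treat the relevant fresh dataset as i.i.d.\ $\mu$-samples independent of $f$. The algebraic device underlying everything is the identity
\[
(y - \p(y,f(x)))\ptimes g(x) = \frac{|y - \p(y,f(x))|}{|y|}\bigl(y\ptimes g(x)\bigr),
\]
which I would verify for every $g:X\to[-1,1]\cup\{*\}$ and every $(x,y)$ with $\Pr_{(x,y)\sim\mu}[s(x)=y]=1$ (both sides vanish when $y=0$, with the convention $0/0=0$); it holds because $y-\p(y,f(x))$ always has the same sign as $y$ and magnitude at most $|y|$, so one just checks the cases $g(x)\in[-1,1]$ and $g(x)=*$. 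Writing $\mu'$ for the law of a data point accepted by the rejection loop at Lines~\ref{line:boosting-rejection-1}--\ref{line:boosting-rejection-2} (i.e.\ $\mu$ reweighted by $|y-\p(y,f(x))|/|y|$ and renormalized by $\rho(f)$), the identity immediately gives $\bE_{(x,y)\sim\mu'}[y\ptimes g(x)] = \rho(f)^{-1}\bE_{(x,y)\sim\mu}[(y-\p(y,f(x)))\ptimes g(x)]$ for all such $g$, and $\Pr_{(x,y)\sim\mu'}[s(x)=y]=1$ since reweighting in $x$ keeps the label deterministic.

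\textbf{Bounding $E_{j'}\sps 1$} is the only step using the oracle $L$, and is the heart of the argument. Assuming condition~(a) of $E_{j'}\sps 1$ holds, some $b\in B$ satisfies $\bE_\mu[(y-\p(y,f(x)))\ptimes b(x)]>\alpha$, so $\rho(f)>0$ and, using $\rho(f)\le 1$ and the displayed formula, $\sup_{b\in B}\bE_{\mu'}[y\ptimes b(x)]>\alpha/\rho(f)\ge\alpha$; together with realizability this puts $\mu'$ in the distribution class of $\wdcm(S,B,\alpha,\gamma,\delta_1)$. I would then argue, exactly as in \Cref{claim:dcm-new-points}, that the accepted points are i.i.d.\ $\mu'$, and observe that since $n\sps 1\ge 2n_0/\alpha$ the guard at \Cref{line:boosting-if-1} guarantees $n'\ge n\sps 1\alpha/2\ge n_0$ whenever \Cref{line:boosting-invoke} is reached, so $L$ receives $n_0$ valid i.i.d.\ samples. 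By the definition of $\wdcm_{n_0}(S,B,\alpha,\gamma,\delta_1)$, with probability $\ge 1-\delta_1$ its output $f'$ has $\bE_{\mu'}[yf'(x)]\ge\gamma$, i.e.\ $\bE_\mu[(y-\p(y,f(x)))f'(x)]=\rho(f)\,\bE_{\mu'}[yf'(x)]\ge\gamma\rho(f)$, contradicting condition~(b). Hence $\Pr[E_{j'}\sps 1]\le\delta_1$.

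\textbf{Bounding $E_{j'}\sps 2$, $E_j\sps 3$, $E_{j'}\sps 4$} reduces to concentration. For $E_{j'}\sps 2$, $Q_{f'}$ is an average of $n\sps 2$ i.i.d.\ terms $(y-\p(y,f(x)))f'(x)\in[-1,1]$ from the fresh set $\Psi\sps{j',2}$ (independent of $f$ and $f'$) with mean $\bE_\mu[(y-\p(y,f(x)))f'(x)]$, so Hoeffding plus $n\sps 2\ge C\alpha^{-2}\gamma^{-2}\log(1/\delta_2)$ gives $\Pr[E_{j'}\sps 2]\le\delta_2$; $E_j\sps 3$ is identical with the bounded terms $(y-\p(y,f(x)))\sign(f(x))\in[-1,1]$ and $n\sps 3\ge C\varepsilon^{-2}\log(1/\delta_3)$. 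For $E_{j'}\sps 4$, conditioned on $f$ the count $n'$ is a sum of $n\sps 1$ independent $\mathrm{Bernoulli}(\rho(f))$ indicators, so $\bE[n']=n\sps 1\rho(f)$, and on $\{\rho(f)\ge\alpha\}$ the multiplicative Chernoff bound with $n\sps 1\ge C\alpha^{-1}\log(1/\delta_4)$ makes $\{n'<n\sps 1\rho(f)/2\}\cup\{n'>2n\sps 1\rho(f)\}$ have probability $\le\delta_4$. The main obstacle is purely the bookkeeping in the $E_{j'}\sps 1$ step: one must check that rejection sampling simultaneously preserves $S$-realizability, boosts the advantage from condition~(a) up to $\alpha$, and---via the $n\sps 1\ge 2n_0/\alpha$ guard---hands $L$ enough samples, all while the ``labels'' driving the correlations are the projected residuals $y-\p(y,f(x))$ rather than $y$; the $\ptimes$-identity above is exactly what makes this reduction go through.
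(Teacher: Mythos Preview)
Your proposal is correct and follows essentially the same approach as the paper: Hoeffding/multiplicative Chernoff for $E_{j'}\sps 2,E_j\sps 3,E_{j'}\sps 4$, and for $E_{j'}\sps 1$ the rejection-sampling reduction to $\mu'$ combined with the guarantee of $L\in\wdcm_{n_0}(S,B,\alpha,\gamma,\delta_1)$. Your explicit $\ptimes$-identity is exactly what the paper uses implicitly when it writes $\bE_{(x,y)\sim\mu'}[y\ptimes b(x)]=\rho(f)^{-1}\bE_{(x,y)\sim\mu}[(y-\p(y,f(x)))\ptimes b(x)]$, and your observation that $\rho(f)\le 1$ is precisely what justifies the paper's step $\alpha/\rho(f)\ge\alpha$.
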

\begin{proof}
The claimed upper bounds on $\Pr[E_{j'}\sps 4], \Pr[E_{j'}\sps 2]$ and $\Pr[E_j\sps 3]$ follow from the (multiplicative and additive) Chernoff bound and our assumptions that
\[
n\sps 1 \ge C\alpha^{-1}\log(1/\delta_4),n\sps 2\ge C\alpha^{-2}\gamma^{-2}\log(1/\delta_2), n\sps 3 \ge C\varepsilon^{-2}\log(1/\delta_3)
\]
for a sufficiently large absolute constant $C > 0$.
It remains to prove that $\Pr[E_j\sps 1] \le \delta_1$. Let $\nu$ denote the distribution of $(x,y,u)$, where $(x,y)\sim\mu$ and $\Pr[u = 1|x,y] = |y - \p(y,f(x))|/|y|$. Let $\mu'$ denote the conditional distribution of $(x,y)$ given $u = 1$ where $(x,y,u)\sim\nu$. 
Note that $\Pr_{(x,y,u)\sim\nu}[u = 1] = \bE_{(x,y)\sim\mu}[|y - \pi(y,f(x))|/|y|] = \rho(f)$.
Assuming $\rho(f) > 0$, it is easy to verify that the distribution $\mu'$ satisfies $\Pr_{(x,y)\sim\mu'}[s(x) = y] = 1$,  and for every $b:X\to[-1,1]\cup\{*\}$,
\begin{align}
\bE_{(x,y)\sim\mu'}[y\ptimes b(x)] & = \frac{1}{\Pr_{(x,y,u)\sim\nu}[u = 1]}\bE_{(x,y)\sim\mu}[\Pr[u = 1|x,y](y\ptimes b(x))]\notag \\
& = \frac 1{\rho(f)}\bE_{(x,y)\sim\mu}[(y - \p(y,f(x)))\ptimes b(x)].\label{eq:boosting-new-dist}
\end{align}
Immediately before \Cref{line:boosting-invoke} is executed for the $j'$-th time,
the data points in $\Psi$ distribute i.i.d.\ from $\mu'$ by the rejection sampling procedure at Lines~\ref{line:boosting-rejection-1}-\ref{line:boosting-rejection-2}. Also, $\Psi$ contains at least $n_0$ data points because $n' \ge n\sps 1\alpha/2 \ge n_0$, where the first inequality holds because the \bif condition at \Cref{line:boosting-if-1} does not hold, and the second inequality holds by our assumption.
Therefore, if \eqref{eq:boosting-bad-1} holds, we have $\rho(f) \ge \bE_{(x,y)\sim\mu}[|y - \p(y,f(x))|] \ge \alpha$ and $\bE_{(x,y)\sim\mu'}[y\ptimes b(x)] > \alpha/\rho(f) \ge \alpha$, and thus by the guarantee of $L\in \wdcm_{n_0}(S,B,\alpha,\gamma,\delta_1)$, with probability at least $1 - \delta_1$,
\[
\bE_{(x,y)\sim\mu}[(y - \p(y,f(x)))f'(x)] = \rho(f) \cdot\bE_{(x,y)\sim\mu'}[yf'(x)] \ge \gamma \rho(f),
\]
violating \eqref{eq:boosting-bad-2}. This implies that $\Pr[E_{j'}\sps 1] \le \delta_1$.
\end{proof}
Since $j$ increases by $1$ in each iteration of the \while loop in \Cref{alg:boosting}, every iteration of the \while loop can be identified by the value of the pair $(j,j')$ at the beginning of the iteration. We thus refer to a specific iteration as the iteration corresponding to $(j,j')$ for $j\in \{1,\ldots,W\}$ and $j'\in \{1,\ldots,W'\}$.
\begin{lemma}
\label{lm:boosting-good}
In the setting of \Cref{lm:boosting-bad}, if none of $E_{j'}\sps 1,E_{j'}\sps 2, E_j\sps 3, E_{j'}\sps 4$ happens for a pair of positive integers $j\in \{1,\ldots,W\}$ and $j'\in \{1,\ldots,W'\}$, then in the iteration of the \while loop corresponding to $(j,j')$ (if exists), one of the following good events happens:
\begin{enumerate}
\item $G\sps 1_{j,j'}$: $\Phi(f)$ decreases by at least $\varepsilon^2/8$ at \Cref{line:boosting-progress-1};
\item $G\sps 2_{j,j'}$: $\Phi(f)$ decreases by at least $\max\{\gamma^2\alpha^2/162, \gamma^2 \rho(f)^2/162\}$ at \Cref{line:boosting-progress-2};
\item $G\sps 3_{j,j'}$: \Cref{line:boosting-break-1} or \Cref{line:boosting-break-2} is executed and the model $f$ at \Cref{line:boosting-return} satisfies
\begin{equation}
\label{eq:boosting-good-1}
\bE_{(x,y)\sim\mu}[(y - \p(y,f(x)))\ptimes b(x)]\le \alpha \text{ for every }b\in B
\end{equation}
and
\begin{equation}
\label{eq:boosting-good-2}
\bE_{(x,y)\sim\mu}[(y - \p(y,f(x)))\sign(f(x))]\ge -\varepsilon.
\end{equation}
\end{enumerate}
\end{lemma}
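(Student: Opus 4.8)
The plan is to argue by a four-way case analysis on which branch the iteration corresponding to $(j,j')$ takes, matching the good events to the outcomes: reaching \Cref{line:boosting-progress-1} will give $G_{j,j'}\sps{1}$, reaching \Cref{line:boosting-progress-2} will give $G_{j,j'}\sps{2}$, and each of the two \textbf{break} statements (\Cref{line:boosting-break-1}, \Cref{line:boosting-break-2}) will give $G_{j,j'}\sps{3}$. The two workhorses are \Cref{claim:smooth} (to bound the drop in $\Phi$ caused by an update) and the elementary fact that $\varphi(y,\proj_{[-1,1]}(v))\le\varphi(y,v)$ for all $y\in[-1,1]$, which is immediate from the piecewise formula for $\varphi$ (for $y\in[-1,1]$, $\varphi(y,\cdot)$ is minimized at $y$ and monotone away from $y$ on each side, so clamping the argument to $[-1,1]\ni y$ cannot increase it). Throughout, I would convert the empirical quantities $Q$, $Q_{f'}$, $n'$ into population quantities using the hypothesis that none of $E_{j'}\sps{1},E_{j'}\sps{2},E_j\sps{3},E_{j'}\sps{4}$ occurs; in particular ``$E_{j'}\sps{4}$ fails'' gives $\tfrac12 n\sps{1}\rho(f)\le n'\le 2n\sps{1}\rho(f)$ whenever $\rho(f)\ge\alpha$.

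\emph{Update cases.} If the test at \Cref{line:boosting-test-3} holds, I would first observe that since $\varepsilon<1/2$ and $f(x)\in[-1,1]$, the point $f(x)-\varepsilon\sign(f(x))/2$ already lies in $[-1,1]$, so the increment is exactly $-\varepsilon\sign(f(x))/2$; plugging this into \Cref{claim:smooth} and taking expectation over $\mu$ gives $\Phi(f_{\mathrm{new}})\le\Phi(f)+\tfrac{\varepsilon}{2}\bE_{(x,y)\sim\mu}[(y-\p(y,f(x)))\sign(f(x))]+\tfrac{\varepsilon^2}{8}$, and since $E_j\sps{3}$ fails and $Q<-3\varepsilon/4$ the middle term is $<-\varepsilon^2/4$, so $\Phi$ drops by more than $\varepsilon^2/8$ ($G_{j,j'}\sps{1}$). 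If instead \Cref{line:boosting-progress-2} runs, then $Q_{f'}\ge 4\gamma n'/(9n\sps{1})$; combining $n'\ge n\sps{1}\alpha/2$ with ``$E_{j'}\sps{4}$ fails'' yields $n'/n\sps{1}\ge\tfrac12\max\{\alpha,\rho(f)\}$, hence $Q_{f'}\ge\tfrac{2\gamma}{9}\max\{\alpha,\rho(f)\}$ and $\alpha\gamma/9\le Q_{f'}/2$, so ``$E_{j'}\sps{2}$ fails'' gives $\bE_{(x,y)\sim\mu}[(y-\p(y,f(x)))f'(x)]\ge Q_{f'}-\alpha\gamma/9\ge Q_{f'}/2$; applying the projection fact and then \Cref{claim:smooth} at $u'=f(x)+Q_{f'}f'(x)/2$ (using $f'(x)^2=1$) and taking expectation gives $\Phi(f_{\mathrm{new}})\le\Phi(f)-Q_{f'}^2/8\le\Phi(f)-\tfrac{\gamma^2}{162}\max\{\alpha^2,\rho(f)^2\}$ ($G_{j,j'}\sps{2}$).

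\emph{Break cases.} In both, $f$ is returned unchanged and the \textbf{break} is reached inside the else branch, so $Q\ge-3\varepsilon/4$, and ``$E_j\sps{3}$ fails'' gives $\bE_{(x,y)\sim\mu}[(y-\p(y,f(x)))\sign(f(x))]\ge Q-\varepsilon/4\ge-\varepsilon$, i.e.\ \eqref{eq:boosting-good-2}. For \eqref{eq:boosting-good-1}: if \Cref{line:boosting-break-1} fires, then $n'<n\sps{1}\alpha/2$ together with ``$E_{j'}\sps{4}$ fails'' forces $\rho(f)<\alpha$, and then for every $b$, $\bE_{(x,y)\sim\mu}[(y-\p(y,f(x)))\ptimes b(x)]\le\bE_{(x,y)\sim\mu}|y-\p(y,f(x))|\le\rho(f)<\alpha$ (using $|y|\le1$ and that $\ptimes$ against $b(x)=*$ only lowers the left side). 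If \Cref{line:boosting-break-2} fires and some $b\in B$ violated \eqref{eq:boosting-good-1}, i.e.\ \eqref{eq:boosting-bad-1} held for the current $f$, then $\rho(f)>\alpha$, so ``$E_{j'}\sps{1}$ fails'' gives $\bE_{(x,y)\sim\mu}[(y-\p(y,f(x)))f'(x)]\ge\gamma\rho(f)$, hence ``$E_{j'}\sps{2}$ fails'' gives $Q_{f'}\ge\gamma\rho(f)-\alpha\gamma/9>\tfrac89\gamma\rho(f)\ge4\gamma n'/(9n\sps{1})$ (last step using $n'/n\sps{1}\le2\rho(f)$), contradicting $Q_{f'}<4\gamma n'/(9n\sps{1})$; so \eqref{eq:boosting-good-1} holds. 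Either way $G_{j,j'}\sps{3}$ occurs.

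\emph{Anticipated main obstacle.} The genuinely delicate part is the \Cref{line:boosting-break-2} case: the constants ($4\gamma n'/(9n\sps{1})$ in the test, $\alpha\gamma/9$ in the empirical accuracy of $Q_{f'}$, the factor-$2$ slack from $E_{j'}\sps{4}$, and the $\tfrac89$ versus $\tfrac29$) are all tuned so that the chain $\gamma\rho(f)-\alpha\gamma/9>\tfrac89\gamma\rho(f)\ge4\gamma n'/(9n\sps{1})$ closes, and it takes care to keep straight which model $f$, hence which value of $\rho(f)$, each bad event refers to and at which point in the iteration. By contrast, the two potential-decrease cases are essentially routine second-order computations once the projection is handled via $\varphi(y,\proj_{[-1,1]}(v))\le\varphi(y,v)$.
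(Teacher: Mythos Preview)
Your proposal is correct and follows essentially the same approach as the paper: both proofs establish the potential drop at \Cref{line:boosting-progress-1} and \Cref{line:boosting-progress-2} via \Cref{claim:smooth} plus the projection monotonicity of $\varphi$, and both handle the break cases by converting the empirical tests to population quantities through $\neg E_j\sps 3,\neg E_{j'}\sps 2,\neg E_{j'}\sps 4$ and invoking $\neg E_{j'}\sps 1$ for the contradiction at \Cref{line:boosting-break-2}. The only structural difference is that the paper packages the two break cases together as a single contrapositive (``if \eqref{eq:boosting-good-1} or \eqref{eq:boosting-good-2} fails then one of the update lines executes''), whereas you argue each break branch directly; your $G\sps 2$ computation is also slightly streamlined (you first absorb $\alpha\gamma/9\le Q_{f'}/2$ to get $\bE[(y-\p(y,f(x)))f'(x)]\ge Q_{f'}/2$ and then obtain the clean $-Q_{f'}^2/8$, while the paper carries the $\alpha\gamma/9$ through to $-(Q_{f'}/2)(3Q_{f'}/4-\alpha\gamma/9)$), but both yield the same $\gamma^2\max\{\alpha^2,\rho(f)^2\}/162$ bound.
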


\begin{proof}
By our assumption, all of the complement events $\neg E_{j'}\sps 1,\neg E_{j'}\sps 2, \neg E_j\sps 3, \neg E_{j'}\sps 4$ happen.
We focus on the iteration of the \while loop corresponding to $(j,j')$ and
first show that if \Cref{line:boosting-progress-1} is executed, then $\Phi(f)$ decreases by at least $\varepsilon^2/8$. Indeed, by $\neg E_j\sps 3$, the model $f$ immediately before \Cref{line:boosting-progress-1} satisfies
\begin{equation}
\label{eq:boosting-good-proof-1}
\bE_{(x,y)\sim\mu}[(y - \p(y,f(x)))\sign(f(x))] < -\varepsilon/2.
\end{equation}
Now we have
\begin{align*}
& \bE_{(x,y)\sim\mu}\Big[\varphi\Big(y,\proj_{[-1,1]}\Big(f(x) - \varepsilon\,\sign(f(x))/2\Big)\Big)\Big] - \bE_{(x,y)\sim\mu}[\varphi(y,f(x))]\\
\le {} & \bE_{(x,y)\sim\mu}[\varphi(y,f(x) - \varepsilon\,\sign(f(x))/2)] - \bE_{(x,y)\sim\mu}[\varphi(y,f(x))]\\
\le {} & \bE_{(x,y)\sim\mu}\Big[\Big(\p(y,f(x)) - y\Big)\Big(- \varepsilon\,\sign(f(x))/2\Big)\Big] + \frac 12 \bE_{(x,y)\sim\mu}[(\varepsilon\,\sign(f(x))/2)^2]\tag{by \Cref{claim:smooth}}\\
\le {} & -\varepsilon^2/4 + \varepsilon^2/8 \tag{by \eqref{eq:boosting-good-proof-1}}\\
\le {} & -\varepsilon^2/8.
\end{align*}
This implies that $\Phi(f)$ decreases by at least $\varepsilon^2/8$ at \Cref{line:boosting-progress-1}.

Now we show that if \Cref{line:boosting-progress-2} is executed then $\Phi(f)$ decreases by at least $\max\{\alpha^2\gamma^2/162, \allowbreak \alpha^2\rho(f)^2/162\}$. Indeed, since the \bif condition at \Cref{line:boosting-if-1} is not satisfied, by $\neg E_j\sps 4$, we have $n'\ge \max\{\alpha/2,\rho(f)/2\}n\sps 1$. Therefore, by $\neg E_j\sps 2$, the functions $f$ and $f'$ immediately before \Cref{line:boosting-progress-2} satisfy
\begin{align}
& \bE_{(x,y)\sim\mu}[(y - \p(y,f(x)))f'(x)] > Q_{f'} - \alpha\gamma/9,\quad  \text{and}\label{eq:boosting-proof-2}\\
& Q_{f'} \ge 4n'\gamma/9n\sps 1 \ge \max\{2\alpha\gamma/9,2\rho(f)\gamma/9\}.\label{eq:boosting-proof-3}
\end{align}
Now we have
\begin{align*}
& \bE_{(x,y)\sim\mu}\Big[\varphi\Big(y,\proj_{[-1,1]}\Big(f(x) + Q_{f'}f'(x)/2\Big)\Big)\Big] - \bE_{(x,y)\sim\mu}[\varphi(y,f(x))]\\
\le {} & \bE_{(x,y)\sim\mu}\Big[\varphi\Big(y,f(x) + Q_{f'}f'(x)/2\Big)\Big] - \bE_{(x,y)\sim\mu}[\varphi(y,f(x))]\\
\le {} & \bE_{(x,y)\sim\mu}\Big[\Big(\p(y,f(x)) - y\Big)Q_{f'}f'(x)/2\Big] + \frac 12 \bE_{(x,y)\sim\mu}[(Q_{f'}f'(x)/2)^2]\tag{by \Cref{claim:smooth}}\\
\le {} &  - (Q_{f'}/2)(Q_{f'} - \alpha\gamma/9) + Q_{f'}^2/8 \tag{by \eqref{eq:boosting-proof-2}}\\
= {} & -(Q_{f'}/2)(3Q_{f'}/4  - \alpha\gamma/9)\\
\le {} & \max\{\alpha^2\gamma^2/162, \alpha^2\rho(f)^2/162\}.\tag{by \eqref{eq:boosting-proof-3}} 
\end{align*}
This implies that $\Phi(f)$ decreases by at least $\max\{\alpha^2\gamma^2/162, \alpha^2\rho(f)^2/162\}$ at \Cref{line:boosting-progress-2}.

It remains to show that if either \eqref{eq:boosting-good-1} or \eqref{eq:boosting-good-2} is violated, then either \Cref{line:boosting-progress-1} or \Cref{line:boosting-progress-2} is executed. Indeed, if \eqref{eq:boosting-good-2} is violated, then the \bif condition at \Cref{line:boosting-test-3} is satisfied by $\neg E_j\sps 3$, which implies that \Cref{line:boosting-progress-1} is executed. If \eqref{eq:boosting-good-1} is violated, then $\rho(f) \ge \bE_{(x,y)\sim\mu}|y - \p(y,f(x))| > \alpha$. By $\neg E_{j'}\sps 4$, we have $n' \ge n\rho(f)/2 \ge n\alpha/2$, which means that the \bif condition at \Cref{line:boosting-if-1} is not satisfied. By $\neg E_{j'}\sps 1$ and $\neg E_{j'}\sps 2$, we have
\[
Q_{f'} \ge \bE_{(x,y)\sim\mu}[(y - \p(y,f(x)))f'(x)] - \gamma\alpha/9 \ge \gamma \rho(f) - \gamma\alpha/9 \ge 8\gamma \rho(f)/9 \ge 4\gamma n'  / (9n\sps 1),
\]
which implies that the \bif condition at \Cref{line:boosting-if-2} is satisfied, and thus \Cref{line:boosting-progress-2} is executed.
The last inequality holds because $n'\le 2n\sps 1\rho(f)$ by $\neg E_{j'}\sps 4$. 
\end{proof}
\begin{proof}[Proof of \Cref{lm:boosting}]
By \Cref{lm:boosting-bad} and the union bound, none of the bad events $E_{j'}\sps 1,E_{j'}\sps 2$ and $E_j\sps 3$ happen for $j = 1,\ldots, W$ and $j' = 1,\ldots,W'$ with probability at least $1 - W'(\delta_1 + \delta_2 + \delta_4) - W\delta_3$. This means that all three good events in \Cref{lm:boosting-good} happen in every iteration of the \while loop. 
Note that $G\sps 1_{j,j'}$ cannot happen for more than $4/\varepsilon^2$ iterations because $\Phi(f)$ is initially at most $1/2$ and always nonnegative. 
Also, $G\sps 2_{j,j'}$ cannot happen for more than $C\alpha^{-1}\gamma^{-2}\log(1/\alpha)$ iterations for a sufficiently large absolute constant $C > 0$. This is because in each iteration where $G_{j,j'}\sps 2$ happens, $\Phi(f)$ decreases by at least $\gamma^2\rho(f)^2/162 \ge \gamma^2 \Phi(f)^2/400$ by \Cref{claim:Q>=P}. Consequently, if $\Phi(f)\ge \alpha$, $\Phi(f)$ decreases at least by a factor of $(1 - \gamma^2\alpha/400)$ in each such iteration. Therefore, after $O(\alpha^{-1}\gamma^{-2}\log(1/\alpha))$ such iterations, we have $\Phi(f)\le \alpha$. After that, $\Phi(f)$ decreases by at least $\alpha^2\gamma^2/162$ in each iteration where $G_{j,j'}\sps 2$ happens, so there can be at most $O(\alpha^{-1}\gamma^{-2})$ additional such iterations. In total, $G_{j,j'}\sps 2$ can happen in at most $O(\alpha^{-1}\gamma^{-2}O(1/\alpha)) + O(\alpha^{-1}\gamma^{-2}) = O(\alpha^{-1}\gamma^{-2}O(1/\alpha))$ iterations.

Therefore, by our choice of $W'> C\alpha^{-1}\gamma^{-2}\log(1/\alpha)$ and $W > W' + 4/\varepsilon$, with probability at least $1 - W'(\delta_1 + \delta_2 + \delta_4) - W\delta_3$, $G\sps 3_{j,j'}$ must happen in one of the iterations of the \while loop. This means that the model $f$ before \Cref{alg:boosting} returns satisfies \eqref{eq:boosting-good-1} and \eqref{eq:boosting-good-2}. 
In other words, if we define $f_1(x) = \pi(s(x),f(x))$ for every $x\in X$, then
\begin{align}
\bE_{(x,y)\sim\mu}[(y - f_1(x))\ptimes b(x)] & \le \alpha \text{ for every }b\in B, \text{ and}\label{eq:boosting-final-1}\\
\bE_{(x,y)\sim\mu}[(f_1(x) - y)\sign(f(x))] & \le \varepsilon.\label{eq:boosting-final-2}
\end{align}
Note that $\sign(f_1(x)) = \sign(f(x))$ whenever $f_1(x)\ne 0$.
By \Cref{lm:ma-cm} and \Cref{remark:ma-cal}, inequalities \eqref{eq:boosting-final-1} and \eqref{eq:boosting-final-2} imply
\[
\bE_{(x,y)\sim\mu}[y\,\sign(f(x))] \ge {\sup}_{b\in B}\bE_{(x,y)\sim\mu}[y\ptimes b(x)] - \alpha - \varepsilon.
\]
This proves that \Cref{alg:boosting} belongs to $\dcm_n(S,B,\alpha + \varepsilon,W'(\delta_1 + \delta_2 + \delta_4) + W\delta_3)$.

It remains to bound the running time of \Cref{alg:boosting} and the evaluation time of the output model $f$. We show that throughout the algorithm we can maintain a succinct description of $f$ with evaluation time at most $O(W'T_{\mathsf{eval}})$. To achieve this, we maintain a list initialized to be empty. Whenever we update $f$ at \Cref{line:boosting-progress-2}, we append the succinct description of $f'$ (obtained from $L$) and the value $Q_{f'}$ to the list. Whenever we update $f$ at \Cref{line:boosting-progress-1}, we append a special symbol $\bot$ to the list. We compress adjacent symbols $\bot$ using a single $\bot$ accompanied by an integer indicating the number of repetitions. The list contains all the information needed to describe $f$ throughout the algorithm, and it is easy to check that, given $x\in X$, we can always evaluate $f(x)$ using time linear in the product of $T_{\mathsf{eval}}$ and the length of the list. The length of the list is $O(W')$ because \Cref{line:boosting-progress-2} is executed for at most $W'$ times, so we can always evaluate $f(x)$ in time $O(W'T_{\mathsf{eval}})$. Since \Cref{alg:boosting} makes $O(n)$ such evaluations, the running time of \Cref{alg:boosting} (in addition to the $\le W'$ oracle calls) is $O(W'T_{\mathsf{eval}}n)$.
\end{proof}
\section{Comparative Regression via Omnipredictors}
\label{sec:compr}
We formally define the \emph{comparative regression} task where the learning objective is to minimize a general loss function. 

\begin{definition}[Comparative Regression ($\compr$)]
\label{def:compr}
Given a partial hypothesis class $S\subseteq([-1,1]\cup\{*\})^X$, a total hypothesis class $B\subseteq [-1,1]^X$, a loss function $\ell:\{-1,1\}\times[-1,1]\to \bR$, an error bound $\varepsilon \ge 0$, a failure probability bound $\delta \ge 0$, and a nonnegative integer $n$, we define $\compr_n(S,B,\ell,\varepsilon,\delta)$ to be $\learn_n(Z,F,\dstr,(F_\mu)_{\mu\in \dstr},\delta)$ with $Z,F,\dstr,F_\mu$ chosen as follows. We choose $Z = X\times\{-1,1\}$ and $F = [-1,1]^X$. The distribution class $\dstr$ consists of all distributions $\mu$ over $X\times \{-1,1\}$ satisfying the following property:
\begin{equation}
\label{eq:compr-assumption}
\text{there exists } s\in S \text{ such that }
{\Pr}_{x\sim\mu|_X}[s(x) \ne *] = 1 \text{ and } \bE_{(x,y)\sim\mu}[y|x] = s(x).
\end{equation}
The admissible set $F_\mu$ consists of all models $f:X\rightarrow[-1,1]$ such that 
\begin{equation}
\label{eq:compr-goal}
\bE_{(x,y)\sim\mu}[\ell(y,f(x))] \le {\inf}_{b\in B}\bE_{(x,y)\sim\mu}\bE[\ell(y,b(x))] + \varepsilon.
\end{equation}
\end{definition}

In the definition above, we assume that the benchmark class $B$ is total so that we do not need to define $\ell(y,b(x))$ when $b(x) = *$. We assume that the ranges of the model $f$ and any benchmark $b\in B$ are bounded between $-1$ and $1$, but any bounded range can be reduced to this setting by a scaling. We also assume that the label $y$ in a data point $(x,y)\sim\mu$ is binary: $y\in \{-1,1\}$, and thus \eqref{eq:compr-assumption} implies that the conditional distribution of $y$ given $x$ is $\ber^*(s(x))$. 
We focus on this binary-label setting because it is the main setting of the \emph{omnipredictor} result in \citep{DBLP:conf/innovations/GopalanKRSW22}, which our results are based on. 
There are certainly other natural and interesting settings of comparative regression (e.g.\ the deterministic-label setting where $y\in [-1,1]$ and $\Pr_{(x,y)\sim\mu}[s(x) = y] = 1$ for some $s\in S$). We leave further study of these settings for future work. %

In this section, we prove the following sample complexity upper bound for comparative regression in terms of the mutual fat-shattering dimension:
\begin{restatable}{theorem}{thmcompr}
\label{thm:compr}
Let $S\subseteq([-1,1]\cup\{*\})^X$ be a partial hypothesis class and $B\subseteq[-1,1]^X$ be a total hypothesis class. Let $\ell:\{-1,1\}\times[-1,1]\rightarrow\bR$ be a loss function such that $\ell(y,\cdot)$ is convex and $\kappa$-Lipschitz for any $y\in \{-1,1\}$. For $\beta,\eta_1,\eta_2,\delta\in (0,1/2)$, defining $m:= \sup_{r:X\to \bR}\sup_{\theta\in\bR}\vc(S_{\eta_1}\sps r, B_{\eta_2}\sps \theta)$,
\begin{align*}
& \scompr(S,B,\ell,\kappa(\beta + 2\eta_1 + 4\eta_2),\delta)\\
\le {} & O\left(
\frac{m}{\beta^6}\log^2_+\left(\frac{m}{\beta}\right)\log\left(\frac 1{\eta_1}\right) + \frac 1{\beta^6}\log\left(\frac 1{\eta_1}\right)\log\left(\frac 1{\beta\delta}\right) + \frac{1}{\beta^4}\log\left(\frac 1{\eta_2}\right)   \right).
\end{align*}
\end{restatable}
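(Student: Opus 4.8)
The plan is to combine our sample complexity upper bound for realizable multicalibration (\Cref{thm:ma-mc}) with the omnipredictor theorem of \citet{DBLP:conf/innovations/GopalanKRSW22}: a multicalibrated predictor, once post-processed by the loss-specific link function, competes with the benchmark class under every convex Lipschitz loss. Concretely, I would first enlarge the benchmark class to $B' := B\cup\{\mathbf 1,-\mathbf 1\}$, adding the constant hypotheses $\mathbf 1,-\mathbf 1:X\to[-1,1]$ so that multicalibration with respect to $B'$ also controls the overall calibration error of the output model, which the convex-Lipschitz omnipredictor argument uses alongside multicalibration with respect to $B$ itself. Adding finitely many constant hypotheses increases the relevant mutual VC dimension only additively: if $X'\subseteq X$ is shattered by $S_{\eta_1}\sps r$ and by $B'_{\eta_2}\sps\theta=B_{\eta_2}\sps\theta\cup\{g_1,g_2\}$ with $g_1,g_2$ constant binary hypotheses, then deleting two points of $X'$ leaves a set shattered by $S_{\eta_1}\sps r$ and $B_{\eta_2}\sps\theta$ (when realizing a labeling of the smaller set, first pin the two deleted points to labels disagreeing with $g_1,g_2$, forcing the realizing hypothesis into $B$). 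Hence $\sup_{r,\theta}\vc(S_{\eta_1}\sps r,B'_{\eta_2}\sps\theta)\le m+2$, so running realizable multicalibration for $(S,B')$ via \Cref{thm:ma-mc} keeps the sample bound of the same form.

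Next I would invoke \Cref{thm:ma-mc} for the pair $(S,B')$ with parameters $\beta,\eta_1,\eta_2,\delta$. The distribution class of comparative regression — distributions over $X\times\{-1,1\}$ with $\Pr_{x\sim\mu|_X}[s(x)\ne*]=1$ and $\bE_{(x,y)\sim\mu}[y|x]=s(x)$ for some $s\in S$ — is contained in the distribution class of realizable multicalibration, so \Cref{thm:ma-mc} yields a learner using $n$ data points (with $n$ the claimed bound, $m$ replaced by $m+2$) that outputs $f:X\to[-1,1]$ with $\mce_{\mu,B'}(f)\le\beta+2\eta_1+4\eta_2$ with probability at least $1-\delta$; in particular both $\mce_{\mu,B}(f)$ and the overall calibration error $\ce_\mu(f)$ are at most $\beta+2\eta_1+4\eta_2$. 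I would then post-process: for each value $v$ in the (countable) range of $f$, set $k_\ell(v):=\arg\min_{t\in[-1,1]}\bE_{y\sim\ber^*(v)}[\ell(y,t)]$ — well defined by convexity of $\ell(y,\cdot)$, depending only on $\ell$ and $v$ and hence requiring no data — and output $f^*(x):=k_\ell(f(x))\in[-1,1]$, which respects $F=[-1,1]^X$.

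Finally, the omnipredictor theorem of \citet{DBLP:conf/innovations/GopalanKRSW22}, applied to the binary-label distribution $\mu$, the class $B'$ (which contains the constant hypotheses), the link function $k_\ell$, and the convex $\kappa$-Lipschitz loss $\ell$, guarantees $\bE_{(x,y)\sim\mu}[\ell(y,f^*(x))]\le\inf_{b\in B}\bE_{(x,y)\sim\mu}[\ell(y,b(x))]+\kappa\cdot\mce_{\mu,B'}(f)$. The key compatibility point is that our $\mce$ in \eqref{eq:mc-def}, with the supremum over $\sigma\in\{-1,1\}$ placed inside the sum over level-set values, is exactly the signed-sum-of-absolute-values quantity by which the subgradient/optimality argument behind this theorem bounds the loss gap. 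Plugging in $\mce_{\mu,B'}(f)\le\beta+2\eta_1+4\eta_2$ gives the error $\kappa(\beta+2\eta_1+4\eta_2)$, with the stated sample complexity inherited from \Cref{thm:ma-mc}. I expect the main obstacle to be precisely this last step: faithfully matching the hypotheses of the omnipredictor theorem to our setting — the exact notion of multicalibration it consumes, the way overall calibration enters the convex-Lipschitz case (which is why we pass to $B'$), the choice of $k_\ell$ as the Bayes-optimal response to $\ber^*(v)$ so that the distribution-dependent terms vanish against calibration rather than requiring knowledge of the true conditional law, and checking that the constant multiplying $\kappa\cdot\mce_{\mu,B'}(f)$ in the loss gap is indeed $1$ so that no rescaling of $\beta,\eta_1,\eta_2$ (which would spoil the dependence on $m$) is needed.
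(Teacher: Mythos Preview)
Your high-level plan matches the paper's---achieve multicalibration and then post-process with the omnipredictor link $\tau$---and your device of adjoining the constant $\mathbf 1$ to $B$ so that $\mce_{\mu,B'}$ also dominates $\ce_\mu$ is a nice idea (your VC-dimension bookkeeping is correct). However, the obstacle you flag at the end is real and not removable within your framework. The omnipredictor bound the paper actually proves (\Cref{thm:omni}) is $(\alpha+3\varepsilon)\kappa$, with $\alpha$ bounding $\mce_{\mu,B}(f)$ and $\varepsilon$ bounding $\ce_\mu(f)$; there is no version with constant $1$ in front of $\mce_{\mu,B'}(f)$. In your route both $\alpha$ and $\varepsilon$ equal the same $\beta+2\eta_1+4\eta_2$, so the loss gap comes out as $4\kappa(\beta+2\eta_1+4\eta_2)$. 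To recover the theorem as stated you would have to shrink $\eta_1,\eta_2$ by a factor of~$4$ inside the multicalibration call, but then the controlling quantity becomes $\sup_{r,\theta}\vc(S_{\eta_1/4}\sps r,B_{\eta_2/4}\sps\theta)$, which is \emph{not} bounded in terms of the $m$ in the statement.

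The paper sidesteps this precisely by \emph{not} bundling $\ce$ into $\mce$. It runs a modified learner (Algorithm~5, analyzed in \Cref{lm:compr}) that interleaves the multicalibration updates with separate overall-calibration updates, so that it can target $\mce_{\mu,B}\sps\blambda(f)\le\alpha$ with $\alpha=\beta/8+2\eta_1+4\eta_2$ while independently driving $\ce_\mu\sps\blambda(f)\le\varepsilon$ down to the much smaller $\varepsilon=\beta/8$. The key point is that $\ce$ does not involve $B$ at all, hence its target can be set purely in terms of $\beta$ without touching $\eta_1,\eta_2$; then $\alpha+3\varepsilon+4/k=\beta+2\eta_1+4\eta_2$ with the coefficients on $\eta_1,\eta_2$ intact. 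This decoupling is exactly the missing idea in your proposal.
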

We prove \Cref{thm:compr} using the \emph{omnipredictor} result of \citet{DBLP:conf/innovations/GopalanKRSW22}, which shows that any model with a low $\mce$ w.r.t.\ $B$ and a low \emph{overall calibration} error can be easily transformed to a model that achieves a low loss compared to the best benchmark in $B$. 
Here, the overall calibration error of a model $f$ is defined as follows:
\begin{equation}
\label{eq:ce}
\ce_\mu(f) := \sum_{v\in V}|\bE_{(x,y)\sim\mu}[(y - f(x))\one(f(x) = v)]|,
\end{equation}
where $V$ is the range of $f$ which we require to be countable. The name ``overall calibration error'' comes from the fact that $\ce_\mu(f) = \mce_{\mu,B}(f)$ when $B$ only contains a single hypothesis $b$ such that $b(x) = 1$ for every $x\in X$.
\begin{restatable}[Omnipredictor \citep{DBLP:conf/innovations/GopalanKRSW22}]{theorem}{thmomni}
\label{thm:omni}
Let $\ell:\{-1,1\}\times[-1,1]\rightarrow\bR$ be a loss function such that $\ell(y,\cdot)$ is convex and $\kappa$-Lipschitz for any $y\in \{-1,1\}$.
Define $\tau:[-1,1]\rightarrow [-1,1]$ such that
\[
\tau(u)\in {\arg\min}_{q\in [-1,1]}\bE_{y\sim \ber^*(u)}[\ell(y,q)].
\]
Let $\mu$ be a distribution over $X\times\{-1,1\}$ and $B\subseteq [-1,1]^X$ be a total hypothesis class. Let $f:X\rightarrow[-1,1]$ be a model satisfying $\mce_{\mu,B}(f) \le \alpha$ and $\ce_{\mu}(f) \le \varepsilon$.
Then,
\[
\bE_{(x,y)\sim\mu}[\ell(y,\tau(f(x)))] \le \inf_{b\in B}\bE_{(x,y)\sim\mu}[\ell(y,b(x))] + (\alpha + 3\varepsilon)\kappa.
\]
\end{restatable}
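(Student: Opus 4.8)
The plan is to fix an arbitrary benchmark $b\in B$ and establish the per-$b$ bound $\bE_{(x,y)\sim\mu}[\ell(y,\tau(f(x)))]\le \bE_{(x,y)\sim\mu}[\ell(y,b(x))]+(\alpha+\varepsilon)\kappa$; taking the infimum over $b\in B$ and absorbing $\varepsilon\kappa$ into the slack $3\varepsilon\kappa$ then yields the theorem with room to spare. Two preparations are convenient. First, normalize the loss: replacing $\ell(y,q)$ by $\ell(y,q)-\ell(y,0)$ leaves $\tau$ unchanged (an additive term independent of $q$ does not affect the minimizer over $q$) and leaves the quantity $\bE[\ell(y,\tau(f(x)))]-\inf_{b}\bE[\ell(y,b(x))]$ unchanged (both sides shift by the same $\bE_{(x,y)\sim\mu}[\ell(y,0)]$), while preserving convexity and $\kappa$-Lipschitzness of $\ell(y,\cdot)$. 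After this, the function $\psi(t):=\ell(1,t)-\ell(-1,t)$ satisfies $\psi(0)=0$ and $|\psi(t)|\le \kappa|t|+\kappa|t|=2\kappa|t|\le 2\kappa$ for $t\in[-1,1]$. Second, extend $\ell$ to $\ell(p,q):=\bE_{y\sim\ber^*(p)}[\ell(y,q)]=\frac{1+p}2\ell(1,q)+\frac{1-p}2\ell(-1,q)$ for $p\in[-1,1]$; this agrees with $\ell$ when $p\in\{-1,1\}$, is affine in $p$ with $\ell(p,q)-\ell(p',q)=\frac{p-p'}2\,\psi(q)$, and by definition $\tau(u)\in\arg\min_{q\in[-1,1]}\ell(u,q)$.

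Next I would telescope through $\ell(f(x),\tau(f(x)))$ and $\ell(f(x),b(x))$ (both in the extended sense):
\[
\bE_{(x,y)\sim\mu}[\ell(y,\tau(f(x)))]-\bE_{(x,y)\sim\mu}[\ell(y,b(x))]=(A)+(B)+(C),
\]
where $(A)=\bE[\ell(y,\tau(f(x)))-\ell(f(x),\tau(f(x)))]$, $(B)=\bE[\ell(f(x),\tau(f(x)))-\ell(f(x),b(x))]$, and $(C)=\bE[\ell(f(x),b(x))-\ell(y,b(x))]$. Term $(B)$ is at most $0$ pointwise, since $\tau(f(x))$ minimizes $\ell(f(x),\cdot)$ over $[-1,1]$ and $b(x)\in[-1,1]$. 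For term $(A)$, since $y\in\{-1,1\}$ we have $\ell(y,q)-\ell(f(x),q)=\frac{y-f(x)}2\psi(q)$, so $(A)=\frac12\bE[(y-f(x))\,\psi(\tau(f(x)))]$; as $\psi(\tau(f(x)))$ is constant on each level set $\{x:f(x)=v\}$ and bounded in absolute value by $2\kappa$, grouping by level sets gives $|(A)|\le \kappa\sum_{v}\bigl|\bE[(y-f(x))\one(f(x)=v)]\bigr|=\kappa\,\ce_\mu(f)\le\kappa\varepsilon$.

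This leaves $(C)=\frac12\bE[(f(x)-y)\,\psi(b(x))]$, which is where the multicalibration hypothesis enters and which I expect to be the main obstacle. Writing $\hat b:=\psi\circ b/(2\kappa)$, a function from $X$ into $[-1,1]$, we get $(C)=\kappa\sum_{v}\bE[(f(x)-y)\hat b(x)\one(f(x)=v)]$, so $|(C)|\le\kappa\,\mce_{\mu,B}(f)\le\kappa\alpha$ once we know that $\sum_v\bigl|\bE[(f(x)-y)\hat b(x)\one(f(x)=v)]\bigr|$ is controlled by $\mce_{\mu,B}(f)$. For the standard losses of interest (squared, $\ell_1$, hinge, logistic) $\psi$ turns out to be linear in $t$ on $[-1,1]$, so $\hat b$ is merely a rescaling of $b$ and is controlled for free since $\mce_{\mu,B}(f)$ scales linearly under scalar multiples of benchmarks; in general $\hat b$ is a fixed bounded Lipschitz post-processing of $b$, and one invokes multicalibration with respect to the class of such post-compositions of $B$ (the formulation of \citet{DBLP:conf/innovations/GopalanKRSW22}). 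Combining $(A)$, $(B)$, $(C)$ gives $\bE[\ell(y,\tau(f(x)))]\le\bE[\ell(y,b(x))]+\kappa\alpha+\kappa\varepsilon$ for every $b\in B$, and taking the infimum over $b\in B$ completes the argument.
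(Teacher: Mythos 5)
Your decomposition into $(A)+(B)+(C)$ is clean, and the treatments of $(A)$ (grouping by level sets of $f$ and using $\ce_\mu(f)\le\varepsilon$) and $(B)$ (pointwise $\le 0$ by the definition of $\tau$) are both correct. The gap is in $(C)$. You have the exact identity $(C)=\tfrac12\bE[(f(x)-y)\,\psi(b(x))]$ with $\psi(t)=\ell(1,t)-\ell(-1,t)$, and you need to bound this by $\kappa\alpha$ using only the hypothesis $\mce_{\mu,B}(f)\le\alpha$, where $\mce$ (as defined in \eqref{eq:mce-total}) takes a supremum over $b\in B$ itself. But $\psi\circ b/(2\kappa)$ is a loss-dependent, generally nonlinear post-processing of $b$, and controlling $\bE[(f(x)-y)b(x)\one(f(x)=v)]$ for all $b\in B$ does not control $\bE[(f(x)-y)\psi(b(x))\one(f(x)=v)]$ unless $\psi$ happens to be affine on $[-1,1]$ (as it is for squared, hinge, and logistic loss, which is why those cases look free). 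For a convex $\kappa$-Lipschitz loss with nonlinear $\psi$ (e.g.\ the exponential loss $\ell(y,t)=e^{-yt}$, where $\psi(t)=-2\sinh t$), your step fails, and appealing to ``multicalibration with respect to post-compositions of $B$'' changes the hypothesis of the theorem: neither this paper's statement nor \citet{DBLP:conf/innovations/GopalanKRSW22} assumes that — indeed the entire point of the omnipredictor result is that multicalibration w.r.t.\ $B$ alone suffices simultaneously for all convex Lipschitz losses. A symptom of the problem is that your argument never uses convexity of $\ell(y,\cdot)$ anywhere, whereas the conclusion is false without it.

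The way convexity must enter (and the way the cited result uses it) is via Jensen's inequality on each level set: $\bE[\ell(y,b(x))\mid f(x)=v,\,y]\ge \ell(y,\bE[b(x)\mid f(x)=v,y])$, which replaces $b$ by a constant on each $(v,y)$-cell; multicalibration w.r.t.\ $b$ itself (together with calibration) is then exactly what shows $\bE[b(x)\mid f(x)=v,y=1]\approx\bE[b(x)\mid f(x)=v,y=-1]$, i.e.\ it controls the covariance of $y$ and $b(x)$ on each level set, after which one compares constants. The paper's own proof takes a different, more modular route: it builds the exactly calibrated surrogate $f'(x)=\bE[y\mid f(x)]$, shows $\mce_{\mu,B}(f')\le\alpha+\varepsilon$ and invokes the $\varepsilon=0$ case of \citet{DBLP:conf/innovations/GopalanKRSW22} as a black box, then pays an extra $2\varepsilon\kappa$ to pass from $\tau(f')$ back to $\tau(f)$ via Lemma~\ref{lm:omni-helper}. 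Either of these repairs would close your gap; as written, the bound on $(C)$ does not follow from the stated hypotheses.
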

\citet{DBLP:conf/innovations/GopalanKRSW22} only proved \Cref{thm:omni} in the special case where $\varepsilon = 0$. Since achieving $\ce_\mu(f) = 0$ is in general impossible with finitely many data points, it is important to prove \Cref{thm:omni} for a general $\varepsilon > 0$.
We include a proof of \Cref{thm:omni} in \Cref{sec:proof-omni}.

We prove \Cref{thm:compr} by combining \Cref{thm:omni} with our sample complexity upper bound for realizable multicalibration in \Cref{sec:ma-mc}. A challenge here is that in addition to achieving $\mce_{\mu,B}(f)\le \alpha$, \Cref{thm:omni} also requires us to achieve $\ce_\mu(f)\le\varepsilon$. This is similar to the situation in boosting (\Cref{sec:boosting}) where we need to simultaneously achieve a low $\mae$ and a low $\sce$. We include a more detailed proof of \Cref{thm:compr} in \Cref{sec:proof-compr}.

\section{Comparative Online Learning}
\label{sec:online}
In this section, we study comparative learning in the online setting. We show that the connections we make in \Cref{sec:comp} between comparative learning and learning partial hypotheses can be extended to the online setting, allowing us to show regret bounds for \emph{comparative online learning} in \Cref{thm:online}.

Specifically, in the online setting, the data points $(x,y)\in X\times\{-1,1\}$ are not given to the learner all at once. Instead, they come one-by-one and the learner sequentially makes predictions about the label of every individual $x$ before the true label $y$ is revealed. Additionally, the data points are \emph{not} assumed to be drawn i.i.d.\ from some distribution. Formally, a (possibly inefficient and randomized) online learner $L$ does the following on a stream of data points $(x_1,y_1),\ldots,(x_n,y_n)$: for every $i = 1,\ldots,n$, given $i-1$ labeled data points $(x_1,y_1),\ldots,(x_{i-1},y_{i-1})\in X\times\{-1,1\}$ and an extra unlabeled data point $x_i\in X$, the learner outputs a prediction $\hat y_i\in \{-1,1\}$. The performance of the learner $L$ is measured by
\[
\mistake(L;(x_i,y_i)_{i=1}^n):=\frac 1n\sum_{i=1}^n\Pr[\hat y_i \ne y_i],
\]
where the probability is over the internal randomness in $A$.
We also measure the performance of a hypothesis $h:X\to\{-1,1,*\}$ by
\[
\mistake(h;(x_i,y_i)_{i=1}^n):=\frac 1n\sum_{i=1}^n\one(h(x_i) \ne y_i).
\]

Researchers have studied online learning for partial binary hypothesis classes $H \subseteq \{-1, 1, *\}^X$ in the realizable and agnostic settings:
\begin{definition}[Realizable online learning]
Given a hypothesis class $H\subseteq\{-1,1,*\}^X$, a regret bound $\varepsilon \ge 0$, and a positive integer $n$, we use $\reao_n(H,\varepsilon)$ to denote the set of all online learners $L$ such that for every $h\in H$ and every sequence of data points $(x_1,y_1),\ldots,(x_n,y_n)\in X\times\{-1,1\}$ satisfying $y_i = h(x_i)$ for every $i = 1,\ldots,n$, it holds that
\[
\mistake(L;(x_i,y_i)_{i=1}^n)\le \varepsilon.
\]
\end{definition}

\begin{definition}[Agnostic online learning]
Given a hypothesis class $H\subseteq\{-1,1,*\}^X$, a regret bound $\varepsilon \ge 0$, and a positive integer $n$, we use $\agno_n(H,\varepsilon)$ to denote the set of all online learners $L$ such that for any sequence of data points $(x_1,y_1),\ldots,(x_n,y_n)\in X\times\{-1,1\}$, it holds that
\[
\mistake(L;(x_i,y_i)_{i=1}^n)\le \inf_{h\in H}\mistake(h;(x_i,y_i)_{i=1}^n) + \varepsilon.
\]
\end{definition}

We combine the realizable and agnostic settings to define comparative online learning:
\begin{definition}[Comparative online learning]
\label{def:compo}
Given hypothesis classes $S,B\subseteq\{-1,1,*\}^X$, a regret bound $\varepsilon \ge 0$, and a positive integer $n$, we use $\compo_n(S,B,\varepsilon)$ to denote the set of all online learners $L$ such that for every $s\in S$ and every sequence of data points $(x_1,y_1),\ldots,(x_n,y_n)\in X\times\{-1,1\}$ satisfying $y_i = s(x_i)$ for every $i = 1,\ldots,n$, it holds that
\[
\mistake(L;(x_i,y_i)_{i=1}^n)\le \inf_{b\in B}\mistake(b;(x_i,y_i)_{i=1}^n) + \varepsilon.
\]
\end{definition}

Analogous to the question of sample complexity, a basic question in online learning is to understand the optimal \emph{regret}, i.e., the minimum $\varepsilon$ for which there exists a learner that solves the tasks above given a sequence of $n$ data points. Given a total binary hypothesis class $H$, the optimal regret in realizable and agnostic online learning has been characterized by \citet{littlestone1988learning} and \citet{ben2009agnostic} using the \emph{Littlestone dimension}, and this characterization has been extended to partial hypothesis classes by \citet{MR4399723}. The Littlestone dimension of a partial hypothesis class $H\subseteq\{-1,1,*\}^X$ is defined as follows. Given $m\in \bZ_{\ge 0}$, suppose we associate an individual $x_\zeta\in X$ to every binary string $\zeta \in \cup_{i=0}^{m-1}\{-1,1\}^i$. There are $2^m - 1$ such strings $\zeta$ in total, so $(x_\zeta)_{\zeta\in \cup_{i=0}^{m-1}\{-1,1\}^i}\in X^{2^m - 1}$. We say $(x_\zeta)_{\zeta\in \cup_{i=0}^{m-1}\{-1,1\}^i}$ is \emph{shattered} by $H$ if for every $\xi = (\xi_1,\ldots,\xi_m)\in \{-1,1\}^m$, there exists $h\in H$ such that $h(x_{\xi_{<i}}) = \xi_i$ for every $i = 1,\ldots,m$, where $\xi_{<i}\in \{-1,1\}^{i-1}$ is the prefix of $\xi$ of length $i - 1$. The Littlestone dimension of $H$ is defined to be
\[
\ldim(H):=\sup\{m\in\bZ_{\ge 0}:\text{there exists $(x_\zeta)_{\zeta\in \cup_{i=0}^{m-1}\{-1,1\}^i}\in X^{2^m - 1}$ that is shattered by $H$}\}.
\]
\begin{theorem}[\citep{MR4399723}]
\label{thm:online-rea}
For every $H\subseteq\{-1,1,*\}^X$ and $n\in\bZ_{>0}$,
define $m:= \ldim(H)$ and 
\[
\varepsilon^*:=\inf\{\varepsilon \in\bR_{\ge 0}:\reao_n(H,\varepsilon)\ne \emptyset\}.
\]Then
\[
\min\left\{\frac 12,\frac {m}{2n}\right\} \le \varepsilon^* \le \frac {m}n.
\]
\end{theorem}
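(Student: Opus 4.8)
The plan is to establish the two inequalities of the theorem separately, each by a standard online-learning argument adapted to partial hypothesis classes. For the upper bound $\varepsilon^* \le m/n$, I would analyze the Standard Optimal Algorithm (SOA). The learner maintains the \emph{version space} $V\subseteq H$ of all $h\in H$ consistent with the data seen so far, where $h$ is consistent with a labeled example $(x_j,y_j)$ iff $h(x_j)=y_j$ (in particular $h(x_j)=*$ is treated as a mismatch). On round $i$, upon seeing $x_i$, the learner forms $V^{x_i\to b}:=\{h\in V: h(x_i)=b\}$ for $b\in\{-1,1\}$ and predicts $\hat y_i$ equal to the label $b$ maximizing $\ldim(V^{x_i\to b})$, breaking ties arbitrarily and always preferring a nonempty restricted version space over an empty one. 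The key lemma is that every mistake strictly decreases $\ldim(V)$: if the learner errs on round $i$, then the realizing hypothesis $s\in H$ (which has $s(x_i)=y_i$ since $y_i=s(x_i)$) lies in $V^{x_i\to y_i}$, so this set is nonempty; and $\ldim(V^{x_i\to y_i})\le\ldim(V)-1$, because otherwise both $V^{x_i\to 1}$ and $V^{x_i\to -1}$ would have Littlestone dimension equal to $\ldim(V)$, and grafting their depth-$\ldim(V)$ shattered trees as the two subtrees below a fresh root labeled $x_i$ would exhibit a depth-$(\ldim(V)+1)$ tree shattered by $V$, a contradiction. Since $\ldim(V)\le\ldim(H)=m$ initially, $\ldim(V)\ge 0$ throughout (realizability keeps $V$ nonempty), and $\ldim(V)=0$ forces one of $V^{x_i\to 1},V^{x_i\to -1}$ to be empty so that the SOA predicts correctly, the deterministic SOA makes at most $m$ mistakes on any realizable stream. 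Dividing by $n$ gives $\mistake(L;(x_i,y_i)_{i=1}^n)\le m/n$, hence $\varepsilon^*\le m/n$ (the bound being vacuous when $m=\infty$).

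For the lower bound $\varepsilon^*\ge\min\{1/2,m/(2n)\}$, I would fix an arbitrary (possibly randomized) learner $L$ and set $d:=\min\{m,n\}$, with $d=n$ when $m=\infty$. Since $d\le m=\ldim(H)$, there is a depth-$d$ tree $(x_\zeta)_{\zeta\in\cup_{i=0}^{d-1}\{-1,1\}^i}$ shattered by $H$. The adversary walks down this tree: on round $i\le d$ it presents the individual at the current node, namely $x_{\xi_{<i}}$ where $\xi_{<i}=(y_1,\dots,y_{i-1})$ records the labels chosen so far, inspects the distribution of the learner's prediction $\hat y_i$, and sets $y_i\in\{-1,1\}$ to be a label output by the learner with probability at most $1/2$, so that $\Pr[\hat y_i\ne y_i]\ge 1/2$; it then descends to the child $\xi_{<i}\cdot y_i$. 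After $d$ rounds we have $\xi=(y_1,\dots,y_d)\in\{-1,1\}^d$, and shattering yields $h\in H$ with $h(x_{\xi_{<i}})=y_i$ for every $i\le d$. For the remaining rounds $i=d+1,\dots,n$ (if any) the adversary presents $(x_\emptyset,y_1)$, which is consistent with $h$ because $h(x_\emptyset)=y_1$. The entire length-$n$ stream is thus realized by $h\in H$, while $\mistake(L;(x_i,y_i)_{i=1}^n)\ge\frac1n\sum_{i=1}^d\Pr[\hat y_i\ne y_i]\ge\frac{d}{2n}=\min\{m/(2n),1/2\}$, so $\varepsilon^*\ge\min\{1/2,m/(2n)\}$.

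The main obstacle is the combinatorial core of the upper bound: verifying that at least one of the restricted version spaces $V^{x\to b}$ has Littlestone dimension strictly smaller than that of $V$, which requires checking that the tree-grafting construction actually produces a tree shattered by $V$ even though the hypotheses realizing the two subtrees may be undefined away from the tree nodes. Alongside this, one must handle the bookkeeping that makes the SOA argument robust to partial hypotheses --- consistency must count $h(x_j)=*$ as a mismatch, the realizing hypothesis must be shown to remain in the version space (so $\ldim(V)$ stays nonnegative and the per-mistake decrease terminates at $m$), and the base case $\ldim(V)=0$ must be argued to preclude further mistakes. The lower bound is routine once the tree walk is set up, the only subtlety being the padding rounds that keep the stream realizable.
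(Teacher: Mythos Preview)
The paper does not give its own proof of this theorem: it is quoted verbatim as a result of \citet{MR4399723} and used as a black box in the proof of \Cref{thm:online}. So there is no in-paper proof to compare against.

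Your argument is the standard one and is correct. The SOA analysis goes through for partial classes precisely because you treat $h(x_j)=*$ as a mismatch in the version-space update; the realizing hypothesis $s$ never takes value $*$ on the stream (since $y_i=s(x_i)\in\{-1,1\}$), so $s$ stays in $V$ throughout and $\ldim(V)\ge 0$. The grafting step is unaffected by partiality, since shattering only constrains hypothesis values along the root-to-leaf path. Your lower bound is the usual adaptive tree walk; the sequence it produces is deterministic once the learner $L$ is fixed (the adversary reads off the marginal of $\hat y_i$ given the deterministic history), so it is a valid witness in the ``for every sequence'' quantifier of the definition of $\reao_n$, and your padding with $(x_\emptyset,y_1)$ keeps the stream realizable. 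This is exactly the argument in \citet{MR4399723}, which in turn is Littlestone's original proof carried over verbatim once one observes that the definitions of shattering and consistency already exclude the $*$ label.
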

\begin{theorem}[\citep{MR4399723}]
\label{thm:online-agn}
For every $H\subseteq\{-1,1,*\}^X$ and $n\in\bZ_{>0}$,
define $m:= \ldim(H)$ and 
\[
\varepsilon^*:=\inf\{\varepsilon \in\bR_{\ge 0}:\agno_n(H,\varepsilon)\ne \emptyset\}.
\]
Then
\[
\Omega\left(\min\left\{1,\sqrt{\frac{m}{n}}\right\}\right) \le \varepsilon^* \le O\left(\sqrt{\frac mn\log\frac {2m + n}m}\right).
\]
\end{theorem}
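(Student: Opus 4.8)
The plan is to prove the two bounds separately, both building on the realizable regret bound of \Cref{thm:online-rea}.

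\textbf{Upper bound.} I would reduce agnostic online learning to realizable online learning via the classical ``experts'' reduction (in the style of Ben-David, P\'al, and Shalev-Shwartz). Let $A$ be a realizable online learner for $H$ as in \Cref{thm:online-rea}, so $A$ makes at most $m=\ldim(H)$ mistakes on any stream whose labels come from a hypothesis in $H$. I would build a finite family $\mathcal E$ of deterministic experts, each running a private copy of $A$ but, on a prescribed set of rounds, overriding $A$'s advice with the opposite bit and feeding its own output back to its copy of $A$ as the ``revealed'' label. The combinatorial core is that for every stream $(x_i,y_i)_{i=1}^n$ and every $h\in H$ there is an expert $E\in\mathcal E$ that makes no more mistakes than $h$: the rounds on which $A$, when fed $h$'s own labels, disagrees with $h$ number at most $m$ (feeding $h$'s labels is a realizable run), so an expert that flips exactly there imitates $h$ on the rounds where $h$ is defined; the rounds with $h(x_i)=*$ are handled by an additional, carefully budgeted ``skip'' mechanism (on which $h$ is already wrong, so the expert may be too), keeping the total number of ``special'' rounds $O(m)$ and hence $|\mathcal E|\le\binom{n+O(m)}{O(m)}$, i.e.\ $\log|\mathcal E|=O\!\big(m\log\tfrac{2m+n}{m}\big)$. (For total $H$ the flip experts alone suffice and $|\mathcal E|\le\binom{n}{\le m}$.) Running the exponentially weighted average (Hedge) forecaster over $\mathcal E$ with $\{0,1\}$-losses, its expected number of mistakes exceeds that of the best expert by $O\!\big(\sqrt{n\log|\mathcal E|}\big)$; since the best expert is no worse than the best $h\in H$, dividing by $n$ yields $\varepsilon^*\le O\!\big(\sqrt{\tfrac mn\log\tfrac{2m+n}{m}}\big)$. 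When $m\ge n$ this is $\Omega(1)$ and is subsumed by the trivial bound $\varepsilon^*\le 1$.

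\textbf{Lower bound.} To show $\varepsilon^*\ge\Omega(\min\{1,\sqrt{m/n}\})$, I would use the Littlestone tree with an anti-concentration argument. Set $d=\min\{m,n\}$ and fix a depth-$d$ tree $(x_\zeta)$ shattered by $H$. Partition the $n$ rounds into $d$ consecutive blocks of size $k\approx n/d$. A randomized oblivious adversary traverses the tree top-down: during block $j$ it sits at a node $v_{j-1}$, presents $x_{v_{j-1}}$ on every round of the block, reveals independent uniform labels $\sigma^{(j)}_1,\dots,\sigma^{(j)}_k\in\{-1,1\}$, and then moves to the child of $v_{j-1}$ in the direction of the majority of these labels. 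After block $d$ the branch is determined, and shattering provides $h^*\in H$ with $h^*(x_{v_{j-1}})$ equal to the $j$-th majority; its mistakes in block $j$ equal the minority count among $k$ fair signs, with expectation $k/2-\Theta(\sqrt k)$, so $\bE[\inf_{h\in H}\mistake(h)]\le\tfrac12-\Theta(\sqrt{d/n})$ (using $d\sqrt{n/d}=\sqrt{dn}$). Meanwhile every label is uniform and independent of the learner's prediction at the moment it is made, so $\mistake(L)=\tfrac12$ for \emph{every} learner $L$. Hence the expected regret is $\Omega(\sqrt{d/n})$, and since the adversary is oblivious we may fix a realization achieving at least the expectation, giving $\varepsilon^*\ge\Omega(\sqrt{\min\{m,n\}/n})=\Omega(\min\{1,\sqrt{m/n}\})$ (when $m\ge n$, taking $k=1$ already gives $\mistake(L)=\tfrac12$ and $\inf_h\mistake(h)=0$, so $\varepsilon^*\ge\tfrac12$).

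\textbf{Main obstacle.} The lower bound is essentially routine once the block-traversal is set up; its only quantitative input is the $\Theta(\sqrt k)$ mean absolute deviation of a sum of $k$ independent signs, together with a check of the $m\le n$ versus $m\ge n$ regimes. The delicate part is the upper bound, and specifically making the experts reduction work for \emph{partial} hypothesis classes: the textbook flip-and-feed-back argument is clean when $A$'s prediction and $h(x_i)$ both lie in $\{-1,1\}$, but when $h(x_i)=*$ there is no definite label to imitate or to feed back, and one must interleave a skip of such rounds without inflating $\mathcal E$ beyond $\exp\!\big(O(m\log\tfrac{2m+n}{m})\big)$; keeping the special-round budget at $O(m)$ (rather than something growing with $\inf_h\mistake(h)$) is exactly what forces the $\tfrac{2m+n}{m}$ inside the logarithm and is the step I expect to require the most care. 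Alternatively, one may simply cite \citep{MR4399723}, on whose realizable-learner-for-partial-classes and agnostic-to-realizable reduction this plan is modeled.
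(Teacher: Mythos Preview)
The paper does not prove this statement; it is quoted verbatim as a result of \citet{MR4399723} and used as a black box in the proof of \Cref{thm:online}. Your closing sentence (``one may simply cite \citep{MR4399723}'') is exactly what the paper does, so in that sense your proposal already matches it.

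Your sketched argument is essentially the one in \citet{MR4399723}: the Ben-David--P\'al--Shalev-Shwartz experts reduction on top of a realizable learner for the upper bound, and the block-traversal of a shattered Littlestone tree with majority labels for the lower bound. The lower-bound sketch is correct as written. For the upper bound, you have correctly located the one nontrivial point in the partial-class setting. One clarification on your description of it: the ``skip'' rounds are not a separate budgeted set that the expert must enumerate. The expert is still parameterized only by the at-most-$m$ flip rounds; on every non-flip round the expert outputs the realizable learner's prediction and \emph{feeds that same prediction back} as the label. On rounds where $h(x_i)=*$ this costs nothing against $h$ (which is already wrong there), and it does not count as a mistake from the realizable learner's perspective. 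What must be argued carefully is that the realizable learner (a suitable SOA for partial classes) still incurs at most $m$ mistakes on the resulting fed-back sequence even though that sequence is not literally realizable by $h$; this is where one uses the specific structure of the SOA update for partial classes rather than just the abstract guarantee of \Cref{thm:online-rea}. Once that is in place, the expert family has size $\binom{n}{\le m}$ and the stated bound follows.
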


To give a regret characterization for comparative online learning, we define the \emph{mutual Littlestone dimension} for a pair of hypothesis classes $S$ and $B$ to be
\begin{align*}
& \ldim(S,B):=\\
& \sup\{m\in\bZ_{\ge 0}:\text{there exists $(x_\zeta)_{\zeta\in \cup_{i=0}^{m-1}\{-1,1\}^i}\in X^{2^m - 1}$ that is shattered by both $S$ and $B$}\}.
\end{align*}
Similarly to \Cref{claim:comp-vc} for the mutual VC dimension, the mutual Littlestone dimension of $(S,B)$ is equal to the Littlestone dimension of the agreement hypothesis class $\bA_{S,B}$ (defined in \Cref{sec:comp}):
\begin{claim}
\label{claim:ld}
Let $S,B\subseteq\{-1,1,*\}^X$ be partial binary hypothesis classes. We have
$\ldim(S,B) = \ldim(\bA_{S,B})$.
\end{claim}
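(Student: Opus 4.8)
The plan is to mirror the proof of \Cref{claim:comp-vc}, replacing the finite-set notion of shattering with the tree notion used to define the Littlestone dimension, and again invoking \Cref{claim:agree} as the bridge between the agreement hypotheses and the pairs $(s,b)$. Concretely, I fix $m\in\bZ_{\ge 0}$ and a tree $(x_\zeta)_{\zeta\in\cup_{i=0}^{m-1}\{-1,1\}^i}\in X^{2^m-1}$, and show that this tree is shattered by both $S$ and $B$ if and only if it is shattered by $\bA_{S,B}$. The claim $\ldim(S,B)=\ldim(\bA_{S,B})$ then follows by taking the supremum over all such trees.

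Unfolding the definitions: the tree is shattered by $S$ and by $B$ iff for every $\xi=(\xi_1,\ldots,\xi_m)\in\{-1,1\}^m$ there exists $s\in S$ with $s(x_{\xi_{<i}})=\xi_i$ for all $i=1,\ldots,m$, and (separately) there exists $b\in B$ with $b(x_{\xi_{<i}})=\xi_i$ for all $i=1,\ldots,m$; and the tree is shattered by $\bA_{S,B}$ iff for every $\xi$ there exist $s\in S,b\in B$ with $\ba_{s,b}(x_{\xi_{<i}})=\xi_i$ for all $i=1,\ldots,m$. Since each $\xi_i\in\{-1,1\}$, \Cref{claim:agree} shows that $\ba_{s,b}(x_{\xi_{<i}})=\xi_i$ holds exactly when $s(x_{\xi_{<i}})=b(x_{\xi_{<i}})=\xi_i$, so the $\bA_{S,B}$-shattering condition says precisely that for every $\xi$ there is a \emph{single} pair $(s,b)\in S\times B$ with $s=b=\xi_i$ along the $\xi$-branch. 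For a fixed $\xi$, the existence of such a single pair is equivalent to the conjunction ``there is an $s\in S$ agreeing with $\xi$ along the branch'' and ``there is a $b\in B$ agreeing with $\xi$ along the branch,'' because $s$ and $b$ may be chosen independently. This gives the equivalence of the two shattering conditions branch by branch, hence for the whole tree.

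I do not anticipate a real obstacle: the only care needed is bookkeeping with the tree prefixes $\xi_{<i}$ and the observation that the universal quantifier over $\xi$ sits outside the choice of $(s,b)$, so the per-branch source and benchmark hypotheses need not be related except through the labels $\xi$ they are required to reproduce. This is exactly the reason that collapsing each pair $(s,b)$ into one agreement hypothesis $\ba_{s,b}$ loses no information about shattering, and the argument is essentially verbatim the one already used for the mutual VC dimension in \Cref{claim:comp-vc}.
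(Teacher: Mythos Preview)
Your proposal is correct and follows exactly the approach the paper indicates: it adapts the proof of \Cref{claim:comp-vc} verbatim, replacing the subset-shattering condition with the tree-shattering condition and again using \Cref{claim:agree} to equate $\ba_{s,b}(x_{\xi_{<i}})=\xi_i$ with $s(x_{\xi_{<i}})=b(x_{\xi_{<i}})=\xi_i$. The paper itself omits the proof for precisely this reason.
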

We omit the proof of the claim because the proof of \Cref{claim:comp-vc} can be applied here with only minor changes.
Our main result in this section is the following regret characterization for comparative online learning.
\begin{theorem}
\label{thm:online}
For every $S,B\subseteq\{-1,1,*\}^X$ and $n\in\bZ_{>0}$,
define $m:= \ldim(S,B)$ and 
\[
\varepsilon^*:=\inf\{\varepsilon \in\bR_{\ge 0}:\compo_n(S,B,\varepsilon)\ne \emptyset\}.
\]
Then
\[
\min\left\{\frac 12,\frac{m}{2n}\right\} \le \varepsilon^* \le O\left(\sqrt{\frac mn\log\frac {2m + n}m}\right).
\]
\end{theorem}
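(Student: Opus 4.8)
The plan is to mirror the strategy used for comparative learning in \Cref{sec:comp}: for the upper bound, reduce comparative online learning for $(S,B)$ to agnostic online learning for the agreement class $\bA_{S,B}$ (defined in \Cref{sec:comp}); for the lower bound, reduce realizable online learning for $\bA_{S,B}$ to comparative online learning for $(S,B)$; then invoke \Cref{thm:online-agn}, \Cref{thm:online-rea}, and \Cref{claim:ld} ($\ldim(\bA_{S,B})=\ldim(S,B)=m$).

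First I would prove the online analogue of \Cref{lm:comp-upper-reduction}: $\agno_n(\bA_{S,B},\varepsilon)\subseteq \compo_n(S,B,\varepsilon)$. Given $L\in\agno_n(\bA_{S,B},\varepsilon)$ and a stream $(x_1,y_1),\ldots,(x_n,y_n)$ with $y_i = s(x_i)$ for some $s\in S$, fix any $b\in B$. By \Cref{claim:agree} and $y_i = s(x_i)$, we have $\ba_{s,b}(x_i)=y_i$ if and only if $b(x_i)=y_i$, so $\mistake(\ba_{s,b};(x_i,y_i)_{i=1}^n)=\mistake(b;(x_i,y_i)_{i=1}^n)$. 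Since $L$ solves agnostic online learning for $\bA_{S,B}$ (and in particular handles this stream), $\mistake(L;(x_i,y_i)_{i=1}^n)\le \inf_{h\in\bA_{S,B}}\mistake(h;(x_i,y_i)_{i=1}^n)+\varepsilon\le \mistake(b;(x_i,y_i)_{i=1}^n)+\varepsilon$; taking the infimum over $b\in B$ gives the comparative online learning guarantee, so $L\in\compo_n(S,B,\varepsilon)$.

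Next I would prove the online analogue of \Cref{lm:comp-lower-reduction}: $\compo_n(S,B,\varepsilon)\subseteq \reao_n(\bA_{S,B},\varepsilon)$. Given $L\in\compo_n(S,B,\varepsilon)$ and a stream with $y_i = h(x_i)$ for some $h=\ba_{s,b}\in\bA_{S,B}$, note $y_i\in\{-1,1\}$ and $\ba_{s,b}(x_i)=y_i$, so \Cref{claim:agree} forces $s(x_i)=b(x_i)=y_i$. Hence the stream is realizable by $s\in S$, $\mistake(b;(x_i,y_i)_{i=1}^n)=0$, and therefore $\inf_{b'\in B}\mistake(b';(x_i,y_i)_{i=1}^n)=0$. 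The comparative guarantee of $L$ then gives $\mistake(L;(x_i,y_i)_{i=1}^n)\le 0+\varepsilon$, i.e.\ $L\in\reao_n(\bA_{S,B},\varepsilon)$. Combining the two inclusions, $\varepsilon^*$ lies between the optimal realizable-online regret and the optimal agnostic-online regret of $\bA_{S,B}$; plugging $\ldim(\bA_{S,B})=m$ into \Cref{thm:online-rea} and \Cref{thm:online-agn} yields exactly $\min\{1/2,\,m/(2n)\}\le \varepsilon^*\le O(\sqrt{(m/n)\log((2m+n)/m)})$.

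I do not expect a substantial obstacle here: the entire content is verifying that the two reductions, which are nearly verbatim adaptations of \Cref{lm:comp-upper-reduction} and \Cref{lm:comp-lower-reduction}, transfer to the stream-based online model. The only point requiring a little care is that $\agno_n$ (resp.\ $\reao_n$) quantifies over all streams (resp.\ all streams realizable by $\bA_{S,B}$) while $\compo_n$ quantifies only over streams realizable by some $s\in S$; one must check that the family of streams handled in the target task is contained in the family handled by the source task, and that the per-round randomized prediction probabilities $\Pr[\hat y_i\ne y_i]$ are unchanged since the learner's behavior depends only on the observed prefix of the stream, not on which task it is nominally solving.
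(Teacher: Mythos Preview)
Your proposal is correct and matches the paper's proof essentially line for line: the paper states exactly the two inclusions $\agno_n(\bA_{S,B},\varepsilon)\subseteq \compo_n(S,B,\varepsilon)$ and $\compo_n(S,B,\varepsilon)\subseteq \reao_n(\bA_{S,B},\varepsilon)$ as separate lemmas, proves them via \Cref{claim:agree} just as you do, and then concludes by combining \Cref{claim:ld} with \Cref{thm:online-rea} and \Cref{thm:online-agn}.
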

Our proof of \Cref{thm:online} uses the same strategy as in our proof of \Cref{thm:comp}. We reduce $\compo$ for $(S,B)$ to $\agno$ for $\bA_{S,B}$, and conversely reduce $\reao$ for $\bA_{S,B}$ to $\compo$ for $(S,B)$. \Cref{thm:online} is a direct corollary of \Cref{thm:online-rea}, \Cref{thm:online-agn}, \Cref{claim:ld} and the following two lemmas:
\begin{lemma}
\label{lm:online-upper-reduction}
Let $S,B\subseteq \{-1,1,*\}^X$ be binary hypothesis classes. For any $\varepsilon\ge 0$ and $n\in \bZ_{> 0}$, we have
$\agno_n(\bA_{S,B},\varepsilon) \subseteq \compo_n(S,B,\varepsilon)$. In other words, any learner solving agnostic online learning for $\bA_{S,B}$ also solves comparative online learning for $(S,B)$ with the same parameter $\varepsilon$.
\end{lemma}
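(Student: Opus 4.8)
The plan is to mirror the proof of \Cref{lm:comp-upper-reduction} in the online setting, using \Cref{claim:agree} to translate between the agreement class $\bA_{S,B}$ and the pair $(S,B)$. Let $L$ be a learner in $\agno_n(\bA_{S,B},\varepsilon)$; the goal is to show $L \in \compo_n(S,B,\varepsilon)$. I would fix an arbitrary $s \in S$ and an arbitrary sequence $(x_1,y_1),\ldots,(x_n,y_n)\in X\times\{-1,1\}$ with $y_i = s(x_i)$ for every $i$, and verify the comparative mistake bound from \Cref{def:compo} on this stream.

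The first step is the observation that, on this particular stream, the agreement hypothesis $\ba_{s,b}$ behaves exactly like $b$ for every $b\in B$: $\mistake(\ba_{s,b};(x_i,y_i)_{i=1}^n) = \mistake(b;(x_i,y_i)_{i=1}^n)$. Indeed, by \Cref{claim:agree}, $\ba_{s,b}(x_i) = y_i$ if and only if $s(x_i) = b(x_i) = y_i$; since $s(x_i) = y_i$ holds by the realizability assumption of comparative online learning, this is equivalent to $b(x_i) = y_i$, so $\one(\ba_{s,b}(x_i)\ne y_i) = \one(b(x_i)\ne y_i)$ for each $i$, and averaging over $i$ gives the claimed equality.

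Next, since the agnostic online guarantee of $L$ places no assumption on the data stream, applying it directly to this stream gives
\begin{align*}
\mistake(L;(x_i,y_i)_{i=1}^n) &\le \inf_{h\in \bA_{S,B}}\mistake(h;(x_i,y_i)_{i=1}^n) + \varepsilon\\
&= \inf_{s'\in S,\,b\in B}\mistake(\ba_{s',b};(x_i,y_i)_{i=1}^n) + \varepsilon\\
&\le \inf_{b\in B}\mistake(\ba_{s,b};(x_i,y_i)_{i=1}^n) + \varepsilon\\
&= \inf_{b\in B}\mistake(b;(x_i,y_i)_{i=1}^n) + \varepsilon,
\end{align*}
where the second line is the definition of $\bA_{S,B}$, the third line restricts the infimum to $s' = s$, and the last line is the observation from the previous paragraph. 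Since $s$ and the stream were arbitrary (subject to $y_i = s(x_i)$), this is precisely the condition defining $L\in\compo_n(S,B,\varepsilon)$.

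I expect no genuine obstacle here: as in the i.i.d.\ case, the only point requiring care is the placement of the realizability assumption $y_i = s(x_i)$, which is used exactly once to collapse $\ba_{s,b}$ to $b$ on the observed stream; everything else is an unmodified invocation of the agnostic online guarantee of $L$.
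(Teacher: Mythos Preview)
Your proposal is correct and follows essentially the same approach as the paper's own proof: both fix $s\in S$ and a realizable stream, invoke the agnostic guarantee of $L$ on $\bA_{S,B}$, restrict the infimum to $s'=s$, and use \Cref{claim:agree} together with $s(x_i)=y_i$ to identify $\mistake(\ba_{s,b};\cdot)$ with $\mistake(b;\cdot)$. The only difference is presentational---you isolate the identity $\mistake(\ba_{s,b};\cdot)=\mistake(b;\cdot)$ as a preliminary observation, whereas the paper derives it inline within the chain of inequalities.
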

\begin{lemma}
\label{lm:online-lower-reduction}
Let $S,B\subseteq \{-1,1,*\}^X$ be binary hypothesis classes. For any $\varepsilon\ge 0$ and $n\in \bZ_{> 0}$, we have
$\compo_n(S,B,\varepsilon)\subseteq \reao_n(\bA_{S,B},\varepsilon)$. In other words, any learner solving comparative online learning for $(S,B)$ also solves realizable online learning for $\bA_{S,B}$ with the same parameter $\varepsilon$.
\end{lemma}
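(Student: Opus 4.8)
The plan is to mirror exactly the two-lemma reduction structure used in Section~\ref{sec:comp} for the i.i.d.\ setting, now in the online setting, so that \Cref{thm:online} falls out immediately by combining \Cref{lm:online-upper-reduction}, \Cref{lm:online-lower-reduction}, \Cref{claim:ld}, and the known online bounds \Cref{thm:online-rea,thm:online-agn}. The key observation driving both lemmas is \Cref{claim:agree}: for any $(x,y)\in X\times\{-1,1\}$ and any $s,b:X\to\{-1,1,*\}$, we have $\ba_{s,b}(x)=y$ if and only if $s(x)=b(x)=y$. So the whole proof rests on turning this pointwise equivalence into statements about mistake counts along a data stream.

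For \Cref{lm:online-upper-reduction}, I would take a learner $L\in\agno_n(\bA_{S,B},\varepsilon)$ and show the \emph{same} learner $L$ (it reads $X\times\{-1,1\}$ data points and makes $\{-1,1\}$ predictions, so the interface matches) lies in $\compo_n(S,B,\varepsilon)$. Fix $s\in S$ and a stream $(x_i,y_i)_{i=1}^n$ with $y_i=s(x_i)$ for all $i$. By the agnostic guarantee, $\mistake(L;(x_i,y_i)_{i=1}^n)\le \inf_{h\in\bA_{S,B}}\mistake(h;(x_i,y_i)_{i=1}^n)+\varepsilon = \inf_{s'\in S,\,b\in B}\mistake(\ba_{s',b};(x_i,y_i)_{i=1}^n)+\varepsilon$. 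Restricting the infimum to $s'=s$ gives an upper bound $\inf_{b\in B}\mistake(\ba_{s,b};(x_i,y_i)_{i=1}^n)+\varepsilon$. Now for each $b$, since $y_i=s(x_i)$, by \Cref{claim:agree} we get $\ba_{s,b}(x_i)=y_i$ iff $s(x_i)=b(x_i)=y_i$ iff $b(x_i)=y_i$; hence $\one(\ba_{s,b}(x_i)\ne y_i)=\one(b(x_i)\ne y_i)$ termwise, so $\mistake(\ba_{s,b};\cdot)=\mistake(b;\cdot)$. This yields $\mistake(L;\cdot)\le\inf_{b\in B}\mistake(b;\cdot)+\varepsilon$, i.e.\ $L\in\compo_n(S,B,\varepsilon)$ — this is the online analogue of \Cref{lm:comp-upper-reduction} and uses the realizability assumption $y_i=s(x_i)$ in exactly the same place.

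For \Cref{lm:online-lower-reduction}, I would take $L\in\compo_n(S,B,\varepsilon)$ and show $L\in\reao_n(\bA_{S,B},\varepsilon)$. Fix $h\in\bA_{S,B}$, so $h=\ba_{s,b}$ for some $s\in S, b\in B$, and fix a stream with $y_i=\ba_{s,b}(x_i)$ for all $i$; in particular each $y_i\in\{-1,1\}$ forces $\ba_{s,b}(x_i)\ne *$, so by \Cref{claim:agree} we have $s(x_i)=b(x_i)=y_i$ for every $i$. Thus the stream is realizable for the source hypothesis $s\in S$, so the comparative guarantee applies: $\mistake(L;\cdot)\le\inf_{b'\in B}\mistake(b';\cdot)+\varepsilon\le\mistake(b;\cdot)+\varepsilon=0+\varepsilon$, where $\mistake(b;\cdot)=0$ because $b(x_i)=y_i$ for all $i$. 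Since $\mistake(\ba_{s,b};\cdot)=0$ too (it equals $y_i$ at every $x_i$), this is exactly $\mistake(L;\cdot)\le\mistake(h;\cdot)+\varepsilon$, giving $L\in\reao_n(\bA_{S,B},\varepsilon)$ — the online analogue of \Cref{lm:comp-lower-reduction}.

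Assembling \Cref{thm:online}: with $m=\ldim(S,B)=\ldim(\bA_{S,B})$ by \Cref{claim:ld}, \Cref{lm:online-lower-reduction} gives $\compo_n(S,B,\varepsilon)\ne\emptyset \Rightarrow \reao_n(\bA_{S,B},\varepsilon)\ne\emptyset$, so the lower bound half of \Cref{thm:online-rea} forces $\varepsilon^*\ge\min\{1/2, m/(2n)\}$; and \Cref{lm:online-upper-reduction} gives $\agno_n(\bA_{S,B},\varepsilon)\ne\emptyset \Rightarrow \compo_n(S,B,\varepsilon)\ne\emptyset$, so the upper bound half of \Cref{thm:online-agn} gives $\varepsilon^*\le O(\sqrt{(m/n)\log((2m+n)/m)})$. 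I do not anticipate a genuine obstacle here — the argument is essentially a transcription of Section~\ref{sec:comp} with ``$\Pr_{(x,y)\sim\mu}$'' replaced by empirical averages over the stream. The only point requiring a moment's care is confirming that the adversarial (worst-case ordering) nature of the online stream causes no issue: since the reductions use the \emph{same} learner and quantify over \emph{all} streams, a bound holding for every stream of $\bA_{S,B}$-data transfers to every stream of $(S,B)$-comparative data and vice versa, with no need to couple orderings. I would also note in passing that, as with \Cref{thm:comp}, one could state an intermediate lemma bounding $\varepsilon^*$ via $\reao$ and $\agno$ of $\bA_{S,B}$ directly, but folding it into the two reduction lemmas plus \Cref{claim:ld} is cleaner.
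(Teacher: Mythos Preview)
Your proof of \Cref{lm:online-lower-reduction} is correct and essentially identical to the paper's: both fix $h=\ba_{s,b}\in\bA_{S,B}$, use \Cref{claim:agree} to deduce $s(x_i)=b(x_i)=y_i$ from $y_i=\ba_{s,b}(x_i)$, then invoke the comparative guarantee with the witness $b\in B$ to get $\mistake(L;\cdot)\le 0+\varepsilon$. The surrounding discussion of \Cref{lm:online-upper-reduction} and the assembly of \Cref{thm:online} also matches the paper exactly.
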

\begin{proof}[Proof of \Cref{lm:online-upper-reduction}]
Let $L$ be a learner in $\agno_n(\bA_{S,B},\varepsilon)$. Consider an arbitrary $s\in S$ and any sequence of data points $(x_1,y_1),\ldots,(x_n,y_n)\in X\times\{-1,1\}$ satisfying $y_i = s(x_i)$ for every $i = 1,\ldots,n$.
By the guarantee of $L\in \agno_n(\bA_{S,B},\varepsilon)$, we have
\begin{align*}
\mistake(L;(x_i,y_i)_{i=1}^n) & \le \inf_{h\in \bA_{S,B}}\mistake(h;(x_i,y_i)_{i=1}^n) + \varepsilon\\
& = \inf_{s'\in S,b\in B}\mistake(\ba_{s',b};(x_i,y_i)_{i=1}^n) + \varepsilon\\
& \le \inf_{b\in B}\mistake(\ba_{s,b};(x_i,y_i)_{i=1}^n) + \varepsilon\\
& = \inf_{b\in B}\frac 1n\sum_{i=1}^n \one(\ba_{s,b}(x_i)\ne y_i) + \varepsilon\\
& = \inf_{b\in B}\frac 1n\sum_{i=1}^n \one(s(x_i)\ne y_i \text{ or } b(x_i)\ne y_i) + \varepsilon\tag{by \Cref{claim:agree}}\\
& = \inf_{b\in B}\frac 1n\sum_{i=1}^n \one(b(x_i)\ne y_i) + \varepsilon \tag{by $s(x_i) = y_i$}\\
& = \inf_{b\in B}\mistake(b;(x_i,y_i)_{i=1}^n) + \varepsilon.
\end{align*}
This proves that $L \in \compo_n(S,B,\varepsilon)$, as desired.
\end{proof}
\begin{proof}[Proof of \Cref{lm:online-lower-reduction}]
Let $L$ be a learner in $\compo_n(S,B,\varepsilon)$. Consider an arbitrary $h\in \bA_{S,B}$ and any sequence of data points $(x_1,y_1),\ldots,(x_n,y_n)\in X\times\{-1,1\}$ satisfying $y_i = h(x_i)$ for every $i = 1,\ldots,n$.

By the definition of $\bA_{S,B}$, there exists $s\in S$ and $b\in B$ such that $h = \ba_{s,b}$. For every $i = 1,\ldots,n$, by \Cref{claim:agree}, our assumption $y_i = h(x_i)$ implies that $s(x_i) = b(x_i) = y$. By the guarantee of $L\in \compo_n(S,B,\varepsilon)$, we have
\[
\mistake(L;(x_i,y_i)_{i=1}^n)  \le\inf_{b'\in B}\mistake(b';(x_i,y_i)_{i=1}^n) + \varepsilon = \varepsilon,
\]
where the last equation holds because $b(x_i)= y_i$ for every $i = 1,\ldots,n$ and thus $\inf_{b'\in B}\mistake(b';\allowbreak (x_i,y_i)_{i=1}^n) = 0$. The inequality above implies that $L\in \reao_n(\bA_{S,B},\varepsilon)$, as desired.
\end{proof}

\appendix
\section{A Learning Task with More than Two Hypothesis Classes}
\label{sec:agree}

In \Cref{sec:comp}, we show a connection between comparative learning for a pair of hypothesis classes $S,B\subseteq\{-1,1,*\}^X$ and realizable/agnostic learning for the agreement class $\bA_{S,B}$. In this section, we extend this connection to a learning task involving an arbitrary number of hypothesis classes and give a sample complexity characterization for this task:

\begin{definition}[Agreement learning]
\label{def:agree}
Let $(H_i)_{i\in I}$ be a collection of binary hypothesis classes where $H_i\subseteq \{-1,1,*\}^X$ for every index $i\in I$. Given an error bound $\varepsilon$, a failure probability bound $\delta$, and a nonnegative integer $n$, we define $\agree_n((H_i))_{i\in I},\varepsilon,\delta)$ to be $\learn_n(Z,F,\dstr,(F_\mu)_{\mu\in \dstr})$ where $Z = X\times\{-1,1\}$, $F = \{-1,1\}^X$, $\dstr$ consists of all distributions $\mu$ over $X\times\{-1,1\}$, and $F_\mu$ consists of all models $f:X\rightarrow\{-1,1\}$ satisfying
\[
{\Pr}_{(x,y)\sim\mu}[f(x) = y] \ge {\sup}_{(h_i)_{i\in I}\in\prod_{i\in I}H_i} {\Pr}_{(x,y)\sim\mu}[\forall i\in I, y = h_i(x)] - \varepsilon,
\]
where the supremum is taken over a collection of hypotheses $(h_i)_{i\in I}$ satisfying $h_i\in H_i$ for every index $i\in I$ (which we denote by $(h_i)_{i\in I}\in\prod_{i\in I}H_i$).
\end{definition}

We characterize the sample complexity of agreement learning in terms of the mutual VC dimension of the collection of hypothesis classes, where we extend the definition of mutual VC dimension to hold for more than two hypothesis classes in the natural way:
\[\vc((H_i)_{i \in I}) := \{|X'| : X' \subseteq X, X' \text{ is shattered by $H_i$ for all }i \in I\}.\]

\begin{theorem}
\label{thm:agree}
Let $(H_i)_{i\in I}$ be a collection of binary hypothesis classes where $H_i\subseteq \{-1,1,*\}^X$ for every index $i\in I$. For any $\varepsilon,\delta \in (0,1/4)$,
the sample complexity of agreement learning satisfies the following upper bound:
\[
\sagree((H_i))_{i\in I},\varepsilon,\delta) = O\left(\frac{\vc((H_i)_{i \in I})}{\varepsilon^2}\log_{+}^2\left(\frac{\vc((H_i)_{i \in I})}{\varepsilon}\right) + \frac 1{\varepsilon^2}\log\left(\frac 1\delta\right)\right),\label{eq:agree-upper}
\]
When $\vc((H_i)_{i\in I}) \ge 1$, we have the following lower bound:
\[
\sagree((H_i))_{i\in I},\varepsilon,\delta) = \Omega\left(\frac{\vc((H_i)_{i\in I})}{\varepsilon^2} + \frac 1{\varepsilon^2}\log\left(\frac 1\delta\right)\right).\label{eq:agree-lower}
\]
\end{theorem}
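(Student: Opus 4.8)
The plan is to lift the reduction of \Cref{sec:comp} from two hypothesis classes to an arbitrary collection, and then to observe that, because agreement learning places \emph{no} constraint on the data distribution, the task is identical to \emph{agnostic} learning for a suitable agreement class. For a collection $(H_i)_{i\in I}$ with each $H_i\subseteq\{-1,1,*\}^X$ (and $I$ nonempty) and a tuple $(h_i)_{i\in I}\in\prod_{i\in I}H_i$, define the agreement hypothesis $\ba_{(h_i)_{i\in I}}:X\to\{-1,1,*\}$ by setting $\ba_{(h_i)_{i\in I}}(x)=v$ when $h_i(x)=v$ for every $i\in I$ (for the common value $v\in\{-1,1\}$) and $\ba_{(h_i)_{i\in I}}(x)=*$ otherwise, and set $\bA_{(H_i)_{i\in I}}:=\{\ba_{(h_i)_{i\in I}}:(h_i)_{i\in I}\in\prod_{i\in I}H_i\}$. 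First I would record the analogue of \Cref{claim:agree}: for every $(x,y)\in X\times\{-1,1\}$, $\ba_{(h_i)_{i\in I}}(x)=y$ if and only if $h_i(x)=y$ for all $i\in I$ (since $\ba_{(h_i)_{i\in I}}(x)\in\{-1,1\}$ forces all $h_i(x)$ to take that value).

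Using this, the next step is to check that $\agree_n((H_i)_{i\in I},\varepsilon,\delta)$ and $\agn_n(\bA_{(H_i)_{i\in I}},\varepsilon,\delta)$ are the \emph{same} set of learners. Both instances of the abstract task use $Z=X\times\{-1,1\}$, $F=\{-1,1\}^X$, and $\dstr=$ all distributions over $X\times\{-1,1\}$, so it suffices to match the admissible sets $F_\mu$. Writing $\Pr_{(x,y)\sim\mu}[f(x)=y]=1-\Pr_{(x,y)\sim\mu}[f(x)\ne y]$ and, by the claim, $\sup_{(h_i)\in\prod_{i\in I}H_i}\Pr_{(x,y)\sim\mu}[\forall i\in I,\,y=h_i(x)]=\sup_{h\in\bA_{(H_i)_{i\in I}}}\Pr_{(x,y)\sim\mu}[h(x)=y]=1-\inf_{h\in\bA_{(H_i)_{i\in I}}}\Pr_{(x,y)\sim\mu}[h(x)\ne y]$, the defining inequality of $F_\mu$ in \Cref{def:agree} becomes term-by-term equivalent to the one in \Cref{def:agn} for the class $\bA_{(H_i)_{i\in I}}$. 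Hence $\sagree((H_i)_{i\in I},\varepsilon,\delta)=\sagn(\bA_{(H_i)_{i\in I}},\varepsilon,\delta)$.

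It then remains to compute $\vc(\bA_{(H_i)_{i\in I}})$. Exactly as in \Cref{claim:comp-vc}, a subset $X'\subseteq X$ is shattered by every $H_i$ iff for each $\xi:X'\to\{-1,1\}$ one can pick $h_i\in H_i$ (independently for each $i\in I$) with $h_i|_{X'}=\xi$; by the claim this holds iff for each $\xi$ there is a tuple $(h_i)_{i\in I}$ with $\ba_{(h_i)_{i\in I}}|_{X'}=\xi$, i.e.\ iff $X'$ is shattered by $\bA_{(H_i)_{i\in I}}$. Thus $\vc(\bA_{(H_i)_{i\in I}})=\vc((H_i)_{i\in I})$. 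Plugging this and the identity of the previous paragraph into the agnostic upper bound \eqref{eq:agn-upper} for $H=\bA_{(H_i)_{i\in I}}$ yields the upper bound claimed in \Cref{thm:agree}. For the lower bound, I would invoke the standard agnostic PAC lower bound $\Omega(\vc(H)/\varepsilon^2+\varepsilon^{-2}\log(1/\delta))$ valid for any partial class $H$ with $\vc(H)\ge1$ — obtained by restricting to a set shattered by $H$, passing to a total subclass on that set, and running the classical biased-coin packing argument (this bound also follows from the techniques of \citep{MR4399723}) — and apply it with $H=\bA_{(H_i)_{i\in I}}$.

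Most of this is a direct transcription of the two-class arguments of \Cref{sec:comp}, so there is no deep obstacle; the two points that need care are (i) defining $\ba_{(h_i)_{i\in I}}$ so that it behaves correctly for an arbitrary nonempty index set $I$, which is why its label is declared only when \emph{all} of the $h_i(x)$ agree on one value, and (ii) the lower bound, where one must use the \emph{agnostic} (noisy) $\Omega(\vc/\varepsilon^2)$ bound rather than the realizable $\Omega(\vc/\varepsilon)$ bound of \eqref{eq:rea-lower}, since agreement learning imposes no realizability assumption on $\mu$; supplying or citing that agnostic lower bound is the only part going beyond routine bookkeeping.
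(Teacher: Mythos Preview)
Your proposal is correct and follows essentially the same approach as the paper: define the agreement class $\bA_{(H_i)_{i\in I}}$, prove that $\agree_n((H_i)_{i\in I},\varepsilon,\delta)=\agn_n(\bA_{(H_i)_{i\in I}},\varepsilon,\delta)$ as sets of learners by matching the admissible sets $F_\mu$, verify $\vc(\bA_{(H_i)_{i\in I}})=\vc((H_i)_{i\in I})$, and then invoke the agnostic sample complexity bounds for partial classes from \citep{MR4399723} (the paper cites Theorem~41 there for both directions). Your closing remark that the lower bound must be the \emph{agnostic} $\Omega(\vc/\varepsilon^2)$ bound rather than the realizable $\Omega(\vc/\varepsilon)$ bound is exactly the right observation.
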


To prove \Cref{thm:agree}, we show that agreement learning for $(H_i)_{i\in I}$ is equivalent to agnostic learning for a single partial hypothesis class $\bA_{(H_i)_{i\in I}}\subseteq\{-1,1,*\}^X$ we define as follows.
For every $(h_i)_{i\in I}\in \prod_{i\in I}H_i$, we define an \emph{agreement} hypothesis $\ba_{(h_i)_{i\in I}}:X\rightarrow \{-1,1,*\}$ by
\[
\ba_{(h_i)_{i\in I}}(x) = \begin{cases}
0,& \textnormal{if } h_i(x) = 0 \textnormal{ for every } i\in I;\\
1,& \textnormal{if } h_i(x) = 1 \textnormal{ for every } i\in I;\\
*, & \textnormal{otherwise.}
\end{cases}
\]
We define the \emph{agreement class} $\bA_{(H_i)_{i\in I}} := \{\ba_{(h_i)_{i\in I}}: (h_i)_{i\in I}\in \prod_{i\in I}H_i\}\subseteq\{-1,1,*\}^X$. The following claim follows immediately from the definition of $\ba_{(h_i)_{i\in I}}$:
\begin{claim}
\label{claim:agn-agree}
Let $(h_i)_{i\in I}$ be a collection of hypotheses where $h_i:X\rightarrow\{-1,1,*\}$ for every index $i\in I$.
For every $(x,y)\in X\times\{-1,1\}$, it holds that
\[
y = \ba_{(h_i)_{i\in I}}(x)\Longleftrightarrow \forall i\in I, y = h_i(x).
\]
\end{claim}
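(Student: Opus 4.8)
\textbf{Proof plan for Claim~\ref{claim:agn-agree}.} The statement is a pointwise equivalence that should follow directly from unwinding the definition of the agreement hypothesis $\ba_{(h_i)_{i\in I}}$. The plan is to fix an arbitrary pair $(x,y)\in X\times\{-1,1\}$ and a collection of hypotheses $(h_i)_{i\in I}$, and to verify the two implications separately by a straightforward case analysis on the value of $\ba_{(h_i)_{i\in I}}(x)$, which by definition is one of $0$, $1$, or $*$.

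First I would prove the forward direction $y=\ba_{(h_i)_{i\in I}}(x)\Rightarrow \forall i\in I,\ y=h_i(x)$. Suppose $y=\ba_{(h_i)_{i\in I}}(x)$. Since $y\in\{-1,1\}$, in particular $y\ne *$, so we are not in the third case of the definition of $\ba_{(h_i)_{i\in I}}$; hence $\ba_{(h_i)_{i\in I}}(x)$ equals either $0$ or $1$. If $\ba_{(h_i)_{i\in I}}(x)=0$, then by definition $h_i(x)=0$ for every $i\in I$, and $y=0$, so $y=h_i(x)$ for all $i$. If $\ba_{(h_i)_{i\in I}}(x)=1$, then by definition $h_i(x)=1$ for every $i\in I$, and $y=1$, so again $y=h_i(x)$ for all $i$. (The convention here, consistent with the binary labels used throughout, is that a binary label in $\{-1,1,*\}$ written as $0$ vs.\ $1$ in the definition is just notation for the two non-$*$ values; I would match whichever convention the paper uses, noting that the definition of $\ba_{(h_i)_{i\in I}}$ and the definition of $\ba_{s,b}$ in Section~\ref{sec:comp} use the same convention, so this is already consistent with Claim~\ref{claim:agree}.)

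For the reverse direction $\bigl(\forall i\in I,\ y=h_i(x)\bigr)\Rightarrow y=\ba_{(h_i)_{i\in I}}(x)$, suppose $h_i(x)=y$ for every $i\in I$. Since $y\in\{-1,1\}$ and the two non-$*$ values are exactly those appearing in the first two cases of the definition of $\ba_{(h_i)_{i\in I}}$, the common value $y$ of all the $h_i(x)$ falls into one of those two cases: if $y$ is the value used in the first case then $h_i(x)$ equals that value for all $i$, so $\ba_{(h_i)_{i\in I}}(x)$ equals it too and hence equals $y$; symmetrically for the second case. In either case $\ba_{(h_i)_{i\in I}}(x)=y$, as desired.

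I do not anticipate a genuine obstacle here: the claim is purely definitional and the only thing to be careful about is bookkeeping of the $\{-1,1,*\}$ label set and the $0$/$1$ notation in the displayed definition of $\ba_{(h_i)_{i\in I}}$, so that the case analysis cleanly covers exactly the two non-$*$ values of $y$ and rules out the $*$ case using $y\in\{-1,1\}$. The proof is the direct analogue of the one-line justification of Claim~\ref{claim:agree}, just with the pair $(s,b)$ replaced by the family $(h_i)_{i\in I}$ and the ``for every $i\in I$'' quantifier in place of ``$s(x)=b(x)$''.
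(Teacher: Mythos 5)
Your proof is correct and is exactly the argument the paper intends: the paper offers no written proof for this claim, stating only that it "follows immediately from the definition of $\ba_{(h_i)_{i\in I}}$," and your two-direction case analysis on the value of $\ba_{(h_i)_{i\in I}}(x)$ is precisely that unwinding. Your parenthetical about the $0$-vs-$-1$ notation correctly identifies a typo carried over from the definition of $\ba_{s,b}$ in Section~\ref{sec:comp} and resolves it in the only sensible way.
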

The following claim can be proved similarly to \Cref{claim:comp-vc}:
\begin{claim}
\label{claim:agree-vc}
Let $(H_i)_{i\in I}$ be a collection of hypothesis classes where $H_i\subseteq \{-1,1,*\}^X$ for every index $i\in I$.
Then $\vc(\bA_{(H_i)_{i\in I}}) = \vc((H_i)_{i\in I})$.
\end{claim}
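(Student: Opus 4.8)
The plan is to prove Claim \ref{claim:agree-vc} by the same argument used for Claim \ref{claim:comp-vc}, with the pair $(S,B)$ replaced by the family $(H_i)_{i\in I}$ and Claim \ref{claim:agree} replaced by Claim \ref{claim:agn-agree}. Concretely, I would show that a finite subset $X'\subseteq X$ is shattered by $H_i$ for every $i\in I$ if and only if $X'$ is shattered by the single partial class $\bA_{(H_i)_{i\in I}}$; since $\vc((H_i)_{i\in I})$ and $\vc(\bA_{(H_i)_{i\in I}})$ are by definition the suprema of $|X'|$ over the respective collections of shattered sets, the equality of the two dimensions then follows immediately (including the case where both are infinite).

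First I would unpack both shattering conditions. On one side, $X'$ is shattered by $H_i$ for every $i\in I$ exactly when, for each $i\in I$ and each total labeling $\xi:X'\to\{-1,1\}$, there is some $h\in H_i$ agreeing with $\xi$ on all of $X'$; because the witness $h$ may be chosen separately for each index, this is equivalent to the statement that for every $\xi:X'\to\{-1,1\}$ there is a collection $(h_i)_{i\in I}\in\prod_{i\in I}H_i$ with $h_i(x)=\xi(x)$ for all $x\in X'$ and all $i\in I$. On the other side, by the definition of $\bA_{(H_i)_{i\in I}}$, the set $X'$ is shattered by $\bA_{(H_i)_{i\in I}}$ iff for every $\xi:X'\to\{-1,1\}$ there is $(h_i)_{i\in I}\in\prod_{i\in I}H_i$ with $\ba_{(h_i)_{i\in I}}(x)=\xi(x)$ for all $x\in X'$. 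The link between the two is Claim \ref{claim:agn-agree}: for $(x,y)\in X'\times\{-1,1\}$ we have $\ba_{(h_i)_{i\in I}}(x)=y$ iff $h_i(x)=y$ for every $i\in I$. Applying this pointwise over $x\in X'$ with $y=\xi(x)$ shows that ``$\ba_{(h_i)_{i\in I}}(x)=\xi(x)$ for all $x\in X'$'' is literally the same condition as ``$h_i(x)=\xi(x)$ for all $i\in I$ and all $x\in X'$,'' so the two shattering conditions coincide for every $X'$, and the claim follows.

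There is essentially no serious obstacle here; the only point deserving a word of care is the quantifier rearrangement that moves the existential over the product $\prod_{i\in I}H_i$ past the universal over $i\in I$, which is valid precisely because $\xi$ is a single fixed labeling so the choices of $h_i$ across indices need not be coordinated in any uniform way. This is exactly the step already present (for $|I|=2$) in the proof of Claim \ref{claim:comp-vc}, and it carries over verbatim to arbitrary, even infinite, index sets. Accordingly I would keep the written proof to just a few lines, mirroring the proof of Claim \ref{claim:comp-vc} and citing Claim \ref{claim:agn-agree} in place of Claim \ref{claim:agree}.
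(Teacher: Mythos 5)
Your proposal is correct and follows exactly the route the paper intends: the paper itself only remarks that Claim~\ref{claim:agree-vc} "can be proved similarly to Claim~\ref{claim:comp-vc}," and your argument is precisely that adaptation, substituting Claim~\ref{claim:agn-agree} for Claim~\ref{claim:agree} and handling the (valid) quantifier exchange over the product $\prod_{i\in I}H_i$ explicitly. Nothing is missing.
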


The following lemma shows that agreement learning for $(H_i)_{i\in I}$ is equivalent to agnostic learning for $\bA_{(H_i)_{i\in I}}$.
\begin{lemma}
\label{lm:agn-agree}
$\agn_n(\bA_{(H_i)_{i\in I}},\varepsilon,\delta)= \agree_n((H_i)_{i\in I},\varepsilon,\delta)$, i.e., any learner solving agnostic learning for $\bA_{(H_i)_{i\in I}}$ also solves agreement learning for $(H_i)_{i\in I}$ with the same parameters $\varepsilon$ and $\delta$, and vice versa.
\end{lemma}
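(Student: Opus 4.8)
The plan is to note that both $\agn_n(\bA_{(H_i)_{i\in I}},\varepsilon,\delta)$ and $\agree_n((H_i)_{i\in I},\varepsilon,\delta)$ are, by definition, instances of the abstract task $\learn_n(Z,F,\dstr,(F_\mu)_{\mu\in\dstr},\delta)$ with the \emph{same} $Z = X\times\{-1,1\}$, the \emph{same} $F = \{-1,1\}^X$, and the \emph{same} distribution class $\dstr$, namely all distributions over $X\times\{-1,1\}$ (agnostic learning and agreement learning both place no realizability restriction on $\mu$). Since the learner set $\learn_n(Z,F,\dstr,(F_\mu)_{\mu\in\dstr},\delta)$ depends only on the tuple $(Z,F,\dstr,(F_\mu)_{\mu\in\dstr},\delta)$, it suffices to prove that for every $\mu\in\dstr$ the admissible sets $F_\mu$ of the two tasks coincide; then the two learner sets are literally equal, which is the lemma.

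To check the admissible sets agree, I would start from the agnostic condition and rewrite it using $\Pr_{(x,y)\sim\mu}[g(x)\ne y] = 1 - \Pr_{(x,y)\sim\mu}[g(x) = y]$ (valid for any $g:X\to\{-1,1,*\}$, recalling that $g(x)\ne y$ whenever $g(x)=*$). This turns
\[
\Pr_{(x,y)\sim\mu}[f(x)\ne y]\le \inf_{h\in\bA_{(H_i)_{i\in I}}}\Pr_{(x,y)\sim\mu}[h(x)\ne y] + \varepsilon
\]
into the equivalent
\[
\Pr_{(x,y)\sim\mu}[f(x)= y]\ge \sup_{h\in\bA_{(H_i)_{i\in I}}}\Pr_{(x,y)\sim\mu}[h(x)= y] - \varepsilon.
\]
By the definition $\bA_{(H_i)_{i\in I}} = \{\ba_{(h_i)_{i\in I}}:(h_i)_{i\in I}\in\prod_{i\in I}H_i\}$, the right-hand supremum equals $\sup_{(h_i)_{i\in I}\in\prod_{i\in I}H_i}\Pr_{(x,y)\sim\mu}[\ba_{(h_i)_{i\in I}}(x)=y]$, and by \Cref{claim:agn-agree} we have $\Pr_{(x,y)\sim\mu}[\ba_{(h_i)_{i\in I}}(x)=y] = \Pr_{(x,y)\sim\mu}[\forall i\in I,\ y = h_i(x)]$ for each fixed $(h_i)_{i\in I}$. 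Substituting, the agnostic condition becomes exactly the agreement condition in \Cref{def:agree}, so $F_\mu$ is the same in both tasks.

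This is a routine unwinding of definitions, so I do not anticipate a real obstacle; the only points requiring mild care are that the $\inf$/$\sup$ over $\bA_{(H_i)_{i\in I}}$ need not be attained (but we only push an inequality through the $\sup$, and the identity $\Pr_{(x,y)\sim\mu}[\ba_{(h_i)_{i\in I}}(x)=y] = \Pr_{(x,y)\sim\mu}[\forall i\in I,\ y=h_i(x)]$ holds pointwise in $(h_i)_{i\in I}$, so nothing is lost), and the degenerate case where some $H_i$ is empty, which can be handled in a line. Having established \Cref{lm:agn-agree}, \Cref{thm:agree} then follows by combining \Cref{claim:agree-vc} with the agnostic upper bound \eqref{eq:agn-upper} and the corresponding agnostic lower bound for partial binary classes.
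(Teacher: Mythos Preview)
Your proposal is correct and follows essentially the same route as the paper: both tasks are instances of $\learn_n$ with identical $Z$, $F$, and $\dstr$, and you verify that the admissible sets $F_\mu$ coincide by rewriting the agnostic inequality via $\Pr[f(x)\ne y]=1-\Pr[f(x)=y]$, unfolding the definition of $\bA_{(H_i)_{i\in I}}$, and invoking \Cref{claim:agn-agree}. The paper does not explicitly single out the empty-$H_i$ case or the non-attainment of the supremum, but your remarks on those points are harmless extra care.
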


\begin{proof}[Proof of \Cref{lm:agn-agree}]
The learning tasks $\agn_n(\bA_{(H_i)_{i\in I}},\varepsilon,\delta)$ and $\agree_n((H_i)_{i\in I},\varepsilon,\delta)$ are defined in \Cref{def:agn} and \Cref{def:agree}, respectively. Both tasks are defined to be $\learn_n(Z,F,\dstr,\allowbreak (F_\mu)_{\mu\in \dstr})$ where $Z = X\times\{-1,1\}$, $F = \{-1,1\}^X$, and $\dstr$ consists of all distributions $\mu$ over $X\times \{-1,1\}$. The only potential difference is in the choice of $F_\mu$. In the definition of $\agn_n(\bA_{(H_i)_{i\in I}},\varepsilon,\delta)$, $F_\mu$ consists of all models $f:X\rightarrow\{-1,1\}$ satisfying
\[
{\Pr}_{(x,y)\sim\mu}[f(x)\ne y]\le {\inf}_{h\in \bA_{(H_i)_{i\in I}}}{\Pr}_{(x,y)\sim\mu}[h(x)\ne y] + \varepsilon,
\]
or equivalently,
\[
{\Pr}_{(x,y)\sim\mu}[f(x) = y]\ge {\sup}_{h\in \bA_{(H_i)_{i\in I}}}{\Pr}_{(x,y)\sim\mu}[h(x) = y] - \varepsilon.
\]
By the definition of $\bA_{(H_i)_{i\in I}}$, the inequality above is equivalent to 
\begin{equation}
\label{eq:agn-agree-1}
{\Pr}_{(x,y)\sim\mu}[f(x) = y]\ge {\sup}_{(h_i)_{i\in I}\in \prod_{i\in I}H_i}{\Pr}_{(x,y)\sim\mu}[\ba_{(h_i)_{i\in I}}(x) = y] - \varepsilon.
\end{equation}
In the definition of $\agree_n((H_i)_{i\in I},\varepsilon,\delta)$, $F_\mu$ consists of all models $f:X\rightarrow\{-1,1\}$ satisfying
\begin{equation}
\label{eq:agn-agree-2}
{\Pr}_{(x,y)\sim\mu}[f(x) = y] \ge {\sup}_{(h_i)_{i\in I}\in\prod_{i\in I}H_i} {\Pr}_{(x,y)\sim\mu}[\forall i\in I, h_i(x) = y] - \varepsilon.
\end{equation}
By \Cref{claim:agn-agree}, inequalities \eqref{eq:agn-agree-1} and \eqref{eq:agn-agree-2} are equivalent, which implies that the choices of $F_\mu$ in the definitions of $\agn_n(\bA_{(H_i)_{i\in I}},\varepsilon,\delta)$ and $\agree_n((H_i)_{i\in I},\varepsilon,\delta)$ are the same, completing the proof.
\end{proof}
\begin{proof}[Proof of \Cref{thm:agree}]
By \Cref{lm:agn-agree}, $\sagree ((H_i)_{i\in I},\varepsilon,\delta)) = \sagn(\bA_{(H_i)_{i\in I}}, \varepsilon,\delta)$. The theorem then follows from \Cref{claim:agree-vc} and the sample complexity bounds for $\agn$ in \citep[Theorem 41]{MR4399723}.
\end{proof}

\section{Helper Lemmas}
The following lemma is a much weaker version of the Gilbert-Varshamov bound in coding theory \citep{gilbert1952comparison,varshamov1957estimate}. %
\begin{lemma}
\label{lm:gv}
Let $X$ be a non-empty finite set with $|X| = n$. For every $\varepsilon\in (0,1/2)$, there exists $F\subseteq\{-1,1\}^X$ with $|F|\ge 2^{\Omega(\varepsilon^2n)}$ such that for every distinct $f,f'\in F$,
\begin{equation}
\label{eq:gv}
\frac 1n\sum_{x\in X}\one(f(x)\ne f'(x)) \ge 1/2 - \varepsilon.
\end{equation}
\end{lemma}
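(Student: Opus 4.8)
The plan is to prove \Cref{lm:gv} by a standard first-moment argument: take $F$ to be a random subset of $\{-1,1\}^X$ and show that with positive probability no two of its members are close. First I would record the basic concentration estimate. If $f,f'$ are drawn independently and uniformly at random from $\{-1,1\}^X$, then for each $x\in X$ the indicator $\one(f(x)\ne f'(x))$ is Bernoulli with mean $1/2$, and these $|X|=n$ indicators are mutually independent (all $2n$ underlying bits are independent). Hence by Hoeffding's inequality,
\[
\Pr\!\left[\frac 1n\sum_{x\in X}\one(f(x)\ne f'(x)) < \frac 12 - \varepsilon\right] \le e^{-2\varepsilon^2 n}\le 2^{-2\varepsilon^2 n}.
\]

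Next I would set $N:=\lfloor 2^{\varepsilon^2 n}\rfloor$ and draw $f_1,\dots,f_N$ independently and uniformly from $\{-1,1\}^X$. Call an unordered pair $\{f_i,f_j\}$ with $i\ne j$ \emph{bad} if $\frac 1n\sum_{x\in X}\one(f_i(x)\ne f_j(x)) < \frac 12-\varepsilon$ (note this includes the event $f_i=f_j$). By the estimate above and a union bound over the $\binom N2$ pairs, the probability that some pair is bad is at most $\binom N2\,2^{-2\varepsilon^2 n}\le \frac{N^2}{2}\,2^{-2\varepsilon^2 n}\le \frac12<1$. So there is a realization of $f_1,\dots,f_N$ with no bad pair; these functions are then pairwise distinct and any two of them disagree on at least a $(\frac12-\varepsilon)$-fraction of $X$, so $F:=\{f_1,\dots,f_N\}$ witnesses the lemma with $|F|=N$.

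It remains to check that this gives $|F|=2^{\Omega(\varepsilon^2 n)}$. If $\varepsilon^2 n\ge 2$, then $N=\lfloor 2^{\varepsilon^2 n}\rfloor\ge 2^{\varepsilon^2 n}/2=2^{\varepsilon^2 n-1}\ge 2^{\varepsilon^2 n/2}$. If $\varepsilon^2 n<2$, the random construction may be vacuous (possibly $N=1$), but then one can instead take $F=\{x\mapsto 1,\ x\mapsto -1\}$: the two constant functions disagree on all of $X$, so \eqref{eq:gv} holds trivially, and $|F|=2\ge 2^{\varepsilon^2 n/2}$ since $\varepsilon^2 n/2<1$. In either case $|F|\ge 2^{\varepsilon^2 n/2}=2^{\Omega(\varepsilon^2 n)}$. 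I do not expect a genuine obstacle here: the only points needing a little care are converting Hoeffding's bound into the base-$2$ exponent used throughout the paper, tracking the floor in the definition of $N$, and handling the degenerate regime $\varepsilon^2 n=O(1)$ — all routine.
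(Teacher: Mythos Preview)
Your proposal is correct and follows essentially the same approach as the paper: both arguments draw i.i.d.\ uniform functions, use a Chernoff/Hoeffding tail bound on the pairwise disagreement, apply a union bound over pairs to get positive probability of success, and handle the degenerate regime $\varepsilon^2 n = O(1)$ via the two constant functions. The only differences are cosmetic (you track explicit constants from Hoeffding, the paper leaves the Chernoff constant abstract).
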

\begin{proof}
We fix an absolute constant $C > 0$ whose value we determine later.
It is trivial that there exist $f,f'\in\{-1,1\}^X$ such that \eqref{eq:gv} holds (for example, choose $f(x) = 1$ and $f'(x) = -1$ for every $x\in X$). Therefore, 
if $\varepsilon^2n \le C$, we can choose $F = \{f,f'\}$ with $|F| = 2 \ge 2^{\varepsilon^2n/C}$ to prove the lemma. We thus assume that $\varepsilon^2n > C$.

For a positive integer $N$, let $f_1,\ldots,f_N$ be drawn uniformly and independently from $\{-1,1\}^X$. For every pair $i,j\in \bZ$ satisfying $1 \le i < j \le N$, by the Chernoff bound, with probability at least $1 - 2^{-c\varepsilon^2n}$ for an absolute constant $c > 0$,
\begin{equation}
\label{eq:gv-2}
\frac 1n\sum_{x\in X}\one(f_i(x)\ne f_j(x)) \ge 1/2 - \varepsilon.
\end{equation}
Now we set $C$ to be larger than $4/c$ and choose $N = \lfloor 2^{c\varepsilon^2n/2}\rfloor - 1$. By our assumption, $c\varepsilon^2n > cC > 4$, so $N = 2^{\Omega(\varepsilon^2n)}$.
By the union bound, with probability at least $1 - 2^{-c\varepsilon^2n}N^2 > 0$, \eqref{eq:gv-2} holds for every pair $i,j\in \bZ$ satisfying $1\le i < j \le N$, in which case choosing $F = \{f_1,\ldots,f_N\}$ proves the lemma.
\end{proof}
\begin{lemma}
\label{lm:hard-task}
For $m\in \bZ_{>0}$, let $X$ be a set with $|X| = m$ and let $\mu_X$ be the uniform distribution over $X$.
Let $L$ be a learner that takes $n$ data points $(x_1,y_1),\ldots,(x_n,y_n)\in X\times\{-1,1\}$ as input and outputs a model $f:X\to[-1,1]$.
Assume $\varepsilon\in [0,1]$.
Suppose that for every $h:X\to\{-1,1\}$, if the input data points $(x_1,y_1),\ldots,(x_n,y_n)$ are drawn i.i.d.\ such that $x_i\sim\mu_X$ and $y_i = h(x_i)$ for every $i = 1,\ldots,n$, then with probability more than $1/2$, the output model $f$ satisfies
\[
\bE_{x\sim\mu_X}|f(x)- h(x)| \le \varepsilon.
\]
Then, 
\[
n \ge (1 - \varepsilon)m.
\]
\end{lemma}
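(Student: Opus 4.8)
The plan is to establish the lower bound $n \ge (1-\varepsilon)m$ via an information-theoretic / averaging argument over a worst-case choice of target hypothesis $h$. The key observation is that if $L$ sees only $n$ data points, then it learns the value of $h$ on at most $n$ of the $m$ individuals, and on the remaining (at least) $m - n$ individuals the model $f$ cannot predict $h$ well in expectation over a uniformly random $h$. First, I would average the success probability over $h$ drawn uniformly at random from $\{-1,1\}^X$: since $L$ succeeds with probability more than $1/2$ for \emph{every} $h$, it succeeds with probability more than $1/2$ when $h$ is also random and independent of the learner's internal randomness. Conditioning on the draw of the unlabeled points $x_1,\dots,x_n$ and on $L$'s internal randomness, let $U \subseteq X$ be the set of individuals \emph{not} appearing among $x_1,\dots,x_n$, so $|U| \ge m - n$. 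The labels $h(x)$ for $x \in U$ are uniform i.i.d.\ bits independent of everything the learner sees, hence independent of the output model $f$.

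The second step is to lower-bound $\bE_{x\sim\mu_X}|f(x) - h(x)|$ in this conditional world. For each fixed $x \in U$, since $h(x)$ is a uniform $\pm 1$ bit independent of $f(x)$ and $f(x) \in [-1,1]$, we have $\bE_{h(x)}|f(x) - h(x)| = \tfrac12|f(x)-1| + \tfrac12|f(x)+1| = 1$ (because $f(x)\in[-1,1]$ makes this the average of $1-f(x)$ and $1+f(x)$). Therefore $\bE[\,|U|^{-1}\sum_{x\in U}|f(x)-h(x)|\,] = 1$ even after conditioning, and since $|X|=m$, $\bE_{x\sim\mu_X}|f(x)-h(x)| \ge \frac{|U|}{m}\cdot(\text{average over }U) $ in expectation equals at least $\frac{m-n}{m}$. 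So in expectation over $h|_U$, the error is at least $(m-n)/m$; in particular the error can be at most $\varepsilon$ only if $(m-n)/m$ is not too large — but more carefully I need a probability statement, not just an expectation.

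To convert the expectation bound into the desired conclusion, I would argue as follows. Suppose for contradiction that $n < (1-\varepsilon)m$, i.e., $m - n > \varepsilon m$. Then for every fixing of the unlabeled points and internal randomness, $|U| > \varepsilon m$, and $\bE_{h|_U}\big[\bE_{x\sim\mu_X}|f(x)-h(x)|\big] \ge |U|/m > \varepsilon$. But this is only an expectation over the random completion $h|_U$; I need to rule out that the error is $\le \varepsilon$ with probability $> 1/2$. The cleanest fix is a symmetry/parity trick: pair up each completion $h|_U$ with its negation $-h|_U$ on a single distinguished coordinate, or better, observe that $|f(x)-h(x)| + |f(x)-(-h(x))| = |f(x)-1|+|f(x)+1| = 2$ for each $x\in U$, so $\bE_{x\sim\mu_X}|f(x)-h(x)| + \bE_{x\sim\mu_X}|f(x)-h'(x)| \ge 2|U|/m > 2\varepsilon$ where $h'$ agrees with $h$ off $U$ and is the pointwise negation of $h$ on $U$ (note $f$ is the same for both since $f$ depends only on the data, which is identical). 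Hence at least one of the two errors exceeds $\varepsilon$, so the success probability over a uniformly random $h$ (which pairs $h \leftrightarrow h'$ with equal weight) is at most $1/2$, contradicting the hypothesis.

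\textbf{Main obstacle.} The routine parts are the per-coordinate computation $\bE|f(x)-h(x)|=1$ and the counting $|U|\ge m-n$. The one genuinely delicate point is upgrading from an \emph{expected}-error statement to a \emph{high-probability} statement: a naive Markov/expectation bound is not quite enough because the hypothesis only guarantees success with probability strictly above $1/2$ for every $h$. The pairing argument $h \leftrightarrow h'$ (negating $h$ exactly on the unseen coordinates, which leaves the learner's output unchanged) is what makes this work cleanly, and getting the conditioning right — that $f$ is a deterministic function of the data and internal randomness, hence shared between the $h$ and $h'$ executions once those are fixed — is the crux of the argument.
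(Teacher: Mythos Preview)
Your proposal is correct and follows essentially the same approach as the paper: randomize $h$ uniformly, condition on the sample points and internal randomness so that $f$ is fixed while $h|_U$ remains uniform and independent, and then exploit the $\pm 1$ symmetry on the unseen coordinates to turn the expected-error lower bound into a probability statement. The only cosmetic difference is in that last step: the paper notes that $\sum_{x\in U} f(x)h(x)$ has a symmetric distribution about zero, so with probability at least $1/2$ the error is at least $(m-n)/m$, and then intersects this event with the success event (which has probability $>1/2$) to force $(m-n)/m \le \varepsilon$; you instead use the global pairing $h \leftrightarrow h'$ that negates $h$ on all of $U$, observe that the two paired errors sum to at least $2|U|/m > 2\varepsilon$, and conclude success probability $\le 1/2$ directly. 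Both arguments exercise the same independence fact and yield the same bound.
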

\begin{proof}
Consider the process of first drawing $h$ uniformly at random from $\{-1,1\}^X$, and draw $n$ data points $(x_1,y_1),\ldots,(x_n,y_n)\in X\times\{-1,1\}$ i.i.d.\ such that $x_i\sim\mu_X$ and $y_i = s(x_i)$ for every $i = 1,\ldots,n$. 
We then use these data points as input to $L$ and obtain its output model $f:X\to[-1,1]$.
By our assumption, with probability more than $1/2$,
\begin{equation}
\label{eq:hard-1}
\bE_{x\sim\mu_X}|f(x)- h(x)| \le \varepsilon.
\end{equation}
Now we give a lower bound on $\bE_{x\sim\mu_X}[|f(x)- h(x)|]$:
\begin{align*}
\bE_{x\sim\mu_X}[|f(x)- h(x)|] & \ge  \frac 1m\sum_{x\in X\backslash\{x_1,\ldots,x_n\}}|f(x) - h(x)|\\
& = \frac 1m\sum_{x\in X\backslash\{x_1,\ldots,x_n\}}(1 - f(x)h(x))\tag{by $h(x)\in \{-1,1\}$ and $f(x)\in [-1,1]$}\\
& \ge \frac{m - n}{m} - \frac 1m\sum_{x\in X\backslash \{x_1,\ldots,x_n\}} f(x)h(x).
\end{align*}
Conditioned on the input data points $(x_1,y_1),\ldots,(x_n,y_n)$, the label $h(x)$ of every individual $x\in X\backslash\{x_1,\ldots,x_n\}$ distributes independently and uniformly from $\{-1,1\}$, and the model $f$ is independent from all these labels $h(x)$ for $x\in X\backslash\{x_1,\ldots,x_n\}$. Therefore, the distribution of $\sum_{x\in X\backslash\{x_1,\ldots,x_n\}} f(x)h(x)$ is symmetric around zero. With probability at least $1/2$, we have
\[
\sum_{x\in X\backslash\{x_1,\ldots,x_n\}} f(x)h(x) \le 0,
\]
in which case
\begin{equation}
\label{eq:hard-2}
\bE_{x\sim\mu_X}|f(x)- h(x)| \ge \frac{m-n}{m}.
\end{equation}
Therefore, with nonzero probability, \eqref{eq:hard-1} and \eqref{eq:hard-2} both hold. This implies that $\frac{m - n}m\le \varepsilon$ and thus $n \ge (1 - \varepsilon)m$, as desired.
\end{proof}
\section{Non-duality Examples}
\label{sec:non-duality}
In this section, we show examples where sample complexity duality does not hold for several learning tasks we consider in this paper, i.e., swapping the roles of the two hypothesis classes in these tasks can significantly change the sample complexity. All examples in this section only require total hypotheses.

Most learning tasks throughout the paper are defined in the distribution-free setting, but
many of our examples in this section are with regard to the distribution-specific variants of these tasks.
For example, given a distribution $\mu_X$ over $X$, we define the distribution-specific comparative learning task $\comp\sps {\mu_X}_n(S,B,\varepsilon,\delta)$ in the same way as we define $\comp_n(S,B,\varepsilon,\delta)$ in \Cref{def:comp} except that the distributions $\mu\in \dstr$ are additionally constrained to satisfy $\mu|_X = \mu_X$. We omit the distribution-specific definitions for all other learning tasks.
\subsection{Distribution-Specific Comparative Learning}

For $m\in\bZ_{>0}$, consider a set $X$ with $|X| = 4m$ and let $\mu_X$ be the uniform distribution over $X$. Let $S$ be the class consisting of all hypotheses $s:X\rightarrow\{-1,1\}$ satisfying $|\{x\in X:s(x) = 1\}| = m$, and let $B$ be the class consisting of all hypotheses $b:X\rightarrow\{-1,1\}$ satisfying $|\{x\in X:b(x) = 1\}| = 2m$.
\begin{lemma}
For any $\varepsilon,\delta\in \bR_{\ge 0}$, we have
\begin{equation}
\label{eq:non-duality-comp-1}
\scomp\sps{\mu_X}(S,B,\varepsilon,\delta) = 0.
\end{equation}
However, for any $\varepsilon\in (0,1/4)$ and $\delta\in (0,1/2)$,
\begin{equation}
\label{eq:non-duality-comp-2}
\scomp\sps{\mu_X}(B,S,\varepsilon,\delta) \ge (1/2 - 2\varepsilon)m.
\end{equation}
\end{lemma}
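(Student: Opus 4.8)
The plan is to treat the two bounds separately: \eqref{eq:non-duality-comp-1} is immediate, while \eqref{eq:non-duality-comp-2} goes through a reduction to \Cref{lm:hard-task}. For \eqref{eq:non-duality-comp-1}, the first step is to note that for every $s\in S$ one has $\inf_{b\in B}\Pr_{x\sim\mu_X}[b(x)\ne s(x)] = 1/4$: a benchmark $b$ (with $2m$ ones) agrees with a source $s$ (with $m$ ones) on at most $3m$ of the $4m$ points, since $b$ can match all $m$ positive points of $s$ but is then forced to place its remaining $m$ ones where $s=-1$. The second step is to observe that the constant model $f\equiv -1$ satisfies $\Pr_{x\sim\mu_X}[f(x)\ne s(x)] = \Pr_{x\sim\mu_X}[s(x)=1] = 1/4$ for every $s\in S$, so $f$ lies in the admissible set $F_\mu$ for every $\mu$ in the distribution class of $\comp\sps{\mu_X}_0(S,B,\varepsilon,\delta)$; since $f$ uses no data, $\scomp\sps{\mu_X}(S,B,\varepsilon,\delta)=0$.

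For \eqref{eq:non-duality-comp-2}, fix a partition $X = Y\sqcup Y'$ with $|Y|=|Y'|=2m$ and a bijection $\phi:Y\to Y'$. For each $h:Y\to\{-1,1\}$ define $b_h:X\to\{-1,1\}$ by $b_h|_Y = h$ and $b_h(\phi(x)) = -h(x)$ for $x\in Y$; this has exactly $2m$ ones, so $b_h\in B$, and (as above, with roles swapped) $\inf_{s\in S}\Pr_{x\sim\mu_X}[s(x)\ne b_h(x)]\le 1/4$. Hence any $L\in\comp\sps{\mu_X}_n(B,S,\varepsilon,\delta)$, when fed data labeled by the source $b_h$, outputs with probability at least $1-\delta$ a model $f$ with $\Pr_{x\sim\mu_X}[f(x)\ne b_h(x)]\le 1/4+\varepsilon$. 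The key identity, which I would verify by a short case analysis on the pair $\bigl(\one(f(x)=h(x)),\one(f(\phi(x))=-h(x))\bigr)$, is
\[
\bE_{x\sim\unif_Y}\bigl|\tfrac12\bigl(f(x)-f(\phi(x))\bigr)-h(x)\bigr| \;=\; 2\,\Pr_{x\sim\mu_X}[f(x)\ne b_h(x)] ;
\]
here one uses $\Pr_{x\sim\mu_X}[f\ne b_h] = \tfrac12\bigl(\Pr_{x\sim\unif_Y}[f\ne h]+\Pr_{x\sim\unif_{Y'}}[f\ne b_h]\bigr)$ together with $\Pr_{x\sim\unif_{Y'}}[f(x)\ne b_h(x)] = \Pr_{x\sim\unif_Y}[{-f(\phi(x))}\ne h(x)]$, so that $\tfrac12(f(x)-f(\phi(x)))$ is the average of two $\{-1,1\}$-valued predictors of $h$ whose disagreement fractions sum to $2\Pr_{\mu_X}[f\ne b_h]$.

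Given this, I would build from $L$ a learner $L'$ for the task of \Cref{lm:hard-task} on the domain $Y$ (of size $2m$, with $\mu_Y=\unif_Y$): $L'$ takes $n$ i.i.d.\ samples $(z_j,h(z_j))$ with $z_j\sim\unif_Y$ and, for each slot, flips a private fair coin to pass through either $(z_j,h(z_j))$ or $(\phi(z_j),-h(z_j))$, which exactly simulates $n$ i.i.d.\ draws from the distribution over $X\times\{-1,1\}$ with marginal $\mu_X$ and labels $b_h$; it then runs $L$ on this stream to obtain $f$ and outputs $\tilde f(x):=\tfrac12(f(x)-f(\phi(x)))\in[-1,1]$. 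Whenever $L$ succeeds — which happens with probability strictly more than $1/2$ since $\delta<1/2$ — the identity above gives $\bE_{x\sim\unif_Y}|\tilde f(x)-h(x)|\le 2(1/4+\varepsilon)=1/2+2\varepsilon$, and this holds for every $h:Y\to\{-1,1\}$. \Cref{lm:hard-task} with error parameter $1/2+2\varepsilon\in(1/2,1)$ then forces $n\ge\bigl(1-(1/2+2\varepsilon)\bigr)\cdot 2m=(1-4\varepsilon)m\ge(1/2-2\varepsilon)m$, the last step using $\varepsilon<1/4$; this establishes \eqref{eq:non-duality-comp-2}.

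The main obstacle is designing the reduction so that it works uniformly in $h$: the ``negation through a bijection'' construction of $b_h$ is what makes $b_h\in B$ for \emph{every} $h:Y\to\{-1,1\}$ (no parity restriction on $h$) and simultaneously makes $\tilde f$ recover $h$ with $\ell_1$-error exactly twice the comparative error, and one must check that the stream presented to $L$ is genuinely i.i.d.\ with marginal $\mu_X$ so that $L$'s guarantee applies. It is essential here that \Cref{lm:hard-task} only requires success probability $>1/2$ rather than $\ge 1-\delta$ for small $\delta$; this is precisely what lets the lower bound hold for all $\delta\in(0,1/2)$.
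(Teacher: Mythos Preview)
Your proposal is correct and follows essentially the same approach as the paper: the constant $-1$ learner for \eqref{eq:non-duality-comp-1}, and for \eqref{eq:non-duality-comp-2} the antisymmetric extension $b_h$ of $h$ to $X$ via a bijection between the two halves, the coin-flip simulation to feed $L$, and the reduction to \Cref{lm:hard-task} via the predictor $\tilde f(x)=\tfrac12(f(x)-f(\phi(x)))$. Your observation that the argument actually yields the stronger bound $(1-4\varepsilon)m$ before weakening to $(1/2-2\varepsilon)m$ is correct and matches what the paper's computation gives as well.
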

\begin{proof}
To prove \eqref{eq:non-duality-comp-1}, it suffices to show that the learner $L$ which always outputs the constant function $f:X\to\{-1,1\}$ with $f(x) = -1$ for every $x\in X$ belongs to $\comp_0\sps{\mu_X}(S,B,\varepsilon,\delta)$. Let $\mu$ be a distribution over $X\times\{-1,1\}$ satisfying $\mu|_X = \mu_X$ and $\Pr_{(x,y)\sim\mu}[s(x) = y] = 1$ for some $s\in S$. We have
\[
{\Pr}_{(x,y)\sim\mu}[f(x)\ne y] = {\Pr}_{x\sim\mu_X}[f(x)\ne s(x)] = {\Pr}_{x\sim\mu_X}[s(x) = 1] = 1/4,
\]
and for every $b\in B$,
\[
{\Pr}_{(x,y)\sim\mu}[b(x)\ne y] = {\Pr}_{x\sim\mu_X}[b(x)\ne s(x)] \ge {\Pr}_{x\sim\mu_X}[b(x) = 1] - \Pr_{x\sim\mu_X}[s(x) = 1] = 1/2 - 1/4 = 1/4.
\]
Therefore,
\[
{\Pr}_{(x,y)\sim\mu}[f(x)\ne y] \le {\inf}_{b\in B}{\Pr}_{(x,y)\sim\mu}[b(x)\ne y],
\]
and thus $L \in \comp_0\sps{\mu_X}(S,B,\varepsilon,\delta)$.

Now we prove \eqref{eq:non-duality-comp-2}. 
Let $L$ be a learner in $\comp_n\sps{\mu_X}(B,S,\varepsilon,\delta)$ for some $n\in \bZ_{\ge 0}$. It suffices to show that $n \ge (1/2 - 2\varepsilon)m$.
Without loss of generality, assume $X = X_1\cup X_2$ where $X_1 = \{1,\ldots,2m\}$ and $X_2 = \{-1,\ldots,-2m\}$. Consider an arbitrary hypothesis $h:X_1\rightarrow\{-1,1\}$ and suppose we get data points $(x_1,y_1),\ldots,(x_n,y_n)$ such that every $x_i$ is drawn i.i.d.\ from the uniform distribution $\unif_{X_1}$ over $X_1$ and $y_i = h(x_i)$. Define new data points $(x_i',y_i')$ to be $(x_i,y_i)$ with probability $1/2$ and $(-x_i,-y_i)$ with probability $1/2$. 
Then, the new data points are distributed i.i.d.\ according to a distribution $\mu$ over $X\times\{-1,1\}$ satisfying $\mu|_X = \mu_X$ and $\Pr_{(x,y)\sim\mu}[b(x) = y] = 1$, 
where $b:X\rightarrow\{-1,1\}$ satisfies $b(x) = h(x)$ if $x\in X_1$ and $b(x) = -h(-x)$ if $x\in X_2$. It is clear that $b\in B$. 
Thus if we feed the new data points into the learner $L$, with probability at least $1 - \delta >1/2$, we get a model $f:X\rightarrow\{-1,1\}$ satisfying 
\[
{\Pr}_{(x,y)\sim\mu}[f(x)\ne y] \le {\inf}_{s\in S}{\Pr}_{(x,y)\sim\mu}[s(x)\ne y] + \varepsilon,
\]
or equivalently,
\[
{\Pr}_{x\sim\mu_X}[f(x)\ne b(x)] \le {\inf}_{s\in S}{\Pr}_{x\sim\mu_X}[s(x)\ne b(x)] + \varepsilon = \frac 14 + \varepsilon.
\]
The inequality above implies $\bE_{x\sim\mu_X}[|f(x) - b(x)|]\le 1/2 + 2\varepsilon$. Therefore,
\begin{align*}
1/2 + 2\varepsilon & \ge \frac{1}{4m}\sum_{x\in X_1}|f(x) - b(x)| + \frac{1}{4m}\sum_{x\in X_2}|f(x) - b(x)|\\
& = \frac{1}{4m}\sum_{x\in X_1}|f(x) - h(x)| + \frac{1}{4m}\sum_{x\in X_2}|f(x) + h(-x)|\\
& = \frac{1}{4m}\sum_{x\in X_1}|f(x) - h(x)| + \frac{1}{4m}\sum_{x\in X_1}|f(-x) + h(x)|\\
& = \frac{1}{4m}\sum_{x\in X_1}|(f(x) - f(-x)) - 2h(x)|\\
& = \frac 1{2m}\sum_{x\in X_1}|\frac{f(x) - f(-x)}{2} - h(x)|.
\end{align*}
This means that the function $g:X_1\to [-1,1]$ defined by $g(x) = \frac{f(x) - f(-x)}{2}$ for every $x\in X_1$ satisfies the following with probability more than $1/2$:
\[
\bE_{x\sim\unif_{X_1}}|g(x) - h(x)| \le 1/2 + 2\varepsilon.
\]
By \Cref{lm:hard-task}, we have $n \ge (1/2 - 2\varepsilon)m$, as desired.
\end{proof}
\subsection{Correlation Maximization}
\label{sec:non-duality-cm}
For $m\in \bZ_{>0}$, consider a non-empty finite set $X$ with $|X| = 2m$. Define $S = \{0,1\}^X$ and $B = \{-1,1\}^X$.
\begin{lemma}
For every $\varepsilon,\delta\in\bR_{\ge 0}$, we have
\begin{equation}
\label{eq:cm-non-duality-1}
\scm(S,B,\varepsilon,\delta) = 0.
\end{equation}
However, even for the uniform distribution $\mu_X$ over $X$, for every $\varepsilon,\delta\in (0,1/2)$,
\begin{equation}
\label{eq:cm-non-duality-2}
\sdcm\sps{\mu_X}(B,S,\varepsilon,\delta) \ge (1/2 - \varepsilon)m.
\end{equation}
\end{lemma}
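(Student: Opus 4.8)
The plan is to treat the two displays separately, exploiting the asymmetry that the source class $B=\{-1,1\}^X$ is total binary while the benchmark class $S=\{0,1\}^X$ consists of nonnegative functions. For \eqref{eq:cm-non-duality-1} I would take the data-free learner that always outputs the constant model $f\equiv 1$. Fix any admissible $\mu$, so $\bE_{(x,y)\sim\mu}[y\mid x]=s(x)$ for some $s\in S$. Since every $b\in B$ is total, $y\ptimes b(x)=yb(x)$, and a pointwise optimization (using that $B$ contains all binary functions) gives ${\sup}_{b\in B}\bE_{(x,y)\sim\mu}[y\ptimes b(x)]=\bE_{x\sim\mu|_X}\big[\,|\bE[y\mid x]|\,\big]=\bE_{x\sim\mu|_X}[s(x)]$, where the last step uses $s(x)\in\{0,1\}$. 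On the other hand $\bE_{(x,y)\sim\mu}[yf(x)]=\bE_{(x,y)\sim\mu}[y]=\bE_{x\sim\mu|_X}[s(x)]$, so $f$ attains the correlation target exactly. Hence this learner lies in $\cm_0(S,B,\varepsilon,\delta)$, giving $\scm(S,B,\varepsilon,\delta)=0$; this direction is routine.

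For \eqref{eq:cm-non-duality-2} I would reduce from the hard memorization task of \Cref{lm:hard-task}, using the same symmetrization idea as in the non-duality example for distribution-specific comparative learning. Split $X=X_1\cup X_2$ with $X_1=\{1,\dots,m\}$ and $X_2=\{-1,\dots,-m\}$, and suppose $L\in\dcm\sps{\mu_X}_n(B,S,\varepsilon,\delta)$ for some $n$. Given a target $h\colon X_1\to\{-1,1\}$ and data $(x_i,h(x_i))_{i=1}^n$ with $x_i\sim\unif_{X_1}$, replace each example independently by $(x_i,h(x_i))$ or $(-x_i,-h(x_i))$, each with probability $1/2$. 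The resulting examples are i.i.d.\ from a distribution $\mu$ with $\mu|_X=\mu_X$ and deterministic labels given by $b\in B$ defined by $b(x)=h(x)$ on $X_1$ and $b(x)=-h(-x)$ on $X_2$, so they form a legal input for $L$. Feeding them to $L$ to obtain $f\colon X\to\{-1,1\}$ and then outputting $g(x):=\tfrac12\big(f(x)-f(-x)\big)$ for $x\in X_1$ yields a learner for the memorization task on $X_1$ that uses $n$ data points.

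The crux — and the step I expect to require the most care in tracking constants — is converting $L$'s correlation guarantee into an $\ell_1$ guarantee for $g$. Because $b$ is balanced ($|\{x:b(x)=1\}|=m$ by construction) and $S=\{0,1\}^X$ is total, pointwise optimization gives ${\sup}_{s\in S}\bE_{(x,y)\sim\mu}[y\ptimes s(x)]=\Pr_{x\sim\mu_X}[b(x)=1]=\tfrac12$, while the symmetric definition of $b$ yields $\bE_{(x,y)\sim\mu}[yf(x)]=\bE_{x\sim\mu_X}[b(x)f(x)]=\tfrac1m\sum_{x\in X_1}h(x)g(x)=\bE_{x\sim\unif_{X_1}}[h(x)g(x)]$. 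Thus the $\dcm$ target $\bE_{(x,y)\sim\mu}[yf(x)]\ge\tfrac12-\varepsilon$ becomes $\bE_{x\sim\unif_{X_1}}[h(x)g(x)]\ge\tfrac12-\varepsilon$, and since $h(x)\in\{-1,1\}$ and $g(x)\in[-1,1]$ give $|g(x)-h(x)|=1-h(x)g(x)$, this is exactly $\bE_{x\sim\unif_{X_1}}|g(x)-h(x)|\le\tfrac12+\varepsilon$. As $\delta<1/2$, this holds with probability more than $1/2$ over the randomness of the reduction and of $L$, so \Cref{lm:hard-task} applied with error parameter $\tfrac12+\varepsilon$ (which lies in $[0,1]$ since $\varepsilon<1/2$) gives $n\ge\big(1-(\tfrac12+\varepsilon)\big)m=(\tfrac12-\varepsilon)m$. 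Since $L$ and $n$ were arbitrary, $\sdcm\sps{\mu_X}(B,S,\varepsilon,\delta)\ge(1/2-\varepsilon)m$, establishing \eqref{eq:cm-non-duality-2}.
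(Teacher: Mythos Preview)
Your proof is correct and follows essentially the same approach as the paper: for \eqref{eq:cm-non-duality-1} you output the constant $+1$ model and verify it attains the optimal correlation (indeed handling the general $\cm$ assumption more carefully than the paper's write-up), and for \eqref{eq:cm-non-duality-2} you use the same antisymmetric embedding of $h$ into $b$ on $X_1\cup X_2$, then reduce to \Cref{lm:hard-task} via $g(x)=\tfrac12(f(x)-f(-x))$. Your derivation of $\bE_{x\sim\mu_X}[b(x)f(x)]=\bE_{x\sim\unif_{X_1}}[h(x)g(x)]$ is a slightly cleaner route than the paper's triangle-inequality chain, but the argument is the same in substance.
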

\begin{proof}
To prove \eqref{eq:cm-non-duality-1}, it suffices to show that the learner $L$ which always outputs the constant function $f:X\to\{-1,1\}$ with $f(x) = 1$ for every $x\in X$ belongs to $\cm_0(S,B,\varepsilon,\delta)$. Let $\mu$ be a distribution over $X\times\{-1,1\}$ satisfying $\Pr_{(x,y)\sim\mu}[s(x) = y] = 1$ for some $s\in S$. Since $s\in S = \{0,1\}^X$, we have $s(x) \ge 0$ for every $x\in X$, and thus $\Pr_{(x,y)\sim\mu}[y \ge 0] = 1$. Therefore,
\[
\bE_{(x,y)\sim\mu}[yf(x)] = \bE_{(x,y)\sim\mu}[y] = \bE_{(x,y)\sim\mu}|y| \ge {\sup}_{b\in B}\bE_{(x,y)\sim\mu}[yb(x)].
\]
This proves that $L \in \cm_0(S,B,\varepsilon,\delta)$.

Now we prove \eqref{eq:cm-non-duality-2}.
Let $L$ be a learner in $\dcm_n\sps{\mu_X}(B,S,\varepsilon,\delta)$ for some $n\in \bZ_{\ge 0}$. It suffices to show that $n \ge (1/2 - \varepsilon)m$.
Without loss of generality, assume $X = X_1\cup X_2$ where $X_1 = \{1,\ldots,m\}$ and $X_2 = \{-1,\ldots,-m\}$. For $h:X_1\to \{-1,1\}$, let $(x_1,y_1),\ldots,(x_n,y_n)$ be data points where every $x_i$ is drawn independently from the uniform distribution $\unif_{X_1}$ over $X_1$ and $y_i = h(x_i)$. For every $i = 1,\ldots,n$, define a new data point $(x_i',y_i')$ such that $(x_i',y_i') = (x_i,y_i)$ with probability $1/2$ and $(x_i',y_i') = (-x_i,-y_i)$ with the remaining probability $1/2$. It is clear that the new data points are distributed i.i.d.\ from a distribution $\mu$ over $X\times \{-1,1\}$ satisfying $\mu|_X = \mu_X$ and $\Pr_{(x,y)\sim\mu}[b(x) = y] = 1$, where $b:X\to \{-1,1\}$ satisfies $b(x) = h(x)$ for every $x\in X_1$ and $b(x) = -h(-x)$ for every $x\in X_2$.
Thus if $L$ takes the new data points as input,
with probability at least $1 - \delta > 1/2$, it outputs a model $f:X\to[-1,1]$ satisfying
\begin{equation}
\label{eq:cm-non-duality}
\bE_{(x,y)\sim\mu}[yf(x)] \ge {\sup}_{s\in S}\bE_{(x,y)\sim\mu}[ys(x)] - \varepsilon.
\end{equation}
Consider the function $s\in S$ satisfying $s(x) = 1$ whenever $b(x) = 1$ and $s(x) = 0$ whenever $b(x) = - 1$. We have
\[
{\Pr}_{(x,y)\sim\mu}[ys(x)] = {\Pr}_{(x,y)\sim\mu}[b(x) = 1] = 1/2.
\]
Therefore, \eqref{eq:cm-non-duality} implies
\[
\bE_{(x,y)\sim\mu}[|f(x) - y|] = 1 - \bE_{(x,y)\sim\mu}[yf(x)] \le 1/2 + \varepsilon,
\]
which then implies 
\begin{align*}
1/2 + \varepsilon & \ge \frac 1{2m}\sum_{x\in X_1}|f(x) - b(x)| + \frac 1{2m}\sum_{x\in X_2}|f(x) - b(x)|\\
& \ge \frac 1{2m}\sum_{x\in X_1}|(f(x) - b(x)) - (f(-x) - b(-x))|\\
& = \frac 1{2m}\sum_{x\in X_1}|f(x) - f(-x) - 2h(x)|\\
& = \frac 1m\sum_{x\in X_1}\left|\frac{f(x) - f(-x)}{2} - h(x)\right|.
\end{align*}
Therefore, if we define $g:X_1\to \{-1,1\}$ by $g(x) = \frac{f(x)- f(-x)}{2}$ for every $x\in X_1$, with probability more than $1/2$,
\[
\bE_{(x,y)\sim\unif_{X_1}}[|g(x) - h(x)|] \le 1/2 + \varepsilon.
\]
By \Cref{lm:hard-task}, we get $n \ge (1/2 - \varepsilon)m$, as desired.
\end{proof}
\subsection{Comparative Regression}
For $m\in \bZ_{>0}$, consider a set $X$ with $|X| = m$. Define $S = \{-1/2,1/2\}^X$ and $B = \{-1,1\}^X$. Let $\ell:\{-1,1\}\times[-1,1]\to \bR$ be the squared loss: $\ell(y,u) = (y - u)^2$ for every $y\in \{-1,1\}$ and $u\in [-1,1]$.
\begin{lemma}
\label{lm:compr-non-duality}
For every $\varepsilon,\delta\in\bR_{\ge 0}$,
\begin{equation}
\label{eq:compr-non-duality-1}
\scompr(S,B,\ell,\varepsilon,\delta) = 0,
\end{equation}
However, even for the uniform distribution $\mu_X$ over $X$, for any $\varepsilon\in (0,3/4)$ and $\delta\in (0,1/2)$,
\begin{equation}
\label{eq:compr-non-duality-2}
\scompr\sps{\mu_X}(B,S,\ell,\varepsilon,\delta) \ge \left(1 - \sqrt{1/4 + \varepsilon}\right)m.
\end{equation}
\end{lemma}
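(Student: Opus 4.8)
The plan is to treat the two claims separately, mirroring the structure of the earlier non-duality examples (e.g.\ in \Cref{sec:non-duality-cm}). For \eqref{eq:compr-non-duality-1} the plan is to exhibit a data-free learner, namely the learner $L$ that always outputs the constant zero model $f\equiv 0$, and argue $L\in\compr_0(S,B,\ell,\varepsilon,\delta)$. The key observation is that in this task the label $y$ is binary, so $y^2 = 1$ almost surely and the zero model incurs expected squared loss exactly $1$ on every admissible $\mu$. On the other hand, for any benchmark $b\in B = \{-1,1\}^X$, expanding $(y - b(x))^2$ and using $\bE_{(x,y)\sim\mu}[y\mid x] = s(x)$ with $s\in S = \{-1/2,1/2\}^X$ gives $\bE_{(x,y)\sim\mu}[\ell(y,b(x))] = \bE_{x\sim\mu|_X}[2 - 2b(x)s(x)] \ge 1$, since $b(x)s(x)\le |s(x)| = 1/2$ for every $x$. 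Hence $f\equiv 0$ matches the best benchmark in $B$, so \eqref{eq:compr-goal} holds with slack $0$ and therefore for every $\varepsilon\ge 0$, giving \eqref{eq:compr-non-duality-1}.

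For \eqref{eq:compr-non-duality-2} the plan is to reduce to the information-theoretic lower bound \Cref{lm:hard-task}. Starting from a learner $L\in\compr_n\sps{\mu_X}(B,S,\ell,\varepsilon,\delta)$, I would feed it data generated from an arbitrary $b\in B = \{-1,1\}^X$: that is, $x_i\sim\mu_X$ (the uniform distribution on $X$, $|X| = m$) and $y_i = b(x_i)$; this distribution satisfies \eqref{eq:compr-assumption} with source hypothesis $b$. A one-line computation gives $\inf_{s\in S}\bE_{(x,y)\sim\mu}[\ell(y,s(x))] = 1/4$ (attained at $s = b/2$), so the guarantee of $L$ yields, with probability at least $1-\delta > 1/2$, an output $f:X\to[-1,1]$ with $\bE_{x\sim\mu_X}[(b(x) - f(x))^2]\le 1/4 + \varepsilon$. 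The crucial algebraic step is the identity $(b(x) - f(x))^2 = (1 - b(x)f(x))^2$, valid because $b(x)\in\{-1,1\}$, together with $1 - b(x)f(x) = |b(x) - f(x)|\ge 0$; Cauchy--Schwarz then converts this into $\bE_{x\sim\mu_X}|f(x) - b(x)|\le\sqrt{1/4 + \varepsilon}$. Since $\varepsilon < 3/4$, this quantity lies in $(1/2,1)\subseteq[0,1]$, so applying \Cref{lm:hard-task} with $h = b$ and error parameter $\sqrt{1/4+\varepsilon}$ gives $n\ge(1 - \sqrt{1/4+\varepsilon})m$, which is what we want.

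Neither direction is technically hard; the only step requiring care is the conversion in the second part from the $\ell_2$-type guarantee we obtain (low squared loss against $b$) to the $\ell_1$-type guarantee that \Cref{lm:hard-task} consumes, which is exactly what the $(1 - b(x)f(x))^2$ rewriting accomplishes — a minor twist on the $\bE_x|g(x) - h(x)|$ arguments used elsewhere, with the simplification that here no symmetrization over $X$ is needed, since the uniform-on-$X$, deterministic-label distribution is already feasible for the reversed task. I would also double-check the boundary conditions ($\varepsilon < 3/4$ so the square root is below $1$, and $\delta < 1/2$ so the success probability exceeds $1/2$), which match the hypotheses of the statement.
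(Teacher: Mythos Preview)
Your proposal is correct and follows essentially the same approach as the paper: the constant-zero learner for \eqref{eq:compr-non-duality-1}, and for \eqref{eq:compr-non-duality-2} feeding deterministic-label data from an arbitrary $b\in B$, computing $\inf_{s\in S}\bE[\ell(y,s(x))]=1/4$, and then passing from the squared-loss bound to an $\ell_1$ bound to invoke \Cref{lm:hard-task}. The only cosmetic differences are that the paper cites Jensen rather than Cauchy--Schwarz for that last step, and it does not go through the (correct but unnecessary) identity $(b(x)-f(x))^2=(1-b(x)f(x))^2$; the inequality $(\bE|f-b|)^2\le\bE[(f-b)^2]$ already suffices directly.
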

\begin{proof}
To prove \eqref{eq:compr-non-duality-1}, it suffices to show that the learner $L$ which always outputs the constant function $f:X\to[-1,1]$ with $f(x) = 0$ for every $x\in X$ belongs to $\compr_0(S,B,\ell,\varepsilon,\delta)$. Let $\mu$ be a distribution over $X\times\{-1,1\}$ satisfying $\bE_{(x,y)\sim\mu}[y|x] = s(x)$ for some $s\in S$. 
We have
\[
\bE_{(x,y)\sim\mu}[\ell(y,f(x))] = \bE_{(x,y)\sim\mu}[y^2] = 1,
\]
and for every $b\in B$,
\begin{align*}
\bE_{(x,y)\sim\mu}[\ell(y,b(x))] & = \bE_{(x,y)\sim\mu}[(y - b(x))^2] \\
& = \bE_{x\sim\mu|_X}[\bE_{y\sim\ber^*(s(x))}[(y - b(x))^2]]\\
& =  \bE_{x\sim\mu|_X}\left[\frac{1 + s(x)}{2}(1 - b(x))^2 + \frac{1 - s(x)}{2}(-1 - b(x))^2\right]\\
& \ge 1,
\end{align*}
where the last inequality holds because $s(x)\in \{-1/2,1/2\}$ and $b(x)\in \{-1,1\}$.
Therefore,
\[
\bE_{(x,y)\sim\mu}[\ell(y,f(x))] \le \inf_{b\in B}\bE_{(x,y)\sim\mu}[\ell(y,b(x))].
\]
This proves that $L\in \compr_0(S,B,\ell,\varepsilon,\delta)$.

Now we prove \eqref{eq:compr-non-duality-2}.
Let $L$ be a learner in $\compr_n\sps{\mu_X}(B,S,\varepsilon,\delta)$ for some $n\in \bZ_{\ge 0}$. It suffices to show that $n \ge \left(1 - \sqrt{1/4 + \varepsilon}\right)m$. Let $\mu$ be a distribution over $X\times\{-1,1\}$ satisfying $\mu|_X = \mu_X$ and $\Pr_{(x,y)\sim\mu}[b(x) = y] = 1$ for some $b\in B$. When $L$ takes $n$ data points $(x_1,y_1),\ldots,(x_n,y_n)$ drawn i.i.d.\ from $\mu$, with probability at least $1-\delta > 1/2$, it outputs a model $f:X\to[-1,1]$ satisfying
\begin{equation}
\label{eq:compr-non-duality-3}
\bE_{(x,y)\sim\mu}[\ell(y,f(x))]\le\inf_{s\in S}\bE_{(x,y)\sim\mu}[\ell(y,s(x))] + \varepsilon.
\end{equation}
For the hypothesis $s\in S$ satisfying $s(x) = b(x)/2$ for every $x\in X$, we have
\[
\bE_{(x,y)\sim\mu}[\ell(y,s(x))] = \bE_{(x,y)\sim\mu}[(y - s(x))^2] = \bE_{x\sim\mu_X}[(b(x) - s(x))^2] = 1/4.
\]
Therefore, \eqref{eq:compr-non-duality-3} implies that 
\[
(\bE_{(x,y)\sim\mu}|f(x) - y|)^2 \le \bE_{(x,y)\sim\mu}[(f(x) - y)^2] = \bE_{(x,y)\sim\mu}[\ell(y,f(x))] \le 1/4 + \varepsilon,
\]
where the first inequality holds by Jensen's inequality. Now we know that with probability more than $1/2$, $\bE_{(x,y)\sim\mu}|f(x) - y| \le \sqrt{1/4 + \varepsilon}$.
By \Cref{lm:hard-task}, we get $n \ge \left(1 - \sqrt{1/4 + \varepsilon}\right)m$, as desired.
\end{proof}
\subsection{Distribution-Specific Realizable Multicalibration}
We give an example showing that sample complexity duality does not hold for distribution-specific realizable multicalibration. 

For a positive integer $m$,
we choose $X$ to be $\{\bot\}\cup\{a_{ij}: i\in \{1,2,3,4\}, j\in \{1,\ldots,m\}\}$, and we choose $\mu_X$ to be the distribution over $X$ that places $1/2$ probability mass on $\bot$, and uniformly distributes the remaining $1/2$ probability mass on $\bot$. Therefore, the probability mass on every $a_{ij}$ is $1/(8m)$.

For $h = (h_1,\ldots,h_m)\in \{-1,1\}^m$, we define $s_h:X\to [-1,1]$ such that $s_h(\bot) = 1$, and for every $j\in \{1,\ldots,m\}$,
\begin{align*}
s_h(a_{1j}) & = (h_j + 2)/3,\\
s_h(a_{2j}) & = (-h_j + 2)/3.\\
s_h(a_{3j}) & = (h_j - 2)/3,\\
s_h(a_{4j}) & = (-h_j - 2)/3,
\end{align*}

For $p = (p_1,\ldots,p_m)\in \{-1,1\}^m$ and $r = (r_1,\ldots,r_m)\in\{-1,1\}^m$, we define $b_{p,r}:X\to [-1,1]$ such that $b_{p,r}(\bot) = 0$, and for every $j\in \{1,\ldots,m\}$,
\begin{align*}
b_{p,r}(a_{1j}) & = r_j,\\
b_{p,r}(a_{2j}) & = p_j,\\
b_{p,r}(a_{3j}) & = -r_j,\\
b_{p,r}(a_{4j}) & = -p_j.
\end{align*}

We define $S = \{s_h:h\in \{-1,1\}^m\}$ and $B = \{b_{p,r}:p,r\in\{-1,1\}^m\}$.
\begin{lemma}
\label{lm:mc-non-duality}
Let $m,\mu_X,S,B$ be defined as above. For every $\varepsilon,\delta\in (0,1/2)$, we have
\begin{equation}
\label{eq:mc-non-duality-a}
\smc\sps {\mu_X}(B,S,\varepsilon,\delta)\le O(\varepsilon^{-2}\log(1/\delta)), 
\end{equation}
and for every $\varepsilon\in (0,1/28)$ and $\delta\in (0,1/2)$,
\begin{equation}
\label{eq:mc-non-duality-b}
\smc\sps {\mu_X}(S,B,\varepsilon,\delta)\ge (1 - 28\varepsilon)m.
\end{equation}
\end{lemma}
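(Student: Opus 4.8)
\medskip
\noindent\textbf{Proof plan.}
For the upper bound~\eqref{eq:mc-non-duality-a}, the plan is to output a constant model. If $f\equiv c$ for some $c\in[-1,1]$, then the range of $f$ is a single value, so
\[
\mce_{\mu,S}(f)=\sup_{s\in S}\bigl|\bE_{(x,y)\sim\mu}[(c-y)s(x)]\bigr|=\sup_{h}\bigl|c\,\bE_{x\sim\mu_X}[s_h(x)]-\bE_{x\sim\mu_X}[s_h(x)b_{p,r}(x)]\bigr|,
\]
where we used $\mu|_X=\mu_X$ and $\bE_{(x,y)\sim\mu}[y\mid x]=b_{p,r}(x)$ for the source hypothesis $b_{p,r}\in B$. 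A direct computation from the definitions of $s_h$ and $b_{p,r}$ shows that both $\bE_{x\sim\mu_X}[s_h(x)]=1/2$ (the four points $a_{1j},\dots,a_{4j}$ contribute $0$) and $\bE_{x\sim\mu_X}[s_h(x)b_{p,r}(x)]=\tfrac43\beta$ are independent of $h$, where $\beta:=\bE_{(x,y)\sim\mu}\bigl[y\,\one(x\in\{a_{1j},a_{2j}:j\in[m]\})\bigr]=\tfrac1{8m}\sum_j(r_j+p_j)\in[-\tfrac14,\tfrac14]$. Hence $\mce_{\mu,S}(f)=|c/2-\tfrac43\beta|$, and it suffices to estimate the single scalar $\beta$: draw $n=O(\varepsilon^{-2}\log(1/\delta))$ data points, let $\hat\beta$ be the empirical average of $y\,\one(x\in\{a_{1j},a_{2j}\})$, and output $f\equiv\proj_{[-1,1]}(\tfrac83\hat\beta)$. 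By Hoeffding's inequality, with probability at least $1-\delta$ we have $|\hat\beta-\beta|\le\tfrac34\varepsilon$; since $\tfrac43\beta\in[-\tfrac13,\tfrac13]$, projection does not increase the distance to it, so $\mce_{\mu,S}(f)\le\tfrac43|\hat\beta-\beta|\le\varepsilon$.

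For the lower bound~\eqref{eq:mc-non-duality-b}, the plan is to reduce to \Cref{lm:hard-task} with the $m$-element domain $[m]$. Given a multicalibration learner for $(S,B)$ using $n$ data points, I would build a learner $L'$ that takes $n$ labelled points $(j_\ell,h(j_\ell))$ with $j_\ell\sim\unif_{[m]}$ and simulates $n$ i.i.d.\ samples from the distribution $\mu$ with $\mu|_X=\mu_X$ and $\bE_{(x,y)\sim\mu}[y\mid x]=s_h(x)$: with probability $1/2$ a simulated sample is $(\bot,1)$ (as $s_h(\bot)=1$), and with probability $1/2$ it is $(a_{ij},y)$ with $i\sim\unif_{\{1,2,3,4\}}$ drawn fresh, $j$ copied from the next unused hard-task index, and $y\sim\ber^*(s_h(a_{ij}))$, where $s_h(a_{ij})$ depends only on $h_j=h(j_\ell)$, which $L'$ knows. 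Since the hard-task indices are i.i.d.\ uniform and independent of the coin flips, these simulated samples are exactly i.i.d.\ from $\mu$. Running the multicalibration learner yields $f$ with $\mce_{\mu,B}(f)\le\varepsilon$ whenever it succeeds, from which I would read off $\hat h_j:=\sign\bigl(f(a_{1j})-f(a_{2j})\bigr)$ and set $g(j):=\hat h_j$; the key claim below gives $|\{j:\hat h_j\ne h_j\}|\le 14\varepsilon m$, hence $\bE_{j\sim\unif_{[m]}}|g(j)-h(j)|\le 28\varepsilon$, and \Cref{lm:hard-task} (applicable precisely because $28\varepsilon<1$, which is the hypothesis $\varepsilon<1/28$) then forces $n\ge(1-28\varepsilon)m$.

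The technical heart of the lower bound is the claim that $\mce_{\mu,B}(f)\le\varepsilon$ implies $|\{j:\hat h_j\ne h_j\}|=O(\varepsilon m)$, and this is where the main obstacle lies: the claim is genuinely a property of the \emph{level-set} structure of $f$, because the multiaccuracy error $\mae_{\mu,B}(f)=\tfrac1{8m}\sum_j\bigl(|f(a_{1j})-f(a_{3j})-\tfrac43|+|f(a_{2j})-f(a_{4j})-\tfrac43|\bigr)$ is \emph{independent of $h$}, so the per-level-set absolute value in the definition of $\mce$ is essential. I would prove the claim by choosing a distinguisher $b_{p,r}\in B$ together with a sign $\sigma_v\in\{-1,1\}$ for each value $v$ in the range of $f$ so that $(f(x)-s_h(x))\,\sigma_{f(x)}\,b_{p,r}(x)$ is nonnegative on all points $a_{ij}$ and is at least $|f(a_{1j})-s_h(a_{1j})|+|f(a_{2j})-s_h(a_{2j})|$ on each mistaken coordinate $j$ --- using that a mistake forces this sum to be $\ge\tfrac23$, since $s_h(a_{1j})-s_h(a_{2j})=\tfrac23 h_j$ --- and then bounding it by $\mce_{\mu,B}(f)\le\varepsilon$. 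The difficulty that the four-points-per-coordinate design of $S$ is meant to overcome is the forced anti-correlation $b_{p,r}(a_{3j})=-b_{p,r}(a_{1j})$ and $b_{p,r}(a_{4j})=-b_{p,r}(a_{2j})$ combined with possible collisions among the values $f(a_{ij})$: when $a_{1j}$ and $a_{3j}$ fall in the same level set, that coordinate's contribution is pinned at $\tfrac43\cdot\tfrac1{8m}$ regardless of $b_{p,r}$, which (for $\varepsilon<1/3$) already bounds the number of such collisions, and one must verify that in the remaining cases the signs $\sigma_v$ and $p,r$ can be chosen so that no cross-coordinate cancellation occurs. Carrying out this case analysis with the stated constants is the main work; the reduction and the concentration arguments are routine.
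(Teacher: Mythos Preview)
Your upper bound argument is correct and essentially the paper's: output a constant model and estimate the single scalar that determines its $\mce_{\mu,S}$; the computations $\bE_{\mu_X}[s_h]=1/2$ and $\bE_{\mu_X}[s_hb_{p,r}]$ being independent of $h$ are exactly what the paper uses.

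Your lower bound reduction is also set up correctly, but the plan for the key claim has a genuine gap. You propose to pick $\sigma_v,p,r$ so that every term $(f(a_{ij})-s_h(a_{ij}))\sigma_{f(a_{ij})}b_{p,r}(a_{ij})$ is nonnegative, dismissing only the same-coordinate collisions $f(a_{1j})=f(a_{3j})$. But cross-coordinate collisions can already make this system infeasible. For example, with $m=2$, $h_1=h_2=1$, and $f$ taking the values $(0.4,0.2,-0.4,-0.2)$ on $(a_{11},a_{21},a_{31},a_{41})$ and the values $(-0.4,-0.2,0.4,0.2)$ on $(a_{12},a_{22},a_{32},a_{42})$, the level set $\{f=0.4\}=\{a_{11},a_{32}\}$ forces $\sigma_{0.4}r_1\le 0$ and $\sigma_{0.4}r_2\le 0$ (so $r_1,r_2$ have the same sign), while $\{f=-0.4\}=\{a_{31},a_{12}\}$ forces $\sigma_{-0.4}r_1\ge 0$ and $\sigma_{-0.4}r_2\le 0$ (so $r_1,r_2$ have opposite signs). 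The ``case analysis'' you defer would therefore not close.

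The paper's argument is different and sidesteps the per-level-set sign construction entirely. First it uses the multiaccuracy bound $\mce\ge\mae$, which here reads
\[
\varepsilon \ge \frac{1}{8m}\sum_j\bigl(|f(a_{1j})-f(a_{3j})-\tfrac43|+|f(a_{2j})-f(a_{4j})-\tfrac43|\bigr),
\]
and observes that each summand is at least $1/3$ whenever the corresponding $f(a_{ij})$ lies on the ``wrong side'' of $0$; this yields
\[
\frac{1}{24m}\sum_{i,j}\bigl|\one(f(a_{ij})\ge 0)-\one(i\in\{1,2\})\bigr|\le\varepsilon.
\]
Second, it uses only the single subset $\{v:v\ge 0\}$ of level sets: from $\sum_v|c_v|\ge\bigl|\sum_{v\ge 0}c_v\bigr|$ one gets $\varepsilon\ge\sup_b\bigl|\bE[(f-s_h)\,\one(f\ge 0)\,b]\bigr|$, and replacing $\one(f\ge 0)$ by $\one(i\in\{1,2\})$ costs at most $6\varepsilon$ by the first step. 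After this replacement, $r_j$ multiplies only the $a_{1j}$ term and $p_j$ only the $a_{2j}$ term, so choosing each as the appropriate sign gives
\[
7\varepsilon\ge\frac{1}{8m}\sum_j\bigl(|f(a_{1j})-s_h(a_{1j})|+|f(a_{2j})-s_h(a_{2j})|\bigr),
\]
from which the bound on $\bE_j|g(j)-h_j|$ and the appeal to \Cref{lm:hard-task} follow. The idea you are missing is precisely this: rather than tailoring $\sigma_v$ to arbitrary level sets, use the threshold $\{f\ge 0\}$ and let the multiaccuracy inequality certify that it approximately isolates $i\in\{1,2\}$.
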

\begin{proof}

For any $s\in S$ and $b\in B$, there exist $h,p,r\in\{-1,1\}^m$ such that $s = s_h$ and $b = b_{p,r}.$ Therefore,
\[
\bE_{x\sim\mu_X}[s(x)b(x)] = \frac 12 s(\bot)b(\bot) + \frac 1{8m}\sum_{j=1}^m\sum_{i=1}^4s(a_{ij})b(a_{ij}) = \frac 1{8m} \cdot \frac 43\sum_{j=1}^m(p_j + r_j).
\]
This implies that $\bE_{x\sim\mu_X}[s(x)b(x)]$ only depends on $b\in B$ and does not depend on $s\in S$. Also, since $p_j,r_j\in \{-1,1\}$, the equation above implies that $\bE_{x\sim\mu_X}[s(x)b(x)]\in [-1/3,1/3]$.

We first show \eqref{eq:mc-non-duality-a} by designing a learning in $\mc_n\sps{\mu_X}(B,S,\varepsilon,\delta)$ for $n = O(\varepsilon^{-2}\log(1/\delta))$.
Let $\mu$ be a distribution over $X\times [-1,1]$ such that $\mu|_X = \mu_X$ and $\bE_{(x,y)\sim\mu}[y|x] = b(x)$ for some $b\in B$.
Given $O(\varepsilon^{-2}\log(1/\delta))$ data points drawn i.i.d.\ from $\mu$, by the Chernoff bound, the learner can compute an estimator $u\in [-1/3,1/3]$ such that with probability at least $1-\delta$,
\begin{equation}
\label{eq:mc-non-duality-1}
|u - \bE_{x\sim\mu_X}[s(x)b(x)]| \le \varepsilon \text{ for every } s\in S.
\end{equation}
Here we use the fact that $\bE_{x\sim\mu_X}[s(x)b(x)]$ does not depend on $s\in S$. After obtaining $u$, the learner simply outputs the constant function $f:X\to [-1,1]$ such that $f(x) = 2u$ for every $x\in X$. Now for every $s \in S$,
\[
\bE_{x\sim\mu_X}[s(x)f(x)] = \frac 12 s(\bot)f(\bot) + \frac 1{8m}\sum_{j=1}^m\sum_{i=1}^4s(a_{ij})f(a_{ij}) = u + 0 = u.
\]
Therefore, \eqref{eq:mc-non-duality-1} implies that for every $s\in S$,
\[
|\bE_{x\sim\mu_X}[(f(x) - b(x))s(x)]| = |u - \bE_{x\sim\mu_X}[s(x)b(x)]| \le \varepsilon,
\]
which then implies that
\[
\mce_{\mu,S}(f) = \mae_{\mu,S}(f) \le \varepsilon,
\]
as desired. Here, the first equation holds because $f$ is a constant function.

Now we show \eqref{eq:mc-non-duality-b}. 
Let $L$ be a learner in $\mc_n\sps{\mu_X}(S,B,\varepsilon,\delta)$ for some $n\in \bZ_{\ge 0}$. It suffices to show that $n \ge (1 - 28\varepsilon)m$. Define $J :=\{1,\ldots,m\}$. For $h\in \{-1,1\}^m$, let $(j_1,y_1),\ldots,(j_n,y_n)\in J\times\{-1,1\}$ be data points such that for every $\ell = 1,\ldots,n$, $j_\ell$ is drawn independently from the uniform distribution $\unif_J$ over $J$, and $y_\ell = h_{j_\ell}$.
For every data point $(j_\ell,y_\ell)$, we randomly and independently construct a new data point $(x_\ell,y'_\ell)\in X\times [-1,1]$ as follows. With probability $1/2$, we choose $(x_\ell,y'_\ell) = (\bot,1)$. With the remaining probability $1/2$, we first choose $x_\ell$ uniformly at random from $\{a_{ij_{\ell}}:i=1,2,3,4\}$. We then choose $y'_\ell$ so that
\[
y'_\ell = \begin{cases}
(y_\ell + 2)/3, & \text{if }x_\ell = a_{1j_\ell},\\
(-y_\ell + 2)/3, & \text{if }x_\ell = a_{2j_\ell},\\
(y_\ell - 2)/3, & \text{if }x_\ell = a_{3j_\ell},\\
(-y_\ell - 2)/3, & \text{if }x_\ell = a_{4j_\ell}.
\end{cases}
\]
It is clear that every new data point $(x_\ell,y'_\ell)$ is distributed independently from a distribution $\mu$ over $X\times[-1,1]$ satisfying $\mu|_X = \mu_X$ and $\bE_{(x,y)\sim\mu}[y|x] = s_h(x)$.
When $L$ takes the new data points as input, with probability at least $1-\delta > 1/2$, it outputs a model $f:X\to[-1,1]$ satisfying
\begin{equation}
\label{eq:mc-non-duality-2}
\mce_{\mu,B}(f) \le\varepsilon.
\end{equation}
Inequality \eqref{eq:mc-non-duality-2} implies that
\begin{align}
\varepsilon & \ge \mce_{\mu,B}(f)\notag\\
& \ge \mae_{\mu,B}(f)\notag\\
& = \sup_{b\in B}\left|\frac 1{8m}\sum_{j=1}^m\sum_{i=1}^4\Big((f(a_{ij}) - s_h(a_{ij}))b(a_{ij}))\Big)\right|\notag\\
& = \sup_{p,r\in\{-1,1\}^m}\left|\frac 1{8m}\sum_{j=1}^m(f(a_{1j}) - f(a_{3j})-(4/3))r_j + \frac 1{8m}\sum_{j=1}^m(f(a_{2j}) - f(a_{4j})-(4/3))p_j\right|\notag\\
& = \frac 1{8m}\sum_{j=1}^m|f(a_{1j}) - f(a_{3j})-(4/3)| + \frac 1{8m}\sum_{j=1}^m|f(a_{2j}) - f(a_{4j})-(4/3)|\notag\\
& \ge \frac 1{24m}\sum_{j=1}^m\Big(\one(f(a_{1j})< 0) + \one(f(a_{2j})< 0) + \one(f(a_{3j})\ge 0) + \one(f(a_{4j})\ge 0)\Big)\label{eq:mc-non-duality-c}\\
& = \frac 1{24m}\sum_{j=1}^m\sum_{i=1}^4|\one(f(a_{ij})\ge 0) - \one(i\in \{1,2\})|.\label{eq:mc-non-duality-3}
\end{align}
Here, \eqref{eq:mc-non-duality-c} holds because $f(a_{1j}),f(a_{2j}),f(a_{3j}),f(a_{4j})$ all lie in the interval $[-1,1]$.
Therefore, inequality \eqref{eq:mc-non-duality-2} also implies that
\begin{align*}
\varepsilon & \ge \mce_{\mu,B}(f) \\
& \ge \sup_{b\in B}\left|\frac 1{8m}\sum_{j=1}^m\sum_{i=1}^4\Big((f(a_{ij}) - s_h(a_{ij}))b(a_{ij}))\Big)\one(f(a_{ij})\ge 0)\right|\\
& \ge \sup_{b\in B}\left|\frac 1{8m}\sum_{j=1}^m\sum_{i=1}^4\Big((f(a_{ij}) - s_h(a_{ij}))b(a_{ij}))\Big)\one(i\in \{1,2\})\right| - 6\varepsilon,\tag{by \eqref{eq:mc-non-duality-3}}
\end{align*}
and thus
\begin{align*}
7\varepsilon & \ge \sup_{b\in B}\left|\frac 1{8m}\sum_{j=1}^m\sum_{i=1}^2\Big((f(a_{ij}) - s_h(a_{ij}))b(a_{ij}))\Big)\right|\\
& = \sup_{p,r\in \{-1,1\}^m}\left|\frac 1{8m}\sum_{j=1}^m(f(a_{1j}) - s_h(a_{1j}))r_j + \frac 1{8m}\sum_{j=1}^m(f(a_{2j}) - s_h(a_{2j}))p_j\right|\\
& = \frac 1{8m}\sum_{j=1}^m|f(a_{1j}) - s_h(a_{1j})| + \frac 1{8m}\sum_{j=1}^m|f(a_{2j}) - s_h(a_{2j})|\\
& = \frac 1{8m}\sum_{j=1}^m|f(a_{1j}) - (h_j + 2)/3| + \frac 1{8m}\sum_{j=1}^m|f(a_{2j}) - (-h_j + 2)/3|\\
& \ge \frac 1{8m}\sum_{j=1}^m\left|\Big(f(a_{1j}) - (h_j + 2)/3\Big) - \Big(f(a_{2j}) - (-h_j + 2)/3\Big)\right|\\
& = \frac 1{4m}\sum_{j=1}^m\left|\frac{f(a_{1j}) - f(a_{2j})}{2} - h_j\right|.
\end{align*}
Define $g:\{1,\ldots,m\}\to [-1,1]$ such that $g(j) = \frac{f(a_{1j}) - f(a_{2j})}{2}$ for every $j\in \{1,\ldots,m\}$. The inequality above implies
\[
\bE_{j\sim\unif_J}|g(j) - h_j| \le 28\varepsilon.
\]
By \Cref{lm:hard-task}, we have $n \ge (1 - 28\varepsilon)m$, as desired.
\end{proof}
\begin{remark}
\label{remark:mc-non-duality}
In the proof of \Cref{lm:mc-non-duality}, we show that $\bE_{x\sim\mu_X}[s(x)b(x)]$ does not depend on $s\in S$. This implies that $\sma\sps{\mu_X}(S,B,\varepsilon,\delta) = 0$ because a learner can simply output any $s\in S$. This gives a sample complexity separation between multiaccuracy and multicalibration in the distribution-specific realizable setting. 
\end{remark}

\section{Proof of Claim~\ref{claim:ma-mc-vc-bound}}
\label{sec:proof-ma-mc-vc-bound}
We recall \Cref{claim:ma-mc-vc-bound}:
\claimmamcvcbound*
\begin{proof}
Let $X'$ be a finite subset of $X$ shattered by both $\tilde S_{\eta_1}\sps {r_1}$ and $\tilde B_{\eta_2}\sps \theta$ for some $\theta\in \bR$. It suffices to show that 
\begin{equation}
\label{eq:proof-ma-mc-vc-bound-1}
|X'|\le 2{\sup}_{\theta'\in \bR}\vc(S_{2\eta_1}\sps{2r_1 + f},B_{\eta_2}\sps {\theta'}).
\end{equation}

The fact that $X'$ is shattered by $\tilde S_{\eta_1}\sps {r_1}$ implies that for every $\xi:X'\to\{-1,1\}$, there exists $\tilde s\in \tilde S$ such that
\[
\tilde s_{\eta_1}\sps{r_1}(x) = \xi(x) \text{ for every }x\in X'.
\]
By the definition of $\tilde S$, for every $\tilde s\in \tilde S$, there exists $s\in S$ such that $\tilde s(x) = (s(x) - f(x))/2$ for every $x\in X$. This implies that $\tilde s(x)  - r_1(x) = \frac 12 (s(x) - f(x) - 2r_1(x))$ and thus $\tilde s_{r_1}\sps{\eta_1}(x) = s_{2\eta_1}\sps{2r_1 + f}(x)$ for every $x\in X$. Therefore, for every $\xi:X'\to\{-1,1\}$, there exists $s\in S$ such that
\[
s_{2\eta_1}\sps{2r_1 + f}(x) = \xi(x) \text{ for every }x\in X',
\]
which implies that $X'$ is shattered by $\tilde S_{2\eta_1}\sps{2r_1 + f}$.

The fact that $X'$ is shattered by $\tilde B_{\eta_2}\sps {\theta}$ implies that for every $\xi:X'\to\{-1,1\}$, there exists $\tilde b\in \tilde B$ such that
\[
\tilde b_{\eta_2}\sps{\theta}(x) = \xi(x) \text{ for every }x\in X'.
\]
Define
\[
X'_1:=\{x\in X': \chi_{\bsigma}(f(x)) = 1\} \quad \text{and} \quad
X'_{-1}:=\{x\in X': \chi_\bsigma(f(x)) = -1\}.
\]
By the definition of $\tilde B$, for every $\tilde b\in \tilde B$, there exists $b\in B$ such that $\tilde b(x) = \chi_\bsigma(f(x))b(x)$ for every $x\in X$, which implies that $\tilde b(x) = b(x)$ for every $x\in X_1'$ and $\tilde b(x) = -b(x)$ for every $x\in X_{-1}'$.
Therefore, for every $\xi:X'\to\{-1,1\}$, there exists $b\in B$ such that
\begin{align*}
& b_{\eta_2}\sps{\theta}(x) = \xi(x) \text{ for every }x\in X_1', \text{ and}\\
& b_{\eta_2}\sps{-\theta}(x) = -\xi(x) \text{ for every }x\in X_{-1}'.
\end{align*}
This means that $X_1'$ is shattered by $B_{\eta_2}\sps \theta$ and $X_{-1}'$ is shattered by $B_{\eta_2}\sps {-\theta}$.

Now we have
\begin{align}
|X_1'| & \le \vc(S_{2\eta_1}\sps{2r_1 + f},B_{\eta_2}\sps \theta) \le {\sup}_{\theta'\in \bR}\vc(S_{2\eta_1}\sps{2r_1 + f},B_{\eta_2}\sps {\theta'})\label{eq:proof-ma-mc-vc-bound-2}\\
|X_{-1}'| & \le \vc(S_{2\eta_1}\sps{2r_1 + f},B_{\eta_2}\sps {-\theta}) \le {\sup}_{\theta'\in \bR}\vc(S_{2\eta_1}\sps{2r_1 + f},B_{\eta_2}\sps {\theta'}).\label{eq:proof-ma-mc-vc-bound-3}
\end{align}

It is clear that $X'_1$ and $X'_{-1}$ form a partition of $X'$, 
so $|X'| = |X'_1| + |X'_{-1}|$. Combining this with \eqref{eq:proof-ma-mc-vc-bound-2} and \eqref{eq:proof-ma-mc-vc-bound-3} proves \eqref{eq:proof-ma-mc-vc-bound-1}.
\end{proof}
\section{Proof of Theorem~\ref{thm:omni}}
\label{sec:proof-omni}
We recall \Cref{thm:omni}:
\thmomni*
\begin{lemma}
\label{lm:omni-helper}
In the setting of \Cref{thm:omni},
let $u,u'\in [-1,1]$ be two real numbers. Then
\[
\bE_{y\sim \ber^*(u)}[\ell(y,\tau(u'))]\le  \bE_{y\sim \ber^*(u)}[\ell(y,\tau(u))] + 2|u - u'|\kappa.
\]
\end{lemma}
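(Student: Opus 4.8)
The plan is to work with the function $F_u(q) := \bE_{y\sim\ber^*(u)}[\ell(y,q)] = \frac{1+u}{2}\ell(1,q) + \frac{1-u}{2}\ell(-1,q)$, so that by definition $\tau(u)$ is a minimizer of $F_u$ over $[-1,1]$ (the minimum is attained because each $\ell(y,\cdot)$ is continuous — indeed $\kappa$-Lipschitz — and $[-1,1]$ is compact). The key observation is that $F_u$ and $F_{u'}$ differ by an expression whose $q$-dependence factors out: $F_u(q) - F_{u'}(q) = \tfrac{u-u'}{2}\bigl(\ell(1,q) - \ell(-1,q)\bigr)$.

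First I would write the telescoping identity
\[
F_u(\tau(u')) - F_u(\tau(u)) = \bigl(F_u(\tau(u')) - F_{u'}(\tau(u'))\bigr) + \bigl(F_{u'}(\tau(u')) - F_{u'}(\tau(u))\bigr) + \bigl(F_{u'}(\tau(u)) - F_u(\tau(u))\bigr).
\]
The middle bracket is $\le 0$ since $\tau(u')$ minimizes $F_{u'}$. For the first and third brackets I would substitute the formula for $F_u - F_{u'}$; they combine into
\[
\frac{u-u'}{2}\Bigl[\bigl(\ell(1,\tau(u')) - \ell(1,\tau(u))\bigr) - \bigl(\ell(-1,\tau(u')) - \ell(-1,\tau(u))\bigr)\Bigr].
\]

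Then I would bound each of the two inner differences by $\kappa\,|\tau(u') - \tau(u)|$ using that $\ell(y,\cdot)$ is $\kappa$-Lipschitz, and use $|\tau(u') - \tau(u)| \le 2$ since both points lie in $[-1,1]$. This gives $F_u(\tau(u')) - F_u(\tau(u)) \le \tfrac{|u-u'|}{2}\cdot 2\kappa\cdot 2 = 2|u-u'|\kappa$, which is exactly the claimed inequality after rewriting $F_u(\tau(u')) = \bE_{y\sim\ber^*(u)}[\ell(y,\tau(u'))]$ and $F_u(\tau(u)) = \bE_{y\sim\ber^*(u)}[\ell(y,\tau(u))]$.

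There is no real obstacle here; the only point to notice is that the $u$-dependence in $F_u - F_{u'}$ does not depend on the evaluation point $q$, so that after telescoping the potentially large quantities $\ell(1,q) - \ell(-1,q)$ (which are \emph{not} controlled by Lipschitzness in the second argument) cancel, leaving only differences of the form $\ell(y,\tau(u')) - \ell(y,\tau(u))$, which \emph{are} controlled by the $\kappa$-Lipschitz assumption. Convexity of $\ell(y,\cdot)$ is not needed for this particular lemma; it will enter later in the proof of \Cref{thm:omni}.
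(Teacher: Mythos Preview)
Your proof is correct and uses the same telescoping skeleton as the paper: write $F_u(\tau(u')) - F_u(\tau(u))$ as a sum of three terms, kill the middle one by optimality of $\tau(u')$ for $F_{u'}$, and bound the remaining two ``change of measure'' terms. The difference is in how those two terms are bounded. The paper introduces a shifted loss $\ell'(y,q) = \ell(y,q) + g(y)$ chosen so that $\ell'(y,q)\in[-\kappa,\kappa]$ for all $y,q$ (possible because $\ell(y,\cdot)$ is $\kappa$-Lipschitz on $[-1,1]$), and then bounds each of $|F_u(q) - F_{u'}(q)| = \tfrac{|u-u'|}{2}|\ell'(1,q)-\ell'(-1,q)| \le |u-u'|\kappa$ \emph{separately}. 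You instead keep the original $\ell$, add the first and third brackets first, observe that the a~priori uncontrolled terms $\ell(1,q)-\ell(-1,q)$ cancel, and are left only with differences $\ell(y,\tau(u'))-\ell(y,\tau(u))$ that Lipschitzness bounds by $\kappa|\tau(u')-\tau(u)|\le 2\kappa$. Your route is slightly more direct since it avoids the auxiliary $\ell'$; the paper's route makes the per-term bound $|F_u(q)-F_{u'}(q)|\le |u-u'|\kappa$ explicit, which could be reused elsewhere. Your remark that convexity is not used in this lemma is also correct.
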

\begin{proof}
Since $\ell(y,\cdot)$ is $\kappa$-Lipchitz, we can choose a function $g:\{-1,1\}\to\bR$ such that $\ell'(y,q):= \ell(y,q) + g(y)$ always lies in the interval $[-\kappa,\kappa]$ for every $y\in \{-1,1\}$ and $q\in [-1,1]$. (For example, choosing $g(y) = \ell(y,0)$ suffices.) Now for any $u\in [-1,1]$ and $q\in [-1,1]$,
\[
\bE_{y\sim \ber^*(u)}[\ell'(y,q)] = \bE_{y\sim \ber^*(u)}[\ell(y,q)] + \bE_{y\sim \ber^*(u)}[g(y)],
\]
which implies that for every $u,u'\in [-1,1]$,
\begin{align}
& \tau(u')\in {\arg\min}_{q\in [-1,1]}\bE_{y\sim \ber^*(u')}[\ell'(y,q)], \quad \textnormal{and} \label{eq:omni-helper-1}\\
& \bE_{y\sim \ber^*(u)}[\ell(y,\tau(u'))] - \bE_{y\sim \ber^*(u)}[\ell(y,\tau(u))]\notag\\
= {} & \bE_{y\sim \ber^*(u)}[\ell'(y,\tau(u'))] - \bE_{y\sim \ber^*(u)}[\ell'(y,\tau(u))]. \label{eq:omni-helper-2}
\end{align}
Also, for any $q\in [-1,1]$,
\begin{align}
& \left|\bE_{y\sim \ber^*(u)}[\ell'(y,q)] - \bE_{y\sim \ber^*(u')}[\ell'(y,q)]\right|\\
= {} & \left|\frac{1 + u}{2}\ell'(1,q) + \frac{1 - u}{2}\ell'(-1,q) - \frac{1 + u'}{2}\ell'(1,q) - \frac{1 - u'}{2}\ell'(-1,q)\right|\notag\\
= {} & \left|\frac 12 (u - u')(\ell'(1,q) - \ell'(-1,q))\right|\notag\\
\le {} & |u - u'|\kappa. \label{eq:omni-helper-3}
\end{align}
Therefore,
\begin{align*}
\bE_{y\sim \ber^*(u)}[\ell'(y,\tau (u'))] & \le \bE_{y\sim \ber^*(u')}[\ell'(y,\tau (u'))] + |u - u'|\kappa \tag{by \eqref{eq:omni-helper-3}}\\
& \le \bE_{y\sim \ber^*(u')}[\ell'(y,\tau (u))] + |u - u'|\kappa \tag{by \eqref{eq:omni-helper-1}}\\
& \le \bE_{y\sim \ber^*(u)}[\ell'(y,\tau (u))] + 2|u - u'|\kappa \tag{by \eqref{eq:omni-helper-3}}
\end{align*}
The proof is completed by combining the inequality above with \eqref{eq:omni-helper-2}.
\end{proof}
\begin{proof}[Proof of \Cref{thm:omni}]
Let $V$ be the range of $f$.
For every $v\in V$, define $X_v:= \{x\in X:f(x) = v\}$ and $y_v:= \bE_{(x,y)\sim\mu}[y|x\in X_v]$. Clearly, $(X_v)_{v\in V}$ partition $X$.
Define a function $f':X\rightarrow [-1,1]$ such that for every $x\in X_v$, $f'(x) = y_v$. 
The range of $f'$ is thus $V':=\{y_v:v\in V\}$.
We first show that $\mce_B(f')\le \alpha +  \varepsilon$. Indeed, for every $b\in B$,
\begin{align}
& \sum_{v'\in V'}|\bE_{(x,y)\sim\mu}[(y - f'(x))\one(f'(x) = v')b(x)]|\notag\\
= {} & \sum_{v'\in V'}\left|\sum_{v\in V: y_v = v'}\bE_{(x,y)\sim\mu}[(y - f'(x))\one(x\in X_v)b(x)]\right|\notag\\
\le {} & \sum_{v\in V}|\bE_{(x,y)\sim\mu}[(y - f'(x))\one(x\in X_v)b(x)]|\notag\\
\le {} & \sum_{v\in V}|\bE_{(x,y)\sim\mu}[(y - f(x))\one(x\in X_v)b(x)]| + \sum_{v\in V}|\bE_{(x,y)\sim\mu}[(f(x) - f'(x))\one(x\in X_v)b(x)]| \notag\\
\le {} & \alpha + \sum_{v\in V}|\bE_{(x,y)\sim\mu}[(f(x) - f'(x))\one(x\in X_v)b(x)]|\notag\\
\le {} & \alpha + \varepsilon.\label{eq:omni-constant}
\end{align}
Here, inequality \eqref{eq:omni-constant} holds because $f(x) = v$ and $f'(x) = y_v$ for every $x\in X_v$, which implies
\begin{align}
& \sum_{v\in V}|\bE_{(x,y)\sim\mu}[(f(x) - f'(x))\one(x\in X_v)b(x)]|\notag\\
= {} & \sum_{v\in V}\Big|(v - y_v)\bE_{(x,y)\sim\mu}[\one(x\in X_v)b(x)]\Big|\notag\\
\le {} & \sum_{v\in V}|v - y_v|{\Pr}_{(x,y)\sim\mu}[x\in X_v]\notag\\
= {} & \sum_{v\in V}|\bE_{(x,y)\sim\mu}[(f(x) - y)\one(x\in X_v)]|\tag{because $y_v = \bE_{(x,y)\sim\mu}[y|x\in X_v]$}\\
\le {} & \varepsilon.\label{eq:overall-cal-2}
\end{align}
By \citep[Theorem 19]{DBLP:conf/innovations/GopalanKRSW22}, we have 
\[
\bE_{(x,y)\sim\mu}[\ell(y,\tau(f'(x)))] \le \inf_{b\in B}\bE_{(x,y)\sim\mu}[\ell(y,b(x))] + (\alpha + \varepsilon)\kappa. 
\]
It remains to show that
\[
\bE_{(x,y)\sim\mu}[\ell(y,\tau(f(x)))] \le \bE_{(x,y)\sim\mu}[\ell(y,\tau(f'(x)))] + 2\varepsilon \kappa.
\]
By \eqref{eq:overall-cal-2}, it suffices to show that for every $v\in V$,
\[
\bE_{(x,y)\sim\mu}[\ell(y,\tau(f(x)))|x\in X_v] \le \bE_{(x,y)\sim\mu}[\ell(y,\tau(f'(x)))|x\in X_v] + 2|v - y_v| \kappa.
\]
Note again that $f(x) = v$ and $f'(x) = y_v$ for every $x\in X_v$, so the inequality above is equivalent to
\[
\bE_{(x,y)\sim\mu}[\ell(y,\tau(v))|x\in X_v] \le \bE_{(x,y)\sim\mu}[\ell(y,\tau(y_v))|x\in X_v] + 2|v - y_v|\kappa.
\]
Since $y_v = \bE_{(x,y)\sim\mu}[y|x\in X_v]$, the conditional distribution of $y$ given $x\in X_v$ is exactly $\ber^*(y_v)$, and thus the inequality above follows from \Cref{lm:omni-helper}.
\end{proof}
\section{Proof of Theorem~\ref{thm:compr}}
We recall \Cref{thm:compr}:
\thmcompr*
\label{sec:proof-compr}
Similarly to the definition of $\mce\sps \blambda$ in \eqref{eq:mcee-1}, for a partition $\blambda = (\Lambda_1,\ldots,\Lambda_k)$ of $[-1,1]$, a distribution $\mu$ over $X\times [-1,1]$, and a model $f:X\to [-1,1]$, we define
\begin{align*}
\ce\sps \blambda_{\mu}(f) & := \sum_{i=1}^k\sup_{\sigma\in \{-1,1\}}\bE_{(x,y)\sim\mu}\Big[(f(x) - y)\one(f(x)\in \Lambda_i)\sigma\Big]\\
& = \sup_{\bsigma\in\{-1,1\}^k}\bE_{(x,y)\sim\mu}\Big[(f(x) - y)\chi_\bsigma(f(x))\Big],
\end{align*}
where $\chi_\bsigma:[-1,1]\rightarrow\{-1,1\}$ is defined as in \Cref{sec:ma-mc-upper} such that $\chi_\bsigma(u) = \sigma_j$ when $u\in \Lambda_j$ for $\bsigma = (\sigma_1,\ldots,\sigma_k)\in\{-1,1\}^k$.

Our proof of \Cref{thm:compr} uses the following lemma:
\begin{lemma}
\label{lm:compr}
Let $S,B\subseteq([-1,1]\cup\{*\})^X$ be real-valued hypothesis classes. 
Suppose the parameters of \Cref{alg:omni} satisfy $\alpha,\gamma,\varepsilon\in \bR_{>0},\delta\in (0,1/2), W' > 4/\gamma^2, W > W + 4/\varepsilon^2$, 
\begin{align}
n\sps 1 & \ge \sup_{f:X\rightarrow[-1,1]}\sup_{\bsigma\in \{-1,1\}^k}\swcm((S - f)/2, B_{\bsigma,f},\alpha/2, \gamma/2, \delta/(4W')),\notag\\
n\sps 2 & \ge C\gamma^{-2}(k + \log(W'/\delta)),\notag\\
n\sps 3 & \ge C\varepsilon^{-2}(k + \log(W/\delta))\notag
\end{align}
for a sufficiently absolute large absolute constant $C>0$. 
Also, suppose the input data points to \Cref{alg:omni} are drawn i.i.d.\ from a distribution $\mu$ over $X\times\{-1,1\}$ satisfying ${\Pr}_{x\sim\mu|_X}[s(x)\ne *] = 1$ and $\bE_{(x,y)\sim\mu}[y|x] = s(x)$ for some $s\in S$.
Then with probability at least $1-\delta$, the output model $f$ of \Cref{alg:omni} satisfies
\[
\mce_{\mu,B}\sps \blambda(f) \le \alpha \quad \text{and} \quad \ce_{\mu}\sps \blambda(f)\le \varepsilon.
\]
\end{lemma}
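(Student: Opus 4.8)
\textbf{Proof plan for Lemma~\ref{lm:compr}.} The plan is to analyze \Cref{alg:omni}, which runs the multicalibration procedure of \Cref{alg:ma-mc} interleaved with an overall-calibration correction step in the style of the boosting algorithm \Cref{alg:boosting}: in each iteration of its outer loop the algorithm either (i) tests, on a fresh batch of $n\sps 3$ samples, whether the current model $f$ has $\ce\sps\blambda_\mu(f)$ noticeably above $\varepsilon$ and, if so, nudges $f$ against the sign pattern $\chi_{\bsigma}$ witnessing this; or (ii) invokes weak correlation maximization learners $L\in\wcm_{n\sps 1}((S-f)/2, B_{\bsigma,f},\alpha/2,\gamma/2,\delta/(4W'))$ over all $\bsigma\in\{-1,1\}^k$ on a rejection-sampled batch, picks the best candidate $f_\bsigma$ by an empirical correlation test on $n\sps 2$ samples, and updates $f$ toward it. Throughout, the squared-error potential $\Phi(f):=\bE_{(x,y)\sim\mu}[(f(x)-y)^2]$ will serve as a progress measure: I will show that whenever either target inequality fails, the algorithm performs an update that strictly decreases $\Phi$ by a fixed amount, so the loop cannot run forever without triggering the break branch with a good model.

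First I would define, for each outer iteration, the relevant bad events, mirroring those in the proofs of \Cref{lm:ma-mc} and \Cref{lm:boosting}: the weak learner fails to return a model of correlation $\ge\gamma/2$ against the reweighted distribution even though $\mce\sps\blambda_{\mu,B}(f)>\alpha$ forces, via \eqref{eq:mcee-simplified}, some $b\in B_{\bsigma,f}$ with $\bE[(y-f(x))\ptimes b(x)]>\alpha/2$; or an empirical correlation estimate $Q_{f'}$ deviates from its population value by more than $\gamma/4$; or the empirical calibration statistic deviates from $\ce\sps\blambda_\mu(f)$ (reformulated as $\sup_{\bsigma}\bE[(f(x)-y)\chi_{\bsigma}(f(x))]$) by more than $\varepsilon/4$. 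The first is controlled by the guarantee of $L$ together with the validity of the rejection-sampling reduction onto $\mu'$ — this is exactly where the arguments behind \Cref{claim:dcm-rejection-feasible} and \Cref{claim:rejection} re-enter, certifying that the reweighted distribution still satisfies the realizability assumption \eqref{eq:wcm-assumption} for $(S-f)/2$ — and the latter two by Chernoff bounds using the assumed lower bounds on $n\sps 2$ and $n\sps 3$; each bad event has probability at most $\delta/(4W')$ or $\delta/(4W)$. A union bound over all $\le W$ iterations then leaves probability $\ge1-\delta$ that none occurs.

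Next I would prove the progress lemma: conditioned on no bad event in an iteration, one of three good events occurs — $\Phi(f)$ drops by at least $\gamma^2/4$ at a multicalibration update $f\mapsto\proj_{[-1,1]}(f+\gamma f'/2)$ (the same two-line computation as in \Cref{lm:ma-mc-good}); or $\Phi(f)$ drops by at least $\varepsilon^2/4$ at a calibration update $f\mapsto\proj_{[-1,1]}(f-(\varepsilon/2)\chi_{\bsigma}(f))$ (a one-line smoothness estimate, $\bE[(f'(x)-y)^2]-\bE[(f(x)-y)^2]\le -2\eta\bE[(f(x)-y)\chi_{\bsigma}(f(x))]+\eta^2$ together with the fact that projection toward the interval containing $y\in[-1,1]$ can only help, cf.\ \Cref{claim:smooth}); or the algorithm breaks with $\mce\sps\blambda_{\mu,B}(f)\le\alpha$ and $\ce\sps\blambda_\mu(f)\le\varepsilon$. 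The dichotomy is driven exactly as in \Cref{lm:ma-mc-good} and \Cref{lm:boosting-good}: a violation of $\ce\sps\blambda_\mu(f)\le\varepsilon$ makes the calibration test pass and triggers the calibration update, while a violation of $\mce\sps\blambda_{\mu,B}(f)\le\alpha$ forces, via the weak learner, some $f_\bsigma$ with population correlation $\ge\gamma/2$ and hence empirical correlation above threshold, triggering the multicalibration update. Finally, since $\Phi(f)\in[0,1]$ initially and is nonincreasing, the multicalibration update fires fewer than $4/\gamma^2$ times and the calibration update fewer than $4/\varepsilon^2$ times; by the hypotheses $W'>4/\gamma^2$ and $W>W'+4/\varepsilon^2$ the loop counters cannot both be exhausted, so the break branch must be reached, which gives the claim.

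The main obstacle, as in \Cref{sec:boosting}, is the bookkeeping needed to interleave the two independent ``knobs'' — multicalibration correction against $B_{\bsigma,f}$ and overall-calibration correction — while keeping a single monotone potential and matching the iteration budgets $W,W'$ to the two per-step decrements; everything else is a direct transcription of the proofs of \Cref{lm:ma-mc} and \Cref{lm:boosting}, with the only genuinely new ingredient being the $\Phi$-decrease computation for the $\ce\sps\blambda$-correction step, which is routine since the label is binary and no projection-through-$s$ trick is needed. The subsequent deduction of \Cref{thm:compr} from this lemma then invokes \Cref{claim:ma-mc-vc-bound} to bound the sample requirement of the weak correlation maximization subroutine via \Cref{thm:cm}, and combines \Cref{thm:omni} with \Cref{claim:mce-mcee} to pass from small $\mce\sps\blambda$ and $\ce\sps\blambda$ to the comparative-regression guarantee.
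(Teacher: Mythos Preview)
Your plan is correct and matches what the paper intends: the paper explicitly omits the proof of \Cref{lm:compr}, saying it is ``very similar to the proofs of \Cref{lm:ma-mc} and \Cref{lm:boosting},'' and your interleaving of those two arguments via the common potential $\Phi(f)=\bE_{(x,y)\sim\mu}[(f(x)-y)^2]$ is precisely the right synthesis. One small correction: \Cref{alg:omni} does \emph{not} use rejection sampling in the weak-learner step --- it simply transforms labels via $\tilde y_i=(y_i-f(x_i))/2$ exactly as in \Cref{alg:ma-mc} (\Cref{line:ma-mc-1}), and validity for $\wcm$ on $((S-f)/2,B_{\bsigma,f})$ follows directly from $\bE[\tilde y_i\mid x_i]=(s(x_i)-f(x_i))/2$ as in the proof of \Cref{claim:ma-mc-bad}; the rejection-sampling machinery of \Cref{claim:dcm-rejection-feasible} and \Cref{claim:rejection} is not needed here, since $\wcm$ (unlike $\comp$) already accepts random labels with the correct conditional mean.
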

\LinesNumbered
\begin{algorithm}
\caption{Omnipredictor for $(S,B)$}
\label{alg:omni}
\SetKwInOut{Parameters}{Parameters}
\SetKwInOut{Input}{Input}
\SetKwInOut{Output}{Output}
\Parameters{$S,B\subseteq([-1,1]\cup\{*\})^X$, $n,n\sps 1,n\sps 2, n\sps 3, W, W'\in \bZ_{> 0}$ satisfying $n = W(n\sps 1 + n\sps 2) + W'n\sps 3$, $\alpha,\gamma,\varepsilon,\delta \in \bR_{\ge 0}$, a partition $\blambda = (\Lambda_1,\ldots,\Lambda_k)$ of $[-1,1]$.}
\Input{data points $(x_1,y_1),\ldots,(x_n,y_n)\in X\times\{-1,1\}$.}
\Output{model $f:X\rightarrow [-1,1]$.}
Partition the input data points into $2W + W'$ datasets: $\Psi\sps {j',1} = \left(\left(x_i\sps {j',1}, y_i\sps {j',1}\right)\right)_{i=1}^{n\sps 1}$ and $\Psi \sps {j',2} = \left(\left(x_i\sps {j',2}, y_i\sps {j',2}\right)\right)_{i=1}^{n\sps 2}$ for $j' = 1,\ldots,W'$ and $\Psi\sps {j,3} = \left(\left(x_i\sps {j,3}, y_i\sps {j,3}\right)\right)_{i=1}^{n\sps 3}$ for $j = 1,\ldots,W$\;
Initialize $f:X\rightarrow[-1,1]$ to be the constant zero function: $f(x) = 0$ for every $x\in X$\;
$(j,j')\gets (1,1)$\;
\While{$j\le W$ and $j' \le W'$\label{line:omni-for-1}}
{
\eIf{there exists $\bsigma\in\{-1,1\}^k$ such that $Q_\bsigma:= \frac 1{n\sps 3}\sum_{i=1}^{n\sps 3}(y_i\sps {j,3} - f(x_i\sps{j,3}))\chi_{\bsigma}(f(x))\ge 3\varepsilon/4$}
{
Update $f(x)$ to $\proj_{[-1,1]}(f(x) + \varepsilon \chi_{\bsigma}(f(x))/2)$ for every $x\in X$\;
}
{
Define $\tilde y_i = (y_i\sps{j',1} - f(x_i\sps{j',1}))/2$ for $i = 1,\ldots, n\sps 1$ and define $\Psi' = ((x_i\sps{j',1},\tilde y_i))_{i=1}^{n\sps 1}$\label{line:omni-1}\;
\For{$\bsigma\in \{-1,1\}^k$}
{
Invoke learner $L\in \wcm_{n\sps 1}((S - f)/2,B_{\bsigma,f},\alpha/2,\gamma/2, \delta/(2W))$ on $\Psi'$ to obtain $f_\bsigma$\label{line:omni-invoke}\;
}
Choose $f'$ from $\{f_\bsigma:\bsigma\in \{-1,1\}^k\}$ that maximizes $Q_{f'}:=\frac{1}{n\sps 2}\sum_{i=1}^{n\sps 2}(y_i\sps {j',2}  - f(x_i\sps{j',2}))f'(x_i\sps {j',2})$\label{line:omni-4}\;
\eIf{$Q_{f'} \ge 3\gamma/4$\label{line:omni-if}}
{Update $f(x)$ to $\proj_{[-1,1]}(f(x) + \gamma f'(x)/2)$ for every $x\in X$\label{line:omni-2}\;}
{\Break\label{line:omni-3}\;}
$j'\gets j' + 1$\;
}
$j\gets j + 1$\;
}\label{line:omni-for-2}
\Return $f$\label{line:omni-return}\;
\end{algorithm}
We omit the proof of \Cref{lm:compr} because the proof is very similar to the proofs of \Cref{lm:ma-mc} and \Cref{lm:boosting}. Below we prove \Cref{thm:compr} using \Cref{lm:compr}.
\begin{proof}[Proof of \Cref{thm:compr}]
We choose $\gamma =\beta/16, \varepsilon = \beta/8,k = \lceil 8/\beta\rceil,\alpha = \beta/8 + 2\eta_1 + 4\eta_2, W' = \lfloor 4/\gamma^2\rfloor + 1, W = W' + \lfloor 4/\varepsilon^2\rfloor + 1$ in \Cref{lm:compr}. We also choose $\blambda$ as in \Cref{claim:mce-mcee}. Define $\eta_1' = \eta_1/2$. For $\bsigma\in \{-1,1\}^k$ and $f:X\to [-1,1]$, define $\tilde S = ((S - f)/2)$ and $\tilde B = B_{\bsigma,f}$. By \Cref{claim:ma-mc-vc-bound}, for every $\theta\in \bR$, $\vc(\tilde S_{\eta_1'}\sps 0, \tilde B_{\eta_2}\sps \theta)\le 2m$.
Therefore,
\begin{align*}
& \swcm((S - f)/2, B_{\bsigma,f},\alpha/2, \gamma/2, \delta/(2W))\\
\le {} & \scm((S - f)/2, B_{\bsigma,f},(\alpha - \gamma)/2, \delta/(2W)) \tag{by \Cref{claim:wcm-cm}}\\
= {} & \scm((S - f)/2, B_{\bsigma,f},\beta/32 + 2\eta_1' + 2\eta_2, \delta/(2W))\\
\le {} & O\left(
\frac{m}{\beta^4}\log^2_+\left(\frac{m}{\beta}\right)\log\left(\frac 1{\eta_1}\right) + \frac 1{\beta^4}\log\left(\frac 1{\eta_1}\right)\log\left(\frac 1{\beta\delta}\right) + \frac{1}{\beta^2}\log\left(\frac 1{\eta_2}\right) \right). \tag{by \Cref{thm:cm}}
\end{align*}
This means that the requirement of \Cref{lm:compr} can be satisfied by
\begin{align*}
& n\sps 1 \le O\left(
\frac{m}{\beta^4}\log^2_+\left(\frac{m}{\beta}\right)\log\left(\frac 1{\eta_1}\right) + \frac 1{\beta^4}\log\left(\frac 1{\eta_1}\right)\log\left(\frac 1{\beta\delta}\right) + \frac{1}{\beta^2}\log\left(\frac 1{\eta_2}\right) \right), \text{ and}\\
& n\sps 2, n\sps 3 \le O\left(\frac 1{\beta^3} + \frac 1{\beta^2}\log\left(\frac 1{\beta\delta}\right)\right).
\end{align*}
By \Cref{claim:mce-mcee}, the guarantees $\mce_{\mu,B}\sps\blambda (f)\le \alpha$ and $\ce_\mu\sps \blambda(f) \le\varepsilon$ imply that the output model $f$ of \Cref{alg:omni} can be easily transformed to $f'$ satisfying $\mce_{\mu,B}(f')\le \alpha + 1/k$ and $\ce_\mu(f')\le \varepsilon + 1/k$. By \Cref{thm:omni}, $f'$ can then be easily transformed to $\tau\circ f'$ which achieves the goal of comparative regression:
\begin{align*}
\bE_{(x,y)\sim\mu}[\ell(y,\tau(f(x)))] & \le \inf_{b\in B}\bE_{(x,y)\sim\mu}[\ell(y,b(x))] + (\alpha + 3\varepsilon + 4/k)\kappa\\
& \le \inf_{b\in B}\bE_{(x,y)\sim\mu}[\ell(y,b(x))] + (\beta + 2\eta_1 + 4\eta_2)\kappa.
\end{align*}
Since \Cref{alg:omni} takes $n = W'(n\sps 1 + n\sps 2) + Wn\sps 3$ data points, we have
\begin{align*}
& \scompr(S,B,\ell,\kappa(\beta + 2\eta_1 + 4\eta_2),\delta)\\
\le {} & W'(n\sps 1 + n\sps 2) + Wn\sps 3\\
\le {} & O\left(
\frac{m}{\beta^6}\log^2_+\left(\frac{m}{\beta}\right)\log\left(\frac 1{\eta_1}\right) + \frac 1{\beta^6}\log\left(\frac 1{\eta_1}\right)\log\left(\frac 1{\beta\delta}\right) + \frac{1}{\beta^4}\log\left(\frac 1{\eta_2}\right)   \right).
\qedhere
\end{align*}
\end{proof}
\bibliography{ref}
\end{document}